\documentclass[11pt]{article}
\usepackage{latexsym}
\usepackage{amsmath}
\usepackage{amssymb}
\usepackage{amsthm}
\usepackage{epsfig}
\usepackage{xcolor}
\usepackage{graphicx}
\usepackage{bm}
\usepackage{enumitem}

% Page 
\usepackage[top=1in, bottom=1.25in, left=1in, right=1in]{geometry}

% Probability
\renewcommand{\P}{\mathbb{P}}
\newcommand{\E}{\mathbb{E}}
\newcommand{\Var}{\text{Var}}

\newcommand{\cN}{\mathcal{N}}

% Sets
\newcommand{\Z}{\mathbb{Z}}
\newcommand{\R}{\mathbb{R}}

\newcommand{\N}{\mathbb{N}}
\renewcommand{\S}{\mathbb{S}}

% Variables
\newcommand{\eps}{\varepsilon} 
\newcommand{\vphi}{\varphi}
\def\id{{\mathbf I}}

% Math

\newcommand{\<}{\langle}
\renewcommand{\>}{\rangle}

\newcommand{\op}{{\rm op}}
\newcommand{\ones}{\bm{1}}

\def\sT{{\mathsf T}}
\def\bzero{{\boldsymbol 0}}

\DeclareMathOperator*{\argmin}{arg\,min}

% Notations 

% Theorem
\newtheorem{theorem}{Theorem}
\newtheorem*{theorem*}{Theorem}
\newtheorem{lemma}{Lemma}

\newtheorem{assumption}{Assumption}
\newtheorem{definition}{Definition}
\newtheorem{proposition}{Proposition}

\theoremstyle{definition}
\newtheorem{example}{Example}

\newtheoremstyle{myremark} % name
    {\topsep}                    % Space above
    {\topsep}                    % Space below
    {\rm}                        % Body font
    {}                           % Indent amount
    {\bf}                        % Theorem head font
    {.}                          % Punctuation after theorem head
    {.5em}                       % Space after theorem head
    {}  % Theorem head spec (can be left empty, meaning normal)

\theoremstyle{myremark}
\newtheorem{remark}{Remark}[section]

\usepackage[toc,page]{appendix}
\usepackage{graphicx}
\usepackage{bbm}

\DeclareSymbolFont{rsfs}{U}{rsfs}{m}{n}
\DeclareSymbolFontAlphabet{\mathscrsfs}{rsfs}

% Bold symbols
\def\bA{{\boldsymbol A}}

\def\bL{{\boldsymbol L}}
\def\bM{{\boldsymbol M}}

\def\bW{{\boldsymbol W}}
\def\bX{{\boldsymbol X}}

\def\ba{{\boldsymbol a}}

\def\be{{\boldsymbol e}}

\def\bg{{\boldsymbol g}}

\def\bu{{\boldsymbol u}}
\def\bv{{\boldsymbol v}}
\def\bw{{\boldsymbol w}}
\def\bx{{\boldsymbol x}}
\def\by{{\boldsymbol y}}
\def\bz{{\boldsymbol z}}

\def\beps{{\boldsymbol \eps}}

\def\btheta{{\boldsymbol \theta}}

\def\bxi{{\boldsymbol \xi}}

\def\bDelta{{\boldsymbol \Delta}}

\def\bSigma{{\boldsymbol \Sigma}}
\def\bTheta{{\boldsymbol \Theta}}

% Symbols with hat

\def\hf{{\hat f}}

% rm symbols

\def\Sft{{\rm Sft}}

\def\KR{{\rm KR}}
\def\bbHe{{\rm He}}

\def\de{{\rm d}}

\def\lin{{\rm lin}}

\def\Coeff{{\rm Coeff}}
\def\de{{\rm d}}
\def\Unif{{\rm Unif}}
\def\lin{{\rm lin}}

\def\RF{{\rm RF}}
\def\NT{{\rm NT}}
\def\Cyc{{\rm Cyc}}

\def\Sft{{\rm Sft}}

\def\KR{{\rm KR}}

\def\bbHe{{\rm He}}

%mathcal symbols
\def\cD{{\mathcal D}}
\def\cV{{\mathcal V}}
\def\cG{{\mathcal G}}
\def\cO{{\mathcal O}}
\def\cP{{\mathcal P}}

\def\cF{{\mathcal F}}
\def\cE{{\mathcal E}}
\def\cS{{\mathcal S}}
\def\cI{{\mathcal I}}

\def\cV{{\mathcal V}}
\def\cG{{\mathcal G}}
\def\cO{{\mathcal O}}
\def\cP{{\mathcal P}}

\def\cH{{\mathcal H}}
\def\cA{{\mathcal A}}

%mathbb mathsf sf symbols

\def\T{{\mathbb T}}

\def\W{{\mathbb W}}

\def\Unif{{\sf Unif}}
\def\normal{{\sf N}}
\def\proj{{\mathsf P}}

\def\RF{{\sf RF}}
\def\NT{{\sf NT}}
\def\NN{{\sf NN}}
\def\reals{{\mathbb R}}
\def\integers{{\mathbb Z}}
\def\naturals{{\mathbb N}}

\def\normal{{\sf N}}
\def\proj{{\mathsf P}}

\def\T{{\mathbb T}}

\def\Unif{{\sf Unif}}
\def\normal{{\sf N}}
\def\Uop{{\mathbb U}}
\def\Hop{{\mathbb H}}

\def\proj{{\mathsf P}}

\def\RF{{\sf RF}}
\def\NT{{\sf NT}}
\def\NN{{\sf NN}}

\def\reals{{\mathbb R}}
\def\integers{{\mathbb Z}}
\def\naturals{{\mathbb N}}
\def\proj{{\mathsf P}}
\def\Hop{{\mathbb H}}
\def\Uop{{\mathbb U}}

\def\bz{{\boldsymbol z}}
\def\proj{{\mathsf P}}
\def\He{{\rm He}}
\def\cE{{\mathcal E}}

\def\bDelta{{\boldsymbol \Delta}}

\def\lin{{\rm lin}}

\def\be{{\boldsymbol e}}
\def\bu{{\boldsymbol u}}
\def\bg{{\boldsymbol g}}

\def\Var{{\rm Var}}

\def\bA{{\boldsymbol A}}

\def\bv{{\boldsymbol v}}
\def\bxi{{\boldsymbol \xi}}
\def\btheta{{\boldsymbol \theta}}
\def\bTheta{{\boldsymbol \Theta}}

\def\Coeff{{\rm Coeff}}

\def\RF{{\sf RF}}
\def\NT{{\sf NT}}
\def\NN{{\sf NN}}
\def\reals{{\mathbb R}}
\def\integers{{\mathbb Z}}
\def\naturals{{\mathbb N}}
\def\ttau{\tilde{\tau}}

\def\bw{{\boldsymbol w}}
\def\de{{\rm d}}
\def\bx{{\boldsymbol x}}
\def\by{{\boldsymbol y}}
\def\bW{{\boldsymbol W}}
\def\ba{{\boldsymbol a}}
\def\cF{{\mathcal F}}
\def\Unif{{\rm Unif}}

\def\bz{{\boldsymbol z}}
\def\proj{{\mathsf P}}
\def\He{{\rm He}}
\def\cE{{\mathcal E}}

\def\normal{{\sf N}}

\def\bDelta{{\boldsymbol \Delta}}

\def\lin{{\rm lin}}

\def\be{{\boldsymbol e}}
\def\bu{{\boldsymbol u}}
\def\bg{{\boldsymbol g}}
\def\btheta{{\boldsymbol \theta}}

\def\Coeff{{\rm Coeff}}

\def\RF{{\rm RF}}
\def\NT{{\rm NT}}
\def\bA{{\boldsymbol A}}

\def\cS{{\mathcal S}}
\def\Cyc{{\rm Cyc}}
\def\cI{{\mathcal I}}

\def\Hop{{\mathbb H}}
\def\Uop{{\mathbb U}}
\def\cX{{\mathcal X}}

\def\bproj{{\overline \proj}}
\def\quadratic{{\rm quad}}
\def\cube{{\rm cube}}

\def\bDelta{{\boldsymbol \Delta}}
\def\be{{\boldsymbol e}}
\def\bu{{\boldsymbol u}}
\def\bg{{\boldsymbol g}}

\def\Coeff{{\rm Coeff}}
\def\RF{{\rm RF}}
\def\NT{{\rm NT}}
\def\bA{{\boldsymbol A}}
\def\btheta{{\boldsymbol \theta}}
\def\bTheta{{\boldsymbol \Theta}}

\def\Tr{{\rm Tr}}

\def\cV{{\mathcal V}}

\def\bL{{\boldsymbol L}}

\def\osigma{\overline{\sigma}}

\def\hf{\hat f}

\def\cuH{\mathscrsfs{H}}

\def\osigma{\overline{\sigma}}

\def\oY{\overline{Y}}
\def\cH{\mathcal{H}}

\def\NTK{{\rm NTK}}

\def\Cube{{\mathscrsfs Q}}

\def\oproj{{\overline \proj}}
\def\tproj{{\overline \proj}}

\def\CNN{{\sf CNN}}
\def\data{{\rm data}}
\def\inv{{\rm inv}}

\def\noise{\sigma_{\varepsilon}}

\def\tQ{\tilde{Q}}

\def\evn{{\mathsf m}}
\def\evN{{\mathsf M}}

\def\lvn{{\mathsf s}}
\def\lvN{{\mathsf S}}

\def\seff{\mbox{\tiny\rm eff}}

\def\snew{\mbox{\tiny\rm new}}

\def\TwoCyc{{\rm Cyc2D}}

\def\osigma{\overline \sigma}
\def\cA{\mathcal{A}}

\def\SO{\rm SO}

\def\tg{\Tilde g}

\def\tbg{\Tilde \bg}

\def\barphi{\overline{\varphi}}

\def\barQ{\overline{Q}}

\def\barHe{\overline{\He}}
\def\ttau{\Tilde \tau}
\def\cY{\mathcal{Y}}

\usepackage{hyperref}
\hypersetup{
    colorlinks,
    linkcolor={blue!80!black},
    citecolor={green!50!black},
}
\colorlet{linkequation}{blue}

\begin{document}
\title{Learning with invariances in random features and kernel models}

\author{Song Mei\thanks{Department of Statistics, University of California, Berkeley},\;\; Theodor Misiakiewicz\thanks{Department of
    Statistics, Stanford University}, \;\;
  Andrea Montanari\footnotemark[2] \thanks{Department of Electrical Engineering,
    Stanford University} }

\maketitle

\begin{abstract}
A number of machine learning tasks entail a high degree of invariance: the data distribution does not change if we act on the data with a certain group of transformations. For instance, labels of images are invariant under translations of the images. Certain neural network architectures ---for instance, convolutional networks---are believed to owe their success to the fact that they exploit such invariance properties. With the objective of quantifying the gain achieved by invariant architectures, we introduce two classes of models: invariant random features and invariant kernel methods. The latter includes, as a special case, the neural tangent kernel for convolutional networks with global average pooling. We consider uniform covariates distributions on the sphere and hypercube and a general invariant target function. We characterize the test error of invariant methods in a high-dimensional regime in which the sample size and number of hidden units scale as polynomials in the dimension, for a class of groups that we call `degeneracy $\alpha$', with $\alpha \leq 1$. We show that exploiting invariance in the architecture saves a $d^\alpha$ factor ($d$ stands for the dimension) in sample size and number of hidden units to achieve the same test error as for unstructured architectures. 

Finally, we show that output symmetrization of an unstructured kernel estimator does not give a significant statistical improvement; on the other hand, data augmentation with an unstructured kernel estimator is equivalent to an invariant kernel estimator and enjoys the same improvement in statistical efficiency. 
\end{abstract}

\tableofcontents

\section{Introduction}

Consider the following image classification problem. We are given data $\{(\bx_i,y_i)\}_{i\le n}$ where $\bx_i\in \reals^d$ is an image, and $y_i\in\reals$ is its label.  We would like to learn a function $\hf:\reals^d\to \reals$ to predict labels of
new unseen images. Throughout this paper we will measure prediction error in terms of the square loss $R(\hf):=\E\{(y_{\snew}-\hf(\bx_{\snew}))^2\}$.

We can  think of $\bx\in \reals^d$ as a pixel representation of an image. For instance if this is a grayscale (one channel) two-dimensional
image, $\bx$ can represent the pixel values on a $d_1\times d_2$ grid with $d=d_1d_2$. For mathematical convenience, we here work with the cartoon example of one-dimensional `images' (or `signals') with $d$ pixels arranged on a line. Most of our results cover two-dimensional images as well. % for others we expect such a generalization to hold although we do not formally prove it here.

We assume a model whereby
the labels are $y_i=f_*(\bx_i)+\eps_i$, with noise $\eps_i$ independent of $\bx_i$ with $\E(\eps_i) = 0$ and $\E(\eps_i^2) = \sigma^2_{\eps}$.
In many applications, the target function $f_*$ is invariant under translations of the image: if $\bx'$ is obtained by translating image $\bx$, then $f_*(\bx')= f_*(\bx)$. We will consider here periodic
shifts (in the case of one-dimensional images): for $\bx\in \reals^d$, $g_{\ell}\cdot \bx:= (x_{\ell+1},\dots,x_d,x_1,\dots,x_{\ell})$ denotes its $\ell$-shift. Invariance implies $f_*(\bx) = f_*(g_{\ell}\cdot \bx)$ for all $\ell$ and $\bx$.

Convolutional neural networks are the state-of-the-art architecture for image classification and related computer vision tasks, and they are believed to exploit the translation invariance in a crucial way \cite{krizhevsky2012imagenet}. Consider the simple example of two-layer convolutional networks with global average pooling. The  network computes a nonlinear convolution of
$N$ filters $\bw_1,\dots,\bw_N$ with the image $\bx$. The results are then combined linearly with coefficients $a_1,\dots,a_N$:
\begin{align}
  f_{\CNN}(\bx) = \frac{1}{d}\sum_{i=1}^N a_i \sum_{\ell=1}^d\sigma(\< \bw_i, g_{\ell} \cdot \bx\>)\, .\label{eq:CNN}
\end{align}
This simple convolutional network can be compared with a standard fully-connected two-layer network with the same number of parameters: $f_{\NN}(\bx) = \sum_{i=1}^N
a_i\sigma(\< \bw_i, \bx\>)$. It is clear that ---when the target function $f_*$ is translation invariant---
the convolutional model $f_{\CNN}(\bx)$ is at least as powerful as $f_{\NN}(\bx)$ in terms of approximation, since it is invariant by construction (see Appendix \ref{sec:approx_invariant_network}
for a simple formal argument). 

The main objective of this paper is to quantify the advantage of  architectures ---such as convolutional ones--- that
enforce invariance. We are interested in characterizing the gain both  in approximation error and in generalization error.
We consider a general type of  invariance, defined by a group $\cG_d$ that is represented as a subgroup of
$\cO(d)$, the orthogonal group in $d$ dimensions. This means that each element $g\in\cG_d$ is identified with an orthogonal matrix
(which we will also denote by $g$), and group composition corresponds to matrix multiplication.  The group element $g\in\cG_d$ acts on
$\reals^d$ via $\bx\mapsto g\cdot\bx$. We will consider two simple distributions for the the signals $\bx$:
$\bx\sim\Unif(\S^{d-1}(\sqrt{d}))$ (the uniform distribution over the sphere in $d$ dimensions with radius $\sqrt{d}$)
and $\bx\sim\Unif(\Cube^d)$ (with $\Cube^d = \{+1,-1\}^d$ the discrete hypercube in $d$ dimensions).
We will write $(\cA_d,\tau_d)\in\{(S^{d-1}(\sqrt{d}),\Unif), (\Cube^d,\Unif)\}$ for either of these two probability spaces. In the case of $\cA_d = \Cube^d$, we will further require the action of $\cG_d$ to preserve $\Cube^d$.

In order to gain some insights on the behavior of actual neural networks, we consider
two classes of linear `overparametrized' models: invariant random features models and invariant kernel machines. We next describe these two approaches.

\vspace{0.2cm}

\noindent\emph{Invariant random feature models.}  Given an activation function $\sigma:\reals\to\reals$ and a group $\cG_d$
endowed with invariant (Haar) measure $\pi_d$, we define the invariant random features (RF) function class
\begin{equation}\label{eq:RF_inv}
\cF_{\RF,\inv}^N(\bW, \cG_d) = \Big\{ f(\bx) = \sum_{i=1}^N a_i \int_{\cG_d} \sigma (\< \bw_i, g \cdot \bx\>) \, \pi_d(\de g) : a_i \in \R, i \in [N] \Big\}\, .
\end{equation}
Here $\bW:=(\bw_1,\dots,\bw_N)$ is the set of first layer weights which are fixed and not optimized over. We draw them
randomly with $(\sqrt{d} \cdot \bw_i)_{i\le N}\sim_{iid}\Unif(\S^{d-1}(\sqrt{d}))$ or $\Unif(\Cube^d)$ depending on whether the feature vectors are $\bx_i\sim \Unif(\S^{d-1}(\sqrt{d}))$ or $\Unif(\Cube^d)$. 
If we let $\cG_d$ be the cyclic group $\Cyc_d:=\{g_0,g_1,\dots,g_{d-1}\}$ (here $g_{\ell}$ is the shift by $\ell$ positions), we obtain a random features
version of the convolutional network of Eq.~\eqref{eq:CNN}. Other examples will be presented in Section \ref{sec:Examples}.

Given data $\{(\bx_i,y_i)\}_{i\le n}$, we consider to fit the second-layer coefficients $(a_i)_{i\le N}$ in Eq. (\ref{eq:RF_inv}) using the random features ridge regression (RFRR). Notice that the estimated function
$\hf$ is invariant by construction, $\hf(\bx) = \hf(g\cdot \bx)$. We will denote the space of square integrable $\cG_d$-invariant functions
on $\cA_d\in\{\S^{d-1}(\sqrt{d}),\Cube^d\}$  by $L^2(\cA_d,\cG_d)$.

\vspace{0.2cm}

\noindent\emph{Invariant kernel machines.} We then consider kernel ridge regression (KRR) in the
reproducing kernel Hilbert space (RKHS) defined by a $\cG_d$-invariant kernel.
By this we mean a kernel $H\in L^2(\cA_d\times \cA_d)$ such that, for all $g,g'\in\cG_d$, the following folds for every $\bx_1,\bx_2$:
\begin{align}
  H(\bx_1,\bx_2) = H(g \cdot \bx_1,g'\cdot\bx_2)\, .\label{eq:Invariant}
\end{align}
Note that, as a consequence of this property, any function that is not in 
$L^2(\cA_d,\cG_d)$ (i.e. any function that is not invariant) has infinite RKHS norm: indeed this provides an alternate
characterization of invariant kernel methods.
Among  $\cG_d$-invariant kernels, we focus on the subclass that is obtained
by averaging an inner product kernel over the group $\cG_d$ 
\begin{align}
H_\inv(\bx_1,\bx_2) = \int_{\cG_d} h(\< \bx_1, g \cdot \bx_2\>/d) \, \pi_d(\de g). \label{eq:InvariantInnerProd}
\end{align}

Invariant kernel machines can be regarded as large-width ($N\to\infty$) limits of invariant random features methods.
Vice versa, the latter can be regarded as randomized approximations of invariant kernel methods. Moreover, invariant kernel methods also capture the large-width limits of other models, for instance, neural tangent models
associated to convolutional networks (c.f. Section \ref{sec:CNN_to_CNTK}).

\vspace{0.2cm}

We focus on a type of groups $\cG_d$ that we call \emph{groups of degeneracy $\alpha$}.
\begin{definition}[Groups of degeneracy $\alpha$]\label{def:degeneracy}
Let $V_{d,k}$ be the subspace of degree-$k$ polynomials that are orthogonal to polynomials of degree at most $(k-1)$
in $L^2(\cA_d)$, and denote by $V_{d,k}(\cG_d)$ the subspace of $V_{d,k}$ formed by polynomials that are $\cG_d$-invariant.
We say that $\cG_d$ has \emph{degeneracy $\alpha$} if for any integer $k \ge \alpha$ we have
$\dim(V_{d,k}/V_{d,k}(\cG_d)) \asymp d^\alpha$ (i.e., there exists $0 < c_k \le C_k < \infty$ such that $c_k \le \dim(V_{d,k}/V_{d,k}(\cG_d)) / d^\alpha \le C_k$ for any $d \ge 2$).
\end{definition}
This definition includes as special cases the cyclic group for one and two-dimensional signals (see Section
\ref{sec:Examples}), which have both degeneracy $1$.

We compare invariant methods to standard (non-invariant) random features models with inner product activation, defined as
\begin{equation}
\cF_{\RF}^N(\bW) = \Big\{ f(\bx) = \sum_{i=1}^N a_i \sigma (\< \bw_i,  \bx\>) : a_i \in \R, i \in [N] \Big\}\, ,\label{eq:RF_std}
\end{equation}
and standard inner product kernels $H ( \bx_1 , \bx_2) = h_d ( \<\bx_1 , \bx_2 \>/d)$.
For groups with degeneracy $\alpha \le 1$, we obtain a fairly complete characterization of the gain achieved by using invariant models, when the target function is an arbitrary invariant function $f_* \in L^2(\cA_d;\cG_d)$. %Here we briefly summarize our results for groups with degeneracy $\alpha = 1$. 

\begin{description}
\item[Invariance gain: underparametrized case.] Consider the invariant RF class \eqref{eq:RF_inv} in the underparametrized
  regime $N\ll n$. We prove that the test error is dominated by the approximation error. Namely,
  if $d^{\ell-\alpha} \ll N \ll d^{\ell+1 - \alpha}$, then the test error (c.f. Eq. (\ref{eq:test_error_RFRR})) gives $R(f_*;\lambda)\approx \|\oproj_{>\ell}f_* \|_{L^2}^2$, where $\oproj_{>\ell}$ is the projection orthogonal to the subspace of degree $\ell$ polynomials. 
 In order to achieve the same risk, standard (non-invariant) RF models would require
  $d^{\ell} \ll N\ll d^{\ell + 1}$: invariance saves a $d^\alpha$ factor in the network width to achieve the same risk. 
\item[Invariance gain: overparametrized case.] Consider next the overparametrized
  regime $n\ll N$. In this case the test error is dominated by the statistical error. Namely,
  if $d^{\ell-\alpha}\ll n\ll d^{\ell + 1 - \alpha}$, then the test error gives $R(f_*;\lambda)\approx \|\oproj_{>\ell}f_* \|_{L^2}^2$.
 In order to achieve the same risk, standard (non-invariant) RF models would require
  $d^{\ell}\ll n\ll d^{\ell +1}$: invariance saves a $d^\alpha$ factor in the sample size to achieve the same risk. 
  \end{description}

  These results are precisely presented in Theorem \ref{thm:RFRRinvar} and summarized in Table \ref{tab:comparison}. We establish the same gain for invariant kernel methods in Theorem \ref{thm:invar_KRR}. While we focused in this paper on groups with degeneracy $\alpha \leq 1$ (which include our primary motivating examples, cyclic group in one or two dimensions), we expect similar results to hold for groups with $\alpha > 1$. We defer this to future work.% In the proof of this paper, we require $\alpha \le 1$, but we expect the proof can be generalized to $\alpha > 1$. 

\begin{description}
\item[Output symmetrization and data augmentation.] Output symmetrization and data augmentation are two alternative approaches to incorporate invariances in machine learning models. We show that the performance of output symmetrization of standard KRR does not improve over standard KRR, and hence is sub-optimal compared to invariant KRR. On the other hand, it was shown that (c.f. \cite{li2019enhanced}) data augmentation is mathematically equivalent to invariant KRR for discrete groups. As a consequence, our theoretical results characterize the statistical gain by performing data augmentation.

%, and leads to the same statistical gain over standard KRR. 

% A practical approach to invariance consists of augmenting the training set by adding the transformed data $(g\cdot\bx_i,y_i)$, for $i\le n$, $g\in\cG_d$. This might seem naively sub-optimal with respect to building the invariance ---say--- in the kernel, as in Eq.~\eqref{eq:Invariant}. It might appear that data augmentation only enforces invarance `in sample.' In contrast with this expectation, we show that KRR  with a standard inner product kernel on the augmented dataset is equivalent to invariant KRR. 
\end{description}
It is important to mention that our treatment omits an important characteristic of convolutional
architectures: the fact that the filters $\bw_i$ of Eq.~\eqref{eq:CNN} have a short window size $q\ll d$.
Namely, they have only $q$ non-zero entries, for instance the first $q$ entries. Using short-window filters
has some interesting consequences, which can be investigated using the same approach developed here. We will report on these in a forthcoming article, and instead focus here on the impact of invariance. 

Our analysis is enabled by a simple yet important observation, which might generalize
to other settings. The subspaces $V_{d,k}$ of degree-$k$ polynomials (see Definition \ref{def:degeneracy})
are eigenspaces for inner product kernels. At the same time, they are preserved under the symmetry group $\cG_d$.
Namely, define  $f^{(g)}(\bx) = f(g\cdot\bx)$, we have $f^{(g)}\in V_{d,k}$ for any $f\in V_{d,k}$, $g\in\cG_d$. This
observation is crucial in determining the eigendecomposition of the relevant kernels. 

Let us finally emphasize, that the factor-$d$ gain in sample size for degeneracy-one groups is not correctly predicted by
a naive `data augmentation heuristics'.
The latter would suggest a gain of the order of $|\cG_d|$ or of
the size of orbits of $\cG_d$. As shown by the example of band limited functions (see below) $|\cG_d|$ can be $\infty$ but the degeneracy can still be one (and hence the gain is $d$).

\vspace{0.5cm}

\begin{table}[h!]
  \begin{center}
    \label{tab:table1}
    \begin{tabular}{|c|c|c|} 
      \hline
      To fit a degree $\ell$ polynomial&  Inner product random features & Invariant random features \\
      \hline
      \vtop{\hbox{\strut Underparameterized regime}\hbox{\strut \hspace{1.4cm} ($N \ll n$) }}
      & $ N \gg d^{\ell}$ & $N \gg d^{\ell-\alpha}$ \\
      \hline
        \vtop{\hbox{\strut Overparameterized regime}\hbox{\strut \hspace{1.4cm} ($n \ll N$) }}
      
       & $ n \gg d^{\ell}$ & $n \gg d^{\ell-\alpha}$  \\
      \hline
    \end{tabular}
        \caption{Sample size $n$ and number of features $N$ required to fit a $\cG_d$-invariant polynomial of degree $\ell$ using ridge regression with the standard random features model (Eq.~\eqref{eq:RF_std}) and the invariant random features model (Eq.~\eqref{eq:RF_inv}), for group $\cG_d$ of degeneracy $\alpha \leq 1$. 
        %(We believe that the requirement $\alpha \le 1$ is a limitation of our proof techniques but we expect our results to hold for more general $\alpha$.) 
        }\label{tab:comparison}
  \end{center}
\end{table}

% \begin{table}[h!]
%   \begin{center}
%     \label{tab:table1}
%     \begin{tabular}{|c|c|c|} 
%       \hline
%       To fit a degree $\ell$ polynomial&  Inner product random features & Invariant random features \\
%       \hline
%       Underparameterized regime ($N \ll n$) & $d^\ell \ll N \ll d^{\ell+1}$ & $d^{\ell-1} \ll N \ll d^{\ell}$ \\
%       \hline
%       Overparameterized regime ($n \ll N$) & $d^\ell \ll n \ll d^{\ell+1}$ & $d^{\ell-1} \ll n \ll d^{\ell}$  \\
%       \hline
%     \end{tabular}
%         \caption{Sample size $n$ and number of features $N$ required to fit a polynomial of degree $\ell$ using ridge regression with the inner product random features model (Eq.~\eqref{eq:RF_std}) and the invariant random features model (Eq.~\eqref{eq:RF_inv}) for group $\cG_d$ of degeneracy $1$. }\label{tab:comparison}
%   \end{center}
% \end{table}

\subsection{Related literature}

\vspace{0.1cm}

\noindent\textbf{Invariant function estimation}

%\subsubsection{Invariant function estimation}
A number of  mathematical works emphasized the role of invariance in neural network architectures. 
Among others, \cite{mallat2012group,bruna2013invariant,mallat2016understanding} propose architectures
(`deep scattering networks') that explicitly achieve invariance to a rich group of transformations.
However, these papers do not characterize the statistical error of these approaches. 

The recent paper \cite{li2020convolutional} constructs a simple data distribution on which a gap is proven
between the sample complexity for convolutional architectures, and the one for standard (fully connected) architectures.
This result differs from ours in several aspects. Most importantly,  we study the risk for estimating
general invariant functions using invariant kernels and random features, while \cite{li2020convolutional}
obtain results for a specific distribution using CNNs. Also, the weight sharing structure in \cite{li2020convolutional}
is different from the one  in Eq.~\eqref{eq:CNN}. 

Another work \cite{chen2020group} studied the statistical benefits of data augmentation in the parametric setting via a group theory framework. Our result is different in the sense that we consider the non-parametric setting to estimate an invariant function using kernel methods.

To the best of the our knowledge, our paper is the first that characterizes the precise statistical benefit of using invariant random features  and  kernel models.  

\vspace{0.3cm}

\noindent{\textbf{Convolutional neural networks and convolutional kernels}}

%\subsubsection{Convolutional neural networks and convolutional kernels}

A recent line of work \cite{jacot2018neural, li2018learning, du2018gradient, du2018gradientb, allen2018convergence, allen2018learning, arora2019fine, zou2018stochastic, oymak2019towards} studied the training dynamics of overparametrized neural networks under certain random initialization, and showed that it converges to a kernel estimator, which corresponds to the ``neural tangent kernel". The convolutional neural tangent kernel, which corresponds to the tangent kernel of convolutional neural networks, was studied in \cite{arora2019exact,li2019enhanced, bietti2019inductive}. The connection between convolutional kernel ridge regression and data augmentation was pointed out in \cite{li2019enhanced}.

The network in Eq. (\ref{eq:CNN}) corresponds to a two-layer convolutional neural network with global average pooling, which is a special case of the convolutional network that was defined as in \cite{arora2019exact}. 

\vspace{0.3cm}

\noindent{\textbf{Random features and kernel methods}}

%\subsubsection{Random features and kernel methods}

A number of authors have studied the generalization error of kernel machines
\cite{caponnetto2007optimal, jacot2020kernel, liang2020just, liang2019risk}
\cite[Theorem 13.17]{wainwright2019high} and random features models
\cite{rahimi2009weighted, rudi2017generalization, ma2020towards, bach2015equivalence}. However,  these results
are not fine-grained enough to characterize the separation between invariant kernels
(or random feature models) and standard  inner product kernels, for several  reasons. First, some of these results concern restricted target functions with bounded RKHS norm. Second, we establish a gap that holds pointwise, i.e. for any given target function $f_*$, while most of earlier work only obtain minimax lower bounds. Finally, we need the upper and lower bounds match up to a $1 + o_d(1)$ factor, while earlier results only match up to unspecified constants. 

The recent paper \cite{jacot2020kernel} provides sharp predictions for kernel machines, but it assumes that a certain random kernel matrix behaves like a random matrix with Gaussian components: proving an equivalence of this type  is the central mathematical challenge we face here.

Our analysis builds on the general results of \cite{ghorbani2019linearized,mei2021generalization}. In particular,
\cite{mei2021generalization} provides general conditions under which the risk of random features and kernel methods
can be characterized precisely. Checking these conditions for invariant
methods requires to prove certain concentration properties for the entries of
the relevant kernels.  We achieve this goal for the cyclic group with general activations,
and for degeneracy-$\alpha$ groups (for $\alpha \le 1$) with polynomial activations. Generalizing these results to other groups, data distributions, and activations is a promising direction.

\section{Examples}
\label{sec:Examples}

In this section, we provide three examples of our general setting. We show in Appendix \ref{sec:counting_degeneracy} that all these groups have degeneracy 1 and therefore satisfy the assumptions of our general theorems.

\begin{example}[One-dimensional images]\label{ex:OneDImages}
  The cyclic group has elements  $\Cyc_d = \{ g_0, g_1, \ldots, g_{d-1}\}$ where $g_{i}$ is a shift by $i$ pixels.
  For any $\bx = (x_1, \ldots, x_d)^\sT \in \cA_d$, the action of group element $g_i$ on $\bx$ is defined by $g_i \cdot \bx = (x_{i+1}, x_{i+2}, \ldots x_d, x_1, x_2, \ldots, x_{i})^\sT \in  \cA_d.$ (In particular, $g_i$ is identified with an orthogonal transfromation in $\reals^d$.) 
The measure $\pi_d$ is the uniform probability measure on $\Cyc_d$, i.e., 
\[
\int_{\Cyc_d} f(g) \pi_d(\de g) = \frac{1}{d} \sum_{i=0}^{d-1} f(g_i). 
\]
We will refer to the invariant functions $L^2(\cA_d, \Cyc_d)$ as the `cyclic functions'.
\end{example}

\begin{example}[Two-dimensional images]\label{ex:TwoDImages}
  Let $d = d_1 \times d_2$. We identify $\cX_{d_1 \times d_2} = \{ \bX \in \R^{d_1 \times d_2}: \| \bX \|_F^2 = d \}$
with $\S^{d-1}(\sqrt{d})$ (simply by `vectorizing' the matrix).  
  The two-direction cyclic group has elements 
$\TwoCyc_{d_1, d_2} = \{ g_{ij}: 0\le i <d_1, 0\le j <d_2, \}$.
For any $\bX = (X_{ij})_{i \in [d_1], j \in [d_2]} \in \cX_{d_1 \times d_2}$, the action of group element $g_{ij} \in \TwoCyc_{d_1, d_2}$ on $\bX$ is defined by 
%\[
%g_{ij} \cdot \bX = \begin{bmatrix}
%X_{i+1, j+1} & X_{i+1, j+2} & \ldots & X_{i+1, d_2} & X_{i+1, 1} & \ldots & X_{i+1, j} \\
%X_{i+2, j+1} & X_{i+2, j+2} & \ldots & X_{i+2, d_2} & X_{i+2, 1} & \ldots & X_{i+2, j} \\
%\ldots & \ldots & \ldots & \ldots & \ldots & \ldots & \ldots \\
%X_{d_1, j+1} & X_{d_1, j+2} & \ldots & X_{d_1, d_2} & X_{d_1, 1} & \ldots & X_{d_1, j} \\
%X_{1, j+1} & X_{1, j+2} & \ldots & X_{1, d_2} & X_{1, 1} & \ldots & X_{1, j} \\
%\ldots & \ldots & \ldots & \ldots & \ldots & \ldots & \ldots \\
%X_{i, j+1} & X_{i, j+2} & \ldots & X_{i, d_2} & X_{i, 1} & \ldots & X_{i, j} \\
%\end{bmatrix}. 
%\]
\[
g_{ij} \cdot \bX = \begin{bmatrix}
X_{i+1, j+1} &  \ldots & X_{i+1, d_2} & X_{i+1, 1} & \ldots & X_{i+1, j} \\
\ldots &  \ldots & \ldots & \ldots & \ldots & \ldots \\
X_{d_1, j+1}  & \ldots & X_{d_1, d_2} & X_{d_1, 1} & \ldots & X_{d_1, j} \\
X_{1, j+1} &  \ldots & X_{1, d_2} & X_{1, 1} & \ldots & X_{1, j} \\
\ldots &  \ldots & \ldots & \ldots & \ldots & \ldots \\
X_{i, j+1}  & \ldots & X_{i, d_2} & X_{i, 1} & \ldots & X_{i, j} \\
\end{bmatrix}. 
\]
Again, this is an orthogonal transformation in $\cX_{d_1 \times d_2}\cong \S^{d-1}(\sqrt{d})$, and $\TwoCyc_{d_1, d_2}$ is isomorphic to a subgroup of $\cO(d)$. The measure $\pi_d$ is the uniform probability measure on $\TwoCyc_{d_1, d_2}$. We will refer to the invariant functions  $L^2(\S^{d-1}(\sqrt{d}), \TwoCyc_d)$ as the `two-direction cyclic functions'. 
%, i.e., 
%\[
%\int_{\TwoCyc_{d_1, d_2}} f(g) \pi_d(\de g) = \frac{1}{d} \sum_{i=0}^{d_1-1} \sum_{j = 0}^{d_2 -1} f(g_{ij}). 
%\]

\end{example}

\begin{example}[The translation invariant function class on band-limited signals]\label{ex:band-limited}
Suppose we have one-dimensional signals with very high resolution, but the signals are band-limited: their Fourier transforms have only $d$ non-zero coefficients. We assume that the labels of the band-limited signals are invariant under translations. The following model captures this setting. 

Let $\{ \vphi_j \}_{j \in [d]} \subseteq \cF([0, 1])$ be the real Fourier basis functions in $L^2([0, 1], \Unif)$. That is, we define $\vphi_1(t) = 1$, and for $p = 1, 2, \ldots, \lfloor d/2 \rfloor$ (we assume $d$ is odd), $\vphi_{2p}(t) = \sqrt{2} \cos(2 \pi p t)$, $\vphi_{2p+1}(t) = \sqrt{2} \sin(2 \pi pt )$. We define the band-limited covariate subspace $\W_d \subseteq L^2([0, 1], \Unif)$ to be ($\W$ stands for waves)
\[
\W_d =  \Big\{ x \in L^2([0, 1]): x(t) = \sum_{j = 1}^d \hat x_j \vphi_j(t), ~~ \hat \bx = (\hat x_1, \ldots, \hat x_d) \in \S^{d-1}(\sqrt d)  \Big\}. 
\]
Then the space $\W_d$ can be identified with the space $\S^{d-1}(\sqrt d)$. 

Let $\Sft_d = \{ g_u, u \in [0, 1]\} \simeq \SO(2)$ be the translation group that can act on $\W_d$. For any $x \in \W_d$, the action of group element $g_u \in \Sft_d$ on $x$ is defined by 
\[
[g_u \cdot x](t) = x(t - u).
\]
Equivalently, the action of group element $g_u \in \Sft_d$ on $\hat \bx \in \S^{d-1}(\sqrt d)$ is defined by
\[
g_u \cdot \hat \bx = (\hat x_1, \cos(2\pi u) \hat x_2 + \sin(2 \pi u) \hat x_3, -\sin(2 \pi u) \hat x_2 + \cos(2 \pi u) \hat x_3, \ldots ).
\]
That means, $\Sft_d$ can be interpreted as a subgroup of $\cO(d)$. The measure $\pi_d$ is the uniform distribution on $\Sft_d$, i.e., 
\[
\int_{\Sft_d} f(g) \pi_d(\de g) = \int_{[0, 1]} f(g_s) \de s. 
\]
The function class $L^2(\W_d, \Sft_d)$, or equivalently $L^2(\S^{d-1}(\sqrt d), \SO(2))$, can be regarded as the translation invariant function class on band-limited signals. 

\end{example}

\section{Invariant random feature models}

Let $\cG_d$ be a group of degeneracy $\alpha$ with $\alpha \le 1$ as defined in Definition \ref{def:degeneracy} and $f_d$ be a function that is invariant under the action of $\cG_d$, i.e., $f_d \in L^2(\cA_d, \cG_d)$. We consider fitting the data with the invariant random features model defined in Eq.~\eqref{eq:RF_inv} using ridge regression, which we call invariant RFRR. Namely, we learn a function $\hat f^{\inv}_{N, \lambda} ( \bx ; \hat \ba (
\lambda) ) = \sum_{1 \leq j  \leq N} \hat a_j \int_{\cG_d} \sigma (\< \bw_j, g \cdot \bx\>) \pi_d(\de g)$ with %The coefficients $(a_i )_{i \leq N}$ are given by %\am{Changed the normalization of $\lambda$ to eliminate factors $d$ in the theorem} 
\begin{equation}\label{eq:RFRR_problem}
\hat \ba (\lambda) = \argmin_{\ba} \left\{ \sum_{i = 1}^n  \big( y_i - \hat f_{N,\lambda}^{\inv} ( \bx_i ; \ba ) \big)^2  + \frac{N \lambda}{ d^\alpha} \| \ba \|_2^2 \right\} \, ,
\end{equation}
where the regularization parameter $\lambda$ can depend on the dimension $d$. (The factor $d^\alpha$ in the ridge penalty is introduced to compensate for the effect of averaging the random features over $\cG_d$.) We further denote the test error of invariant RFRR by
\begin{equation}\label{eq:test_error_RFRR}
R_{\RF,\inv} (f_d, \bX, \bW, \lambda) := \E_\bx \Big[ \Big(f_d (\bx) - \hat f_{N,\lambda}^{\inv} ( \bx ; \hat \ba (
\lambda) ) \Big)^2 \Big] \, .
\end{equation}

We will make the following assumption on $\sigma$. 

\begin{assumption}[Conditions on $\sigma$, $n, N$, and $(\cA_d, \cG_d)$ at level $(\lvn , \lvN) \in \naturals^2$]\label{ass:activation_RC}
For $\sigma : \R \to \R$, we assume the following conditions hold.
\begin{itemize}
\item[(a)] For $(\cA_d , \cG_d ) = (\S^{d-1} (\sqrt{d}) , \Cyc_d)$, we assume $\sigma$ to be $(\min ( \lvn , \lvN)  + 1) \vee 3$ differentiable and there exists constants $c_0 >0$ and $c_1 < 1$ such that $|\sigma^{(k)}(u)| \le c_0 e^{c_1 u^2/2}$ for any $2 \le k \le (\min ( \lvn , \lvN) + 1) \vee 3$. Moreover, there exists an integer $p > 1/ \delta$ such that $n \leq N^{1 - \delta}$ or $N \leq n^{1 - \delta}$ and $| \sigma (x)|, | \sigma ' (x) | \leq c_0 \exp (c_1 x^2/ (8p))$. %Here $\delta_0 \le \vert (\log N )/(\log n) - 1 \vert$. 

For general $(\cA_d , \cG_d)$, we assume that $\sigma$ is a (finite degree) polynomial function. 
\item[(b)] The Hermite coefficients $\mu_k(\sigma) \equiv \E_{G \sim \normal (0, 1)}[\sigma(G) \He_k(G)]$ verify $\mu_k(\sigma) \neq 0$ for any $0 \le k \le \min(\lvn , \lvN)$ (see Appendix \ref{sec:technical_background} for definitions).
\item[(c)] We assume that $\sigma$ is not a polynomial with degree less or equal to $\max(\lvn , \lvN)$.
\end{itemize}
\end{assumption}

For $k \in \naturals$, we denote by $\oproj_{\leq k} : L^2(\cA_d) \to L^2(\cA_d)$ the orthogonal projection operator onto the subspace of polynomials of degree at most $k$, and $\oproj_{> k} = \id - \oproj_{\leq k}$ (see Appendix \ref{sec:technical_background} for details). We denote $f(d) = o_{d,\P}(g(d))$ if $f(d) / g(d)$ converges to $0$ in probability as $d \to \infty$.

\begin{theorem}[Test error of invariant RFRR]\label{thm:RFRRinvar}
Let $\cG_d$ be a group of degeneracy $\alpha \le 1$ and let $\{ f_d \in L^2(\cA_d, \cG_d) \}_{d \ge 1}$ be a sequence of $\cG_d$-invariant functions. Assume $d^{\lvn - \alpha + \delta} \leq n \leq d^{\lvn + 1 - \alpha - \delta}$ and $ d^{\lvN -\alpha + \delta} \leq N \leq d^{\lvN + 1 - \alpha - \delta}$ for fixed integers $\lvn$, $\lvN$ and some $\delta>0$. Let $\sigma$ be an activation function that satisfies Assumption \ref{ass:activation_RC} at level $(\lvn, \lvN)$. Then the following hold for the test error of invariant RFRR (see Eq.~\eqref{eq:test_error_RFRR}):
\begin{itemize}
\item[(a)] (Overparametrized regime) Assume $N \geq n d^\delta$ for some $\delta>0$. Then for any regularization parameter $\lambda = O_d(1)$ (including $\lambda = 0$) and $\eta >0$, we have
\begin{align}
 R_{\RF,\inv} (f_d, \bX, \bW, \lambda) =&~   \| \oproj_{> \lvn} f_d \|_{L^2}^2 + o_{d,\P}(1) \cdot (  \|  f_d \|_{L^{2+\eta}}^2  +\noise^2 ).
\label{eq:RC_test_error_over}
\end{align}
\item[(b)] (Underparametrized regime) Assume $n \geq N d^\delta$ for some $\delta>0$. Then for any regularization parameter $\lambda = O_d(n/N)$ (including $\lambda = 0$) and any $\eta >0$, we have,
\begin{align}
 R_{\RF,\inv} (f_d, \bX, \bW, \lambda) =&~   \| \oproj_{> \lvN} f_d \|_{L^2}^2 + o_{d,\P}(1) \cdot (  \|  f_d \|_{L^{2+\eta}}^2  +\noise^2 ).
\label{eq:RC_test_error_under}
\end{align}
\end{itemize}
\end{theorem}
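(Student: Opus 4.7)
The plan is to reduce the statement to the general test-error characterization for random-features ridge regression developed in \cite{mei2021generalization}, which requires two inputs: a spectral decomposition of the population invariant kernel $H_\inv$ of \eqref{eq:InvariantInnerProd}, and concentration of the empirical invariant-feature matrix around its population counterpart on each eigenspace. Once both are in place, the two parts of the theorem follow from the usual bias--variance picture: the estimator asymptotically interpolates the projection of $f_d$ onto the span of all eigenspaces whose cumulative effective multiplicity lies below $\min(n,N)$, leaves the orthogonal complement untouched, and contributes fluctuations of size $o_{d,\P}(1)\cdot(\|f_d\|_{L^{2+\eta}}^2+\noise^2)$.

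For the spectrum, I exploit the observation already highlighted after Definition \ref{def:degeneracy}: every $V_{d,k}$ is (a) an eigenspace of every inner-product kernel $h(\langle\bx_1,\bx_2\rangle/d)$, with eigenvalue $\xi_{d,k}$ proportional (after normalization) to $\mu_k(\sigma)^2$, and (b) stable under the $\cG_d$-action. Since integrating over $\cG_d$ against $\pi_d$ is exactly the orthogonal projection onto $L^2(\cA_d,\cG_d)$, it commutes with the spectral decomposition of the underlying inner-product kernel, so $H_\inv$ has eigenvalue $\xi_{d,k}$ on $V_{d,k}(\cG_d)$ and eigenvalue $0$ on its orthogonal complement inside $V_{d,k}$. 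The $\alpha$-degeneracy hypothesis enters as the effective multiplicity $\dim V_{d,k}(\cG_d) \asymp d^{k-\alpha}$, which is the arithmetic source of the $d^\alpha$ gain in sample and feature complexity relative to the non-invariant analysis of \cite{mei2021generalization}.

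The main obstacle is the concentration step. Writing $\bZ\in\R^{n\times N}$ for the invariant-feature matrix with entries $Z_{ij}=\int_{\cG_d}\sigma(\langle\bw_j,g\cdot\bx_i\rangle)\pi_d(\de g)$, we need that $\bZ\bZ^\top/N$ and $\bZ^\top\bZ/n$ each behave, on every invariant eigenspace, like the appropriate Wishart-type ``scalar plus low-rank'' model. For non-invariant inner-product kernels this is carried out in \cite{mei2021generalization} via a Hermite expansion of $\sigma$ combined with Gegenbauer (on $\S^{d-1}$) or Fourier (on $\Cube^d$) identities; the novelty here is that the group average couples contributions that would otherwise be independent, and one has to track the cancellations that leave only the invariant component. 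The plan is to expand $\sigma$ in Hermite polynomials, use that the $\cG_d$-action preserves each $V_{d,k}$ so that the projectors $\oproj_k$ commute with the group averaging, and then apply matrix concentration and hypercontractive moment bounds level by level. For the cyclic group on the sphere, one can do this under the general sub-Gaussian activation assumption by exploiting the explicit representation theory of $\Cyc_d$; for a general degeneracy-$\alpha$ group one restricts to polynomial activations so that the expansion is finite and each finite-degree piece reduces to a direct polynomial-moment computation on $\cA_d$.

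Plugging these two inputs into the master theorem of \cite{mei2021generalization} yields both conclusions. In the overparametrized case, $n$ is the bottleneck: $\sum_{k\le \lvn}\dim V_{d,k}(\cG_d)\asymp d^{\lvn-\alpha}\ll n\ll d^{\lvn+1-\alpha}\asymp \sum_{k\le \lvn+1}\dim V_{d,k}(\cG_d)$, so invariant RFRR interpolates all invariant polynomial components of $f_d$ of degree at most $\lvn$ and leaves the residual $\|\oproj_{>\lvn}f_d\|_{L^2}^2$; symmetrically in the underparametrized regime with $N$ and $\lvN$ in place of $n$ and $\lvn$ (the extra $d^\alpha$ in the penalty normalization $N\lambda/d^\alpha$ of \eqref{eq:RFRR_problem} is exactly what is needed so that the effective regularization matches the invariant spectrum). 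The noise variance $\noise^2$ and the higher-degree tail of $f_d$ are absorbed into the $o_{d,\P}(1)\cdot(\|f_d\|_{L^{2+\eta}}^2+\noise^2)$ term via standard hypercontractivity on $\S^{d-1}$ or $\Cube^d$.
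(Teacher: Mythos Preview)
Your proposal is correct and follows essentially the same route as the paper: reduce to the master theorem of \cite{mei2021generalization}, diagonalize the invariant activation in the basis of $\cG_d$-invariant polynomials to obtain eigenvalues $\xi_{d,k}$ with multiplicities $D(\cA_d;k)\asymp d^{k-\alpha}$, and then verify the required feature-map concentration and spectral-gap conditions, with the same case split (polynomial activation for general degeneracy-$\alpha$ groups; general activation for $(\S^{d-1}(\sqrt d),\Cyc_d)$). The one point worth sharpening is that the concentration input needed by \cite{mei2021generalization} is not a Wishart-type matrix statement about $\bZ\bZ^\top$ but rather uniform concentration of the \emph{scalar diagonal} elements $H_{d,>\evn}(\bx_i,\bx_i)$ of the truncated kernel around their mean---the paper obtains this via Poincar\'e-type variance bounds on $\int_{\cG_d}(\langle\bz,g\cdot\bz\rangle/d)^k\,\pi_d(\de g)$ combined with hypercontractivity, which is the step where the restriction $\alpha\le 1$ is actually used.
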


In particular, this theorem applies to the one-dimensional and two-dimensional cyclic groups, and band-limited functions listed in Section \ref{sec:Examples}. We refer readers to Appendix \ref{sec:key_intuition} for an informal intuition and Appendix \ref{sec:proof_RFRR} for the proof of this result.

We can compare these bounds with ridge regression on the standard random features model of Eq.~\eqref{eq:RF_std}. Theorem 2 in \cite{mei2021generalization} (with Assumption \ref{ass:activation_RC}) shows that the same test error holds as in Theorem \ref{thm:RFRRinvar} but with $d^{\lvn  + \delta} \leq n \leq d^{\lvn + 1 - \delta}$ and $d^{\lvN  + \delta} \leq N \leq d^{\lvN + 1 - \delta}$. We thus gain a factor $d^\alpha$ in the sample and feature complexity by using invariant features compared to non invariant ones.

\begin{remark}
  Assumption \ref{ass:activation_RC} requires the activation function to be polynomial, except for the cyclic group, for which only differentiability conditions are assumed. We believe that the differentiability condition (and indeed weaker conditions)
  should be sufficient for general groups. We defer these improvements to future work.
\end{remark}

\begin{remark}
  %We can generalize Definition \ref{def:degeneracy} by saying that $\cG_d$ has degeneracy $\alpha$ if $\lim_{d\to\infty}d^{-1}\log\dim(V_{d,k}/V_{d,k}(\cG_d)) =\alpha$.
  %We expect Theorem \ref{thm:RFRRinvar} to extend to groups of degeneracy $\alpha\neq 1$ (possibly under additional conditions).
  Consider two-dimensional images with $d = D \times D$ (Example \ref{ex:TwoDImages})
  and functions $f_d$ that are invariant with respect to the group of cyclic translations along the horizontal direction only.
  It can be shown that this group has degeneracy  $\alpha = 1/2$, and in fact $\dim(V_{d,k}/V_{d,k}(\cG_d))\asymp D = d^{1/2}$. Our theory also applies to this group. 
  %Again by essentially the same arguments as in our proofs, this implies that (for polynomial activation functions), invariant RFRR fits a degree $\ell$ polynomial approximation as long as $d^{\ell - 1/2 + \delta} \leq \min (n, N) \leq d^{\ell + 1/2 - \delta}$.

% More generally, we expect that, if $\cG_d$ has degeneracy $\alpha$, invariant RFRR fits a degree-$\ell$ polynomial  approximation as long as $d^{\ell - \alpha + \delta} \leq \min (n, N) \leq d^{\ell - \alpha +1  - \delta}$. 
\end{remark}

\section{Invariant kernel machines}

Note that any invariant kernel of the form \eqref{eq:InvariantInnerProd} can be written as a kernel of the form:
\begin{equation}\label{eq:invar_kernel}
H_{d, \inv} ( \bx_1 , \bx_2 ) = \int_{\cG_d} \E_{\bw \sim \Unif (\S^{d-1})} \big[ \sigma ( \< \bx_1 , \bw \> ) \sigma ( \< \bx_2 , g \cdot \bw \> )  \big] \pi_d (\de g) \, .
\end{equation}
To see this, note that any inner product kernel $h$ can be decomposed as
\[
h(\< \bx_1,  \bx_2\>/d) =
\E_{\bw \sim \Unif (\S^{d-1})} \big[ \sigma ( \< \bx_1 , \bw \> ) \sigma ( \< \bx_2 , \bw \> )  \big]
\]
for some activation function $\sigma$, which amounts to taking the square root
of the positive semidefinite operator associated to
%\footnote{If $h(\< \bx_1, \bx_2\>/d) = \sum_{k=0}^{\infty}B(d,k)\xi_{d,k}^2Q_{d,k}(\<\bx_1,\bx_2\>)$, where $Q_{d, k}$'s are the Gegenbauer polynomials as defined in Appendix \ref{sec:technical_background}, we can take $\sigma(\< \bx, \bw\>) = \sum_{k=0}^{\infty}B(d,k)\xi_{d,k} Q_{d,k}(\sqrt{d} \<\bx,\bw\>)$. }
$h$. Substituting in Eq.~\eqref{eq:InvariantInnerProd}, we get the desired representation.

Consider Kernel ridge regression with regularization parameter
$\lambda$ associated to $H_{d,\inv}$, that we call invariant KRR. Namely, we learn a function $\hat f_\lambda^\inv ( \bx ; \hat \bu (
\lambda) ) = \sum_{i  \in [n]} \hat u_i H_{d,\inv} ( \bx_i , \bx)$ where %\am{Changed normalization here. We can keep the old normalization in the proofs.}
\begin{equation}\label{eq:KRR_problem}
\hat \bu (\lambda) = \argmin_{\bu} \left\{ \sum_{i = 1}^n  \big( y_i - \hat f_\lambda^\inv ( \bx_i ; \bu ) \big)^2  + \frac{\lambda}{d^\alpha}  \| \hat f_\lambda^\inv ( \,\cdot\, ; \bu ) \|_{\cH}^2 \right\} \, .
\end{equation}
with $\| \cdot \|_{\cH}$ the RKHS norm associated to $H_{d,\inv}$.
%We get
%\[
%\hat \bu = ( \bH + \lambda \id_n )^{-1} \by,
%\]
%where the kernel matrix $\bH = (H_{ij} )_{ij \in [n]}$ is given by $H_{ij} = H_{d,\inv} (\bx_i , \bx_j)$, and $\by = (y_1 , \ldots , y_n )^\sT$. 
We further denote the test error of invariant KRR by
\begin{equation}\label{eq:test_error_KRR}
R_{\KR, \inv} (f_d, \bX, \lambda) := \E_\bx \Big[ \Big(f_d (\bx) - \hat f_\lambda^\inv (\bx ; \hat \bu ) \Big)^2 \Big] \, .
\end{equation}
\begin{theorem}[Test error of invariant KRR]\label{thm:invar_KRR}
Let $\cG_d$ be a group of degeneracy $\alpha \le 1$ and $\{ f_d \in L^2(\cA_d, \cG_d) \}_{d \ge 1}$ be a sequence of $\cG_d$-invariant functions. Assume $d^{\lvn - \alpha + \delta} \le n \le d^{\lvn + 1 - \alpha - \delta}$ for some fixed integer $\lvn \ge 1$ and some $\delta > 0$. Let $\sigma$ be an activation function that satisfies Assumption \ref{ass:activation_RC} at level $(\lvn, \lvn)$ (and $N = \infty$) and let $H_{d,\inv}$ be the associated invariant kernel as defined in Eq.~\eqref{eq:invar_kernel}. Then, the following holds for the test error of invariant KRR (c.f. Eq.~\eqref{eq:test_error_KRR}): for any $\lambda = O_d(1)$ (including $\lambda = 0$ identically) any $\eta > 0$, we have
\begin{align}
 R_{\KR, \inv}(f_{d}, \bX, \lambda) =&~ \| \oproj_{> \lvn} f_d \|_{L^2}^2 + o_{d,\P}(1) \cdot (\|  f_d \|_{L^{2 + \eta}}^2 + \noise^2).
\label{eq:invarKRR_bound}
\end{align}
\end{theorem}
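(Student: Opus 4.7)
My plan is to reduce Theorem \ref{thm:invar_KRR} to the general KRR risk characterization of \cite{mei2021generalization}, by first diagonalizing the invariant kernel with respect to the level decomposition $L^2(\cA_d) = \bigoplus_{k \ge 0} V_{d,k}$ and then verifying the concentration hypotheses of that framework in the invariant setting.

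To diagonalize, let $\Hop_{\inv}$ and $\Hop$ denote the integral operators on $L^2(\cA_d)$ with kernels $H_{d,\inv}$ and $h(\langle \bx_1,\bx_2\rangle/d)$, respectively. A change of variables using the orthogonality of $g \in \cG_d \subset \cO(d)$ gives the factorization $\Hop_{\inv} = \Hop \circ \oproj_{\cG_d}$, where $\oproj_{\cG_d} f(\bx) := \int_{\cG_d} f(g^{-1}\cdot \bx) \, \pi_d(\de g)$ is the $L^2$-orthogonal projection onto $L^2(\cA_d,\cG_d)$. Since $\Hop$ acts on each $V_{d,k}$ by the scalar $\xi_{d,k} \asymp \mu_k(\sigma)^2/D_{d,k}$ with $D_{d,k} := \dim V_{d,k} \asymp d^k$, and $\oproj_{\cG_d}$ preserves every $V_{d,k}$ and projects within it onto the invariant subspace $V_{d,k}(\cG_d)$, the operator $\Hop_{\inv}$ acts by $\xi_{d,k}$ on $V_{d,k}(\cG_d)$ and by zero on the non-invariant complement. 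The degeneracy assumption translates into $\dim V_{d,k}(\cG_d) \asymp d^{k-\alpha}$, so the effective rank of $\Hop_{\inv}$ up to level $k$ is $\Theta(d^{k-\alpha})$ and the trace weight at level $k$ is $\xi_{d,k}\, \dim V_{d,k}(\cG_d) \asymp d^{-\alpha}$; the $d^{-\alpha}$ prefactor in the ridge penalty \eqref{eq:KRR_problem} is precisely the normalization that turns $\lambda = O_d(1)$ into a non-degenerate effective regularization.

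Plugging this spectral data into the general bias/variance formula of \cite{mei2021generalization}, and using that under $d^{\lvn - \alpha + \delta} \le n \le d^{\lvn + 1 - \alpha - \delta}$ one has $n \gg \dim V_{d, \le \lvn}(\cG_d)$ but $n \ll \dim V_{d, \lvn+1}(\cG_d)$, a standard bias/variance calculation shows that the components of $f_d$ (which, being $\cG_d$-invariant, are supported on $\bigoplus_k V_{d,k}(\cG_d)$) in degrees $\le \lvn$ are learned with vanishing error, while those in degrees $> \lvn$ pass through essentially unfit. This produces exactly \eqref{eq:invarKRR_bound}.

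The hard part is to verify the concentration hypothesis of \cite{mei2021generalization} for the invariant kernel. Concretely, one must show that the empirical Gram matrix $\bH = (H_{d,\inv}(\bx_i,\bx_j))_{i,j\in[n]}$ decomposes as $\bH = \bPsi \bD \bPsi^\sT + \kappa \id + \bDelta$, where $\bPsi$ is the evaluation matrix of an orthonormal basis of invariant harmonics of degree $\le \lvn$, $\kappa\,\id$ is the collective contribution of the higher-degree eigenvalues (which concentrate on a scalar multiple of identity), and the residual $\bDelta$ has operator norm $o_{d,\P}(\kappa)$. The difficulty compared to the non-invariant case is that each entry of $\bH$ is itself an average $\int h(\langle \bx_i, g\cdot \bx_j\rangle/d)\,\pi_d(\de g)$ of strongly correlated terms across $g \in \cG_d$, so standard moment bounds do not apply directly. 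I would attack this by expanding $h$ in Gegenbauer/Hermite polynomials and exploiting the key observation highlighted in the introduction --- that the action of $\cG_d$ preserves each $V_{d,k}$ --- to reduce the high-degree residual to weighted sums of squared Gram matrices of invariant level-$k$ harmonics, whose spectral norms can be controlled by a hypercontractivity or graph-counting moment estimate analogous to the one in \cite{mei2021generalization}, but with $D_{d,k}$ replaced by the invariant dimension $\dim V_{d,k}(\cG_d) \asymp d^{k-\alpha}$. The polynomial restriction on $\sigma$ for general groups in Assumption \ref{ass:activation_RC} is exactly what allows truncating this expansion at finite degree.
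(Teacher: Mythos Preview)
Your overall route matches the paper's: reduce to the general KRR theorem of \cite{mei2021generalization}, diagonalize the invariant kernel using that $\cG_d$ preserves each $V_{d,k}$ (your factorization $\Hop_{\inv}=\Hop\circ\oproj_{\cG_d}$ is equivalent to the paper's representation Lemma~\ref{lem:isometry_of_orthonomal_polynomials}), read off the invariant multiplicities $\dim V_{d,k}(\cG_d)\asymp d^{k-\alpha}$, and verify the abstract hypotheses.

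The one substantive divergence is in the concentration step. You propose to control $\|\bDelta\|_{\op}$ directly via moment or graph-counting estimates on Gram matrices of \emph{invariant} harmonics. The paper instead checks the off-the-shelf condition list of \cite{mei2021generalization}, where the only genuinely new item in the invariant setting is the \emph{diagonal} concentration
\[
\sup_{i\in[n]}\big|H_{d,>\evn}(\bx_i,\bx_i)-\E H_{d,>\evn}(\bx,\bx)\big|=o_{d,\P}\big(\Tr\Hop_{d,>\evn}\big).
\]
Via the representation lemma this reduces to concentration of the scalar quantities $F_k(\bx)=\int(\langle\bx,g\cdot\bx\rangle/d)^k\,\pi_d(\de g)$, which the paper handles by a Poincar\'e-inequality variance bound plus hypercontractivity (Proposition~\ref{prop:general_concentration_diagonal} and Lemmas~\ref{lem:Fk_moments_order}--\ref{lem:general_variance_hypercube}). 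The paper's path buys two things: it reuses \cite{mei2021generalization} wholesale rather than re-deriving the matrix decomposition, and---more importantly---it never needs an explicit orthonormal basis of $V_{d,k}(\cG_d)$ or any $\cG_d$-specific combinatorics, since everything is expressed through the group average of ordinary Gegenbauer polynomials. Your graph-counting route, by contrast, would likely require either such a basis or extra structural information about $\cG_d$ to control products of invariant harmonics, which is exactly what the paper's approach avoids for general degeneracy-$\alpha\le 1$ groups.
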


We can compare the performance of this kernel against a standard (inner product) kernel $H_d (\bx , \by ) = h_d ( \< \bx , \by \>/d)$. Then Theorem 4 in \cite{ghorbani2019linearized} shows that the above theorem holds but with $d^{\lvn  + \delta} \leq n \leq d^{\lvn + 1 - \delta}$. We gain a factor $d^\alpha$ in sample complexity by using an invariant kernel.

\begin{remark}
Recall that the neural tangent kernel (NTK) associated to a function $f ( \bx ; \bTheta )$ with random initialization $\bTheta_0$ is defined as
\[
H_{\NT} ( \bx , \by ) := \E_{\bTheta_0} \Big[ \<\nabla_\bTheta f ( \bx ; \bTheta_0 ), \nabla_\bTheta f ( \by ; \bTheta_0 ) \> \Big] \, .
\]
The neural tangent kernel associated to a multi-layers fully connected network is an inner-product kernel
(as long as the weights are initialized to be isotropic Gaussian.)
In contrast, the NTK associated to the CNN of Eq.~\eqref{eq:CNN} is an example of invariant kernel, and is covered
by Theorem \ref{thm:invar_KRR} (see Appendix \ref{sec:CNN_to_CNTK} for more details).
\end{remark}

\section{Comparison with alternative approaches}

To provide further context, it is useful to compare invariant random features and kernel models with other approaches.
Here we consider two alternatives:
$(i)$~\emph{output symmetrization}, which uses a non-invariant method for training and then
symmetrizes the estimated function over the group $\cG_d$ to obtain an invariant function;
$(ii)$~\emph{data augmentation}, which trains the model on a dataset augmented by samples obtained by applying group transformations to
the original data. As shown in  \cite{li2019enhanced}, data augmentation
is mathematically equivalent to invariant kernel methods, so that it is superior to standard kernel methods (with inner-product kernels). On the other hand, we show that output symmetrization of standard kernel estimators does not significantly improve over the standard kernel estimator, and is fundamentally sub-optimal comparing to invariant kernel methods.

%\vspace{0.2cm}
%\noindent\emph{Output symmetrization.}
\subsection{Output symmetrization}

Given an estimater $\hf$, the symmetrization operator $\cS \hf$ computes the average of $\hf$ over the group:
\begin{align}
  (\cS \hat f) (\bx) \equiv \int_{\cG_d} \hat f (g \cdot \bx)  \pi_d(\de g).
\end{align}
When the target function $f_d$ is $\cG_d$-invariant, one might naively think that the symmetrization operation will significantly improve the performance of standard kernel estimators (standard RFRR and KRR). Indeed, when $f_d \in L^2(\cA_d, \cG_d)$, Jensen's inequality gives $\| f_d - \cS \hf \|_{L^2}^2 = \| \cS(f_d - \hf) \|_{L^2}^2 \le \| f_d - \hf \|_{L^2}^2$. However, the proposition below (which is proved in Section \ref{sec:proof_output_symmetrization}) shows that $\cS \hf$ is not significantly better when $\hf$ is a standard kernel estimator. 

\begin{proposition}\label{prop:output_symmetrization}
Let $f_d \in L^2(\cA_d, \cG_d)$ be a sequence of target functions. For any sequence of estimators $\hf_d$ satisfying $\| \hf_d - \oproj_{\le \ell} f_d \|_{L^2}^2 \leq \eps$, we have
\begin{equation} 
\begin{aligned}
\| \oproj_{> \ell} f_d \|_{L^2}^2 - 2 \eps \| \oproj_{> \ell} f_d \|_{L^2}  \le&~ \| f_d - \cS \hf_d \|_{L^2}^2\\
 \le&~ \| f_d - \hf_d \|_{L^2}^2 \le \| \oproj_{> \ell} f_d \|_{L^2}^2 +  2 \eps \| \oproj_{> \ell} f_d \|_{L^2} + \eps^2. 
\end{aligned}
\end{equation}
\end{proposition}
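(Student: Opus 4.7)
\textbf{Proof proposal for Proposition \ref{prop:output_symmetrization}.}

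The plan rests on two structural facts. First, for either of our probability spaces $(\cA_d, \tau_d)$, the base measure $\tau_d$ is invariant under the action of $\cG_d$, so by Jensen's inequality the symmetrization operator $\cS$ is a contraction on $L^2(\cA_d)$:
\begin{equation*}
\|\cS h\|_{L^2}^2 = \int \Big( \int_{\cG_d} h(g \cdot \bx) \, \pi_d(\de g) \Big)^2 \de\tau_d(\bx) \leq \int\int h(g \cdot \bx)^2 \, \pi_d(\de g) \, \de\tau_d(\bx) = \|h\|_{L^2}^2.
\end{equation*}
Second, as noted in the introduction, each $V_{d,k}$ is preserved by the action of $\cG_d$, so $\cS$ maps $V_{d,k}$ into itself and hence commutes with $\oproj_{\leq \ell}$ and $\oproj_{>\ell}$.

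The middle inequality $\|f_d - \cS \hf_d\|_{L^2}^2 \leq \|f_d - \hf_d\|_{L^2}^2$ then follows immediately by applying the contraction property to $h = f_d - \hf_d$ and using $\cS f_d = f_d$ (since $f_d \in L^2(\cA_d, \cG_d)$). The rightmost inequality is a straightforward triangle inequality computation: from the assumption $\|\hf_d - \oproj_{\leq \ell} f_d\|_{L^2} \leq \eps$ (reading the exponent $2$ in the hypothesis as a typo, so that the bounds match dimensionally), one gets
\begin{equation*}
\|f_d - \hf_d\|_{L^2} \leq \|f_d - \oproj_{\leq \ell} f_d\|_{L^2} + \|\oproj_{\leq \ell} f_d - \hf_d\|_{L^2} \leq \|\oproj_{>\ell} f_d\|_{L^2} + \eps,
\end{equation*}
and squaring yields the claimed upper bound.

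The leftmost inequality is the only one that uses the commutation property. Writing $\hf_d = \oproj_{\leq \ell} \hf_d + \oproj_{>\ell} \hf_d$, the commutation of $\cS$ with the projections gives the orthogonal decomposition
\begin{equation*}
f_d - \cS \hf_d = \bigl( \oproj_{\leq \ell} f_d - \cS \oproj_{\leq \ell} \hf_d \bigr) + \bigl( \oproj_{>\ell} f_d - \cS \oproj_{>\ell} \hf_d \bigr),
\end{equation*}
where the two summands live in the orthogonal subspaces $V_{\leq \ell}$ and $V_{>\ell}$. By Pythagoras, the low-degree term contributes nonnegatively, so
\begin{equation*}
\|f_d - \cS \hf_d\|_{L^2}^2 \geq \|\oproj_{>\ell} f_d\|_{L^2}^2 - 2 \bigl|\langle \oproj_{>\ell} f_d, \, \cS \oproj_{>\ell} \hf_d \rangle\bigr|.
\end{equation*}
Apply Cauchy--Schwarz, then the contraction property to bound $\|\cS \oproj_{>\ell} \hf_d\|_{L^2} \leq \|\oproj_{>\ell} \hf_d\|_{L^2}$, and finally note $\|\oproj_{>\ell} \hf_d\|_{L^2} = \|\oproj_{>\ell}(\hf_d - \oproj_{\leq \ell} f_d)\|_{L^2} \leq \|\hf_d - \oproj_{\leq \ell} f_d\|_{L^2} \leq \eps$. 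This yields the lower bound.

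There is no serious obstacle here; the argument is essentially Hilbert-space bookkeeping. The one conceptual step --- the reason symmetrization cannot recover the high-degree part of $f_d$ --- is precisely that $\cS$ commutes with $\oproj_{>\ell}$ and is a contraction, so the high-frequency energy of $\hf_d$ (which is small by assumption) is the only resource available to approximate $\oproj_{>\ell} f_d$, leaving essentially $\|\oproj_{>\ell} f_d\|_{L^2}^2$ of irreducible error.
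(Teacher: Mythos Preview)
Your proof is correct and follows essentially the same route as the paper's: both use that $\cS$ is an $L^2$-contraction (Jensen), that $\cS$ commutes with $\oproj_{>\ell}$, and the bound $\|\oproj_{>\ell}\hf_d\|_{L^2}\le\eps$, then finish with Cauchy--Schwarz/triangle inequalities. You also correctly spotted the typo in the hypothesis (the paper's own proof indeed sets $\eps:=\|\hf_d-\oproj_{\le\ell}f_d\|_{L^2}$, not its square).
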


Now consider ---to be definite--- a setting in which $N\ge nd^{\delta}$ and $d^{\ell+\delta}\le n\le d^{\ell+1-\delta}$, and $\cG_d$ is a group with degeneracy $1$. For any $f_d \in L^2(\cA_d, \cG_d)$ with $\|f_d \|_{L^{2+\eta}}^2 = O_d(1)$, the results of \cite{mei2021generalization} imply that standard RFRR (c.f. Eq. (\ref{eq:RF_std})) with sufficiently small regularization returns a function $\hf_\RF$ with $\| \oproj_{\le \ell} f_d - \hf_\RF\|_{L^2}^2 = o_{d, \P}(1)$. Consequently, Proposition \ref{prop:output_symmetrization} implies that we have 
\[
\| f_d - \cS \hat f_{\RF} \|_{L^2}^2 = \| f_d - \hat f_{\RF} \|_{L^2}^2 +o_{d, \P}(1) = \| \oproj_{>\ell} f_d \|_{L^2}^2 +o_{d, \P}(1),
\]
while Theorem \ref{thm:RFRRinvar} implies that invariant RFRR $\hat f_{\RF}^\inv$ with sufficiently small regularization achieves a substantially smaller risk: \[
\| f_d - \hat f_{\RF}^\inv \|_{L^2}^2 =  \| \oproj_{>\ell+1} f_d \|_{L^2}^2 +o_{d, \P}(1).
\]

% \vspace{0.2cm}
% \noindent\emph{Data augmentation.}
 
 \subsection{Data augmentation}
 
 We consider full data augmentation whereby we replace each sample $(y_i , \bx_i)$ in the dataset by $|\cG_d|$ samples $\{(y_i , g \cdot \bx_i): g\in\cG_d\}$ (for simplicity we consider here the case of a finite group $\cG_d$), and perform standard KRR on the augmented dataset. One might naively think that this is not as effective as enforcing invariance in the kernel structure. After all, we are only requiring invariance to hold at the sampled points. However, \cite{li2019enhanced} showed that these two approaches are in fact equivalent.
 
 We compare KRR using the kernel $H(\bx,\by) = h(\< \bx, \by\> /d)$ on the augmented dataset, with invariant KRR on the original dataset using the symmetrized kernel $H_{\inv}(\bx, \by) = \int_{\cG_d} h(\< \bx, g \cdot \by\> /d) \pi_d(\de g)$. % as defined in Eq. (\ref{eq:InvariantInnerProd}).
%\begin{align*}
%  \oH ( \bx , \by ) \equiv  \frac{1}{|\cG_d|^2} \sum_{g,g' \in \cG_d} H (g \cdot \bx , g' \cdot \by) = \frac{1}{|\cG_d|} \sum_{g \in \cG_d} h (\<g \cdot \bx, \by\> / d).
%\end{align*}
Denote by $\hf_{\lambda}^{\data}$ and $\hf_{\lambda}^{\inv}$ the KRR estimates with the standard kernel $H$ and full data augmentation, and with the invariant kernel $H_\inv$ respectively.

\begin{proposition}[\cite{li2019enhanced}]\label{prop:data_augmentation}
  Let $\cG$ be a finite group, and $H$, $H_\inv$ as defined above. Then we have
$\hf_{\lambda}^{\data}=\hf_{ \lambda}^{\inv}$.
\end{proposition}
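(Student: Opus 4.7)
The plan is to exploit the symmetry of the augmented objective: because we augment by the full group $\cG_d$ and the labels $y_i$ do not depend on the added index $g$, the augmented KRR functional is invariant under the $\cG_d$-action on candidate functions. Strict convexity then forces the minimizer to be $\cG_d$-invariant, after which the augmented empirical loss collapses and the RKHS norm of $H$ restricted to invariant functions matches the norm of $H_{\inv}$. This reduces the augmented problem to invariant KRR up to an explicit factor of $|\cG_d|$ (absorbed into the $\lambda$-convention).

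Step 1 (symmetry of the augmented objective). Write the augmented KRR objective as
\[
J^{\data}(f) = \sum_{i=1}^n \sum_{g \in \cG_d} \big( y_i - f(g \cdot \bx_i) \big)^2 + \lambda \, \|f\|_{\cH}^2,
\]
and for $h \in \cG_d$ set $f^{(h)}(\bx) := f(h^{-1} \cdot \bx)$. Substituting $g \mapsto g h$ in the inner sum gives $\sum_{g} (y_i - f^{(h)}(g \cdot \bx_i))^2 = \sum_{g} (y_i - f(g\cdot \bx_i))^2$, while $\|f^{(h)}\|_{\cH} = \|f\|_{\cH}$ follows from $H(\bx_1,\bx_2) = h(\<\bx_1,\bx_2\>/d)$ and the orthogonality of $h$. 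Hence $J^{\data}(f^{(h)}) = J^{\data}(f)$, and by strict convexity the unique minimizer $\hf^{\data}_{\lambda}$ is $\cG_d$-invariant.

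Step 2 (identifying the invariant RKHS). Let $\cH^{\inv}$ denote the closed subspace of $\cG_d$-invariant elements of $\cH$, equipped with the inherited inner product. I would verify that $H_{\inv}$ is precisely the reproducing kernel of $\cH^{\inv}$ by checking, for $f \in \cH^{\inv}$,
\[
\< f, H_{\inv}(\cdot,\bx) \>_{\cH} = \int_{\cG_d} \< f, H(\cdot, g\cdot \bx) \>_{\cH}\, \pi_d(\de g) = \int_{\cG_d} f(g \cdot \bx)\, \pi_d(\de g) = f(\bx),
\]
using the reproducing property of $H$ and the invariance of $f$. The other inclusion is automatic since $H_{\inv}(\cdot,\bx) \in \cH^{\inv}$ for every $\bx$. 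Thus $\| \cdot \|_{\cH_{\inv}}$ coincides with $\| \cdot \|_{\cH}$ on $\cH^{\inv}$.

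Step 3 (collapse to invariant KRR). On $\cH^{\inv}$ one has $f(g\cdot \bx_i) = f(\bx_i)$, so
\[
J^{\data}(f) \;=\; |\cG_d|\sum_{i=1}^n \big(y_i - f(\bx_i)\big)^2 + \lambda\, \|f\|_{\cH^{\inv}}^2 \;=\; |\cG_d|\left\{ \sum_{i=1}^n \big(y_i - f(\bx_i)\big)^2 + \frac{\lambda}{|\cG_d|} \|f\|_{\cH^{\inv}}^2 \right\}.
\]
Minimizing over $f \in \cH^{\inv}$ (which we may by Step 1) yields exactly the invariant KRR problem \eqref{eq:KRR_problem} with matched regularization, proving the identity.

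The main obstacle is the norm-identification in Step 2: one needs to show the set-theoretic inclusion between the two RKHSs is also isometric. For a finite group this is clean because $H_{\inv}(\cdot,\bx) = |\cG_d|^{-1}\sum_{g} H(\cdot, g\cdot \bx)$ is a finite linear combination of elements of $\cH$, so the verification reduces to the linear-algebraic reproducing calculation displayed above. The symmetrization argument of Step 1 further requires the uniqueness of the KRR minimizer (immediate from strict convexity when $\lambda > 0$, and from viewing KRR as projection when $\lambda = 0$ with $\bK$ invertible).
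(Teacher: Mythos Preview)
The paper does not give its own proof of this proposition; it attributes the result to \cite{li2019enhanced} and moves on. Your argument is correct and self-contained. The three steps --- symmetry of the augmented functional forcing the minimizer into $\cH^{\inv}$, identification of $\cH^{\inv}$ (as a closed subspace of $\cH$) with the RKHS of $H_{\inv}$ via the reproducing property, and collapse of the data term --- are all sound. The one cosmetic slip is in Step~1: to turn $\sum_g(y_i-f^{(h)}(g\cdot\bx_i))^2$ back into the original sum you substitute $g\mapsto hg$ (left translation), not $g\mapsto gh$; either is a bijection of a finite group so nothing breaks. You are also right to flag the $\lambda$ bookkeeping: the paper never writes down the augmented objective explicitly, so the factor $|\cG_d|$ is indeed a matter of convention.

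For comparison, the argument one usually finds in the reference proceeds algebraically via the representer theorem: write $\hf^{\data}_\lambda(\bx)=\sum_{i,g}c_{i,g}H(g\cdot\bx_i,\bx)$, observe that the augmented kernel matrix and right-hand side are invariant under permuting the $g$-index within each block, conclude $c_{i,g}$ is independent of $g$, and read off $\hf^{\data}_\lambda(\bx)=|\cG_d|\sum_i c_i H_{\inv}(\bx_i,\bx)$ with the same normal equations as invariant KRR. Your variational route is arguably cleaner because it makes transparent \emph{why} the equivalence holds (group symmetry of the objective plus strict convexity) and extends immediately to other strictly convex losses; the algebraic route is tied to the closed-form ridge solution but makes the $\lambda$-matching completely explicit.
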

% \ref{sec:proof_data_augmentation}
A couple of remarks are in order. First, this equivalence is general (holds for any dataset $\{(y_i,\bx_i)\}_{i\le n}$),
and is in fact a consequence of the algebraic structure of ridge regressions.
Second, while this result establishes that the two approaches are mathematically equivalent, there are computational advantages for invariant KRR. Indeed, full data augmentation increases the size of the kernel matrix from $n$ to $n|\cG_d|$ which is  computationally more expensive. Finally, this equivalence shows that data augmentation with standard KRR is superior to output symmetrization of standard KRR.

\section{Numerical illustration}

\begin{figure}
\centering
\includegraphics[width = 1\linewidth]{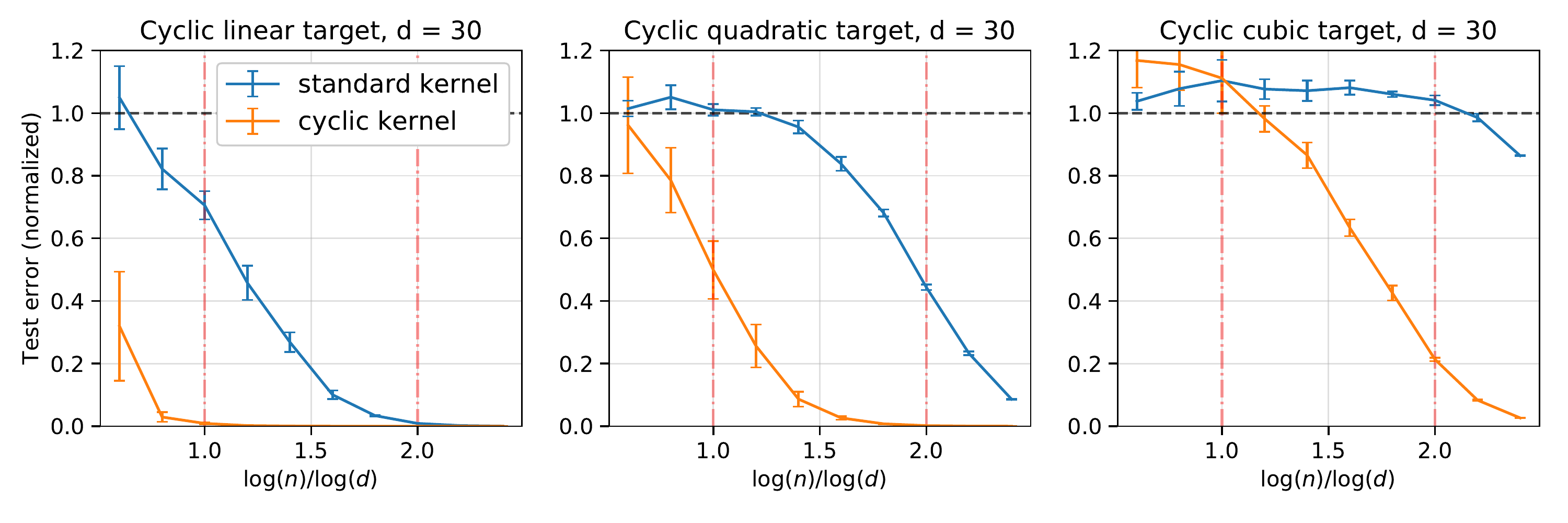}
\vspace{-1cm}
\caption{Learning cyclic polynomials (cf. Eq.~\eqref{eq:TargetPolynomials}) over the $d$-dimensional sphere, $d=30$, using KKR with a standard (inner-product) kernel and a cyclic invariant kernel, and regularization parameter $\lambda = 0^+$. We report the average and the standard deviation of the test error over 10 realizations, against the sample size $n$.}  
\label{fig:polynomials} 
\end{figure}

To check our predictions, we first consider the setting of $\bx \sim \Unif(\S^{d-1} (\sqrt{d}))$ with $d = 30$, and three cyclic invariant polynomials $f_{d, \lin}$, $f_{d, \quadratic}$, $f_{d, \cube} \in L^2(\S^{d-1}(\sqrt d), \Cyc_d)$ defined as
\begin{equation}\label{eq:TargetPolynomials}
f_{d, \lin} = \frac{1}{\sqrt d}\sum_{i = 1}^d x_i, ~~~~ f_{d, \quadratic} = \frac{1}{\sqrt d}\sum_{i = 1}^d x_i x_{i + 1}, ~~~~ f_{d, \cube} = \frac{1}{\sqrt d} \sum_{i = 1}^d x_i x_{i + 1} x_{i + 2}, 
\end{equation}
where the sub-index $i$ in $x_i$ should be understood in the modulo $d$ sense ($d + 1 = 1 ~ ({\rm mod }~d)$). We compare the performance between two kernels: a standard (inner product) kernel $H_d (\bx , \by) := h_d (\< \bx , \by \>/d)$ that we take to be the neural tangent kernel associated to a
depth-$5$ neural network with fully connected layers and ReLu activations $\sigma( x) = \max(x, 0)$.
We compare this with its cyclically invariant counterpart 
\[
H_{d,\Cyc} (\bx , \by) = \frac{1}{d} \sum_{0 \le i < d} h_d (\< \bx , g_i \cdot \by \>/d)\, ,
\]
 where $g_i \in \Cyc_d$ is the shift by $i$ positions as defined in Example \ref{ex:OneDImages}.
Note that the precise number of layers $L$ is not important. As long as  $L$ is fixed in the large $N,n$ limit,
our predictions remain unchanged, and the simulations appear to confirm this.

In Figure \ref{fig:polynomials}, we report the test errors of fitting each cyclic polynomials with KRR with the two kernels, and regularization parameter $\lambda =0^+$ (min-norm interpolation). We consider $\noise = 0$ and we report the risk averaged over 10 instances against the number of samples $n$. 
We observe that the risk in fitting $f_{d, \lin}$, $f_{d, \quadratic}$ and $f_{d, \cube}$, using
KRR with the cylcic kernel $H_{d,\Cyc}$ drops when 
$n =\Theta_d(1)$, $n= \Theta_d (d)$ and $n = \Theta_d(d^2)$ respectively. In contrast, the
risk of KRR with the standard kernel drops when $n =\Theta_d(d)$, $n= \Theta_d (d^2)$ and $n = \Theta_d(d^3)$
respectively. This matches well the predictions of  Theorem \ref{thm:invar_KRR}.

We next investigate the relevance of our results for real data.
We consider the MNIST dataset ($d= 28 \times 28 = 784$, $n_{{\rm train}} = 60000$, $n_{{\rm test}} = 10000$ and $10$ classes). We encoded class labels by $y_i\in\{ -4.5 , -3.5 , \ldots , 3.5 , 4.5\}$. We make these data invariant under cyclic translations in two dimensions (Example \ref{ex:TwoDImages}): for each samples in the training and test sets, we replace the image by a uniformly generated 2 dimensional (cyclic) translation of the image (see Fig.~\ref{fig:shifted_MNIST} in Appendix \ref{sec:MNIST_details}).
In this cyclic invariant MNIST data set, the labels are therefore invariant under the action of $\TwoCyc_{28,28}$. 

Images are highly anisotropic in pixel space $\reals^{784}$. In particular, directions corresponding to low-frequency
components of the Fourier transform of $\bx$ have significantly larger variance than directions corresponding to
high-frequency components.
Nevertheless, \cite{ghorbani2020neural}, showed that the analysis of random features and kernel models
of \cite{ghorbani2019linearized,mei2021generalization} extends to certain anisotropic models provided
the ambient dimension $d$ is replaced by a suitably defined effective dimension $d_{\seff}$.

In order to explore the role of data anisotropy, we pre-process images as follows. We compute the discrete Fourier
transform components of the images in the training set and select the $T \in \{ 20, 70, 120 , 200, 400, 784 \}$
components with the highest average absolute value. For each $T$, we then construct  training and test sets
in which we project each image onto the top $T$ frequencies (see Fig.~\ref{fig:DFT_MNIST} in Appendix \ref{sec:MNIST_details}).
When $T$ is small, we expect all the non-zero frequencies to have comparable variance and therefore
$d_{\seff}\approx T$. For larger $T$, we include frequencies of progressively small variance, and therefore $d_{\seff}$ should
saturate.

For each frequency content $T$, we compare the performance of two kernels: a standard inner-product kernel
$H_d (\bx , \by) := h_d (\< \bx , \by \>/d)$ and its cyclic counterpart given by 
\[
H_{d,\Cyc} (\bx , \by) = \frac{1}{28^2} \sum_{0 \le i,j < 28} h_d (\< \bx , g_{ij} \cdot \by \>/d)\, ,
\]
where $g_{ij} \in \TwoCyc_{28,28}$. We choose $H_d$ to be the neural tangent kernel associated to a two-layers neural network, and hence
$H_{d,\Cyc} $ is the one associated to a CNN analogous to \eqref{eq:CNN}
(but in two dimensions). We compute the KRR estimates with regularization parameter $\lambda = 0^+$. In Fig.~\ref{fig:ClassErrorMNIST}, we report the classification error averaged over 5 instances against the number of samples $\log (n) / \log (d)$.

We observe that the cyclic invariant kernel vastly outperform the inner product kernel:
the same test error is achieved at a significantly smaller sample size, in qualitative agreement with our general theory.
In order to quantify this gap, for each $T$ we fit two curves to the test error of the two kernels,
which differ uniquely in an horizontal shift (see Appendix \ref{sec:MNIST_details}). We estimate the sample complexity gain by
the difference between these shifts, and denote this estimate by $d_{\seff}$.

It is visually clear that $d_{\seff}$ increases with $T$, as expected. We
plot $d_{\seff}$ as a function of $T$ in Fig.~\ref{fig:EffDimMNIST} in Appendix \ref{sec:MNIST_details}. We observe that the behavior of $d_{\seff}$ roughly matches our expectations:
it grows linearly at small $T$ and eventually saturates.

\begin{figure}
\centering
\includegraphics[width = .87\linewidth]{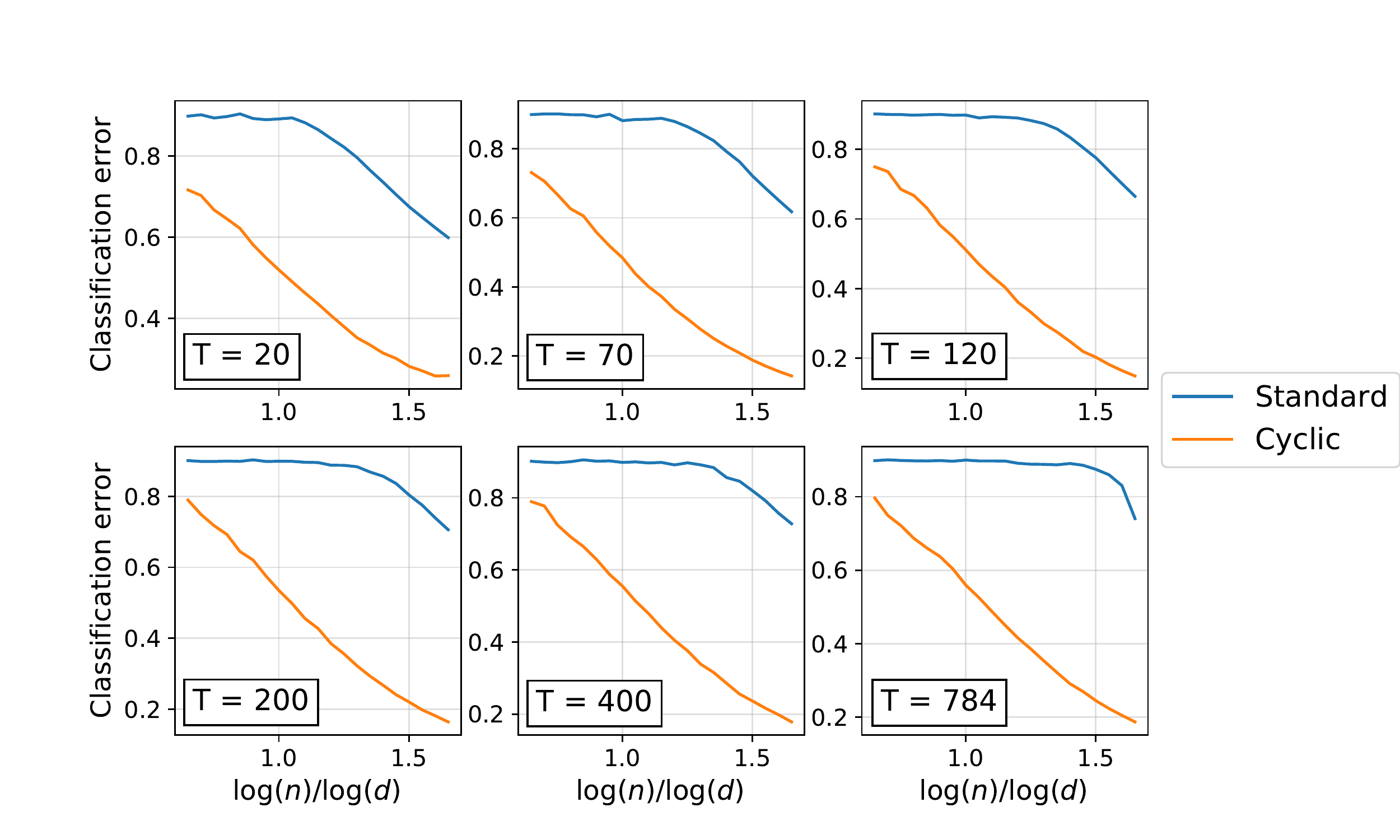}
\vspace{-0.7cm}
\caption{Classification error   for the cyclic invariant
  MNIST dataset. For each  frequencies content $T$, we plot the classification error averaged over $5$ instances against the number of samples $\log(n)/\log(d)$,
  for KRR using a standard (inner-product) kernel and a cyclic invariant kernel and regularization parameter $\lambda = 0^+$.}  
\label{fig:ClassErrorMNIST} 
\end{figure}

\section*{Acknowledgments}

This work was supported by NSF through award DMS-2031883 and from the Simons Foundation through Award 814639 for the
Collaboration on the Theoretical Foundations of Deep Learning
We also acknowledge  NSF grants CCF-2006489, IIS-1741162 and the ONR
grant N00014-18-1-2729.

\bibliographystyle{amsalpha}
\bibliography{all-bibliography.bbl}

\newpage

\appendix

\section{Some details in the main text}\label{sec:miscell}

\subsection{Approximation power of invariant networks}\label{sec:approx_invariant_network}

In the proposition below, we show that the approximation power of two-layers $\cG_d$-invariant neural networks are always no worse than two-layers fully-connected neural networks when the target function is $\cG_d$-invariant. 

\begin{proposition}\label{prop:approx_invariant_network}
Let $\sigma \in C(\R)$ be an activation function. Let $\cA_d \in \{ \S^{d-1}(\sqrt{d}), \Cube^d \}$. Let $\cG_d$ be a subgroup of $\cO(d)$ that preserves $\cA_d$. Let $\pi_d$ be the Haar measure of $\cG_d$. Let $f_* \in L^2(\cA_d; \cG_d)$ be a $\cG_d$-invariant function. Define the function classes of two-layers invariant neural networks and two-layers fully-connected neural networks by
\begin{align}
  \cF_{\NN, \cG_d, N} =&~ \Big\{ f(\bx) = \sum_{i=1}^N a_i \int_{\cG_d} \sigma(\< \btheta_i, g_{\ell} \cdot \bx\> / \sqrt{d}) \pi_d(\de g): \btheta_i \in \cA_d, a_i \in \R \Big\}, \\
 \cF_{\NN, N} =&~ \Big\{ f(\bx) = \sum_{i=1}^N
a_i\sigma(\< \btheta_i, \bx\> / \sqrt{d}) : \btheta_i \in \cA_d, a_i \in \R \Big\}.
\end{align}
Then we have 
\[
\inf_{f \in \cF_{\NN, \cG_d, N}} \| f_* - f \|_{L^2}^2 \le \inf_{f \in \cF_{\NN, N}} \| f_* - f \|_{L^2}^2.
\]
\end{proposition}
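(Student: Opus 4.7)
The proof will proceed by symmetrization: given any fully-connected network approximating $f_*$, we average its action over the group to obtain an invariant network that approximates $f_*$ at least as well.

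The plan is as follows. For any measurable $h : \cA_d \to \R$, define the symmetrization $(\cS h)(\bx) := \int_{\cG_d} h(g \cdot \bx)\, \pi_d(\de g)$. The first step is to observe that $\cS$ maps $\cF_{\NN, N}$ into $\cF_{\NN, \cG_d, N}$. Indeed, if $f(\bx) = \sum_{i=1}^N a_i \sigma(\<\btheta_i, \bx\>/\sqrt d)$ with $\btheta_i \in \cA_d$ and $a_i \in \R$, then by Fubini,
\begin{equation*}
(\cS f)(\bx) = \sum_{i=1}^N a_i \int_{\cG_d} \sigma(\<\btheta_i, g \cdot \bx\>/\sqrt d)\, \pi_d(\de g) \in \cF_{\NN, \cG_d, N},
\end{equation*}
with exactly the same parameters $\{\btheta_i, a_i\}$.

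The second step is to show that $\cS$ is an $L^2$-contraction under the base measure $\tau_d$. Since $\cG_d \subseteq \cO(d)$ preserves $\cA_d$ and since $\tau_d$ is either the uniform measure on $\S^{d-1}(\sqrt d)$ or on $\Cube^d$, the measure $\tau_d$ is $\cG_d$-invariant. Applying Jensen's inequality to $(\cS h)(\bx)^2$, then Fubini, then the $\cG_d$-invariance of $\tau_d$, yields
\begin{equation*}
\|\cS h\|_{L^2}^2 \le \int_{\cA_d} \int_{\cG_d} h(g \cdot \bx)^2 \pi_d(\de g)\, \tau_d(\de \bx) = \int_{\cG_d} \|h\|_{L^2}^2\, \pi_d(\de g) = \|h\|_{L^2}^2.
\end{equation*}

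The third step uses the hypothesis $f_* \in L^2(\cA_d; \cG_d)$: since $f_*(g \cdot \bx) = f_*(\bx)$ for all $g$, we have $\cS f_* = f_*$, so $\cS(f_* - f) = f_* - \cS f$ for every $f \in \cF_{\NN, N}$. Combining with the contraction bound gives $\|f_* - \cS f\|_{L^2}^2 \le \|f_* - f\|_{L^2}^2$. Since $\cS f \in \cF_{\NN, \cG_d, N}$, taking the infimum over $f \in \cF_{\NN, N}$ proves the claim.

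There is essentially no obstacle: the argument is a direct application of symmetrization together with Jensen's inequality. The only points that require minor care are (a) verifying that $\cS f$ lies in the stated invariant class with the same width $N$ (so no blow-up in model size), and (b) confirming that the base measure $\tau_d$ is $\cG_d$-invariant, which follows from $\cG_d \subseteq \cO(d)$ in the spherical case and from the hypothesis that $\cG_d$ preserves $\Cube^d$ in the hypercube case.
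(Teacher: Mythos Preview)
Your proposal is correct and follows essentially the same approach as the paper: define the symmetrization operator $\cS$, observe that $\cS$ maps $\cF_{\NN,N}$ into $\cF_{\NN,\cG_d,N}$ with the same width, and use Jensen's inequality together with $\cS f_* = f_*$ to conclude $\|f_* - \cS f\|_{L^2}^2 \le \|f_* - f\|_{L^2}^2$. Your write-up is in fact slightly more explicit than the paper's in justifying the $L^2$-contraction via the $\cG_d$-invariance of $\tau_d$.
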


\begin{proof}[Proof of Proposition \ref{prop:approx_invariant_network}]

We define the symmetrization operator $\cS: L^2(\cA_d) \to L^2(\cA_d; \cG_d)$ by 
\[
(\cS f) (\bx) = \int_{\cG_d} f(g \cdot \bx) \pi_d(\de g). 
\]
Since $f_* \in L^2(\cA_d; \cG_d)$, by Jensen's inequality, for any $f \in L^2(\cA_d)$, we have 
\[
\| f_\star - \cS f \|_{L^2}^2 = \| \cS (f_\star -  f) \|_{L^2}^2 \le \| f_\star - f \|_{L^2}^2. 
\]
Moreover, for any $f \in \cF_{\NN, N}$, we have $\cS f \in \cF_{\NN, \cG_d, N}$. This gives 
\[
\inf_{f \in \cF_{\NN, \cG_d, N}} \| f_* - f \|_{L^2}^2 \le \inf_{f \in \cF_{\NN, N}} \| f_\star - \cS f \|_{L^2}^2 \le \inf_{f \in \cF_{\NN, N}} \| f_\star - f \|_{L^2}^2. 
\]
This concludes the proof. 
\end{proof}

\subsection{Intuition for the proofs of Theorems \ref{thm:RFRRinvar} and \ref{thm:invar_KRR}}\label{sec:key_intuition}
 
 Theorem \ref{thm:RFRRinvar} and \ref{thm:invar_KRR} are consequences of general theorems proved in \cite{mei2021generalization}. The $d^\alpha$ improvement between invariant and non-invariant models can be understood as follows: consider an inner-product activation $\sigma ( \< \bx , \btheta \> / \sqrt{d})$ with $\bx , \btheta \sim \Unif (\cA_d)$ (where we denoted $\btheta = \sqrt{d} \cdot \bw$), then we have the following eigendecomposition
\[
\sigma ( \< \bx , \btheta \> / \sqrt{d} ) = \sum_{k = 0}^\infty \xi_{d,k}^2 \sum_{l = 1 }^{B(\cA_d ; k)} Y^{(d)}_{kl} ( \bx ) Y^{(d)}_{kl} ( \btheta ) \, ,
\]
where $\{ Y^{(d)}_{kl} \}_{l \in [B(\cA_d ; k)]}$ form an orthonormal basis of $V_{d,k}$, the subspace of degree-$k$ polynomials on $\cA_d$ (see Section \ref{sec:technical_background} for background on functional spaces on the sphere and hypercube). The eigenvalues of $\sigma$ are given by $\{ \xi_{d,k} \}_{k \ge 0}$ with each having degeneracy $B(\cA_d ; k)$.

As mentioned in the introduction, the symmetry group $\cG_d$ preserves $V_{d,k}$ (see Section \ref{sec:invariant_polynomials}) and the invariant activation function has the following eigendecomposition
\[
\osigma ( \bx ; \btheta ) := \int_{\cG_d} \sigma ( \< \bx , g \cdot \btheta \> / \sqrt{d} ) \pi_d ( \de g ) = \sum_{k = 0}^\infty \xi_{d,k}^2 \sum_{l = 1 }^{D (\cA_d ; k)} \oY^{(d)}_{kl} ( \bx ) \oY^{(d)}_{kl} ( \btheta ) \, ,
\]
where the $\{ \oY^{(d)}_{kl} \}_{l \in [B(\cA_d ; k)]}$ form an orthonormal basis of $V_{d,k} ( \cG_d)$, the subspace of degree-$k$ invariant polynomials on $\cA_d$. The eigenvalues of $\osigma$ are given by $\{ \xi_{d,k} \}_{k \ge 0}$ with each having degeneracy $D(\cA_d ; k)$.

Hence $\osigma$ has the same eigenvalues $\xi_{d,k}$ as $\sigma$, but with degeneracy smaller by a factor 
\[
\frac{B(\cA_d ; k) }{ D( \cA_d ; k) } = \Theta_d ( d^\alpha ) \, .
\]
In other words, in order to fit degree $\ell$ polynomials using invariant methods, one needs to fit a factor $d^\alpha$ less eigendirections, which translates to a factor $d^\alpha$ improvement in the sample and features complexity. 

This intuition is verified rigorously in the proof of these theorems in Appendix \ref{sec:proofs_main}.

\subsection{Convolutional neural tangent kernel}\label{sec:CNN_to_CNTK}

\begin{proposition}\label{prop:CNN_to_CNTK}
Let $\sigma \in C^1(\R)$ be an activation function. Let $\cG_d$ be a discrete subgroup of $\cO(d)$ with Haar measure $\pi_d$. Let $f_N$ be an invariant neural network
\[
f_N(\bx; \bTheta) = \sum_{i=1}^N a_i \int_{\cG_d} \sigma(\< \bw_i, g \cdot \bx\>) \pi_d(\de g). 
\]
Let $a_i^0 \sim_{i.i.d.} \cN(0, 1)$ and $\bw_i^0 \sim_{i.i.d.} \Unif(\S^{d-1})$ independently, and $\bTheta^0 = (a_1^0, \ldots, a_N^0, \bw_1^0, \ldots, \bw_N^0)$. Then there exists $h_d: [-1, 1] \to \R$, such that for any $\bx, \by \in \S^{d-1}(\sqrt{d})$, we have almost surely
\[
\lim_{N\to\infty}\< \nabla_\bTheta f_N(\bx; \bTheta^0), \nabla_\bTheta f_N(\by; \bTheta^0)\> / N = \int_{\cG_d} h_d(\< \bx, g \cdot \by\>/d) \pi_d(\de g). 
\]
\end{proposition}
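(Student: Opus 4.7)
The plan is to realize the empirical NTK as an average of $N$ i.i.d.\ random kernels (one per neuron), pass to the limit with the strong law of large numbers, and then use rotational invariance of $\Unif(\S^{d-1})$ together with a change of variables on $\cG_d$ to put the limit into the desired invariant inner-product form. Differentiating $f_N(\bx;\bTheta)$ with respect to $\bTheta=(a_i,\bw_i)_{i\le N}$ gives
\[
\partial_{a_i}f_N(\bx)=\int_{\cG_d}\sigma(\<\bw_i,g\cdot\bx\>)\pi_d(\de g),\qquad
\nabla_{\bw_i}f_N(\bx)=a_i\int_{\cG_d}\sigma'(\<\bw_i,g\cdot\bx\>)(g\cdot\bx)\,\pi_d(\de g),
\]
so that, at $\bTheta=\bTheta^0$, the left-hand side of the claim equals $\tfrac1N\sum_{i=1}^N K_i(\bx,\by)$ with
$K_i(\bx,\by):=\partial_{a_i}f_N(\bx)\partial_{a_i}f_N(\by)+\<\nabla_{\bw_i}f_N(\bx),\nabla_{\bw_i}f_N(\by)\>$ depending only on $(a_i^0,\bw_i^0)$. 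The $K_i$ are therefore i.i.d.

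Next I would check integrability so that Kolmogorov's SLLN applies. Since $\bw_i^0\in\S^{d-1}$ almost surely and $\bx,\by\in\S^{d-1}(\sqrt d)$, one has $|\<\bw_i^0,g\cdot\bx\>|\le\sqrt d$, and by continuity $|\sigma|,|\sigma'|$ are bounded on $[-\sqrt d,\sqrt d]$; combined with $\E[(a_i^0)^2]=1$, this yields $\E|K_1(\bx,\by)|<\infty$. Hence a.s. $\tfrac1N\sum_i K_i(\bx,\by)\to\E[K_1(\bx,\by)]$.

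It remains to evaluate this expectation and rewrite it as an invariant inner-product kernel. By rotational invariance of $\Unif(\S^{d-1})$, for any $\bu_1,\bu_2\in\S^{d-1}(\sqrt d)$ both $\E_{\bw^0}[\sigma(\<\bw^0,\bu_1\>)\sigma(\<\bw^0,\bu_2\>)]$ and $\E_{\bw^0}[\sigma'(\<\bw^0,\bu_1\>)\sigma'(\<\bw^0,\bu_2\>)]$ depend on $(\bu_1,\bu_2)$ only through $\<\bu_1,\bu_2\>/d$. Writing these as $\phi_1(\<\bu_1,\bu_2\>/d)$ and $\phi_2(\<\bu_1,\bu_2\>/d)$ and noting $\<g\cdot\bx,g'\cdot\by\>=d\cdot(\<g\cdot\bx,g'\cdot\by\>/d)$, I obtain, after exchanging expectation with the two group integrals,
\[
\E[K_1(\bx,\by)]=\int_{\cG_d}\!\int_{\cG_d}\Phi_d\!\Big(\frac{\<g\cdot\bx,g'\cdot\by\>}{d}\Big)\pi_d(\de g)\,\pi_d(\de g'),\qquad
\Phi_d(t):=\phi_1(t)+d\,t\,\phi_2(t).
\]

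Finally, since $g,g'\in\cO(d)$ we have $\<g\cdot\bx,g'\cdot\by\>=\<\bx,g^{-1}g'\cdot\by\>$. Performing the change of variable $g'=gh$ on the compact group $\cG_d$ (valid by left-invariance of the Haar measure $\pi_d$) collapses the double integral to a single one,
\[
\int_{\cG_d}\!\int_{\cG_d}\Phi_d\!\Big(\frac{\<\bx,h\cdot\by\>}{d}\Big)\pi_d(\de g)\,\pi_d(\de h)=\int_{\cG_d}\Phi_d\!\Big(\frac{\<\bx,h\cdot\by\>}{d}\Big)\pi_d(\de h),
\]
which is the claimed form with $h_d=\Phi_d$. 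The bulk of the argument is routine NTK bookkeeping; the only step requiring genuine use of the setup is this final left-invariance change of variables on $\cG_d$ combined with the orthogonality of group elements, which is what reduces the naive double integral to a single invariant inner-product kernel of the form \eqref{eq:InvariantInnerProd}.
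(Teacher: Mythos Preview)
Your proposal is correct and follows essentially the same route as the paper's proof: compute the $\ba$- and $\bW$-gradients separately, apply the strong law of large numbers neuron-by-neuron, use rotational invariance of $\Unif(\S^{d-1})$ to express each piece as a function of $\langle g\cdot\bx,g'\cdot\by\rangle/d$, and then collapse the double group integral to a single one. If anything, you are a bit more explicit than the paper about the integrability check for the SLLN and about invoking left-invariance of the Haar measure for the final reduction, which the paper simply performs in one line.
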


\begin{proof}[Proof of Proposition \ref{prop:CNN_to_CNTK}] For $\bx, \by \in \S^{d-1}(\sqrt{d})$, define
\[
\begin{aligned}
h_d^{(1)}(\< \bx, \by\>/d) =& \E_{\bw \sim \Unif(\S^{d-1})}[\sigma(\< \bw, \bx\>) \sigma(\< \bw, \by\>)],\\
h_d^{(2)}(\< \bx, \by\>/d) =& \E_{\bw \sim \Unif(\S^{d-1})}[\sigma'(\< \bw, \bx\>) \sigma'(\< \bw, \by\>)\< \bx, \by\>].
\end{aligned}
\]
By the technical backgrounds in Section \ref{sec:technical_background}, we can see that $h_d^{(1)}$ and $h_d^{(2)}$ can be well-defined. 

Calculating the derivative of the neural network with respect to $\ba = (a_1, \ldots, a_N)$, we have 
\[
\frac{1}{N}\<\nabla_\ba f ( \bx ; \bTheta^0 ), \nabla_\ba f ( \by ; \bTheta^0 )\> = \int_{\cG_d \times \cG_d} \frac{1}{N}\sum_{i = 1}^N \Big[ \sigma ( \< \bw_i ,g \cdot \bx  \> ) \sigma ( \< \bw_i , g' \cdot \by \> ) \Big] \pi_d(\de g) \pi_d(\de g'). 
\]
Since $\cG_d$ is a discrete group, by law of large numbers, we have 
\[
\begin{aligned}
&~ \lim_{N \to \infty} \frac{1}{N}\<\nabla_\ba f ( \bx ; \bTheta^0 ), \nabla_\ba f ( \by ; \bTheta^0 )\>\\
=&~ \int_{\cG_d \times \cG_d} \E_\bw[ \sigma ( \< \bw ,g \cdot \bx  \> ) \sigma ( \< \bw , g' \cdot \by \> )] \pi_d(\de g) \pi_d(\de g')\\
=&~ \int_{\cG_d \times \cG_d} h_d^{(1)}(\< g \cdot \bx, g' \cdot \by \>/d ) \pi_d(\de g) \pi_d(\de g')\\
=&~ \int_{\cG_d} h_d^{(1)}(\< \bx, g \cdot \by \>/d ) \pi_d(\de g). 
\end{aligned}
\]
Moreover, calculating the derivative of the neural network with respect to $\bW = (\bw_1, \ldots, \bw_N)$, we have 
\[
\begin{aligned}
&~\frac{1}{N}\<\nabla_\bW f ( \bx ; \bTheta^0 ), \nabla_\bW f ( \by ; \bTheta^0 )\> \\
=&~ \int_{\cG_d \times \cG_d} \frac{1}{N}\sum_{i = 1}^N \Big[ (a_i^0)^2\sigma' ( \< \bw_i ,g \cdot \bx  \> ) \sigma' ( \< \bw_i , g' \cdot \by \> ) \< g \cdot \bx, g' \cdot \by\> \Big]\pi_d(\de g) \pi_d(\de g')
\end{aligned}
\]
Since $\cG_d$ is a discrete group, by law of large numbers, we have 
\[
\begin{aligned}
&~ \lim_{N \to \infty} \frac{1}{N}\<\nabla_\bW f ( \bx ; \bTheta^0 ), \nabla_\bW f ( \by ; \bTheta^0 )\>\\
=&~ \int_{\cG_d \times \cG_d} \E_\bw[ \sigma' ( \< \bw ,g \cdot \bx  \> ) \sigma' ( \< \bw , g' \cdot \by \> ) \< g \cdot \bx, g' \cdot \by\>] \pi_d(\de g) \pi_d(\de g')\\
=&~ \int_{\cG_d \times \cG_d} h_d^{(2)}(\< g \cdot \bx, g' \cdot \by \>/d ) \pi_d(\de g) \pi_d(\de g')\\
=&~ \int_{\cG_d} h_d^{(2)}(\< \bx, g \cdot \by \> /d) \pi_d(\de g). 
\end{aligned}
\]
Taking $h_d = h_d^{(1)} + h_d^{(2)}$ concludes the proof. 
\end{proof}

\subsection{Proof of Proposition \ref{prop:output_symmetrization}}\label{sec:proof_output_symmetrization}

%As shown in Appendix \ref{sec:invariant}, the action of $\cG_d$ on functions preserves the subspaces of polynomials. This implies that $\oproj_{>\ell} (\cS f) = \cS(\oproj_{> \ell} f)$ for any $f \in L^2(\cA_d)$. 
%Moreover, by Jensen's inequality, we have $\|\cS f\|_{L^2}^2 \le \| f\|_{L^2}^2$ for any $f \in L^2(\cA_d)$. 
  
Let $\hf_d$ be an estimator satisfying 
\begin{equation}\label{eqn:output_sym_1}
\eps^2 : = \| \oproj_{\le \ell} f_d - \hf_d\|_{L^2}^2 = \| \oproj_{\le \ell} f_d - \oproj_{\le \ell} \hf_d\|_{L^2}^2 + \| \oproj_{> \ell} \hf_d\|_{L^2}^2.
\end{equation}
By Jensen's inequality and by the equation above, we have
\begin{equation}\label{eqn:output_sym_2}
\| \cS \oproj_{>\ell}\hf_d\|_{L^2}^2 \le \| \oproj_{> \ell} \hf_d\|_{L^2}^2 \leq \eps^2.
\end{equation}
As a consequence, we have
\begin{equation}
\begin{aligned}
&~ \|\oproj_{>\ell} f_d\|_{L^2}^2 - 2 \eps \| \oproj_{> \ell}  f_d\|_{L^2}  \stackrel{(1)}{\le} \|\oproj_{>\ell} f_d - \cS \oproj_{>\ell} \hf_d \|_{L^2}^2 \stackrel{(2)}{=} \|\oproj_{>\ell} f_d - \oproj_{>\ell} \cS  \hf_d \|_{L^2}^2 \\
\stackrel{(3)}{\le}&~ \|f_d - \cS  \hf_d  \|_{L^2}^2 \stackrel{(4)}{=} \| \cS (f_d -  \hf_d ) \|_{L^2}^2 \stackrel{(5)}{\le}  \|f_d - \hf_d \|_{L^2}^2\\ \stackrel{(6)}{=}&~ \| \oproj_{\le \ell} f_d - \oproj_{\le \ell} \hf_d\|_{L^2}^2 + \| \oproj_{> \ell} f_d - \oproj_{>\ell} \hf_d\|_{L^2}^2 \stackrel{(7)}{=} \|\oproj_{>\ell} f_d \|_{L^2}^2 + \eps^2 + 2 \eps \|\oproj_{>\ell} f_d \|_{L^2}.
 \end{aligned}
 \end{equation}
 Here, (1) is by Eq. (\ref{eqn:output_sym_2}); (2) is by the fact that $\cS$ is exchangable with $\oproj_{> \ell}$ (c.f. Section \ref{sec:invariant}); (3) is by the fact that $\oproj_{> \ell}$ is a projection operator; (4) is by the fact that $f_d$ is $\cG_d$-invariant; (5) is by Jensen's inequality; (6) is by orthogonal decomposition; (7) is by Eq. (\ref{eqn:output_sym_1}). This concludes the proof.

\subsection{Details of numerical simulations}

\subsubsection{Synthetic data}

We consider the standard (inner-product) kernel $H_d ( \bx , \by ) = h_{\NTK} (\< \bx , \by \>/d)$ to be the neural tangent kernel associated to a depth-$5$ neural network with fully connected layers and ReLu activation $\sigma ( x) = \max(x , 0)$. This can be obtained iteratively as follow (see \cite{jacot2018neural} and \cite{bietti2020deep}): define for $u \in [-1,1]$,
\[
h_0 ( u) = \frac{1}{\pi} ( \pi - \arccos (u) ), \qquad h_1 ( u) = u \cdot  h_0 ( u) + \frac{1}{\pi} \sqrt{1 - u^2} \, ,
\]
and $h_{\NTK} (u) = h_{\NTK}^{5} (u)$ with $h^{1}_{\NTK} (u) = h^{1} (u) = u$ and for $k = 2 , \ldots, 5$,
\[
\begin{aligned}
 h^{k} (u) =&~ h_1 (  h^{k-1} (u) ) \, ,\\
 h_{\NTK}^{k} (u) =&~ h_{\NTK}^{k-1} (u) h_0 (  h^{k-1} (u) ) + h^{k} (u) \, .
\end{aligned}
\]
We compute the cyclic invariant kernel by summing over all cyclic translations $g \in \Cyc_d$:
\[
H_{d,\inv} ( \bx , \by ) = \frac{1}{d} \sum_{g \in \Cyc_d} h_{\NTK} (\< \bx , g \cdot \by \>/d) \, .
\]

\subsubsection{Cyclic invariant MNIST data set}\label{sec:MNIST_details}

\begin{figure}
\centering
\includegraphics[width = 1\linewidth]{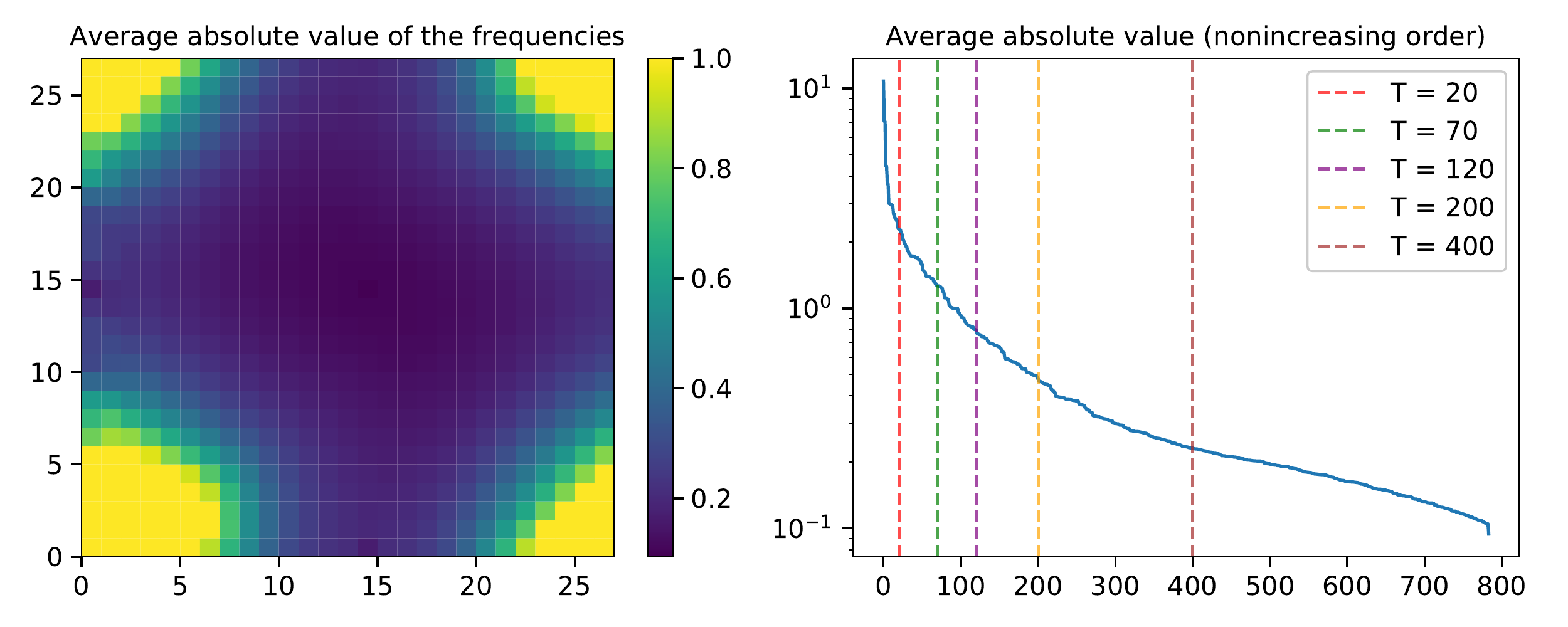}
\caption{Left frame: the absolute value of the frequency components of MNIST images averaged over the training set (threshold at $1$ in the figure). Coordinates on the bottom left-hand side correspond to lower frequency components while coordinates closer to the top right-hand side represent the high frequency directions. Right frame: average absolute value of the frequencies in nonincreasing order. The vertical lines correspond to the different $T$ chosen ($T=784$ corresponds to keeping all the frequencies).}  
\label{fig:DFT_MNIST} 
\end{figure}

We consider the MNIST data set of $28 \times 28$ grayscale images ($d = 784$) of handwritten digits, which contains $60000$ training images and $10000$ testing images. We pre-process the images in three steps:
\begin{itemize}
\item[(a)] We compute the discrete Fourier transform of the images in the training set and compute the average absolute value of the frequency components (see left frame of Fig.~\ref{fig:DFT_MNIST}). For each $T \in \{ 20, 70, 120 , 200, 400, 784 \}$, we select $\Omega_T \subset [28] \times [28]$ to be the set of the top $T$ frequencies (i.e., the $T$ frequencies with highest absolute value averaged on the training set).

    \item[(b)] For each $T$, we construct a train and test sets in which we project each image onto $\Omega_T$ (i.e., we set all the frequency components not in $\Omega_T$ to $0$). We displayed in Fig.~\ref{fig:projected_images} two digits and their projection on the top $T$ frequencies $\Omega_T$ for different $T$.
    
    \item[(c)] For each image in the training and test sets, we replace the image by a uniformly generated 2 dimensional (cyclic) translation of the image. We display some examples in Fig.~\ref{fig:shifted_MNIST}.
\end{itemize}

We further normalize the images so that $\| \bx \|_2 = 1$ and center the labels $y_i \in \cY $ where $\cY = \{ -4.5 , -3.5 , \ldots , 3.5 , 4.5 \}$. In order to compute the classification error, we round the prediction value to the nearest label in $\cY$.

% First, we compute the discrete Fourier transform of the images and average the absolute value of each component over the train set (see Fig.~\ref{fig:DFT_MNIST}). We then identify the largest $T$ frequencies for $T \in \{ 20, 70, 120 , 200, 400, 784 \}$. For each $T$, we then construct a train and test sets in which we project each image onto these top $T$ frequencies, see Fig.~\ref{fig:projected_images} for examples of images obtained. We then normalize each images to $\| \bx \|_2 = 1$.
\begin{figure}
\centering
\includegraphics[width = 1\linewidth]{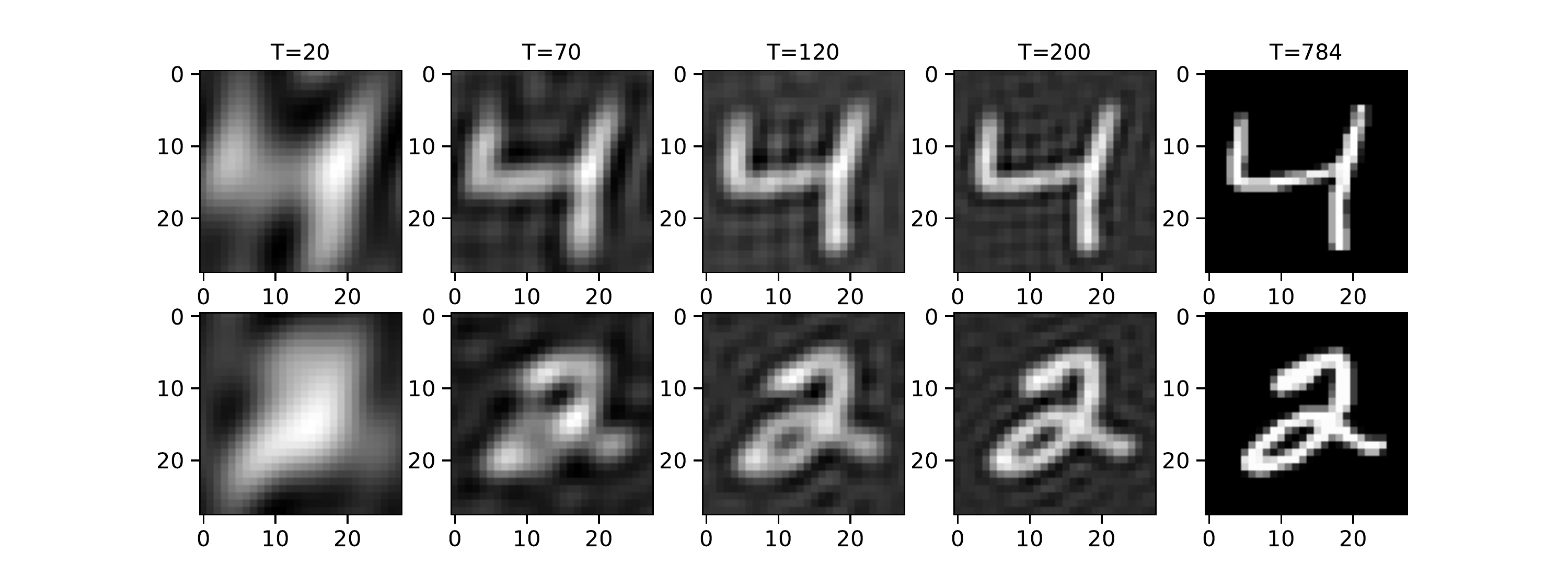}
\caption{Examples of two images projected on the top $T$ frequencies.}  
\label{fig:projected_images} 
\end{figure}

% Secondly, we make the data set invariant under cylcic translations in two dimensions: : for each images in the training and test sets, we replace the image by a uniformly generated 2 dimensional (cyclic) translation of the image, see Fig.~\ref{fig:shifted_MNIST} for examples.

We use the inner-product kernel $H_d ( \bx , \by ) = h_{\NTK} (\< \bx , \by \>/d)$ where $h_{\NTK}$ is the neural tangent kernel associated to a 2-layers neural network with fully connected layers and ReLu activation $\sigma ( x) = \max(x , 0)$, which given by
\[
h_{\NTK} (u) =  u \cdot \Big( \pi - \frac{\arccos (u)}{\pi}  \Big) + \frac{1}{\pi} \sqrt{1 - u^2} \, .
\]
The cyclic invariant kernel is computed by summing over all two-dimensional cyclic translations $g_{ij} \in \TwoCyc_{28,28}$:
\[
H_{d,\inv} ( \bx , \by ) = \frac{1}{28^2} \sum_{i,j = 0}^{27} h_{\NTK} (\< \bx , g_{ij} \cdot \by \>/d) \, .
\]

For each $T$, we estimate the effective dimension $d_{\seff}$ by fitting two parallel lines through the classification error points of the standard and cyclic kernels at the same time (keeping only the points where the curves decrease). The estimated (log) effective dimension is then given by the difference of the offsets. We report these estimates for different $T$ in Fig.~\ref{fig:EffDimMNIST}.

\begin{figure}
\centering
\includegraphics[width = 1.\linewidth]{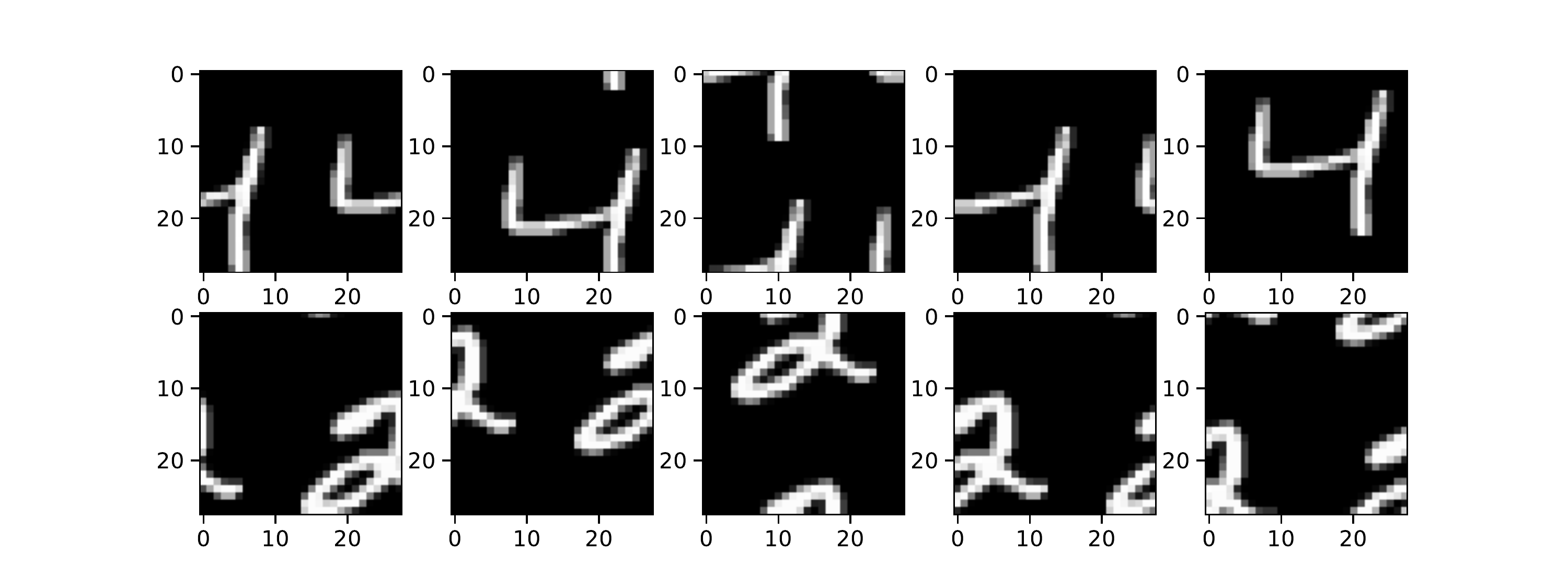}
\caption{Examples of random 2-dimensional cyclic translations of the images (for $T=784$).}  
\label{fig:shifted_MNIST} 
\end{figure}

\begin{figure}
\centering
\includegraphics[width = 0.4\linewidth]{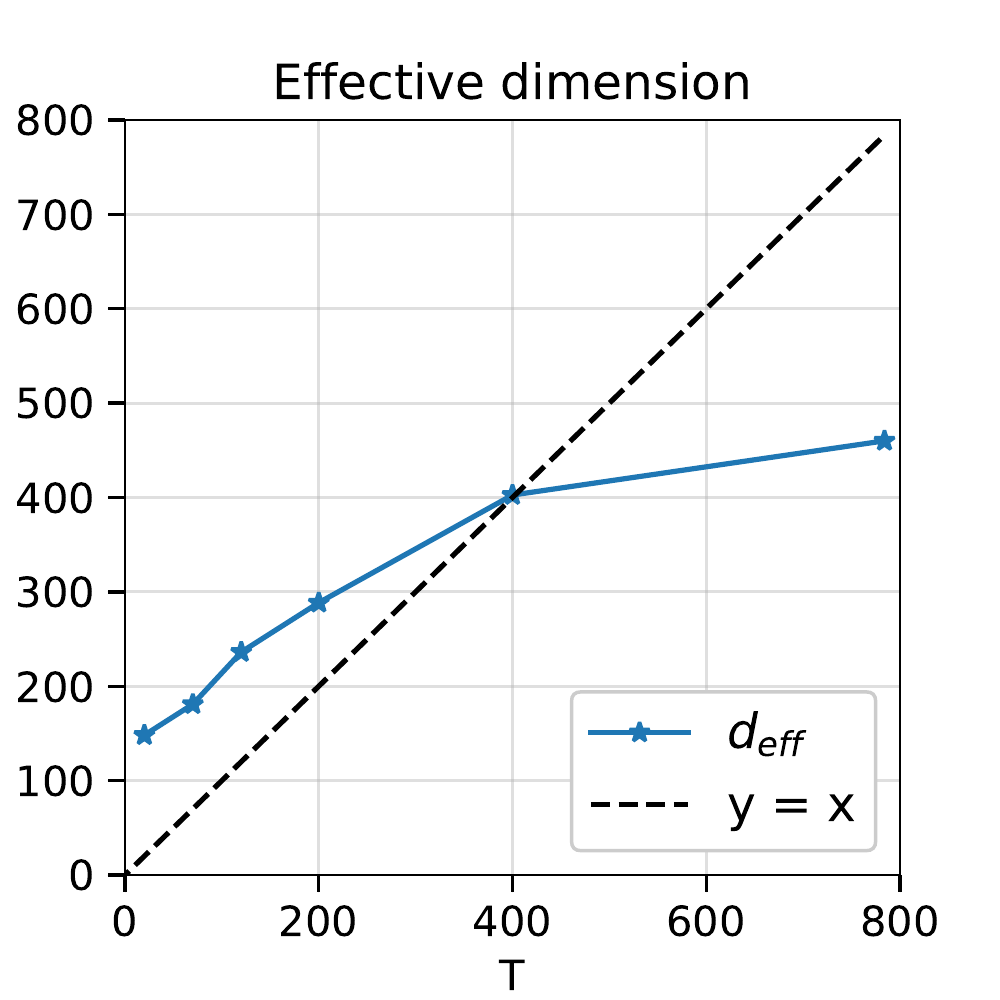}
\caption{Estimated sample size gap between standard and invariant kernel methods,  for the translationally invariant
  MNIST dataset, as a function of the frequency content $T$. }  
\label{fig:EffDimMNIST} 
\end{figure}

\section{Proof of the main theorems}\label{sec:proofs_main}

In this section, we present the proofs of Theorem \ref{thm:RFRRinvar} and \ref{thm:invar_KRR} stated in the main text. The rest of the appendices are organized as follow:
\begin{itemize}
    \item Appendix \ref{sec:invariant} presents key properties of the decomposition of invariant functions, while Appendix \ref{sec:technical_background} reviews some technical background on the functional spaces on the sphere and the hypercube.
    
    \item Appendix \ref{sec:counting_degeneracy} proves that the examples of symmetry group listed in Section \ref{sec:Examples} (one and two-dimensional cyclic groups and band-limited functions) have degeneracy 1.
    
    \item Appendix \ref{sec:concen_alpha} presents a key concentration result on the diagonal elements of polynomial invariant kernels. In particular, the results of Appendix \ref{sec:concen_alpha} are the only ones required in the proofs of Theorems \ref{thm:RFRRinvar} and \ref{thm:invar_KRR} in the case of polynomial activations for general symmetry group $\cG_d$ of degeneracy $\alpha \leq 1$. 
    
    \item Appendices \ref{sec:concentration_cyc} and \ref{sec:hyper_high} provides necessary results to extend the proofs to non-polynomial activations in the case of $(\cA_d , \cG_d ) = ( \S^{d-1} (\sqrt{d}), \Cyc_d)$.
\end{itemize}

 \subsection{Notations}

For a positive integer, we denote by $[n]$ the set $\{1 ,2 , \ldots , n \}$. For vectors $\bu,\bv \in \R^d$, we denote $\< \bu, \bv \> = u_1 v_1 + \ldots + u_d v_d$ their scalar product, and $\| \bu \|_2 = \< \bu , \bu\>^{1/2}$ the $\ell_2$ norm. Given a matrix $\bA \in \R^{n \times m}$, we denote $\| \bA \|_{\op} = \max_{\| \bu \|_2 = 1} \| \bA \bu \|_2$ its operator norm and by $\| \bA \|_{F} = \big( \sum_{i,j} A_{ij}^2 \big)^{1/2}$ its Frobenius norm. If $\bA \in \R^{n \times n}$ is a square matrix, the trace of $\bA$ is denoted by $\Tr (\bA) = \sum_{i \in [n]} A_{ii}$.

We use $O_d(\, \cdot \, )$  (resp. $o_d (\, \cdot \,)$) for the standard big-O (resp. little-o) relations, where the subscript $d$ emphasizes the asymptotic variable. Furthermore, we write $f = \Omega_d (g)$ if $g(d) = O_d (f(d) )$, and $f = \omega_d (g )$ if $g (d) = o_d (f (d))$. Finally, $f =\Theta_d (g)$ if we have both $f = O_d (g)$ and $f = \Omega_d (g)$.

We use $O_{d,\P} (\, \cdot \,)$ (resp. $o_{d,\P} (\, \cdot \,)$) the big-O (resp. little-o) in probability relations. Namely, for $h_1(d)$ and $h_2 (d)$ two sequences of random variables, $h_1 (d) = O_{d,\P} ( h_2(d) )$ if for any $\eps > 0$, there exists $C_\eps > 0 $ and $d_\eps \in \Z_{>0}$, such that
\[
\begin{aligned}
\P ( |h_1 (d) / h_2 (d) | > C_{\eps}  ) \le \eps, \qquad \forall d \ge d_{\eps},
\end{aligned}
\]
and respectively: $h_1 (d) = o_{d,\P} ( h_2(d) )$, if $h_1 (d) / h_2 (d)$ converges to $0$ in probability.  Similarly, we will denote $h_1 (d) = \Omega_{d,\P} (h_2 (d))$ if $h_2 (d) = O_{d,\P} (h_1 (d))$, and $h_1 (d) = \omega_{d,\P} (h_2 (d))$ if $h_2 (d) = o_{d,\P} (h_1 (d))$. Finally, $h_1(d) =\Theta_{d,\P} (h_2(d))$ if we have both $h_1(d) =O_{d,\P} (h_2(d))$ and $h_1(d) =\Omega_{d,\P} (h_2(d))$.

\subsection{Proof of Theorem \ref{thm:RFRRinvar}}\label{sec:proof_RFRR}

Let $\cG_d$ be a group of degeneracy $\alpha \leq 1$. Consider $\bx , \btheta \sim  \Unif (\cA_d)$, $d^{\lvn - \alpha + \delta_0} \leq n \leq d^{\lvn   - \alpha  + 1 - \delta_0}$, $d^{\lvN - \alpha +\delta_0 } \leq N \leq d^{\lvN - \alpha + 1 - \delta_0}$ and an activation function $\sigma$ that satisfies Assumption \ref{ass:activation_RC} at level $( \lvn , \lvN)$. Denote
\[
\osigma ( \bx ; \btheta ) = \int_{\cG_d} \sigma ( \< \btheta, g \cdot \bx \> / \sqrt{d} ) \pi_d ( \de g ).
\]
Theorem \ref{thm:RFRRinvar} is a consequence of Theorem 1 in \cite{mei2021generalization} where we take $\cX_d = \Omega_d = \cA_d$, $\nu_d = \tau_d = \Unif(\cA_d)$ and $\cD_d = \cV_d = L^2(\cA_d, \cG_d) \subset L^2(\cA_d)$. The proof amounts to checking that $\osigma$ indeed verifies the feature map concentration and spectral gap assumptions (see Section 2.2 in \cite{mei2021generalization}). We borrow some of the notations introduced in \cite{mei2021generalization} and refer the reader to their Section 2.1.

\begin{proof}[Proof of Theorem \ref{thm:RFRRinvar}]
For the sake of simplicity, we consider the overparametrized case $N(d) \geq n(d) d^\delta$ for some $\delta >0$, and therefore $\lvN \geq \lvn$. The underparametrized case $d^\delta N(d) \leq n(d)$ is treated analogously. 

 \noindent
{\bf Step 1. Diagonalization of the activation function $\osigma$ and choosing $\evn = \evn (d) $, $\evN = \evN(d)$. } 

We can decompose the inner product activation $\sigma$ in the basis of Gegenbauer polynomials (see Section \ref{sec:technical_background} for definitions):
\[
\sigma ( \< \bx , \btheta \> / \sqrt{d} ) = \sum_{k = 0}^\infty \xi_{d,k} B (\cA_d ; k ) Q_k^{(d)} ( \<\bx , \btheta \>) \, ,
\]
where (with $\be \in \cA_d$ arbitrary)
\[
\xi_{d,k} ( \sigma ) = \E_{\btheta \sim \Unif (\cA_d)} [ \sigma ( \< \be , \btheta \> /\sqrt{d} ) Q_k^{(d)} ( \< \be , \btheta \>)] \, .
\]
From Assumption \ref{ass:activation_RC}.$(a)$ that $| \sigma ( x ) | \leq c_0 \exp ( c_1 x^2 /2 )$ for some constants $c_0>0$ and $c_1 <1$ (which is trivially verified for a polynomial activation function), there exists a constant $C > 0 $ such that (see for example Lemma 5 in \cite{ghorbani2019linearized})
\begin{equation}\label{eq:upper_norm_sig}
\| \sigma ( \< \be , \, \cdot  \, \>/\sqrt{d} ) \|_{L^2(\cA_d)} = \sum_{k =1}^\infty \xi_{d,k}^2 B ( \cA_d ; k) \leq C.
\end{equation}
We have for fixed $k$, $B(\cA_d ; k ) = \Theta (d^k)$. Furthermore, for non-polynomial activation functions in the case of $(\cA_d , \cG_d) = ( \S^{d-1} (\sqrt{d} ) , \Cyc_d)$, we use $\sup_{k > \lvn} B(\S^{d-1} ; k )^{-1} = O_d ( d^{-\lvn - 1})$ (Lemma 1 in \cite{ghorbani2019linearized}). We deduce that 
\begin{align}
\sup_{k > \lvn} \xi_{d,k}^2 = &~ O_d ( d^{-\lvn - 1} )\, ,\\
\sup_{k > \lvN} \xi_{d,k}^2 = &~ O_d ( d^{-\lvN - 1} )\, .
\end{align}
From the correspondence between Gegenbauer and Hermite polynomials when $d \to \infty$ (see Eq.~\eqref{eqn:mu_lambda_relationship} in Section \ref{sec:Hermite}), Assumption \ref{ass:activation_RC}.$(b)$ implies that $\xi_{d,k}^2 = \Theta_d (d^{-k})$ for $k = 0, \ldots , \lvn$.

Let us diagonalize $\osigma$ in the basis of $\cG_d$-invariant polynomials $\{ \oY_{kl} \}_{k \ge 0 , \ell \in [D( \cA_d ; k)]}$ (see Section \ref{sec:invariant} for definitions). From Lemma \ref{lem:isometry_of_orthonomal_polynomials} stated in Section \ref{sec:invariant_representation}, we have
\begin{equation}\label{eq:decompo_osigma}
\begin{aligned}
\osigma ( \bx ; \btheta ) =&~ \sum_{k = 0}^\infty \xi_{d,k} B (\cA_d ; k ) \int_{\cG_d} Q_k^{(d)}(\< \bx, g \cdot \btheta \> ) \pi_d(\de g) \\
=&~ \sum_{k = 0}^\infty \xi_{d,k} \sum_{l = 1}^{D(\cA_d ; k)} \oY_{k l}^{(d)}(\bx) \oY_{kl}^{(d)} (\btheta) \, .
\end{aligned}
\end{equation}

Denote $(\lambda_{d,j})_{j \ge 1}$ the eigenvalues of $\osigma$ in non increasing order of their absolute value (namely, the $\xi_{d,k}$'s which have degeneracies $D(\cA_d ; k)$). Set $\evn$ and $\evN$ to be the number of eigenvalues $\lambda_{d,j}^2$ that are bigger than $d^{-\lvn - 1 + \delta}$ and $d^{-\lvN - 1 +\delta}$ respectively, for a constant $\delta>0$ that will be set sufficiently small (see Step 4). From the above discussion, $(\lambda_{d,j})_{j \leq \evn}$ corresponds exactly to all the eigenvalues associated to invariant polynomials of degree less of equal to $\lvn$, while $(\lambda_{d,j})_{j \leq \evN}$ does not contain any eigenvalues associated to invariant polynomials of degree bigger or equal to $\lvN+1$. Hence,
\begin{equation}\label{eq:bound_m_M}
\evn = \sum_{k = 0}^\lvn D( \cA_d ; k) = \Theta_d ( d^{\lvn - \alpha}), \qquad \evN \leq \sum_{k = 0}^\lvN D( \cA_d ; k) = O_d ( d^{\lvN - \alpha})\, ,
\end{equation}
where we used that $\cG_d$ has degeneracy $\alpha$ so that $D(\cA_d; k) = \Theta_d (d^{-\alpha}) \cdot B(\cA_d; k)$.

 \noindent
{\bf Step 2. Diagonal elements of the truncated kernel. } 

We introduce the kernel associated to activation $\osigma$:
\[
H_d ( \bx_1 , \bx_2 ) = \E_{\btheta} [ \osigma ( \bx_1 ; \btheta) \osigma ( \bx_2 ; \btheta )] = \sum_{k = 0}^\infty \xi_{d,k}^2 D (\cA_d ; k) \Upsilon_k^{(d)} (\bx_1 , \bx_2) \, ,
\]
where we denote
\[
\Upsilon_k^{(d)} (\bx_1 , \bx_2) = \frac{1}{D(\cA_d ; k)} \sum_{l = 1}^{D(\cA_d ; k)} \overline Y_{k l}^{(d)}(\bx_1) \overline Y_{kl}^{(d)} (\by_1 ) \, . 
\]
Similarly, we introduce a kernel in the feature space
\[
U_d ( \btheta_1 , \btheta_2 ) = \E_{\bx} [ \osigma ( \bx ; \btheta_1) \osigma ( \bx ; \btheta_2 )] = \sum_{k = 0}^\infty \xi_{d,k}^2 D (\cA_d ; k) \Upsilon_k^{(d)} (\btheta_1 , \btheta_2) \, .
\]
We denote $\Hop_d , \Uop_d :  L^2(\cA_d, \cG_d)  \to  L^2(\cA_d, \cG_d) $ the kernel operators with kernel representation $H_d$ and $U_d$, and denote $\Hop_{d,> \evn}$ and $\Uop_{d, > \evN}$ the kernel operators where the biggest $\evn$ and $\evN$ eigenvalues respectively are set to $0$. Recalling the discussion on the choice of $\evn$ and $\evN$, denote $E = \{ k : \xi_{d,k}^2 \leq d^{-\lvN -1 + \delta}\}$: $E$ contain all integers bigger or equal to $\lvN + 1$ and none smaller or equal to $\lvn$. 

The diagonal elements of the truncated kernels are then given by 
\begin{equation}\label{eq:truncated_kernels}
    \begin{aligned}
     H_{d, > \evn} ( \bx , \bx ) =&~  \sum_{k = \lvn + 1}^\infty \xi_{d,k}^2 D (\cA_d ; k) \Upsilon_k^{(d)} (\bx , \bx) \, , \\
     U_{d, > \evN} ( \btheta , \btheta ) =&~  \sum_{k \in E} \xi_{d,k}^2 D (\cA_d ; k) \Upsilon_k^{(d)} (\btheta , \btheta) \, ,
    \end{aligned}
\end{equation}
and 
\[
    \begin{aligned}
     \Tr ( \Hop_{d, > \evn}  ) = &~ \E_{\bx} [H_{d, > \evn} ( \bx , \bx )] = \sum_{k = \lvn + 1}^\infty \xi_{d,k}^2 D (\cA_d ; k) \, , \\
     \Tr ( \Uop_{d, > \evN}  ) = &~ \E_{\btheta} [U_{d, > \evN} ( \btheta , \btheta )] = \sum_{k \in E} \xi_{d,k}^2 D (\cA_d ; k)  \, .
    \end{aligned}
\]
From Assumption \ref{ass:activation_RC}.$(c)$, $\sigma$ is not a polynomial of degree less or equal to $\lvN$. Hence, there exists $\ell > \lvN$ such that $\mu_{\ell} (\sigma) \neq 0$ and therefore $\xi_{d,\ell}^2 D(\cA_d ; \ell) = \Theta (d^{-\alpha})$. Furthermore, from Eq.~\eqref{eq:upper_norm_sig} and the assumption that $\cG_d$ is of degeneracy $\alpha$ (for polynomial activation functions, see Proposition \ref{prop:h_concentration_general_sigma} in Section \ref{sec:hyper_high} for general $\sigma$ in the case of $(\cA_d , \cG_d) = (\S^{d-1} (\sqrt{d} ), \Cyc_d)$), we have
\begin{equation}\label{eq:scale_trace}
\Tr ( \Hop_{d, > \evn}  ) = \Theta (d^{-\alpha}) , \qquad \Tr ( \Uop_{d, > \evN}  ) =  \Theta (d^{-\alpha} ).
\end{equation}

 \noindent
{\bf Step 3. Checking the feature map concentration property at level $\{ N(d), \evN (d) , n (d) , \evn (d) \}_{d \ge 1} $. } 

Let us first consider the case of a polynomial activation function $\sigma$. Denote $D$ its degree and $u = u(d)$ the total (finite) number of nonzero eigenvalues of $\osigma$ (which are associated to invariant polynomials of degree less or equal to $D$). Let us verify the feature map concentration property (Assumption 1 in \cite{mei2021generalization}) with sequence $u(d) \ge \max( \evn, \evN )$. Note that $u \ge \max( \evn, \evN )$, part $(b)$ and $(c)$ of the property are trivially verified in that case.

\begin{itemize}
    \item[(a)] \textit{(Hypercontractivity of finite eigenspaces on $\cD_d$.)} The subspace of polynomials of degree less or equal to $D$ on the hypercube and the sphere verifies the hypercontractivity property (see Lemmas \ref{lem:hypercontractivity_hypercube} and \ref{lem:hypercontractivity_sphere} in Section \ref{app:hypercontractivity}).  
    
    \item[(d)] \textit{(Concentration of diagonal elements.)} From Eq.~\eqref{eq:truncated_kernels} and Proposition \ref{prop:general_concentration_diagonal} stated in Section \ref{sec:concen_alpha}, we have 
    \[
    \begin{aligned}
    &~ \sup_{i \in [n]} \Big\vert H_{d, > \evn} ( \bx_i , \bx_i )  - \E_{\bx} [ H_{d, > \evn} ( \bx , \bx )] \Big\vert \\
    \leq &~ \sum_{k = \lvn +1}^D \xi_{d,k}^2 D ( \cA_d ; k) \sup_{i \in [n]} \Big\vert \Upsilon_k^{(d)} (\bx_i , \bx_i) - \E_\bx [ \Upsilon_k^{(d)} (\bx , \bx) ]\Big\vert = o_{d,\P}(1) \cdot \E_{\bx} [ H_{d, > \evn} ( \bx , \bx )].
    \end{aligned}
    \]
    A similar computation shows the concentration of the diagonal elements of $U_{d, > \evN}$.
\end{itemize}

Let us now consider a non polynomial activation function $\sigma$ in the case of $\cA_d = \S^{d-1} (\sqrt{d} )$ and $\cG_d = \Cyc_d$ (of degeneracy 1). Let us choose $\ell > 2 \lvN + 10$ such that $\mu_\ell (\sigma ) \neq 0$ (it must exists otherwise $\sigma$ would be a polynomial) and therefore $\xi_{d, \ell}^2 = \Theta_d ( d^{-\ell})$. Consider $u =u(d)$ to be the number of eigenvalues such that $\lambda_{d,j}^2$ is strictly bigger than $\xi_{d, \ell}^2$. Then, $(\lambda_{d,j})_{j \leq u}$ do not contain any eigenvalues $\xi_{d,k}$ for $k \geq \ell$ and contain all $\xi_{d,k}$ for $k \leq \lvn$. In particular, $u \geq \max(\evn, \evN)$. Denote $E = \{ k : \xi_{d,k}^2 \leq \xi_{d, \ell}^2 \}$: $E$ contain all integers bigger or equal to $\ell$. 

Let us verify the feature map concentration property with the sequence $u(d)$ (part $(a)$ is the same with $D$ replaced by $\ell - 1$).

\begin{itemize}
    \item[(b)] \textit{(Properly decaying eigenvalues.)} We have
    \[
    \begin{aligned}
     \Tr (\Hop_{d, > u} ) \geq &~ \xi_{d, \ell}^2 D (\S^{d-1} ; \ell ) = \Omega_d ( d^{-\alpha} ) \, , \\
     \Tr (\Hop_{d, > u}^2 ) = & \sum_{k \in E} \xi_{d, k}^4 D (\S^{d-1} ; k ) \leq \xi_{d, \ell}^2 \Tr (\Hop_{d, > u} ) \, .
    \end{aligned}
    \]
    Hence,
    \[
    \frac{\Tr (\Hop_{d, > u} )^2}{\Tr (\Hop_{d, > u}^2 )} \geq \xi_{d, \ell}^{-2} \cdot \Tr (\Hop_{d, > u} ) = \Omega_d (1) \cdot d^{2\lvN + 9}  \geq \max(n, N)^{2 + \delta}. 
    \]
    
    \item[(c)] \textit{(Hypercontractivity of the high degree part.)} Denote $\osigma_{>u} = \oproj_E \osigma$ the activation $\osigma$ obtained by setting the first $u$ eigenvalues to $0$ (i.e., setting coefficients $k \not\in E$ to zero in Eq.~\eqref{eq:decompo_osigma}). From Eq.~\eqref{eq:scale_trace}, we need to show that for $p$ as defined in Assumption \ref{ass:activation_RC}.$(a)$, we have 
    \[
    \E_{\bx,\btheta} [ \osigma_{>u} ( \bx ; \btheta )^{2p} ]^{1/(2p)} = O_d (d^{-1/2+ \delta}).
    \]
    Denote $E_{\leq 4p} = E \cap \{ 0 , \ldots , 4p \}$ (recall that $E$ contains all $k \geq \ell$) and decompose $\osigma_{>u} = \oproj_{E_{\leq 4p}} \osigma + \oproj_{>4p} \osigma$. Then by triangle inequality we have,
    \[
    \begin{aligned}
    \E_{\bx,\btheta} [ \osigma_{>u} ( \bx ; \btheta )^{2p} ]^{1/(2p)}  \leq \E_{\bx,\btheta} [ \oproj_{E_{\leq 4p }} \osigma ( \bx ; \btheta )^{2p} ]^{1/(2p)} + \E_{\bx,\btheta} [\oproj_{> 4p} \osigma ( \bx ; \btheta )^{2p} ]^{1/(2p)}.
    \end{aligned}
    \]
    Using hypercontractivity of polynomials of degree less or equal to $4p$, the first term is bounded by $O_d( d^{-1/2})$, while the second term is bounded in Proposition \ref{prop:assumption2a_cyclic} in Section \ref{sec:hyper_high}.
    
    \item[(d)] \textit{(Concentration of diagonal elements.)} This is proved in Proposition \ref{prop:h_concentration_general_sigma} in Section \ref{sec:concentration_cyc}.
\end{itemize}

 \noindent
{\bf Step 4. Checking the spectral gap property at level $\{ N(d), \evN (d) , n (d) , \evn (d) \}_{d \ge 1} $. } 

Let us now check the spectral gap property (Assumption 2 in \cite{mei2021generalization}).

\begin{itemize}
    \item[(a)] \textit{(Number of samples.)} First by Eq.~\eqref{eq:bound_m_M} and the assumption $d^{\lvn - \alpha  + \delta_0} \leq n \leq d^{\lvn +1 - \alpha - \delta_0}$, we have $\evn \leq n^{1- \delta}$ for $\delta > 0$ chosen sufficiently small. By the choice of $\evn$ and recalling Eq.~\eqref{eq:scale_trace}, we have
    \[
    \begin{aligned}
     \lambda^{-2}_{\evn+1} \Tr ( \Hop_{d, > \evn} ) = &~ \sup_{k \geq \lvn +1} \{ \xi_{d,k}^{-2} \} \cdot \Tr ( \Hop_{d, > \evn} )  = \Omega_d(d^{\lvn + 1 - \alpha} ) \geq n^{1 + \delta} \, ,\\
     \lambda^{-2}_{\evn} \Tr ( \Hop_{d, > \evn} )  = &~ \xi_{d,\lvn}^{-2}  \Tr ( \Hop_{d, > \evn} ) = O_d(1) \cdot d^{\lvn - \alpha} \leq n^{1 - \delta}\, ,  
    \end{aligned}
    \]
    with $\delta >0$ chosen sufficiently small.
    
    \item[(b)] \textit{(Number of features.)} By construction $\evN \geq \evn$. Furthermore, recalling Eq.~\eqref{eq:bound_m_M} and the assumption $d^{\lvN - \alpha + \delta_0} \leq N \leq d^{\lvN + 1 - \alpha - \delta_0}$, we have $\evN \leq N^{1 - \delta}$ for $\delta > 0$ chosen sufficiently small. By choice of $\evN$, $\lambda_{d, \evN+1}^2  \leq d^{- \lvN - 1 + \delta }$. Hence,
    \[
    \lambda^{-2}_{\evN+1} \Tr ( \Uop_{d, > \evN} ) = \Omega_d(1) \cdot d^{ \lvN + 1 - \alpha - \delta } \geq N^{1 + \delta},
    \]
    for $\delta>0$ chosen sufficiently small.
\end{itemize}

Finally notice that we used a different parametrization of $\lambda$ in Eq.~\eqref{eq:RFRR_problem} and the condition in \cite{mei2021generalization} becomes $\lambda / d^\alpha = O_d(1) \cdot \Tr ( \Hop_{d, >\evn}) $, i.e., $\lambda = O_d(1)$. This concludes the proof.
\end{proof}

\subsection{Proof of Theorem \ref{thm:invar_KRR}}

We consider the same setting as in the previous section and consider
\[
H_d ( \bx_1 , \bx_2 ) = \E_{\btheta} [ \osigma ( \bx_1 ; \btheta) \osigma ( \bx_2 ; \btheta )] \, .
\]
Theorem \ref{thm:invar_KRR} is a consequence of Theorem 4 in \cite{mei2021generalization} and the proof amounts to checking that $H_d$ verifies the kernel concentration properties and eigenvalue condition (see Section 3.2 in \cite{mei2021generalization}). Note that some of the conditions were already covered in the proof of Theorem \ref{thm:RFRRinvar} and we will only mention the ones that still need to be verified. Furthermore, by the spectral gap property proven in Section \ref{sec:proof_RFRR}, the bound in Theorem 4 in \cite{mei2021generalization} (which is in term of a shrinkage operator) can indeed be rewritten as 
\[
 R_{\KR, \inv}(f_{d}, \bX, \lambda) = \| \oproj_{> \lvn} f_d \|_{L^2}^2 + o_{d,\P}(1) \cdot (\|  f_d \|_{L^{2 + \eta}}^2 + \noise^2).
\]

\begin{proof}[Proof of Theorem \ref{thm:invar_KRR}] We choose $\evn$ as in the proof of Theorem \ref{thm:RFRRinvar}.

 \noindent
{\bf Step 1. Checking the kernel concentration property at level $\{ n (d) , \evn (d) \}_{d \ge 1} $. } 

First notice that
\[
\E_{\bx} [H_{d, > \evn} ( \bx_i , \bx ) ] =  \sum_{k = \lvn + 1}^\infty \xi_{d,k}^4 D (\cA_d ; k) \Upsilon_k^{(d)} (\bx_i , \bx_i),
\]
and the concentration of the diagonal elements in the case of a polynomial activation function follows from the same argument as in Section \ref{sec:proof_RFRR}.

Hence, we only need to check this property in the case non polynomial activation function $\sigma$ ($\cA_d = \S^{d-1} (\sqrt{d} )$ and $\cG_d = \Cyc_d$ of degeneracy $1$). Let us choose $u$ as in the proof of the feature map concentration property in Theorem \ref{thm:RFRRinvar}.
\begin{itemize}
    \item \textit{(Properly decaying eigenvalues.)} We have
    \[
    \begin{aligned}
     \Tr ( \Hop_{d, >u}^2 ) \geq &~ \xi_{d,\ell}^4 D( \S^{d-1} ; \ell ) = \Omega_d (1) \cdot d^{-\ell -1} \, , \\
     \Tr ( \Hop_{d, >u}^4 ) \leq &~  \sup_{j \leq u} \{ \lambda_{d,j}^6 \} \Tr (  \Hop_{d, >u} ) = O_d (1) \cdot d^{-3 \ell}  \, .
    \end{aligned}
    \]
    Hence,
    \[
    \frac{\Tr ( \Hop_{d, >u}^2 )^2}{ \Tr ( \Hop_{d, >u}^4 ) } = \Omega_d (1) \cdot d^{ \ell -2 } = \Omega_d (d^{2\lvn} ) \geq n^{2 + \delta}.
    \]
    
    \item \textit{(Concentration of the diagonal elements of the kernel.)} This is proven in Proposition \ref{prop:hh_concentration_general_sigma} in Section \ref{sec:concentration_cyc}.
\end{itemize}

\noindent
{\bf Step 2. Checking the eigenvalue condition at level $\{ n (d) , \evn (d) \}_{d \ge 1} $. } 

By the choice of $\evn$, we have
\[
\lambda_{d,\evn +1}^{-4} \Tr ( \Hop_{d, >\evn}^2 ) = \frac{\sum_{k \geq \lvn +1} \xi_{d,k}^4 D(\cA_d ; k) }{\sup_{k \ge \lvn +1} \xi_{d,k}^4} \geq D(\cA_d ; \lvn +1) = \Omega_d ( d^{\lvn + 1 - \alpha} ) \geq n^{1 + \delta}.
\]

Again notice that we used a different parametrization of $\lambda$ in Eq.~\eqref{eq:KRR_problem} and the condition in \cite{mei2021generalization} becomes $\lambda / d^\alpha = O_d(1) \cdot \Tr ( \Hop_{d, >\evn}) $, i.e., $\lambda = O_d(1)$. This concludes the proof.
\end{proof}

\section{Decomposition of invariant functions}\label{sec:invariant}

In this section, we take $\cA_d \in \{ \S^{d-1}(\sqrt {d}), \Cube^d \}$, and $\cG_d$ to be any group that is isomorphic to a subgroup of $\cO(d)$ and that preserves $\cA_d$. This section is mostly built on the technical background presented in Appendix \ref{sec:technical_background}. 

\subsection{The invariant function class and the symmetrization operator}\label{sec:invariant_class}

%Let $\cG_d$ be a group that is isomorphic to a subgroup of $\cO(d)$, the orthogonal group in $d$ dimension. That means, each element of $\cG_d$ can be identified with a matrix in $\cO(d) \subseteq \R^{d \times d}$, and the group addition operation in $\cG_d$ can be regarded as matrix multiplications in $\cO(d)$. For any $\bx \in \S^{d-1}(\sqrt d)$ and $g \in \cG_d$, we define group action $g \cdot \bx$ to be the multiplication of matrix representation of $g$ with the vector $\bx$. We equip $\cG_d$ with a probability measure $\pi_d$, which is the uniform probability measure on $\cG_d$. More specifically, the Borel sigma algebra on $\cG_d$ is defined as the Borel sigma algebra of $\cO(d)$ restricted on $\cG_d$. The uniform probability measure $\pi_d$ satisfies the property that, for any Borel-measurable set $B \subseteq \cG_d$ and any $g \in \cG_d$, we have
%\[
%\pi_d(B) = \pi_d(g B). 
%\]
Let $L^2(\cA_d)$ be the class of $L^2$ functions on $\cA_d$ equipped with uniform probability measure $\Unif(\cA_d)$. We define the invariant function class to be 
\[
L^2(\cA_d, \cG_d) = \Big\{ f \in L^2(\cA_d): f(\bx) = f(g \cdot \bx), ~~ \forall \bx \in \cA_d, ~~ \forall g \in \cG_d \Big\}.
\]
We define the symmetrization operator $\cS: L^2(\cA_d) \to L^2(\cA_d, \cG_d)$ to be 
\[
(\cS f) (\bx) = \int_{\cG_d} f(g \cdot \bx) \pi_d(\de g). 
\]

\subsection{Orthogonal polynomials on invariant function class}\label{sec:invariant_polynomials}

% As mentioned in Definition \ref{def:degeneracy}, 

For either $\cA_d \in \{ \S^{d-1}(\sqrt{d}), \Cube^d \}$, we define $V_{d, \le k} \subseteq L^2(\cA_d)$ to be the subspace spanned by all the degree $\ell$ polynomials, $V_{d, >k} \equiv V_{d, \le k}^\perp \subseteq L^2(\cA_d)$ to be the orthogonal complement of $V_{d, \le k}$, and $V_{d, k} = V_{d, \le k} \cap V_{d, \le k-1}^\perp$. In words, $V_{d, k}$ contains all degree $k$ polynomials that orthogonal to all polynomials of degree at most $k-1$. We further define $V_{d, <k} = V_{d, \le k-1}$ and $V_{d, \ge k} = V_{d, > k-1}$. 

Let $\bproj_{\le \ell}$ to be the projection operator on $L^2(\cA_d, \Unif)$ that project a function onto $V_{d, \le \ell}$, the space spanned by all the degree $\ell$ polynomials. Then it is easy to see that $\bproj_{\le \ell}$ and $\cS$ operator commute. This means, for any $f \in L^2(\cA_d)$, we have 
\[
\bproj_{\le \ell}[\cS( f )] = \cS[\bproj_{\le \ell}(f)]. 
\]
Similarly, we can define $\bproj_{\ell}$, $\bproj_{< \ell}$, $\bproj_{> \ell}$, $\bproj_{\ge \ell}$, which commute with $\cS$. We denote $V_{d, \ell}(\cG_d) \equiv \cP_{\ell}(\cA_d, \cG_d)$ to be the space of polynomials in the images of $\bproj_\ell \cS$ (which is consistent with the definition of $V_{d, \ell}(\cG_d)$ in Definition \ref{def:degeneracy}). Then we have
\[
\cP_{\ell}(\cA_d, \cG_d) = \bproj_\ell( L^2(\cA_d, \cG_d)) = \cS[\bproj_\ell(L^2(\cA_d))]. 
\]

We denote $D(\cA_d; k) = D(\cA_d; \cG_d; k) \equiv \dim(\cP_k(\cA_d, \cG_d))$ to be the dimension of $\cP_k(\cA_d, \cG_d)$. We denote $\{ \overline Y_{kl}^{(d)} \}_{l \in [D(\cA_d; k)]}$ to be a set of orthonormal polynomial basis in $\cP_k(\cA_d, \cG_d)$. That means 
\[
\E_{\bx \sim \Unif(\cA_d)} [\overline Y_{k_1 l_1}^{(d)}(\bx) \overline Y_{k_2 l_2}^{(d)}(\bx)] = \ones\{k_1 = k_2, l_1 = l_2 \},
\]
and 
\[
\overline Y_{k l}^{(d)}(\bx) = \overline Y_{k l}^{(d)}(g \cdot \bx), ~~ \forall \bx \in \cA_d, ~~ \forall g \in \cG_d.
\]

\subsection{A representation lemma}\label{sec:invariant_representation}

We have the following representation lemma. This lemma is important in the proofs of counting the degeneracy of groups (See Section \ref{sec:counting_degeneracy}). 

\begin{lemma}[Convolution representation of projection operator]\label{lem:isometry_of_orthonomal_polynomials}
Let $Q_k^{(d)}$ be the $k$-th Gegenbauer polynomial, or the $k$-th hypercubic Gegenbauer polynomial. For any fixed integer $k$, we have 
\begin{equation}\label{eqn:YY_Q}
\frac{1}{D(\cA_d; k)}\sum_{l = 1}^{D(\cA_d; k)} \overline Y_{k l}^{(d)}(\bx) \overline Y_{kl}^{(d)} (\by) =  \frac{B(\cA_d; k)}{D(\cA_d; k)}\int_{\cG_d} Q_k^{(d)}(\< \bx, g \cdot \by\> ) \pi_d(\de g). 
\end{equation}
\end{lemma}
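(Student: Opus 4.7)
The plan is to interpret both sides of \eqref{eqn:YY_Q} as two expressions for the reproducing kernel of the invariant subspace $V_{d,k}(\cG_d) \subseteq L^2(\cA_d)$, and then match them.

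First, I would invoke the standard addition formula for (hyperspherical or hypercubic) Gegenbauer polynomials: if $\{Y_{kl}^{(d)}\}_{l \in [B(\cA_d;k)]}$ is any orthonormal basis of $V_{d,k}$, then
\begin{equation*}
\sum_{l=1}^{B(\cA_d;k)} Y_{kl}^{(d)}(\bx)\, Y_{kl}^{(d)}(\by) \;=\; B(\cA_d;k)\, Q_k^{(d)}(\langle \bx,\by\rangle).
\end{equation*}
Equivalently, the left-hand side is the reproducing kernel of $V_{d,k}$ inside $L^2(\cA_d)$, and thus represents the $L^2$-orthogonal projection $\proj_k$ onto $V_{d,k}$.

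Next I would show that the symmetrization operator $\cS$ of Section \ref{sec:invariant_class}, restricted to $V_{d,k}$, is the $L^2$-orthogonal projection from $V_{d,k}$ onto $V_{d,k}(\cG_d)$. The three ingredients are: $(i)$ $\cS$ preserves $V_{d,k}$ because each $g\in\cG_d$ acts orthogonally on $\reals^d$ and hence maps polynomials of degree $k$ to polynomials of degree $k$; $(ii)$ $\cS^2=\cS$ by left-invariance of the Haar measure $\pi_d$; $(iii)$ $\cS$ is self-adjoint on $L^2(\cA_d)$, using the change of variables $\bx\mapsto g^{-1}\cdot\bx$ (which preserves $\Unif(\cA_d)$) together with the inversion invariance of $\pi_d$. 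Idempotent and self-adjoint with image $V_{d,k}(\cG_d)$ makes $\cS|_{V_{d,k}}$ the orthogonal projection onto $V_{d,k}(\cG_d)$. Consequently, the orthogonal projection $\oproj_{k,\inv}$ of $L^2(\cA_d)$ onto $V_{d,k}(\cG_d)$ factors as $\oproj_{k,\inv} = \cS\circ \proj_k$.

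Now I would compute the integral kernel of $\oproj_{k,\inv}=\cS\circ\proj_k$ in two ways. On one hand, in the orthonormal basis $\{\overline Y_{kl}^{(d)}\}$ of $V_{d,k}(\cG_d)$, the reproducing-kernel identity gives the kernel
\begin{equation*}
K_{\inv}(\bx,\by) \;=\; \sum_{l=1}^{D(\cA_d;k)} \overline Y_{kl}^{(d)}(\bx)\,\overline Y_{kl}^{(d)}(\by).
\end{equation*}
On the other hand, applying $\cS$ in the $\bx$-variable to the kernel $B(\cA_d;k)\,Q_k^{(d)}(\langle\bx,\by\rangle)$ of $\proj_k$ yields
\begin{equation*}
K_{\inv}(\bx,\by) \;=\; B(\cA_d;k)\int_{\cG_d} Q_k^{(d)}(\langle g\cdot\bx,\by\rangle)\,\pi_d(\de g) \;=\; B(\cA_d;k)\int_{\cG_d} Q_k^{(d)}(\langle \bx,g\cdot\by\rangle)\,\pi_d(\de g),
\end{equation*}
where the last equality uses orthogonality of the action ($\langle g\cdot\bx,\by\rangle = \langle \bx,g^{-1}\cdot\by\rangle$) together with the inversion invariance of $\pi_d$. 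Equating the two expressions for $K_{\inv}$ and dividing through by $D(\cA_d;k)$ gives the desired identity.

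The only mildly nontrivial step is the identification of $\cS|_{V_{d,k}}$ as the orthogonal projection onto $V_{d,k}(\cG_d)$; everything else is bookkeeping built on the classical addition theorem. I expect no further obstacles, since the properties of $\pi_d$ (left-invariance and inversion-invariance for groups isomorphic to subgroups of $\cO(d)$) and the degree-preserving nature of the $\cG_d$-action on $L^2(\cA_d)$ are used in a completely standard way.
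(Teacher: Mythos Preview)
Your proposal is correct and follows essentially the same approach as the paper: both identify each side of \eqref{eqn:YY_Q} as the integral kernel of the orthogonal projection onto $V_{d,k}(\cG_d)$, obtained by composing $\cS$ with $\oproj_k$. The only cosmetic difference is that the paper phrases the key step as the commutation relation $\oproj_k\cS=\cS\oproj_k$ (stated in Section~\ref{sec:invariant_polynomials}) and then matches the two operators $\T_{1k}=\oproj_k\cS$ and $\T_{2k}=\cS\oproj_k$, whereas you argue directly that $\cS|_{V_{d,k}}$ is the orthogonal projection onto $V_{d,k}(\cG_d)$ via idempotence and self-adjointness; these are two ways of saying the same thing.
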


\begin{proof}[Proof of Lemma \ref{lem:isometry_of_orthonomal_polynomials}] Define 
\[
\Gamma_{1k}(\bx, \by) = \sum_{l = 1}^{D(\cA_d; k)} \overline Y_{k l}^{(d)}(\bx) \overline Y_{kl}^{(d)} (\by), 
\]
and
\[
\Gamma_{2k}(\bx, \by) = B(\cA_d; k) \int_{\cG_d} Q_k^{(d)}(\< g \cdot \bx, \by\> ) \pi_d(\de g). 
\]
Then $\Gamma_{1k}$ and $\Gamma_{2k}$ define two operators $\T_{1k}, \T_{2k}: L^2(\cA_d) \to L^2(\cA_d)$, i.e., for $j = 1,2$, 
\[
\T_{j k} f(\bx) = \E_{\by \sim \Unif(\cA_d)}[\Gamma_{j k}(\bx, \by) f(\by)]. 
\]

Recall that $Q_k^{(d)}$ is a representation of the projector onto the subspace of degree-$k$ spherical harmonics (see Eq.~\eqref{eq:ProjectorGegenbauer} in Section \ref{sec:Gegenbauer}). We deduce that
\[
\T_{2k} f ( \bx ) = \cS \E_{\by} [ B(\cA_d; k) Q_k^{(d)}(\< \bx, \by\> ) f (\by )] = \cS   \bproj_k f (\bx),
\]
and therefore $\T_{2k} = \cS \bproj_k$.

Furthermore, we have $\T_{1k} =  \bproj_k \cS$. Indeed, the images of both $\T_{1k}$ and $\bproj_k \cS$ are $\cP_k(\cA_d, \cG_d)$, the space $\cP_k(\cA_d, \cG_d)^\perp$ is the null space of both $\T_{1k}$ and $\bproj_k \cS$, and $\T_{1k} \overline Y_{kp}^{(d)}(\bx)  = \bproj_k \cS \overline Y_{kp}^{(d)}(\bx) = \overline Y_{kp}^{(d)}(\bx)$. 

By the commutativity of $\bproj_k $ and $\cS$ operator, we have $\T_{1k} = \bproj_k \cS = \cS \bproj_k  = \T_{2k}$, and hence $\Gamma_{1k} = \Gamma_{2k}$. 
\end{proof}

\subsection{Gegenbauer decomposition of invariant features and kernels}

%\tm{define projection on $L^2 (\cA_d)$, add more detail, like diagonalizing inner product kernel, then correspondence with Gegenbauer etc. In short, add details and illustration!}

By Section \ref{sec:technical_background}, for either $\cA_d \in \{\S^{d-1}(\sqrt{d}), \Cube^d \}$, for any activation function $\sigma \in L^2([-\sqrt d, \sqrt d], \tau^1_{d})$ (where $\tau^1_{d}$ is the distribution of $\< \bx_1, \bx_2 \> $ when $\bx_1, \bx_2 \sim_{iid} \Unif(\cA_d)$), we can define its coefficients $\xi_{d, k}(\sigma)$ defined by 
\begin{align}
\xi_{d, k}(\sigma) = \int_{[-\sqrt d , \sqrt d]} \sigma(x) Q_k^{(d)}(\sqrt d x) \tau^1_{d}(\de x),
\end{align}
so that we have the following equation holds in $L^2([-\sqrt d, \sqrt d],\tau^1_{d})$ sense
\[
\sigma(x) = \sum_{k = 0}^\infty \xi_{d, k}(\sigma) B(\cA_d; k) Q_k^{(d)}(\sqrt d x). 
\]
For any group $\cG_d$ that is a subgroup of $\cO(d)$, we define 
\[
\osigma(\bx; \btheta) \equiv \int_{\cG_d} \sigma(\< \bx, g \cdot \btheta \> / \sqrt{d}) \pi_d(\de g). 
\]
Then, by the representation lemma (Lemma \ref{lem:isometry_of_orthonomal_polynomials}), we have 
\[
\begin{aligned}
\osigma(\bx; \btheta) \equiv &~ \sum_{k = 0}^\infty \xi_{d, k}(\sigma) B(\cA_d; k) \int_{\cG_d} Q_k^{(d)}(\<\bx, g \cdot \btheta\>)\pi_d(\de g) \\
=&~ \sum_{k = 0}^\infty \xi_{d, k}(\sigma) \sum_{l = 1}^{D(\cA_d; k)} \overline Y_{k l}^{(d)}(\bx) \overline Y_{kl}^{(d)} (\btheta). 
\end{aligned}
\]
As a consequence, suppose we define 
\[
H_d(\bx, \by) = \E_{\btheta \sim \Unif(\cA_d)}[\osigma(\bx; \btheta) \osigma(\by; \btheta)].
\]
Then we have 
\[
H_d(\bx, \by) =\sum_{k = 0}^\infty \xi_{d, k}(\sigma)^2 \sum_{l = 1}^{D(\cA_d; k)} \overline Y_{k l}^{(d)}(\bx) \overline Y_{kl}^{(d)} (\by). 
\]

%\subsection{Gegenbauer decomposition of invariant kernels}

%For any rotationally invariant kernel $H_d(\bx_1, \bx_2) = h_d(\< \bx_1, \bx_2\> / d)$,
%with $h_d(\sqrt{d}\, \cdot \, ) \in L^2([-\sqrt{d},\sqrt{d}],\tau^1_{d})$,
%we can associate a self adjoint operator $\cuH_d:L^2(\cA_d)\to L^2(\cA_d)$
%via
%\begin{align}
%\cuH_df(\bx) \equiv \int_{\S^{d-1}(\sqrt{d})} h_d(\<\bx,\bx_1\>/d)\, f(\bx_1) \, \tau_d(\de \bx_1)\, .
%\end{align}
%The space $V_{d, k}$ is an eigenspace of
%$\cuH_d$, and we will denote the corresponding eigenvalue by $\xi_{d,k}(h_d)$. In other words
%$\cuH_df(\bx) \equiv \sum_{k=0}^{\infty} \xi_{d,k}(h_d) \oproj_{k}f$.   The eigenvalues can be computed via
%\begin{align}
%  \xi_{d, k}(h_d) = \int_{[-\sqrt d , \sqrt d]} h_d\big(x/\sqrt{d}\big) Q_k^{(d)}(\sqrt d x) \tau^1_{d-1}(\de x)\, .
%\end{align}

\section{Counting the degeneracy}\label{sec:counting_degeneracy}

\subsection{Counting the degeneracy of $\Cyc_d$ and $\TwoCyc_{d_1, d_2}$ (Example \ref{ex:OneDImages} and \ref{ex:TwoDImages})}

\begin{proposition}\label{prop:degeneracy_cyclic}
Let $\cG_d \in \{ \Cyc_d, \TwoCyc_{d_1, d_2}\}$ with $d = d_1 \times d_2$. Let $\cA_d \in \{ \S^{d-1}(\sqrt{d}), \Cube^d \}$. Then for any fixed $k \ge 1$, we have 
\[
\dim(\cP_k(\cA_d, \cG_d)) \equiv D(\cA_d; k) = \Theta_d(d^{k-1}). 
\]
\end{proposition}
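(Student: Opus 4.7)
The strategy is a Burnside-type character identity. For a finite group $\cG_d$, the symmetrization $\cS$ restricted to $V_{d,k}$ is the orthogonal projection onto $V_{d,k}(\cG_d)$, so
\[
D(\cA_d;k) = \Tr\bigl(\cS|_{V_{d,k}}\bigr) = \frac{1}{|\cG_d|}\sum_{g \in \cG_d}\chi_k(g), \qquad \chi_k(g) := \Tr\bigl(L_g|_{V_{d,k}}\bigr),
\]
where $(L_g f)(\bx) = f(g\cdot \bx)$. Since $|\Cyc_d| = |\TwoCyc_{d_1,d_2}| = d$ and $\chi_k(e) = \dim V_{d,k} = B(\cA_d;k) = \Theta(d^k)$, the identity alone contributes $\Theta(d^{k-1})$; the remaining task is to show $\sum_{g\ne e}|\chi_k(g)| = o(d^k)$, so that both upper and lower bounds on $D(\cA_d;k)$ are of order $d^{k-1}$.

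\textbf{Hypercube case.} When $\cA_d = \Cube^d$, the multilinear monomials $\{x^S := \prod_{i \in S}x_i : |S|=k\}$ form an orthonormal basis of $V_{d,k}$, and each $g \in \cG_d$ permutes this basis via its action on the index set ($[d]$ for $\Cyc_d$ or $[d_1]\times[d_2]$ for $\TwoCyc_{d_1,d_2}$). Thus $\chi_k(g)$ equals the number of $k$-subsets fixed by $g$, which must be unions of cycles of $g$. Both groups are abelian, so every non-identity element partitions the index set into orbits of a common length $m \ge 2$, giving $\chi_k(g) = \binom{d/m}{k/m} = O(d^{k/2})$ when $m \mid k$ and $0$ otherwise. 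Grouping elements by their order yields $\sum_{g \ne e}\chi_k(g) = o(d^k)$ (with the edge cases $k = 1$ and $k = 2$ handled directly: $\chi_1(g) = 0$ for $g \ne e$, and only $O(1)$ elements have order exactly $2$, each contributing $O(d)$).

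\textbf{Sphere case.} When $\cA_d = \S^{d-1}(\sqrt{d})$, the $\cO(d)$-equivariant decomposition $\mathrm{Sym}^k(\R^d)^* = V_{d,k} \oplus \|\bx\|^2 \cdot \mathrm{Sym}^{k-2}(\R^d)^*$ yields $\chi_k = \tilde\chi_k - \tilde\chi_{k-2}$, where $\tilde\chi_k$ denotes the character on the full space of degree-$k$ homogeneous polynomials. The symmetric-power identity
\[
\sum_{k \ge 0}\tilde\chi_k(g)\, t^k = \prod_{j}(1-t^{c_j})^{-1},
\]
with $(c_j)$ the cycle lengths of $g$ acting as a permutation of pixel indices, gives $\tilde\chi_k(g) = O(d^{k/2})$ for every $g \ne e$ since all cycles have length $\ge 2$ — for $\TwoCyc_{d_1,d_2}$ this uses $\mathrm{ord}(g_{ij}) = \mathrm{lcm}(d_1/\gcd(d_1,i),\,d_2/\gcd(d_2,j)) \ge 2$ whenever $(i,j) \ne (0,0)$. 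The same grouping-by-order argument then gives $\sum_{g \ne e}|\chi_k(g)| = o(d^k)$ and hence $D(\S^{d-1};k) = \Theta(d^{k-1})$. The main bookkeeping obstacle is counting, for each divisor $m$ of the group order, the elements of order exactly $m$ — given by Euler's totient $\phi(m)$ for $\Cyc_d$ but by a sum of products of totients for $\TwoCyc_{d_1,d_2}$ — and verifying that the combined contribution is dominated by the identity despite the large number $d-1$ of non-identity elements.
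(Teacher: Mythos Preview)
Your approach is correct and takes a genuinely different route from the paper's. The paper proceeds analytically: via the representation lemma (Lemma~\ref{lem:isometry_of_orthonomal_polynomials}) it writes $D(\cA_d;k)/B(\cA_d;k) = \E_{\btheta}\bigl[\int_{\cG_d} Q_k^{(d)}(\langle\btheta, g\cdot\btheta\rangle)\,\pi_d(\de g)\bigr]$, expands $Q_k^{(d)}$ into monomials using Lemma~\ref{lem:gegenbauer_coefficients}, and then estimates each moment $\E_{\btheta}\bigl[\int(\langle\btheta,g\cdot\btheta\rangle/d)^m\,\pi_d(\de g)\bigr]$ via Hanson--Wright and direct calculation (Lemma~\ref{lem:cyclic_integral}). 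Your Burnside/character route computes the same trace but combinatorially: in the hypercube case by counting fixed $k$-subsets, and in the sphere case via the symmetric-power generating function $\prod_j(1-t^{c_j})^{-1}$. The two are linked by the identity $\chi_k(g) = B(\cA_d;k)\cdot\E_{\btheta}[Q_k^{(d)}(\langle\btheta,g\cdot\btheta\rangle)]$, so both are summing the same characters, just estimated by different means.

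Your argument is more elementary for these finite permutation groups --- it avoids Hanson--Wright and the Gegenbauer coefficient bounds entirely --- and makes the structure transparent. The paper's analytic method, by contrast, extends directly to continuous groups such as $\Sft_d$ (Example~\ref{ex:band-limited}) and dovetails with the concentration estimates needed later in the paper. Two small remarks on your write-up: the reason every non-identity element has all cycles of a common length is not abelianness per se but that the action is the \emph{regular} representation, so every $g$ acts with all cycles of length $\mathrm{ord}(g)$; and the ``bookkeeping obstacle'' you flag is largely illusory for $k\ge 3$, since the crude bound $(d-1)\cdot O(d^{k/2}) = o(d^k)$ already suffices without any divisor counting --- the finer accounting of order-$2$ elements is needed only for $k=2$.
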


\subsubsection{Proof of Proposition \ref{prop:degeneracy_cyclic}}

Here we state a key lemma that is used to prove Proposition \ref{prop:degeneracy_cyclic}. 

\begin{lemma}\label{lem:cyclic_integral}
Let $\cG_d \in \{ \Cyc_d, \TwoCyc_{d_1, d_2}\}$ with $d = d_1 \times d_2$. Denote 
\[
F_k(\bz) = \int_{\cG_d} (\< \bz, g \cdot \bz\>/d)^k \pi_d(\de g). 
\]
Then for any fixed $k \ge 1$, we have
\begin{align}
\E_{\bz \sim \cN(\bzero, \id_d)}[F_k(\bz)] =&~ \Theta_d(d^{-1}), \label{eqn:F_integral_Gaussian}\\
\E_{\btheta \sim \Unif(\Cube^d)}[F_k(\btheta)] =&~ \Theta_d(d^{-1}), \label{eqn:F_integral_Cube} \\
\E_{\btheta \sim \S^{d-1}(\sqrt{d})}[F_k(\btheta)] =&~ \Theta_d(d^{-1}).\label{eqn:F_integral_Spherical}
\end{align}
\end{lemma}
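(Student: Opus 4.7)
The plan is to split $F_k(\bz)$ into the contribution from the identity element of $\cG_d$ and the contribution from the remaining $|\cG_d|-1$ elements. I will show that the identity contribution is $\Theta_d(d^{-1})$ in all three cases, and the non-identity contribution is $O_d(d^{-k/2})$, which is lower order for $k\geq 2$ and vanishes in expectation for $k=1$. Since $|\Cyc_d|=d$ and $|\TwoCyc_{d_1,d_2}|=d_1 d_2=d$,
\[
F_k(\bz) = \frac{1}{d}\Big(\frac{\|\bz\|_2^2}{d}\Big)^k + \frac{1}{d}\sum_{g \in \cG_d,\; g \neq e} \Big(\frac{\<\bz,\, g \cdot \bz\>}{d}\Big)^k.
\]

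For the identity term, $\|\btheta\|_2^2 = d$ almost surely on both $\S^{d-1}(\sqrt{d})$ and $\Cube^d$, so the expected identity contribution equals $1/d$ exactly in those two cases. For $\bz \sim \cN(\bzero,\id_d)$, $\|\bz\|_2^2 / d$ is a normalized chi-square with uniformly bounded moments of every fixed order, so $\E[(\|\bz\|_2^2/d)^k] = \Theta_d(1)$ and the identity contribution is again $\Theta_d(d^{-1})$.

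For the non-identity terms, I would represent each group element by its permutation matrix $P_g$, so that $\<\bz,g\cdot\bz\> = \bz^\sT P_g \bz$. Crucially, no non-trivial element of $\Cyc_d$ or $\TwoCyc_{d_1,d_2}$ fixes any coordinate, so $\tr(P_g) = 0$ for $g \neq e$, while $\|(P_g+P_g^\sT)/2\|_F^2 \leq d$. Standard moment bounds for centered quadratic forms then yield
\[
\big|\E[(\bz^\sT P_g \bz)^k]\big| \leq C_k d^{k/2},
\]
which after dividing by $d^k$ gives $|\E[(\<\bz,g\cdot\bz\>/d)^k]| \leq C_k d^{-k/2}$. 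For the Gaussian case this is Wick's theorem (or Hanson--Wright); for $\Unif(\Cube^d)$ it follows from Bonami--Beckner hypercontractivity applied to the degree-$2$ polynomial $\bz^\sT P_g \bz$ (Lemma~\ref{lem:hypercontractivity_hypercube}); for $\Unif(\S^{d-1}(\sqrt{d}))$ from spherical hypercontractivity (Lemma~\ref{lem:hypercontractivity_sphere}), or alternatively by writing $\btheta = \sqrt{d}\,\bz/\|\bz\|_2$ and combining the Gaussian bound with concentration of $\|\bz\|_2^2/d$. Summing over the $d-1$ non-identity elements and dividing by $d$ gives a total non-identity contribution of order $O_d(d^{-k/2})$.

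Combining the two parts yields the upper bound $\E[F_k(\bz)] = O_d(d^{-1})$. For the matching lower bound: when $k$ is even, $F_k(\bz)\geq 0$ pointwise so the identity term alone gives $\Omega_d(d^{-1})$; for odd $k\geq 3$ the non-identity error is $O_d(d^{-k/2})=o_d(d^{-1})$ and the identity term dominates; the case $k=1$ is handled directly, as $\E[\bz^\sT P_g \bz] = \tr(P_g) = 0$ for every $g\neq e$, giving $\E[F_1(\bz)] = d^{-1}\,\E[\|\bz\|_2^2/d] = \Theta_d(d^{-1})$. I expect the quadratic-form moment bound to be the only delicate ingredient, but it rests on tools already invoked elsewhere in the paper; the rest of the argument is bookkeeping on the identity versus non-identity split and on parity of $k$. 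The proofs for $\Cyc_d$ and $\TwoCyc_{d_1,d_2}$ are identical since what matters is only that $|\cG_d|=d$ and that nontrivial elements are fixed-point-free permutations with Frobenius norm $\sqrt{d}$.
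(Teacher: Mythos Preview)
Your proposal is correct and follows essentially the same decomposition as the paper: split $F_k$ into the identity term (which gives $\Theta_d(d^{-1})$) and the $d-1$ non-identity terms (which are lower order), then handle $k=1$ by exact vanishing and use positivity for even $k$. The only notable difference is in the tool used to bound the non-identity moments: the paper uses an explicit second-moment computation for $k=2$ and Hanson--Wright plus a union bound over $g$ for $k\ge 3$, whereas you invoke hypercontractivity of degree-$2$ polynomials to bound $\E[|\bz^\sT P_g\bz|^k]$ directly for all $k\ge 2$. Your route is slightly cleaner in that it unifies $k=2$ and $k\ge 3$, while the paper's Hanson--Wright argument gives a pointwise (sup over $g$) bound that is stronger than needed here but is reused elsewhere in the paper.
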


\begin{proof}[Proof of Lemma \ref{lem:cyclic_integral}]
We prove Eq. (\ref{eqn:F_integral_Gaussian}) and (\ref{eqn:F_integral_Spherical}). The proof for Eq. (\ref{eqn:F_integral_Cube}) is similar to the proof of Eq. (\ref{eqn:F_integral_Gaussian}). 

%Denote 
%\[
%F_k(\bz) = \int_{\Cyc_d} (\< \bz, g \cdot \bz\>/d)^k \pi_d(\de g) = \| \bz \|_2^{2k} / d^{k + 1} + \sum_{l = 1}^{d-1} \< \bz, L^l \bz \>^k / d^{k+1}. 
%\]
%In the following, we would like to calculate $\E_{\bz \sim \cN(\bzero, \id_d)}[F_k(\bz)]$. 

Let $\{ L_\ell \}_{0 \le \ell \le d - 1}$ be the matrix representation of group elements of $\Cyc_d$ or $\TwoCyc_{d_1, d_2}$: when $\cG_d = \Cyc_d$, $g_\ell \in \Cyc_d$ gives matrix representation $L_\ell$ for $0 \le \ell \le d-1$; when $\cG_d = \TwoCyc_{d_1, d_2}$, $g_{st} \in \TwoCyc_{d_1, d_2}$ gives matrix representation $L_{s \times d_2 + t}$ for $0 \le s \le d_1 - 1$, $0 \le t \le d_2 - 1$. As a consequence, for either $\cG_d \in \{ \Cyc_d, \TwoCyc_{d_1, d_2}\}$, $L_0 = \id_d$ is the identity matrix. This gives 
\[
F_k(\bz) = \| \bz \|_2^{2k} / d^{k + 1} + \sum_{l = 1}^{d-1} \< \bz, L_l \bz \>^k / d^{k+1}. 
\]

\noindent
{\bf Step 1. The case $k = 1$. } For either $\cG_d \in \{ \Cyc_d, \TwoCyc_{d_1, d_2}\}$, we have $\E[\< \bz, L_l \bz\>] = 0$ for $1 \le l \le d-1$. As a consequence, we have 
\begin{equation}\label{eqn:cyclic_integral_F1}
\E_{\bz \sim \cN(\bzero, \id_d)}[F_1(\bz)] = \E[\| \bz \|_2^2 / d^2] + \sum_{l = 1}^{d-1} \E[\< \bz, L_l \bz \> / d^2] =\frac{1}{d}. 
\end{equation}
\noindent
{\bf Step 2. The case $k = 2$. } Note we have 
\[
\begin{aligned}
\E_{\bz \sim \cN(\bzero, \id_d)}[F_2(\bz)] =&~ \E[ \| \bx \|_2^4 / d^3 ] + \sum_{l = 1}^{d - 1} \E[ \< \bx, L_l \bx\>^2 / d^3 ] \\
=&~ \E\Big[\Big(\sum_{i = 1}^d x_i^2 \Big)^2 \Big] / d^3 +  \sum_{l = 1}^{d-1} \E\Big[ \Big(\sum_{i = 1}^d x_i (L_l \bx)_i \Big)^2 \Big] / d^3\\
=&~\sum_{i, j = 1}^d \E[x_i^2 x_j^2] / d^3 + \sum_{l = 1}^{d-1} \sum_{i, j = 1}^d \E[x_i (L_l \bx)_i x_j (L_l \bx)_j] / d^3 \\
=&~ \Big(\frac{1}{d} + \frac{2}{d^2}\Big) +  \sum_{l = 1}^{d-1} \sum_{i, j = 1}^d \E[x_i (L_l \bx)_i x_j (L_l \bx)_j] / d^3. 
\end{aligned}
\]
Note that for either $\cG_d \in \{ \Cyc_d, \TwoCyc_{d_1, d_2}\}$, for any $i \in [d]$ and $1 \le l \le d-1$, the random variable $(L_l \bx)_i$ is independent from $x_i$. This gives 
\[
0 \le  \sum_{l = 1}^{d-1} \sum_{i, j = 1}^d \E[x_i (L_l \bx)_i x_j (L_l \bx)_j] / d^3 \le \frac{2(d-1)d}{d^3} = \Theta_d(d^{-1}). 
\]
As a consequence, we have 
\begin{equation}\label{eqn:cyclic_integral_F2}
\E_{\bz \sim \cN(\bzero, \id_d)}[F_2(\bz)] = \Theta_d(d^{-1}).
\end{equation}

%\[
%\begin{aligned}
%\E_{\bz \sim \cN(\bzero, \id_d)}[F_2(\bz)] =&~ \E[ \| \bx \|_2^4 / d^3 ] + \sum_{l = 1}^{d - 1} \E[ \< \bx, L^l \bx\>^2 / d^3 ] \\
%=&~ \E\Big[\Big(\sum_{i = 1}^d x_i^2 \Big)^2 \Big] / d^3 +  \sum_{l = 1}^{d-1} \E\Big[ \Big(\sum_{i = 1}^d x_i x_{i + l} \Big)^2 \Big] / d^3\\
%=&~\sum_{i, j = 1}^d \E[x_i^2 x_j^2] / d^3 + \sum_{l = 1}^{d-1} \sum_{i, j = 1}^d \E[x_i x_{i + l} x_j x_{j + l}] / d^3 \\
%=&~ \Big(\frac{1}{d} + \frac{2}{d^2}\Big) +  \Big( \frac{d-1}{d^2} + \frac{1}{d^2} \ones_{d \text{ is even}}\Big)\\
%=&~  \frac{2}{d} + \frac{1}{d^2}(1 + \ones_{d \text{ is even}}). 
%\end{aligned}
%\]

\noindent
{\bf Step 3. The case $k \ge 3$. } By the moment formula of the $\chi^2$ distribution, we have
\[
\E_{\bz \sim \cN(\bzero, \id_d)}[(\| \bz \|_2^2/d)^k] = 1 + o_d(1). 
\]
Moreover, for either $\cG_d \in \{\Cyc_d, \TwoCyc_{d_1, d_2} \}$, for any $l \neq 0$, we have
\[
\E[\< \bz, L_l \bz\>]/d = 0.
\]
As a consequence, by the Hanson-Wright inequality as in Lemma \ref{lem:Hanson_Wright}, for any fixed $k \ge 3$ and $\eps > 0$, we have 
\[
\E_{\bz \sim \cN(\bzero, \id_d)}\Big[\sup_{1 \le l \le d-1} (\< \bz, L_l \bz \> / d)^k \Big] = O_d(d^{-k/2 + \eps}). 
\]
Therefore, for $k \ge 3$, we have 
\[
\Big\vert \E_{\bz \sim \cN(\bzero, \id_d)}[F_k(\bz)] - \frac{1}{d} \E_{\bz \sim \cN(\bzero, \id_d)}[(\| \bz \|_2^2/d)^k] \Big\vert \le \E_{\bz \sim \cN(\bzero, \id_d)}\Big[\sup_{1 \le l \le d-1} (\< \bz, L_l \bz \> / d)^k \Big] = o_d(1/d),
\]
so that 
\begin{equation}\label{eqn:cyclic_integral_F3}
 \E_{\bz \sim \cN(\bzero, \id_d)}[F_k(\bz)] = 1/d + o_d(1/d). 
\end{equation}
Combining Eq. (\ref{eqn:cyclic_integral_F1}), (\ref{eqn:cyclic_integral_F2}), and (\ref{eqn:cyclic_integral_F3}) proves Eq. (\ref{eqn:F_integral_Gaussian}). 

\noindent
{\bf Step 4. From Gaussian to spherical. } Note that when $\bz \sim \cN(\bzero, \id_d)$, we have $\| \bz \|_2^2 \sim \chi^2(d)$ which is independent of $\sqrt d \cdot \bz / \| \bz \|_2 \sim \Unif(\S^{d-1}(\sqrt d))$. Hence, we have
\[
\begin{aligned}
\E_{\bz \sim \cN(\bzero, \id_d)}[F_k(\bz)] =&~ \E_{\btheta \sim \S^{d-1}(\sqrt d), \bz \sim \cN(\bzero, \id_d)}[F_k(\btheta) (\| \bz \|_2^{2k} / d^k)] \\
=&~ \E_{\btheta \sim \S^{d-1}(\sqrt d)}[F_k(\btheta)] \cdot \E_{\bz \sim \cN(\bzero, \id_d)}[\| \bz \|_2^{2k} / d^k]. 
\end{aligned}
\]
Note that for fixed $k \ge 1$, the moment formula for $\chi^2$ distribution gives
\[
\E_{\bz \sim \cN(\bzero, \id_d)}[\| \bz \|_2^{2k} / d^k] = 1 + o_d(1).
\]
Combining with Eq. (\ref{eqn:F_integral_Gaussian}), we have
\[
\E_{\btheta \sim \S^{d-1}(\sqrt d)}[F_k(\btheta)]  = \E_{\bz \sim \cN(\bzero, \id_d)}[F_k(\bz)]/ \E_{\bz \sim \cN(\bzero, \id_d)}[\| \bz \|_2^{2k} / d^k] = \Theta_d(d^{-1}). 
\]
This proves Eq. (\ref{eqn:F_integral_Spherical}). 
\end{proof}

\begin{proof}[Proof of Proposition \ref{prop:degeneracy_cyclic}]
Denote 
\[
P_k(\btheta) \equiv \frac{1}{B(\cA_d; k)}\sum_{l = 1}^{D(\cA_d; k)} \overline Y_{k l}^{(d)}(\btheta)^2 = \int_{\cG_d} Q_k^{(d)}(\< \btheta, g \cdot \btheta\> ) \pi_d(\de g). 
\]
By Lemma \ref{lem:isometry_of_orthonomal_polynomials}, for any fixed $k \ge 1$, we have
\begin{equation}\label{eqn:PDB_degeneracy}
\E_{\btheta \sim \Unif(\cA_d)}[P_k(\btheta)] = \frac{D(\cA_d; k)}{B(\cA_d; k)}. 
\end{equation}
By Lemma \ref{lem:gegenbauer_coefficients}, we have 
\[
P_k(\btheta) = \sum_{m = 0}^k a_{d, k, m} F_m(\btheta), 
\]
where $\vert a_{d, k, m} \vert \le C_{k, m} / d^{(k - m)/2}$. As a result, we have 
\[
\E_{\btheta \sim \Unif(\cA_d)}[P_k(\btheta)] =  \sum_{m = 0}^k a_{d, k, m} \E_{\btheta \sim \Unif(\cA_d)}[F_k(\btheta)] = \Theta(d^{-1}). 
\]
Combining with Eq. (\ref{eqn:PDB_degeneracy}) shows that $D(\cA_d; k) = \Theta(d^{-1} B(\cA_d; k)) = \Theta(d^{k-1})$. This concludes the proof. 
\end{proof}

\subsubsection{Auxiliary lemmas}

\begin{lemma}[Hanson-Wright inequality]\label{lem:Hanson_Wright}
There exists a universal constant $c > 0$, such that for any $t > 0$ and $d \in \N$, and any permutation matrix $L \in \R^{d \times d}$ be any permutation matrix, when $\bx \sim \cN(\bzero, \id_d)$ or $\bx \sim \Unif(\Cube^d)$, we have
\[
\P\Big( \Big\vert \< \bx, L \cdot \bx\> - \E[\< \bx, L \cdot \bx\>] \Big\vert / d \ge t \Big) \le 2 \cdot \exp\{ - c d \cdot \min(t^2, t) \}. 
\]
\end{lemma}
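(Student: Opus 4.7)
The plan is to recognize this as a direct consequence of the standard Hanson--Wright inequality for quadratic forms in independent sub-Gaussian coordinates (see, e.g., Rudelson--Vershynin). Both $\bx \sim \cN(\bzero, \id_d)$ and $\bx \sim \Unif(\Cube^d)$ have independent, $1$-sub-Gaussian entries, so the form $\<\bx, L \bx\> = \bx^\sT L \bx$ falls in the scope of that inequality: there exists a universal constant $c_0 > 0$ such that, for any matrix $A \in \R^{d \times d}$ and any $s > 0$,
\begin{equation*}
\P\bigl(\bigl|\bx^\sT A \bx - \E[\bx^\sT A \bx]\bigr| \geq s\bigr) \leq 2 \exp\!\Bigl(-c_0 \min\bigl(s^2/\|A\|_F^2,\; s/\|A\|_{\op}\bigr)\Bigr).
\end{equation*}

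The next step is to specialize to $A = L$, a permutation matrix, and compute the relevant norms. Since $L$ is orthogonal, $\|L\|_{\op} = 1$, and since every column has exactly one nonzero (unit) entry, $\|L\|_F^2 = d$. Substituting $s = t d$ gives
\begin{equation*}
\min\!\bigl(s^2/\|L\|_F^2,\; s/\|L\|_{\op}\bigr) = \min(t^2 d,\; t d) = d \cdot \min(t^2, t),
\end{equation*}
which yields the claimed bound after setting $c = c_0$ and dividing the deviation by $d$.

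There is essentially no obstacle here, since the only thing to verify is the computation of $\|L\|_F$ and $\|L\|_{\op}$ for permutation matrices, both of which are immediate. One minor point worth noting in the write-up is that $\E[\bx^\sT L \bx] = \sum_{i : \pi(i) = i} \E[x_i^2] = |\mathrm{Fix}(\pi)|$ (where $\pi$ is the permutation associated with $L$) in both the Gaussian and Rademacher cases, so the centering is the same and no case distinction is needed. Everything else is a direct invocation of the classical Hanson--Wright bound with the appropriate constants absorbed into $c$.
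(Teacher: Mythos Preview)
Your proof is correct and matches the paper's own argument essentially line for line: both invoke the Rudelson--Vershynin Hanson--Wright inequality for independent sub-Gaussian coordinates and plug in $\|L\|_F \le \sqrt d$, $\|L\|_{\op} \le 1$ for a permutation matrix. Your additional remark on the centering term $\E[\bx^\sT L \bx]$ is a nice touch but not needed for the bound.
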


\begin{proof}[Proof of Lemma \ref{lem:Hanson_Wright}]
Note that for any permutation matrix $L$, we have $\| L \|_F \le \sqrt d$, and $\| L \|_{\op} \le 1$. By the Hanson-Wright inequality of vectors with independent sub-Gaussian entries (for example, see Theorem 1.1 of \cite{rudelson2013hanson}), we have 
\[
\P\Big(  \Big\vert \< \bx, L \bx\> - \E[\< \bx, L \bx\>] \Big\vert / d > t \Big) \le 2 \exp\{ - c d \cdot \min(t^2, t) \}. 
\]
This concludes the proof
\end{proof}

\begin{lemma}\label{lem:gegenbauer_coefficients}
Let $Q_k^{(d)}$ be either the $k$'th Gegenbauer polynomial or the $k$'th hypercubic Gegenbauer polynomial (as defined in Section \ref{sec:technical_background}). Let coefficients of monomials in $Q_k^{(d)}(d \cdot x)$ to be $\{ a_{d, k, m} \}_{0 \le m \le k}$. That is, we have
\[
Q_k^{(d)}(x) = \sum_{m = 0}^k a_{d, k, m} (x/d)^m. 
\]
Then, for any fixed $k$, there exists constant $C(k)$, such that 
\[
\vert a_{d, k, m} \vert \le C(k) / d^{(k - m)/2}. 
\]
Moreover, we have
\[
\lim_{d \to \infty} a_{d, k, k} = 1.
\]
Finally, for $k$ and $m$ in different parity, we have 
\[
a_{d, k, m} = 0. 
\]
\end{lemma}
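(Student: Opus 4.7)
The plan is to attack the three claims (parity, size bound, leading-coefficient limit) in parallel for the two cases (sphere and hypercube), using two inputs that are standard for both: the reproducing-kernel identity $\sum_{\ell} Y_{k\ell}^{(d)}(\bx) Y_{k\ell}^{(d)}(\by) = B(\cA_d;k)\, Q_k^{(d)}(\<\bx,\by\>)$ and a scaling limit to Hermite polynomials. Throughout, the normalization is pinned down by $Q_k^{(d)}(d) = 1$, which follows from the kernel identity evaluated at $\bx = \by$ together with the fact that $\sum_\ell Y_{k\ell}^{(d)}(\bx)^2$ is a constant in $\bx$ on both $\S^{d-1}(\sqrt d)$ and $\Cube^d$ (by rotational, resp.\ sign-permutation, invariance).

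\textbf{Parity.} First I would observe that the antipodal map $\bx\mapsto-\bx$ preserves both $\S^{d-1}(\sqrt d)$ and $\Cube^d$, and that every $Y\in V_{d,k}$ satisfies $Y(-\bx)=(-1)^k Y(\bx)$, because $V_{d,k}$ is spanned by polynomials of parity $k$ (homogeneous harmonics on the sphere; multilinear degree-$k$ monomials on the cube). Plugging $\bx\mapsto -\bx$ into the reproducing-kernel identity and using $\<-\bx,\by\>=-\<\bx,\by\>$ gives $Q_k^{(d)}(-t)=(-1)^k Q_k^{(d)}(t)$, which forces $a_{d,k,m}=0$ whenever $k-m$ is odd.

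\textbf{Bound and leading coefficient.} The second step is to invoke the classical fact that, with the normalization $Q_k^{(d)}(d)=1$ fixed above, the rescaled polynomials $d^{k/2} Q_k^{(d)}(\sqrt{d}\, t)$ converge pointwise, as $d\to\infty$, to the probabilist Hermite polynomial $\He_k(t)$, both for the sphere (a standard Gegenbauer-to-Hermite limit — this is exactly the same mechanism that produces the relation $\xi_{d,k}\to\mu_k/k!$ used in the paper) and for the hypercube (a Krawtchouk-to-Hermite limit via the de~Moivre--Laplace theorem). Since each $Q_k^{(d)}$ has fixed degree $k$, pointwise convergence at any $k+1$ distinct points implies convergence of each coefficient (Lagrange interpolation). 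The coefficient of $t^m$ in $d^{k/2}Q_k^{(d)}(\sqrt{d}\,t)$ is precisely $d^{(k-m)/2} a_{d,k,m}$; consequently this sequence converges, as $d\to\infty$, to the coefficient of $t^m$ in $\He_k(t)$, a fixed finite constant. This yields simultaneously (i) the uniform bound $|a_{d,k,m}|\le C(k)\, d^{-(k-m)/2}$ (the finitely many small values of $d$ are absorbed into $C(k)$), and (ii) taking $m=k$, the limit $a_{d,k,k}\to 1$ because $\He_k$ is monic.

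\textbf{Main obstacle.} Conceptually nothing is hard; the work lies in bookkeeping the normalizations, since the paper uses $Q_k^{(d)}$ in the ``$d$-scaled'' convention and one must align it carefully with the classical normalizations of Gegenbauer and Krawtchouk polynomials in the literature. If one prefers to avoid citing the Hermite limits as a black box, an alternative is to prove the coefficient bound directly by induction on $k$ using the three-term recurrence: in both cases the recurrence coefficients scale in $d$ in a manner compatible with the $d^{-(k-m)/2}$ target, so the induction goes through mechanically, and the limit $a_{d,k,k}\to 1$ can be read off from the pinning $Q_k^{(d)}(d)=1$ together with the proven bound applied to the lower-order coefficients.
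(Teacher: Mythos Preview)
Your proposal is correct and takes essentially the same approach as the paper: the paper's proof is a one-line appeal to the coefficient convergence $\Coeff\{ B(\cA_d;k)^{1/2} Q_k^{(d)}(\sqrt{d}\,x)\} \to \Coeff\{\He_k(x)/\sqrt{k!}\}$ (Eqs.~\eqref{eq:Gegen-to-Hermite} and \eqref{eq:Hyper-Gegen-to-Hermite}), which is exactly your Hermite-limit argument in the normalization $d^{k/2} Q_k^{(d)}(\sqrt{d}\,t)\to\He_k(t)$ since $B(\cA_d;k)\sim d^k/k!$. Your write-up is in fact more complete than the paper's, as you supply a self-contained argument for the parity claim (via the antipodal map and the reproducing-kernel identity) that the paper's terse proof leaves implicit.
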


\begin{proof}[Proof of Lemma \ref{lem:gegenbauer_coefficients}]
The proof holds by the following equation
\[
\lim_{d \to \infty} \Coeff\Big\{ B(\cA_d; k)^{1/2} Q_k^{(d)}(\sqrt d \cdot x)  \Big\} = \Coeff\Big\{ \frac{1}{\sqrt{k!}} \He_k(x) \Big\}. 
\]
when $Q_k^{(d)}$ is either Gegenbauer polynomial or Hypercubic Gegenbauer polynomial (See Eq. (\ref{eq:Gegen-to-Hermite}) and Eq. (\ref{eq:Hyper-Gegen-to-Hermite})). 
\end{proof}

\subsection{Counting the degeneracy of band-limited function class (Example \ref{ex:band-limited})}

\begin{proposition}\label{prop:degeneracy_band_limited}
Follow the notations of Example \ref{ex:band-limited}. Then for any fixed $k \ge 1$, we have 
\[
D(\S^{(d-1)}; k) = \Theta_d(d^{k-1}). 
\]
\end{proposition}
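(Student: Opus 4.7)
The proof follows the blueprint of Proposition \ref{prop:degeneracy_cyclic}. Setting $F_m(\btheta) := \int_0^1 (\langle \btheta, g_u\cdot\btheta \rangle/d)^m \de u$, Lemma \ref{lem:isometry_of_orthonomal_polynomials} gives $D(\S^{d-1};k)/B(\S^{d-1};k) = \E_\btheta[P_k(\btheta)]$, while Lemma \ref{lem:gegenbauer_coefficients} expands $P_k = \sum_{m\le k} a_{d,k,m} F_m$ with $a_{d,k,k}\to 1$ and $|a_{d,k,m}|\le C_k d^{-(k-m)/2}$. Since $B(\S^{d-1};k)=\Theta_d(d^k)$, the proposition reduces to showing $\E_{\btheta\sim\Unif(\S^{d-1}(\sqrt{d}))}[F_m(\btheta)] = \Theta_d(d^{-1})$ for each fixed $m\ge 1$.

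The plan is to exploit the Fourier diagonalization of $\Sft_d$. Identifying $\btheta$ with its complex Fourier coefficients $(c_k)_{k=-D}^D$, $D = (d-1)/2$, $c_{-k}=\overline{c_k}$, $\sum_k|c_k|^2 = d$, the element $g_u$ multiplies $c_k$ by $e^{-2\pi i k u}$. Writing $s_k:=|c_k|^2$ (so $s_{-k}=s_k$), this yields
\[
\langle \btheta, g_u\cdot\btheta \rangle = \sum_{k=-D}^{D} s_k\, e^{2\pi i k u},
\]
and expanding the $m$-th power and integrating in $u$ gives the key combinatorial identity
\[
F_m(\btheta) = \frac{1}{d^m} \sum_{\substack{(k_1,\dots,k_m)\in\{-D,\dots,D\}^m \\ k_1+\cdots+k_m=0}} s_{k_1}\cdots s_{k_m}.
\]
Let $N_m$ denote the number of summands above. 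A local central limit theorem for the uniform distribution on $\{-D,\dots,D\}$ yields $N_1 = 1$ and $N_m = \Theta_d(d^{m-1})$ for $m\ge 2$. If every $s_k$ were exactly $1$, this would give $F_m=N_m/d^m=\Theta_d(d^{-1})$, the target scaling.

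To control fluctuations, set $\delta_k := s_k - 1$ so that $\E[\delta_k] = 0$ on the sphere of radius $\sqrt{d}$ (from $\E[\hat{x}_1^2] = 1$ and $\E[\hat{x}_{2p}^2+\hat{x}_{2p+1}^2] = 2$). Standard Wick-type moments on the sphere, $\E[\hat{x}_i^2\hat{x}_j^2] = d/(d+2)$ for $i \ne j$, yield $\E[\delta_k^2] = \Theta_d(1)$ and $\E[\delta_k\delta_{k'}] = -2/(d+2) = O_d(d^{-1})$ for $k\ne k'$, while higher joint moments are bounded uniformly in $d$. Expanding $\prod_j s_{k_j}=\sum_{J\subseteq[m]}\prod_{j\in J}\delta_{k_j}$ and classifying tuples $(k_1,\dots,k_m)$ by their multiplicity pattern: tuples with all $k_j$ pairwise distinct number $N_m(1-o_d(1))$ and contribute $N_m/d^m \cdot (1+O_d(m^2/d))$ after taking expectations, while tuples with at least one coincidence number only $O_d(d^{m-2})=O_d(N_m/d)$ and thus contribute $O_d(d^{-2})=o_d(d^{-1})$. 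Hence $\E[F_m(\btheta)] = N_m/d^m \cdot (1+o_d(1)) = \Theta_d(d^{-1})$, and plugging back into Lemmas \ref{lem:isometry_of_orthonomal_polynomials} and \ref{lem:gegenbauer_coefficients} delivers $D(\S^{d-1};k)=\Theta_d(d^{k-1})$.

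The main technical obstacle lies in the combinatorial bookkeeping of the fluctuation step: unlike the cyclic case, where finiteness of the group admits a direct Hanson--Wright bound on off-diagonal terms, the continuous group $\Sft_d\cong\SO(2)$ forces us through the Fourier representation and through joint moments of the squared magnitudes $s_k$ under the (non-product) spherical measure. A cleaner alternative would be to establish pointwise concentration $\langle\btheta, g_u\cdot\btheta\rangle = D_d(u) + O_{\P}(\sqrt{d})$ uniformly in $u$, where $D_d(u)=\sin(d\pi u)/\sin(\pi u)$ is the Dirichlet kernel (and equals the character of $g_u$), combined with the identity $\int_0^1 D_d(u)^m\de u = N_m = \Theta_d(d^{m-1})$.
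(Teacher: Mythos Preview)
Your approach is essentially the same as the paper's: both reduce via Lemmas \ref{lem:isometry_of_orthonomal_polynomials} and \ref{lem:gegenbauer_coefficients} to proving $\E[F_m]=\Theta_d(d^{-1})$, and both exploit the Fourier diagonalization of $\Sft_d$ to write $F_m$ as a combinatorial sum over frequency tuples with vanishing total index. The main difference is that the paper first computes $\E_{\bz\sim\cN(\bzero,\id_d)}[F_m(\bz)]$ and only then transfers to the sphere via $\bz=\tau\btheta$, whereas you work directly on the sphere. The Gaussian detour is the cleaner route: under $\cN(\bzero,\id_d)$ the squared Fourier magnitudes are \emph{independent} $\chi^2$ variables, so $\E[\prod_s u_{j_s}]$ factors and is trivially bounded below by $1$ and above by a constant depending only on $m$, with no fluctuation expansion needed. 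The paper also uses the real-cosine form $\prod_s\cos(2\pi j_s t)$, shows its integral lies in $[2^{-m},1]$ on the relevant index set $\cI$, and bounds $|\cI|=\Theta_d(d^{m-1})$ by an elementary count (no local CLT required).

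There is one genuine slip in your bookkeeping. You assert $\E[\delta_k\delta_{k'}]=O_d(d^{-1})$ for all $k\ne k'$, but since $s_{-k}=s_k$ you have $\delta_{-k}=\delta_k$ and hence $\E[\delta_k\delta_{-k}]=\E[\delta_k^2]=\Theta_d(1)$. This already bites at $m=2$, where every admissible tuple is $(k,-k)$ and $\E[s_ks_{-k}]=\E[s_k^2]\approx 2$, not $1+O(1/d)$; so your claim that the $J=\emptyset$ term dominates with relative error $O(m^2/d)$ is false as stated. The fix is to classify tuples by the multiset $\{|k_1|,\ldots,|k_m|\}$ rather than $\{k_1,\ldots,k_m\}$. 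The conclusion $\E[F_m]=\Theta_d(d^{-1})$ survives because $\E[\prod_j s_{k_j}]$ is uniformly bounded above (by H\"older and bounded spherical moments) and below (e.g.\ by the FKG-type positive association of the $s_k$ on the sphere, or more simply by passing to the Gaussian measure as the paper does), so the whole sum is $\Theta_d(N_m/d^m)$. Your Dirichlet-kernel remark is correct in spirit---indeed $\E_\btheta[\langle\btheta,g_u\cdot\btheta\rangle]=D_d(u)$---but pointwise concentration alone does not close the argument: you still need uniform control of the higher moments of $\langle\btheta,g_u\cdot\btheta\rangle - D_d(u)$, which brings you back to the same moment computation.
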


Here we state Lemma \ref{lem:band_limited_integral} that is used to prove Proposition \ref{prop:degeneracy_band_limited}. Given Lemma \ref{lem:band_limited_integral}, the proof of Proposition \ref{prop:degeneracy_band_limited} is the same as the proof of Proposition \ref{prop:degeneracy_cyclic}. 

\begin{lemma}\label{lem:band_limited_integral}
Follow the notations of Example \ref{ex:band-limited}. Denote 
\[
F_k(\bz) = \int_{\Sft_d} (\< \bz, g \cdot \bz\>/d)^k \pi_d(\de g). 
\]
Then for any fixed $k \ge 1$, we have
\[
\E_{\bz \sim \cN(\bzero, \id_d)}[F_k(\bz)] = \Theta_d(d^{-1}). 
\]
\end{lemma}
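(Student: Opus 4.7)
The plan is to diagonalize the action of $\Sft_d$ via Fourier coordinates on $\bz$, recognize $d^k F_k(\bz)$ as the constant Fourier coefficient of a trigonometric polynomial in $u$, and then estimate the resulting combinatorial sum by counting integer solutions to a single linear equation.

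Concretely, with $M := (d-1)/2$ and $w_p := z_{2p} + i z_{2p+1}$ for $1 \leq p \leq M$, the formula in Example \ref{ex:band-limited} shows that $g_u$ acts by $w_p \mapsto e^{-2\pi i p u} w_p$ while fixing $z_1$. Setting $X_0 := z_1^2 \sim \chi_1^2$ and $X_p := |w_p|^2 \sim \chi_2^2$ for $p \geq 1$ (mutually independent), one then has
\[
\langle \bz, g_u \cdot \bz\rangle = X_0 + \sum_{p=1}^M X_p \cos(2\pi p u) = \sum_{p=-M}^{M} c_p\, e^{2\pi i p u},
\]
with $c_0 = X_0$ and $c_{\pm p} = X_p/2$ for $p \geq 1$. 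Raising to the $k$-th power and integrating in $u$ picks out tuples $(p_1, \ldots, p_k) \in \{-M, \ldots, M\}^k$ satisfying $p_1 + \cdots + p_k = 0$. Independence of the $X_p$'s, together with the standard chi-square moments $\E[X_0^\ell] = (2\ell-1)!!$ and $\E[X_p^\ell] = 2^\ell \ell!$, yields the closed form
\[
d^k \, \E[F_k(\bz)] \;=\; \sum_{\mathbf{p} \in \{-M,\ldots,M\}^k :\, \sum p_i = 0} (2 \nu_0(\mathbf{p}) - 1)!!\, \prod_{q \geq 1} \nu_q(\mathbf{p})!\, ,
\]
with $\nu_0(\mathbf{p}) := |\{i : p_i = 0\}|$ and $\nu_q(\mathbf{p}) := |\{i : |p_i| = q\}|$ for $q \geq 1$.

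The upper bound $\E[F_k(\bz)] = O(d^{-1})$ is then immediate: each summand is bounded by $(2k-1)!!$, and the linear constraint $\sum p_i = 0$ leaves only $O(M^{k-1})$ tuples in $\{-M,\ldots,M\}^k$. For the matching lower bound, the case $k = 1$ is direct since $F_1(\bz) = z_1^2/d$; for $k \geq 2$, I would restrict to tuples of the form $(p_1, \ldots, p_{k-1}, -\sum_{i<k} p_i)$ with $p_1, \ldots, p_{k-1}$ distinct positive integers in $\{1, \ldots, \lfloor M/k\rfloor\}$. This provides $\Omega(M^{k-1})$ valid tuples, each contributing a positive constant (exactly $1$ when $k \geq 3$, as all $|p_i|$ are then automatically distinct; exactly $2$ when $k = 2$, where $|p_1| = |p_2|$ is forced). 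Combining the two directions yields $\E[F_k(\bz)] = \Theta(d^{-1})$.

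The main conceptual step is the Fourier identification, which reduces the problem to counting solutions of $\sum p_i = 0$ in a box. Beyond this, the argument is elementary; no substantial obstacle is anticipated, and the structure closely parallels Lemma \ref{lem:cyclic_integral} with the discrete sum over $\Cyc_d$ replaced by the continuous integral over $\Sft_d$ and Hanson-Wright concentration replaced by the orthogonality of characters.
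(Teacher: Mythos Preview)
Your proposal is correct and takes essentially the same approach as the paper: both express $\langle \bz, g_u\cdot\bz\rangle$ as a trigonometric polynomial in $u$ with chi-square coefficients, expand the $k$-th power, and reduce the $u$-integral to counting tuples satisfying a zero-sum constraint. Your execution is slightly cleaner---using complex exponentials and exact $\chi^2$ moments you obtain a closed-form expression for $d^k\,\E[F_k(\bz)]$, whereas the paper works with real cosines and only bounds the analogous $Z$- and $E$-factors---but the underlying idea is identical.
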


\def\imag{{\mathsf i}}

\begin{proof}[Proof of Lemma \ref{lem:band_limited_integral}]

We prove the lemma for the case when $d$ is odd. We denote $u_1 = z_1^2$, and $u_i = z_{2i}^2 + z_{2i+1}^2$ for $i = 2, \ldots, (d-1)/2$. Then we have 
\begin{equation}\label{eqn:proof_band_limited_integral_1}
\begin{aligned}
&~\E_{\bz \sim \cN(\bzero, \id_d)}[F_k(\bz)] = d^{-k} \cdot \E_\bz \Big\{\int_{[0, 1]} \Big( \sum_{j = 0}^{(d-1)/2} u_j \cos(2 \pi j t) \Big)^k \de t\Big\}\\
=&~d^{- k} \cdot \E_\bz \Big\{ \int_{[0, 1]} \sum_{j_1, \ldots, j_k = 0}^{(d-1)/2} \Big(\prod_{s \in [k]} u_{j_s}  \cos(2 \pi j_{s} t) \Big) \de t \Big\}\\
=&~ d^{- k}\cdot \sum_{j_1, \ldots, j_k = 0}^{(d-1)/2} \E_\bz \Big\{ \prod_{s \in [k]} u_{j_s} \Big\} \Big(\int_{[0, 1]} \prod_{s \in [k]} \cos(2 \pi j_s t) \de t \Big). \\
\end{aligned} 
\end{equation}

\noindent
{\bf Step 1. Bound $Z$ function. } First, we denote 
\[
Z(j_1, \ldots, j_k) =  \E_\bz \Big\{ \prod_{s \in [k]} u_{j_s} \Big\}. 
\]
We have
\begin{equation}\label{eqn:proof_band_limited_Z_upper}
\begin{aligned}
&~\sup_{j_1, \ldots, j_k}Z(j_1, \ldots, j_k) \le \sup_{j_1, \ldots, j_k} \prod_{s \in [k]} \E[ u_{j_s}^{2k}]^{1/(2k)}\\
 \le&~ \sup_{j \in \{ 0, 1, \ldots, (d-1)/2\} } \E[u_j^{2k}]^{1/2} \le 2^{k} \cdot \E_{G \sim \cN(0, 1)}[G^{2k}]^{1/2} \equiv M_k. 
\end{aligned}
\end{equation}
Moreover, we have 
\begin{equation}\label{eqn:proof_band_limited_Z_lower}
\begin{aligned}
&~\inf_{j_1, \ldots, j_k}Z(j_1, \ldots, j_k) \ge \E_{G \sim \cN(0, 1)}[G^2] = 1. 
\end{aligned}
\end{equation}

\noindent
{\bf Step 2. Bound $\vert \cI \vert$. } Further, we denote 
\[
\cI = \Big\{ (j_1, \ldots, j_k) \in \{ 0, \ldots, (d-1)/2 \}^k : \exists (\eps_i)_{i \in [k]} \in \{ \pm 1\}^k, \sum_{i = 1}^k \eps_i j_i = 0 \Big\},
\]
Then it is easy to see that 
\begin{equation}\label{eqn:proof_band_limited_I}
[(d+1)/2]^{k-1} \le \vert \cI \vert \le 2 \cdot (d+1)^{k-1}. 
\end{equation}

\noindent
{\bf Step 3. Bound $E$ function. } Next, we denote 
\[
E(j_1, \ldots, j_k) = \int_{[0, 1]} \prod_{s \in [k]} \cos(2 \pi j_s t) \de t. 
\]
It is easy to see that 
\begin{equation}\label{eqn:proof_band_limited_E_upper}
\sup_{j_1, \ldots, j_k}  \vert E(j_1, \ldots, j_k)\vert\le 1. 
\end{equation}
Moreover, for any $(j_1, \ldots, j_k) \not \in \cI$, we have $E(j_1, \ldots, j_k) = 0$. For any $(j_1, \ldots, j_k) \in \cI$, we have 
\begin{equation}\label{eqn:proof_band_limited_E_lower}
E(j_1, \ldots, j_k) =  \frac{1}{2^k} \int_{[0, 1]} \prod_{s \in [k]} [\exp( \imag 2 \pi j_s t) + \exp( - \imag 2 \pi j_s t)] \de t \ge 1/2^k. 
\end{equation}
The last inequality used the fact that $(j_1, \ldots, j_k) \in \cI$. 

\noindent
{\bf Step 4. Concludes the proof. } Therefore, combining Eq. (\ref{eqn:proof_band_limited_integral_1}) (\ref{eqn:proof_band_limited_Z_upper}) (\ref{eqn:proof_band_limited_I}) (\ref{eqn:proof_band_limited_E_upper}), we have 
\[
\begin{aligned}
\E_{\bz \sim \cN(\bzero, \id_d)}[F_k(\bz)] \le&~ d^{- k} \cdot M_k\cdot \sum_{j_1, \ldots, j_k = 0}^{(d-1)/2} \vert E(j_1, \ldots, j_k) \vert \\
\le&~ d^{- k} \cdot M_k \cdot \vert \cI \vert = O_d(d^{-1}). 
\end{aligned}
\]
Combining Eq. (\ref{eqn:proof_band_limited_integral_1}) (\ref{eqn:proof_band_limited_Z_lower}) (\ref{eqn:proof_band_limited_I}) (\ref{eqn:proof_band_limited_E_lower}), we have 
\[
\begin{aligned}
\E_{\bz \sim \cN(\bzero, \id_d)}[F_k(\bz)] \ge&~ d^{- k} \cdot \sum_{j_1, \ldots, j_k = 0}^{(d-1)/2} \vert E(j_1, \ldots, j_k) \vert \\
\ge&~ d^{- k} \cdot \vert \cI \vert / 2^k = \Omega_d(d^{-1}). 
\end{aligned}
\]
This concludes the proof. 
\end{proof}

\section{Concentration for invariant groups with degeneracy $\alpha \leq 1$}\label{sec:concen_alpha}

Let $Q_k^{(d)}$ be the $k$'th Gegenbauer polynomial on $\cA_d \in \{ \S^{d-1} (\sqrt{d}) , \Cube^d \}$ (see Section \ref{sec:technical_background} for definitions). Let $\cG_d$ be an invariant group with degeneracy $\alpha$. That means, for any fixed $k \ge \alpha$, we have $B(\cA_d ; k) / [D(\cA_d ; k) d^\alpha] = \Theta_d(1)$. For $k \in \N_{\ge 0}$, we denote
\begin{equation}\label{eqn:def_Upsilon_k_general}
\Upsilon_k(\btheta) = \frac{1}{D(\cA_d ; k)} \sum_{l = 1}^{D(\cA_d ; k)} \overline Y_{k l}^{(d)}(\btheta) \overline Y_{kl}^{(d)} (\btheta) = \frac{B(\cA_d ; k)}{D(\cA_d ; k)} \int_{\cG_d} Q_k^{(d)}(\< \btheta, g \cdot \btheta\> ) \pi_d(\de g). 
\end{equation}
Then we have 
\[
\E[\Upsilon_k(\btheta)] = 1. 
\]
In this section, we show that $\Upsilon_k$ concentration around its mean, for any fixed $k \ge 2$ and $\alpha \leq 1$.

\subsection{Main proposition}

\begin{proposition}\label{prop:general_concentration_diagonal} Let $\cG_d$ be an invariant group with degeneracy $\alpha \leq 1$. Let $(\btheta_i)_{i \in [N]} \sim \Unif(\cA_d)$ where $N = O_d(d^p)$ for some fixed integer $p$. Let $\Upsilon_k$ be as defined in Eq. (\ref{eqn:def_Upsilon_k_general}). Then for any fixed $k \ge 2$, we have 
\[
\sup_{i \in [N]} \Big\vert \Upsilon_k(\btheta_i) - 1 \Big\vert = o_{d, \P}(1). 
\]
\end{proposition}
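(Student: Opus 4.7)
The plan has three parts: a union bound, a hypercontractivity reduction to a variance estimate, and the variance estimate itself. By a union bound, it suffices to prove the pointwise tail bound $\P(|\Upsilon_k(\btheta)-1|>\varepsilon) = o(d^{-p})$ for each fixed $\varepsilon>0$. Since $\E[\Upsilon_k]=1$ and
\[
\Upsilon_k(\btheta)-1 \;=\; \frac{1}{D(\cA_d;k)}\sum_{l=1}^{D(\cA_d;k)} \big(\overline Y_{kl}^{(d)}(\btheta)^{2}-1\big)
\]
is a $\cG_d$-invariant polynomial of degree at most $2k$ with mean zero, the hypercontractivity of bounded-degree polynomials on $\S^{d-1}(\sqrt d)$ and $\Cube^d$ (Lemmas \ref{lem:hypercontractivity_sphere} and \ref{lem:hypercontractivity_hypercube}) yields
\[
\|\Upsilon_k(\btheta)-1\|_{L^{2q}} \;\le\; C(k,q)\,\|\Upsilon_k(\btheta)-1\|_{L^{2}}
\]
for every fixed integer $q\ge 1$. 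By Markov's inequality, the problem then reduces to showing $\Var(\Upsilon_k(\btheta)) = O(d^{-\beta})$ for some $\beta>0$; one then picks $q$ with $\beta q>p$.

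For the variance, I would expand via Lemma \ref{lem:gegenbauer_coefficients} to write
\[
\Upsilon_k(\btheta) \;=\; \frac{B(\cA_d;k)}{D(\cA_d;k)} \sum_{m=0}^{k} a_{d,k,m}\, F_m(\btheta), \qquad F_m(\btheta) \eqndef \int_{\cG_d}\!\Big(\frac{\<\btheta,\,g\cdot\btheta\>}{d}\Big)^{\!m}\pi_d(\de g),
\]
with $|a_{d,k,m}|\le C(k)\, d^{-(k-m)/2}$, $a_{d,k,k}\to 1$, and the prefactor $B/D=\Theta(d^\alpha)$ with $\alpha\le 1$ by the degeneracy hypothesis. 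Since $F_0 \equiv 1$, the variance becomes a combination of covariances of the $F_m$'s for $m\ge 1$, and by Cauchy--Schwarz it suffices to bound each
\[
\Var(F_m) \;=\; \int_{\cG_d}\!\!\int_{\cG_d}\!\Cov\!\Big(\big(\<\btheta,g_1\!\cdot\btheta\>/d\big)^{m},\,\big(\<\btheta,g_2\!\cdot\btheta\>/d\big)^{m}\Big)\,\pi_d(\de g_1)\pi_d(\de g_2).
\]
For this one uses that the quadratic form $\<\btheta,g\cdot\btheta\>/d$ concentrates around its mean $\Tr(g)/d$ with fluctuations of order $d^{-1/2}$ --- on $\Cube^d$ by the Hanson--Wright inequality (Lemma \ref{lem:Hanson_Wright}) and on $\S^{d-1}(\sqrt d)$ by an analogous Gaussian estimate followed by concentration of $\|\bz\|_2^2$, in the spirit of Lemmas \ref{lem:cyclic_integral} and \ref{lem:band_limited_integral}. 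This yields $\Var(F_m)=O(d^{-\gamma_m})$ for some $\gamma_m>0$, and combining with the prefactor gives $\Var(\Upsilon_k)=O(d^{-\beta})$ for some $\beta>0$ depending on $k$ and $\alpha$.

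The main obstacle I expect is executing the last step uniformly across arbitrary groups of degeneracy $\alpha\le 1$, rather than for the specific cyclic/band-limited cases treated in Appendix \ref{sec:counting_degeneracy}: the concentration of $\<\btheta,g\cdot\btheta\>/d$ is sensitive to the spectrum of $g$, and one must show that the ``diagonal'' pairs $g_1\simeq g_2$ contribute a small enough portion of the mass $\pi_d\otimes\pi_d$ that the double integral still decays in $d$. The assumption $\alpha\le 1$ is what keeps the prefactor $(B/D)^{2}=\Theta(d^{2\alpha})$ from swamping the $d^{-(k-m)/2}$ decay of the Gegenbauer coefficients for $m<k$, and the restriction $k\ge 2$ is also essential: for $k=1$ and $\alpha=1$ one may have $D(\cA_d;1)=1$ (as for $\Cyc_d$, whose unique invariant linear function is $\sum_i \theta_i/\sqrt d$), so $\Upsilon_1$ is $\chi^{2}_{1}$-distributed and fails to concentrate.
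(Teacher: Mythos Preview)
Your overall architecture---union bound, hypercontractivity of degree-$2k$ polynomials to reduce the supremum to a variance estimate, and the Gegenbauer expansion $\Upsilon_k=(B/D)\sum_m a_{d,k,m}F_m$---is exactly the paper's. The Gaussian-to-sphere coupling you mention is also what the paper does. The gap is precisely where you locate it: the variance of $F_m$ for a \emph{general} group of degeneracy $\alpha$.

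Your proposed route through Hanson--Wright and pointwise concentration of $\<\btheta,g\cdot\btheta\>/d$ cannot close this gap. The degeneracy hypothesis says nothing about individual elements $g\in\cG_d$---no control on $\Tr(g)$, on the spectrum of $g$, or on how much $\pi_d\otimes\pi_d$-mass sits near the diagonal. All it gives is the aggregate statement $\E_\btheta[P_k(\btheta)]=D(\cA_d;k)/B(\cA_d;k)=\Theta(d^{-\alpha})$, where $P_k(\btheta)=\int_{\cG_d}Q_k^{(d)}(\<\btheta,g\cdot\btheta\>)\pi_d(\de g)$. The paper exploits only this, via two ingredients you are missing:
\begin{itemize}
\item[(i)] \emph{A moment induction (Lemma~\ref{lem:Fk_moments_order}).} Since $(B/D)P_k=\frac{1}{D}\sum_l \oY_{kl}^2$ is a nonnegative degree-$2k$ polynomial with unit mean, hypercontractivity plus Cauchy--Schwarz on the $\oY_{kl}$'s gives $\|P_k\|_{L^s}=O(d^{-\alpha})$ for every fixed $s$. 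Inverting the triangular system $P_k=\sum_{m\le k}a_{d,k,m}F_m$ then yields $\|F_m\|_{L^s}=O(d^{-\alpha})$ for all $m\ge1$, using $\alpha\le1$ to absorb the constant term $|a_{d,k,0}|\le Cd^{-k/2}$.
\item[(ii)] \emph{Poincar\'e inequality (Lemmas~\ref{lem:general_variance}, \ref{lem:general_variance_hypercube}).} Instead of dissecting the double group integral, apply the Gaussian (resp.\ discrete) Poincar\'e inequality to $F_k$. The gradient computation produces another group average and gives, after H\"older,
\[
\Var(F_k)\;\le\;\frac{C_k}{d}\,\E[F_{k-1}^2]\quad(k\text{ odd}),\qquad \Var(F_k)\;\le\;\frac{C_k}{d}\,\E[F_k^2]^{(2k-1)/(2k)}\quad(k\text{ even}).
\]
Plugging in $\E[F_m^2]=O(d^{-2\alpha})$ from (i) yields $\Var(F_m)=O(d^{-1-3\alpha/2})$ for every $m\ge2$, which after multiplication by $(B/D)^2=\Theta(d^{2\alpha})$ is $O(d^{\alpha/2-1})=o(1)$ when $\alpha\le1$.
\end{itemize}
The $m=1$ term is handled separately: $\|F_1\|_{L^s}=O(d^{-\alpha})$ directly from (i), and its Gegenbauer coefficient carries the extra factor $|a_{d,k,1}|\le Cd^{-(k-1)/2}$ (nonzero only for odd $k\ge3$), so $(B/D)\cdot|a_{d,k,1}|\cdot\sup_i|F_1(\bx_i)|=O(d^{\alpha}\cdot d^{-(k-1)/2}\cdot d^{-\alpha+\eps})=o(1)$. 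This is why $k\ge2$ matters, as you correctly note.

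In short: replace the Hanson--Wright step by Poincar\'e, and feed it the moment bounds on $F_m$ obtained by inverting the $P_k$--$F_m$ relation. Both steps use only the degeneracy assumption, never the structure of individual group elements.
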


\begin{proof}[Proof of Proposition \ref{prop:general_concentration_diagonal}]
Let us first focus on the sphere case $\cA_d = \S^{d-1} (\sqrt{d})$. Let $(\bx_i )_{i \in [N]} \sim_{iid} \cN(\bzero, \id_d)$. Without loss of generality, we assume $\bx_i$ and $\btheta_i$ are coupled such that $\btheta_i = \sqrt d \cdot \bx_i / \| \bx_i \|_2$. Denote 
\[
F_k(\bz) = \int_{\cG_d} (\< \bz, g \cdot \bz\>/ d)^k \pi_d(\de g). 
\]

Let $\{ a_{d, k, m} \}_{0 \le m \le k}$ be the coefficients of monomials in $Q_k^{(d)}(d \cdot x)$. That is, we have
\[
Q_k^{(d)}(x) = \sum_{m = 0}^k a_{d, k, m} (x/d)^m. 
\]
Then 
\[
\int_{\cG_d} Q_k^{(d)}(\< \btheta, g \cdot \btheta\> ) \pi_d(\de g) = \sum_{m = 0}^k a_{d, k, m} F_m (\btheta). 
\]
Moreover, by Lemma \ref{lem:gegenbauer_coefficients}, we have $\vert a_{d, k, m} \vert \le C_{k, m} / d^{(k - m)/2}$, $\lim_{d \to \infty} a_{d, k, k} = 1$, and $a_{d, k, m} = 0$ for $k$ and $m$ of different parity. 

Then we have 
\[
\begin{aligned}
&~ \sup_{i \in [N]} \vert \Upsilon_k(\btheta_i) - \E[\Upsilon_k(\btheta_i)]\vert \\
=&~ \frac{B(\S^{d-1}; k)}{D(\S^{d-1}; k)} \sup_{i \in [N]} \Big\vert \int_{\cG_d} Q_{k}(\< \btheta_i, g \cdot \btheta_i\> ) \pi_d(\de g) - \E\Big[ \int_{\cG_d} Q_{k}(\< \btheta_i, g \cdot \btheta_i\>) \pi_d(\de g) \Big] \Big\vert \\
\le&~ C \times d^\alpha \times \sum_{m = 1}^k a_{d, k, m} \times \sup_{i \in [N]} \Big\vert  F_m(\btheta_i) - \E [F_m(\btheta_i)] \Big\vert \\
\le&~ C \times d^\alpha \times \sum_{m = 1}^k a_{d, k, m} \times \sup_{i \in [N]} \Big\vert  F_m(\bx_i) - \E [F_m(\bx_i)] \Big\vert \cdot [d^m / \| \bx_i \|_2^{2m}]. 
\end{aligned}
\]
By the concentration of $\chi^2$-distribution, for any $\eps > 0$, the following event happens with high probability
\[
\cE_1 \equiv \Big\{ \sup_{i \in [N]} \big\vert \| \bx_i \|_2^2/ d - 1 \big\vert \le 1/ d^{1/2 - \eps} \Big\}. 
\]
Moreover, combining Lemma \ref{lem:Fk_moments_order} with Lemma \ref{lem:general_variance}, for any fixed $m \ge 2$, we have
\[
\E[(F_m(\bx) - \E[F_m(\bx)])^2] \le C_m  d^{-1 - 3 \alpha/2}. 
\]
By the hypercontractivity property of Gaussian distribution as per Lemma \ref{lem:hypercontractivity_Gaussian}, for any $\eps > 0$, taking $q$ sufficiently large, we have 
\[
\begin{aligned}
&~\E\Big[ \sup_{i \in [N]} \Big\vert F_m(\bx_i) - \E[F_m(\bx_i)] \Big\vert \Big] \le \E\Big[ \sum_{i = 1}^N \Big(F_m(\bx_i) - \E[F_m(\bx_i)] \Big)^{2q} \Big]^{1/(2q)} \\
\le&~ C(q) \cdot d^{p/(2q)} \cdot \E[(F_m(\bx) - \E[F_m(\bx)])^2]^{1/2}
\le C d^{-1 - 3 \alpha/2 + \eps} 
\end{aligned}
\]
By Markov's inequality, we deduce that the following event happens with high probability
\[
\cE_2 \equiv \Big\{\forall 2 \le m \le k,   \sup_{i \in [N]} \Big\vert F_m(\bx_i) - \E[F_m(\bx_i)] \Big\vert \le C d^{-1 - 3 \alpha/2 + \eps}  \Big\}. 
\]
Finally, by Lemma \ref{lem:Fk_moments_order}, we have 
\[
\E[F_1(\bx)^2] \le C d^{-2\alpha}, 
\]
and by the hypercontractivity property of low degree polynomials with Gaussian measure (Lemma \ref{lem:hypercontractivity_Gaussian}), for any $\eps > 0$, taking $q$ sufficiently large, we have 
\[
\begin{aligned}
\E\Big[ \sup_{i \in [N]} \Big\vert F_1(\bx_i) - \E[F_1(\bx_i)] \Big\vert \Big] \le &~c \E\Big[ \sum_{i = 1}^N F_1(\bx_i)^{2q} \Big]^{1/(2q)} + \E[F_1(\bx)^2]^{1/2} \\
\le&~ C(q) \cdot d^{p/(2q)} \cdot \E[F_1(\bx)^2]^{1/2} + \E[F_1(\bx)^2]^{1/2}
\le C d^{-\alpha + \eps}. 
\end{aligned}
\]
As a result, the following event happens with high probability
\[
\cE_3 \equiv \Big\{  \sup_{i \in [N]} \Big\vert F_1(\bx_i) - \E[F_1(\bx_i)] \Big\vert \le C d^{-\alpha + \eps} \Big\}. 
\]
When all the events $\cE_1$, $\cE_2$, and $\cE_3$ happen, for any $k \ge 2$, we have 
\[
\begin{aligned}
&~ \sup_{i \in [N]} \vert \Upsilon_k(\btheta_i) - \E[\Upsilon_k(\btheta_i)]\vert \\
\le&~ C \times d^{\alpha} \times \sum_{m = 1}^k a_{d, k, m} \times \sup_{i \in [N]} \Big\vert  F_m(\bx_i) - \E [F_m(\bx_i)] \Big\vert \cdot [d^m / \| \bx_i \|_2^{2m}] \\
\le&~ C \times d^{\alpha} \times \Big[  d^{-(k-1)/2} d^{-\alpha + \eps} + \sum_{m = 2}^k d^{- (k - m)/2} \times d^{-1 - 3 \alpha /2 + \eps} \Big] = o_d(1). 
\end{aligned}
\]
The case of the hypercube $\cA_d \sim \Cube^d$ follows similarly without introducing the gaussian measure and using Lemma \ref{lem:general_variance_hypercube} instead of Lemma \ref{lem:general_variance}.
\end{proof}

\subsection{Auxiliary Lemmas}

%
%\begin{lemma}\label{lem:gegenbauer_coefficients_general}
%Let $Q_k^{(d)}$ be the $k$'th Gegenbauer polynomial, with coefficients of monomials in $Q_k^{(d)}(d \cdot x)$ to be $\{ a_{d, k, m} \}_{0 \le m \le k}$. That is, we have
%\[
%Q_k^{(d)}(x) = \sum_{m = 0}^k a_{d, k, m} (x/d)^m. 
%\]
%Then, for any fixed $k$, there exists constant $C(k)$, such that 
%\[
%\vert a_{d, k, m} \vert \le C(k) / d^{(k - m)/2}. 
%\]
%Moreover, we have
%\[
%\lim_{d \to \infty} a_{d, k, k} = 1.
%\]
%Finally, for $k$ and $m$ in different parity, we have 
%\[
%a_{d, k, m} = 0. 
%\]
%\end{lemma}
%
%\begin{proof}[Proof of Lemma \ref{lem:gegenbauer_coefficients_general}]
%The proof holds by the following equation
%\[
%\lim_{d \to \infty} \Coeff\Big\{ B(d, k)^{1/2} Q_k^{(d)}(\sqrt d \cdot x)  \Big\} = \Coeff\Big\{ \frac{1}{\sqrt{k!}} \He_k(x) \Big\}. 
%\]
%\end{proof}
%

\begin{lemma}\label{lem:Fk_moments_order}
Let $\cG_d$ be an invariant group with degeneracy $\alpha \leq 1$. Denote 
\[
F_k(\bz) = \int_{\cG_d} (\< \bz, g \cdot \bz\>/ d)^k \pi_d(\de g). 
\]
Then for any fixed $s \in [1, \infty)$ and integer $k \ge 1$, we have
\begin{align}
\E_{\bx \sim \cN(\bzero, \id_d)}[F_k(\bx)^s ]^{1/s} = &~ O_d(d^{-\alpha}) , \\
\E_{\btheta \sim \Unif ( \cA_d ) }[F_k(\btheta)^s ]^{1/s} = &~ O_d(d^{-\alpha} ).
\end{align}
\end{lemma}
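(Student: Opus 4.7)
The plan is to reduce each $L^s$ bound to an $L^2$ bound by hypercontractivity of polynomials of bounded degree, and then to control $\|F_k\|_{L^2}$ by combining a direct mean computation---obtained from the Gegenbauer expansion and Lemma \ref{lem:isometry_of_orthonomal_polynomials}---with the variance estimate of Lemmas \ref{lem:general_variance} (sphere) and \ref{lem:general_variance_hypercube} (hypercube).

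The key initial observation is that, since $\< \bz, g \cdot \bz\>$ is a quadratic form in $\bz$ for each fixed $g \in \cG_d$, the function $F_k$ is a polynomial of degree $2k$ in its argument, viewed both as a function of $\bx \in \R^d$ under the Gaussian measure and as a function of $\btheta$ under the uniform measure on $\cA_d$. For any fixed $s \in [1,\infty)$, one may assume $s \ge 2$; Gaussian hypercontractivity (Lemma \ref{lem:hypercontractivity_Gaussian}) and spherical/hypercube hypercontractivity (Lemmas \ref{lem:hypercontractivity_sphere} and \ref{lem:hypercontractivity_hypercube}) then each yield
\[
\E[F_k^s]^{1/s} \le C(k,s) \cdot \E[F_k^2]^{1/2},
\]
so it suffices to prove $\|F_k\|_{L^2} = O_d(d^{-\alpha})$ under each of the three measures.

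For the $L^2$ bound under $\btheta \sim \Unif(\cA_d)$, I would write $\|F_k\|_{L^2}^2 = (\E F_k)^2 + \Var(F_k)$ and bound each piece separately. For the mean, I would run induction on $k$, combining the Gegenbauer expansion $Q_k^{(d)}(x) = \sum_{m=0}^k a_{d,k,m}(x/d)^m$ from Lemma \ref{lem:gegenbauer_coefficients} with Lemma \ref{lem:isometry_of_orthonomal_polynomials} to obtain
\[
\sum_{m=0}^{k} a_{d,k,m}\, F_m(\btheta) \;=\; \int_{\cG_d} Q_k^{(d)}(\<\btheta, g\cdot\btheta\>) \pi_d(\de g) \;=\; \frac{D(\cA_d;k)}{B(\cA_d;k)}\,\Upsilon_k(\btheta).
\]
Taking expectations and using $\E[\Upsilon_k(\btheta)] = 1$ gives $\sum_m a_{d,k,m}\E[F_m(\btheta)] = D(\cA_d;k)/B(\cA_d;k) = \Theta(d^{-\alpha})$. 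The base case $k=1$ is direct: $F_1(\btheta) = \<\btheta, A\btheta\>/d$ with $A := \int_{\cG_d} g \,\pi_d(\de g)$ the orthogonal projection onto the $\cG_d$-fixed subspace of dimension $D(\cA_d;1) = \Theta(d^{1-\alpha})$, so $\E[F_1] = \Tr(A)/d = \Theta(d^{-\alpha})$. For the inductive step, using $a_{d,k,k} \to 1$, $|a_{d,k,m}| \le C/d^{(k-m)/2}$ and the inductive hypothesis $\E[F_m] = O(d^{-\alpha})$ for $m < k$, one concludes $\E[F_k] = O(d^{-\alpha})$. For the variance, Lemmas \ref{lem:general_variance} and \ref{lem:general_variance_hypercube} deliver $\Var(F_k) = O(d^{-1-3\alpha/2})$ for $k \ge 2$, while a direct computation on the quadratic form $F_1 = \<\btheta, A\btheta\>/d$ gives $\Var(F_1) = O(\|A\|_F^2/d^2) = O(d^{-1-\alpha})$. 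Since $\alpha \le 1$, both estimates are $O(d^{-2\alpha})$, hence $\|F_k\|_{L^2}^2 = O(d^{-2\alpha})$.

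The Gaussian bound would then follow from the spherical one (when $\cA_d = \S^{d-1}(\sqrt{d})$) via the decomposition $\btheta := \sqrt{d}\bx/\|\bx\|_2 \sim \Unif(\S^{d-1}(\sqrt{d}))$, which is independent of $\|\bx\|_2^2 \sim \chi^2_d$; this yields $F_k(\bx) = (\|\bx\|_2^2/d)^k F_k(\btheta)$ and therefore
\[
\E_{\bx \sim \cN(\bzero,\id_d)}[F_k(\bx)^s] = \E\bigl[(\|\bx\|_2^2/d)^{ks}\bigr] \cdot \E_{\btheta}[F_k(\btheta)^s] = (1+o_d(1)) \cdot \E_{\btheta}[F_k(\btheta)^s].
\]
The main obstacle will be the variance bound for $k \ge 2$ supplied by Lemmas \ref{lem:general_variance} and \ref{lem:general_variance_hypercube}: that estimate requires carefully dissecting the double integral $\int_{\cG_d \times \cG_d} \E[(\<\bx, g\bx\>/d)^k (\<\bx, g' \bx\>/d)^k] \pi_d(\de g)\pi_d(\de g')$, exploiting the degeneracy-$\alpha$ structure to show that most pairs $(g,g')$ produce only weakly correlated quadratic forms and contribute the correct $d^{-1-3\alpha/2}$ decay. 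The hypercontractivity and induction steps above are then routine.
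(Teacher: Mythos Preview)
Your approach is essentially correct but takes a genuinely different route from the paper. The paper never touches a variance bound when proving this lemma. Instead, it observes that
\[
P_k(\btheta)\;:=\;\int_{\cG_d} Q_k^{(d)}(\<\btheta,g\cdot\btheta\>)\,\pi_d(\de g)\;=\;\frac{1}{B(\cA_d;k)}\sum_{l=1}^{D(\cA_d;k)}\overline Y_{kl}^{(d)}(\btheta)^2
\]
is $(D/B)$ times an average of squares of orthonormal degree-$k$ polynomials. Pulling out the factor $D/B=\Theta(d^{-\alpha})$, applying Cauchy--Schwarz to the sum, and using hypercontractivity on each individual $\overline Y_{kl}^{(d)}$ (so that $\E[\overline Y_{kl}^4]\le C_k$) yields $\|P_k\|_{L^s}\le C_{k,s}\,d^{-\alpha}$ in one stroke. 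The induction through $P_k=\sum_m a_{d,k,m}F_m$ then transfers this directly to $\|F_k\|_{L^s}$. This is shorter and entirely self-contained: it uses only Lemma~\ref{lem:isometry_of_orthonomal_polynomials} and hypercontractivity, not the Poincar\'e lemmas.

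Your mean-plus-variance route via Lemmas~\ref{lem:general_variance} and~\ref{lem:general_variance_hypercube} also works, but two points deserve care. First, Lemma~\ref{lem:general_variance} is stated for the Gaussian measure, not for the sphere; so for $\cA_d=\S^{d-1}(\sqrt d)$ the natural order is Gaussian first (mean transferred from the sphere, variance via Gaussian Poincar\'e), then sphere via $\E_\btheta[F_k^2]=\E_\bx[F_k^2]/\E[(\|\bx\|_2^2/d)^{2k}]$---the reverse of what you wrote. Second, for $k=2$ the even-$k$ bound $\Var(F_2)\le (C/d)\,\E[F_2^2]^{3/4}$ is self-referential and requires a short bootstrap (e.g.\ the dichotomy $ (C/d) x^{3/4}\le x/2$ versus $x^{1/4}<2C/d$) to close; you glossed over this. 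Finally, your ``main obstacle'' remark about dissecting the double integral over $\cG_d\times\cG_d$ is not how the paper proves Lemmas~\ref{lem:general_variance}--\ref{lem:general_variance_hypercube}; those are obtained cleanly by (Gaussian/discrete) Poincar\'e followed by H\"older, with no case analysis on pairs $(g,g')$.
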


\begin{proof}[Proof of Lemma \ref{lem:Fk_moments_order}] 

For $\btheta \in \cA_d$, denote
\[
P_k(\btheta) \equiv \frac{1}{B(\cA_d ; k)}\sum_{l = 1}^{D(\cA_d ; k)} \overline Y_{k l}^{(d)}(\btheta)^2 = \int_{\cG_d} Q_k^{(d)}(\< \btheta, g \cdot \btheta\> ) \pi_d(\de g). 
\]
By Lemma \ref{lem:isometry_of_orthonomal_polynomials} and by the assumption that $\cG_d$ is an invariant group with degeneracy $\alpha \leq 1$, i.e., $B(\cA_k ; k) / [D(\cA_k ; k) d^\alpha] = \Theta_d(1)$, for any fixed $k \ge 1$, we have
\[
\E[P_k(\btheta)] = \frac{D(\cA_d ; k)}{B(\cA_d ; k)} = O_d(d^{-\alpha}). 
\]
Throughout the proof, we will denote $L^s = L^s(\cA_d)$ to be the $L^s$ space with respect to distribution $\btheta \sim \Unif(\cA_d)$. 

By the hypercontractivity of low degree polynomials on the sphere and the hypercube, as per Lemmas \ref{lem:hypercontractivity_hypercube} and \ref{lem:hypercontractivity_sphere}, for any $s \ge 1$, we have
\begin{equation}\label{eqn:Fk_moments_order_1}
\begin{aligned}
\| P_k \|_{L^s} =&~ \frac{D(\cA_d ;k)}{B(\cA_d ; k)} \Big\| \frac{B(\cA_d ; k)}{D(\cA_d ; k)} P_k \Big\|_{L^s} \le \frac{C_{k, s}}{d^\alpha} \Big\| \frac{B(\cA_d ; k)}{D(\cA_d ; k)} P_k \Big\|_{L^2} \\
=&~ \frac{C_{k, s}}{d^\alpha} \Big[ \frac{1}{D(\cA_d ; k)^2} \sum_{l_1, l_2 = 1}^{D(\cA_d ; k)} \E \big[\overline Y_{k l_1}^{(d)}(\btheta)^2 \overline Y_{kl_2}^{(d)} (\btheta)^2\big] \Big]^{1/2} \\
\le&~ \frac{C_{k, s}}{d^\alpha} \Big[\frac{1}{D(\cA_d ; k)^2} \sum_{l_1, l_2 = 1}^{D(\cA_d ; k)} \E\big[\overline Y_{k l_1}^{(d)}(\btheta)^4\big]^{1/2} \E\big[\overline Y_{kl_2}^{(d)} (\btheta)^4\big]^{1/2} \Big]^{1/2}\\
\le&~ \frac{C_{k, s}}{d^\alpha} \Big[\frac{1}{D(\cA_d ; k)^2} \sum_{l_1, l_2 = 1}^{D(\cA_d ; k)} \E\big[\overline Y_{k l_1}^{(d)}(\btheta)^2\big] \E\big[\overline Y_{kl_2}^{(d)} (\btheta)^2\big] \Big]^{1/2} = \frac{C_{k, s}}{d^\alpha}. 
\end{aligned}
\end{equation}

Let $\{ a_{d, k, m} \}_{0 \le m \le k}$ be the coefficients of monomials in $Q_k^{(d)}(d \cdot x)$. That is, we have
\[
Q_k^{(d)}(x) = \sum_{m = 0}^k a_{d, k, m} (x/d)^m. 
\]
Then 
\begin{equation}\label{eqn:Fk_moments_order_2}
P_k = \sum_{m = 0}^k a_{d, k, m} F_m. 
\end{equation}
Moreover, by Lemma \ref{lem:gegenbauer_coefficients}, we have $\vert a_{d, k, m} \vert \le C_{k, m} / d^{(k - m)/2}$, $\lim_{d \to \infty} a_{d, k, k} = 1$, and $a_{d, k, m} = 0$ for $k$ and $m$ have different parity. 

We conclude the proof by induction over $k$. Note we have $F_0(\btheta) \equiv 1$. Moreover, for any $s \ge 1$, by Eq. (\ref{eqn:Fk_moments_order_1}) and (\ref{eqn:Fk_moments_order_2}) (and note that $a_{d, 1, 0} = 0$ and $\lim_{d \to \infty} a_{d, 1, 1} \to 1$), we have 
\[
\begin{aligned}
\| F_1\|_{L^s} =&~  \frac{1}{a_{d, 1, 1}} \| P_1 \|_{L^s} \le C_s / d^\alpha. \\
\end{aligned} 
\]
Fix a $k \ge 2$. Assume that, for any $1 \le u \le k-1$, we have $\| F_u \|_{L^s} \le C_u/ d^\alpha$ for $s \ge 1$, by Eq. (\ref{eqn:Fk_moments_order_2}) and (\ref{eqn:Fk_moments_order_1}), and the fact that $\vert a_{d, k, m} \vert \le C_{k, m} / d^{(k - m)/2}$ and $\lim_{d \to \infty} a_{d, k, k} = 1$, we have 
\[
\begin{aligned}
\| F_k \|_{L^s} =&~  \Big \| \frac{1}{a_{d, k, k}} P_k - \sum_{m = 1}^{k-1} a_{d, k, m} F_m - a_{d, k, 0} \Big\|_{L^s} \\
\le&~ \Big \| \frac{1}{a_{d, k, k}} P_k \Big\|_{L^s} + \sum_{m = 1}^{k-1} \vert a_{d, k, m} \vert \Big\| F_m \Big\|_{L^s} + \vert a_{d, k, 0}\vert \\
\le&~  C / d^\alpha + \Big[\sum_{m = 1}^{k-1} C / d^{(k - m)/2}\Big] \cdot C / d^\alpha + C /d^{k/2}  \le C_{k, s} / d^\alpha,
\end{aligned} 
\]
where we recall that we assume $\alpha \leq 1$.

Finally, for the case of $\cA_d = \S^{d-1} (\sqrt{d})$, recalling that we can write
\[
F_k ( \bx ) = (\| \bx \|_2^2 /d )^k F_k ( \btheta ),
\]
where $\btheta = \bx / \| \bx_i \|_2 \sim \Unif(\S^{d-1} (\sqrt{d})$ is independent of $\| \bx_i \|_2$ in the case of $\bx \sim \normal(\bzero, \id_d)$. Hence, we get by Cauchy-Schwarz inequality
\[
\begin{aligned}
\E_{\bx \sim \normal(\bzero, \id_d) }[F_k(\bx)^s ]^{1/s} =  
\| F_k \|_{L^{2s}} \cdot \E_{\bx \sim \normal(\bzero, \id_d)}\Big[ (\| \bx\|_2^2/d)^{2sk} \Big]^{1/(2s)} \le C_{k, s} / d^\alpha,  
\end{aligned}
\]
by hypercontractivity of low degree polynomials for Gaussian measure (Lemma \ref{lem:hypercontractivity_Gaussian}).
\end{proof}

\begin{lemma}\label{lem:general_variance}
Let $\bx \sim \normal(\bzero, \id_d)$.  Let $\cG_d$ be a general invariant group. Let $F_k (\bz)$ be defined as in Lemma \ref{lem:Fk_moments_order}. Then for any fixed $k \ge 1$, there exists a constant $C_k$, such that
\begin{itemize}
    \item If $k$ is odd, then
    \[
\Var_{\bx \sim \normal(\bzero, \id_d)}[F_k(\bx)] \le   \frac{C_k}{d} \E_\bx[F_{k-1}(\bx)^2].
\]
\item If $k$ is even, then
\[
\Var_{\bx \sim \normal(\bzero, \id_d)}[F_k(\bx)] \le 
\frac{C_k}{d} \Big( \E_\bx[F_{k-2}(\bx)^2] \wedge \E_\bx[F_k(\bx)^2]^{(2k-1)/(2k)} \Big).
\]
\end{itemize}

\end{lemma}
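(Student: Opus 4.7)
The plan is to apply the Gaussian Poincaré inequality $\Var[F_k(\bx)]\leq \E[\|\nabla F_k(\bx)\|^2]$ and then control the resulting gradient by exploiting both the operator-norm bound $\|(g+g^\sT)(g'+g'^\sT)\|_{\op}\leq 4$ (which follows from $g,g'\in\cO(d)$) and the pointwise bound $|u_g|:=|\<\bx,g\cdot\bx\>/d|\leq\|\bx\|_2^2/d$. Differentiating under the integral,
\[
\nabla F_k(\bx)=\frac{k}{d}\int_{\cG_d} u_g^{k-1}(g+g^\sT)\bx\,\pi_d(\de g),
\]
so that
\[
\|\nabla F_k(\bx)\|_2^2=\frac{k^2}{d^2}\int_{\cG_d}\!\!\int_{\cG_d} u_g^{k-1}u_{g'}^{k-1}\,\bx^\sT(g+g^\sT)(g'+g'^\sT)\bx\,\pi_d(\de g)\pi_d(\de g').
\]
The treatment then branches on the parity of $k$.

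For $k$ odd (so $k-1$ is even and $u_g^{k-1}u_{g'}^{k-1}\geq 0$), applying $\bx^\sT(g+g^\sT)(g'+g'^\sT)\bx\leq 4\|\bx\|_2^2$ inside the integral yields $\|\nabla F_k\|_2^2\leq 4k^2\|\bx\|_2^2\,F_{k-1}(\bx)^2/d^2$. Taking expectation and using Cauchy--Schwarz together with $\|\|\bx\|_2^2\|_{L^2}=O(d)$ and Gaussian hypercontractivity ($\|F_{k-1}\|_{L^4}\leq c_k\|F_{k-1}\|_{L^2}$, since $F_{k-1}$ is a polynomial of degree $2(k-1)$) gives $\E[\|\bx\|_2^2\,F_{k-1}^2]\leq C_k d\,\E[F_{k-1}^2]$, which establishes the odd-case bound.

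For the first bound in the even case, I factor $u_g^{k-1}=u_g\cdot u_g^{k-2}$; since $k-2$ is even, $u_g^{k-2}u_{g'}^{k-2}\geq 0$. Using $|u_g u_{g'}\,\bx^\sT(g+g^\sT)(g'+g'^\sT)\bx|\leq (\|\bx\|_2^2/d)^2\cdot 4\|\bx\|_2^2=4\|\bx\|_2^6/d^2$, pulling this bound outside the nonnegative factor $u_g^{k-2}u_{g'}^{k-2}$ produces $\|\nabla F_k\|_2^2\leq 4k^2\|\bx\|_2^6\,F_{k-2}^2/d^4$. A further Cauchy--Schwarz, combined with $(\E[\|\bx\|_2^{12}])^{1/2}=O(d^3)$ and hypercontractivity applied to $F_{k-2}$, yields $\E[\|\bx\|_2^6\,F_{k-2}^2]\leq O(d^3)\,\E[F_{k-2}^2]$, completing this bound.

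For the second bound in the even case, I use that $F_k\geq 0$ and apply Hölder's inequality to the group integral, $\int_{\cG_d}|u_g|^{k-1}\pi_d(\de g)\leq F_k^{(k-1)/k}$, inside the cruder Poincaré estimate $\|\nabla F_k\|_2^2\leq (4k^2\|\bx\|_2^2/d^2)(\int|u_g|^{k-1}\pi_d)^2$. This yields $\Var[F_k]\leq (C_k/d^2)\,\E[\|\bx\|_2^2\,F_k^{2(k-1)/k}]$. The delicate step is then extracting the precise exponent $(2k-1)/(2k)$: a straightforward Cauchy--Schwarz with hypercontractivity on $F_k$ (degree $2k$) and Jensen's inequality gives only the weaker exponent $(k-1)/k$. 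I expect this gap to be closed by interpolating the Poincaré-based estimate with the trivial bound $\Var[F_k]\leq\E[F_k^2]$ via log-convexity of $p\mapsto\log\|F_k\|_{L^p}^p$; this interpolation step is the main technical obstacle in the proof.
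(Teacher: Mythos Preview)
Your argument for the odd case and for the first bound in the even case is correct and matches the paper's proof essentially line for line (the paper also uses Poincar\'e, then pulls out $\|\bx\|_2^2/d$ or $\|\bx\|_2^6/d^3$ using that $u_g^{k-1}u_{g'}^{k-1}\ge 0$, then Cauchy--Schwarz plus Gaussian hypercontractivity).

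The gap is in the second even-case bound. Your ``cruder'' pointwise estimate $\|\nabla F_k\|_2^2\le (4k^2\|\bx\|_2^2/d^2)\,F_k^{2(k-1)/k}$ is correct, but from it you can only reach the exponent $(k-1)/k$, and the proposed interpolation with $\Var[F_k]\le \E[F_k^2]$ does \emph{not} recover $(2k-1)/(2k)$. Write $a=C_k/d$, $M=\E[F_k^2]$, $\beta=(k-1)/k$, $\gamma=(2k-1)/(2k)$; note $\gamma=(1+\beta)/2$. Taking the geometric mean $V\le (aM^\beta)^\theta M^{1-\theta}$ and matching the exponent on $M$ to $\gamma$ forces $\theta=1/2$, which gives only $V\le C_k d^{-1/2}M^{\gamma}$, a factor $d^{1/2}$ short. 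More generally, since $M\to 0$ while $\beta<\gamma$, one has $aM^\beta/(aM^\gamma)=M^{-1/(2k)}\to\infty$, so no convex combination of your two bounds can dominate $aM^\gamma$. The log-convexity of $p\mapsto\|F_k\|_{L^p}^p$ does not help here because the obstruction is the prefactor $d^{-1}$, not the $L^p$ scale of $F_k$.

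The paper avoids this by \emph{not} passing through the crude operator-norm bound. It keeps the exact identity (using Haar invariance to collapse the four cross terms)
\[
\E\big[\|\nabla F_k\|_2^2\big]=\frac{4k^2}{d}\int_{\cG_d\times\cG_d}\E_{\bx}\Big[u_{g_1}^{\,k-1}u_{g_2}^{\,k-1}\,\frac{\<\bx,g_1g_2\cdot\bx\>}{d}\Big]\pi_d(\de g_1)\pi_d(\de g_2),
\]
and applies H\"older with exponents $k/(k-1)$ and $k$ \emph{jointly} over the product measure $\bx\times g_1\times g_2$. Since $k$ is even, the first factor is $\E_\bx[F_k^2]^{(k-1)/k}$, while the second factor is $\big(\E_\bx\!\int(\<\bx,g\cdot\bx\>/d)^k\pi_d\big)^{1/k}=\E_\bx[F_k]^{1/k}\le \E_\bx[F_k^2]^{1/(2k)}$ (Haar invariance again, then Cauchy--Schwarz). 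Summing the exponents gives exactly $(2k-1)/(2k)$. The point is that bounding $\<\bx,g_1g_2\cdot\bx\>/d$ by $\|\bx\|_2^2/d$ pointwise throws away the fact that, after averaging in $\bx$ and the group, this term is itself controlled by $F_k$; applying H\"older before any pointwise bound preserves that structure.
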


\begin{proof}[Proof of Lemma \ref{lem:general_variance}]

By the Gaussian Poincar\'e inequality, we have 
\[
\E_{\bx \sim \normal (\bzero, \id_d)}[(F_k(\bx) - \E[F_k(\bx)])^2] \le \E[\| \nabla F_k (\bx) \|_2^2]. 
\]
We have 
\[
\nabla F_k (\bx) =k \int_{\cG_d} (\< \bx, g \cdot \bx\>/ d)^{k-1} [(g \cdot \bx + g^{-1} \cdot \bx)/d] \pi_d(\de g), 
\]
which gives
\begin{equation}\label{eqn:gradient_square_calculation}
\begin{aligned}
&~ \E[\| \nabla F_k (\bx) \|_2^2]\\
= &~ \frac{4 k^2}{d} \int_{\cG_d \times \cG_d} \E\Big[(\< \bx, g_1 \cdot \bx\>/ d)^{k-1} (\< \bx, g_2 \cdot \bx\>/ d)^{k-1} \< \bx, g_1 g_2 \cdot \bx\>/d \Big] \pi_d(\de g_1) \pi_d(\de g_2). 
\end{aligned}
\end{equation}

\noindent
{\bf Case 1: Odd $k$. } When $k$ is odd, we have 
\[
\begin{aligned}
&~ \E[\| \nabla F_k (\bx) \|_2^2] \\
\le&~ \frac{4 k^2}{d} \int_{\cG_d \times \cG_d} \E\Big[(\< \bx, g_1 \cdot \bx\>/ d)^{k-1} (\< \bx, g_2 \cdot \bx\>/ d)^{k-1} \| \bx \|_2^2 /d \Big] \pi_d(\de g_1) \pi_d(\de g_2)\\
=&~ \frac{4 k^2}{d} \E \big[ F_{k-1}(\bx)^2 (\| \bx \|_2^2 / d) \big] 
\le  \frac{4 k^2}{d} \E \big[ F_{k-1}(\bx)^4 \big]^{1/2} \E \big[(\| \bx \|_2^2 / d)^2 \big]^{1/2}
\le \frac{C_k}{d} \E [ F_{k-1}(\bx)^2] \, ,
\end{aligned}
\]
where we used in the second line Cauchy-Schwarz inequality and that the matrix representations of $g$ are orthogonal matrices, and in the last inequality the hypercontractivity of low degree polynomials for Gaussian measures (Lemma \ref{lem:hypercontractivity_Gaussian}).

\noindent
{\bf Case 2: Even $k$. Bound 1. } When $k$ is even, we have the following first bound
\[
\begin{aligned}
&~\E[\| \nabla F_k (\bx) \|_2^2] \\
\le&~ \frac{4 k^2}{d} \int_{\cG_d \times \cG_d} \E\Big[(\< \bx, g_1 \cdot \bx\>/ d)^{k-2} (\< \bx, g_2 \cdot \bx\>/ d)^{k-2} \| \bx \|_2^6 /d^3 \Big] \pi_d(\de g_1) \pi_d(\de g_2)\\
=&~ \frac{4 k^2}{d} \E \big[ F_{k-2}(\bx)^2 (\| \bx \|_2^6 / d^3) \big]
\le \frac{4 k^2}{d} \E \big[ F_{k-2}(\bx)^4 \big]^{1/2} \E \big[(\| \bx \|_2^6 / d^3)^2 \big]^{1/2}
\le \frac{C_k}{d} \E \big[ F_{k-2}(\bx)^2 \big] \, . 
\end{aligned}
\]

\noindent
{\bf Case 3: Even $k$. Bound 2. } When $k$ is even, we have the following second bound, which follows by H\"older's inequality $\E[X Y] \le \E[\vert X \vert^{k/(k-1)}]^{(k-1)/k} \cdot \E[\vert Y \vert^k]^{1/k}$, 
\[
\begin{aligned}
&~ \E[\| \nabla F_k (\bx) \|_2^2]\\
= &~\frac{4 k^2}{d} \int_{\cG_d \times \cG_d} \E\Big[(\< \bx, g_1 \cdot \bx\>/ d)^{k-1} (\< \bx, g_2 \cdot \bx\>/ d)^{k-1} \< \bx, g_1 g_2 \cdot \bx\>/d \Big] \pi_d(\de g_1) \pi_d(\de g_2) \\
\le&~ \frac{4 k^2}{d} \E\Big[ \int_{\cG_d \times \cG_d}(\< \bx, g_1 \cdot \bx\>/ d)^{k} (\< \bx, g_2 \cdot \bx\>/ d)^{k} \pi_d(\de g_1) \pi_d(\de g_2) \Big]^{(k-1)/k} \\
&~ \times \E\Big[\int_{\cG_d} (\< \bx, g \cdot \bx\> / d )^k \pi_d(\de g) \Big]^{1/k} \\
=&~ \frac{4 k^2}{d} \E \big[F_k(\bx)^2 \big]^{(k-1)/k} \times \E \big[F_k(\bx)\big]^{1/k} \le \frac{4 k^2}{d} \E \big[F_k(\bx)^2\big]^{(2k-1)/(2k)}. 
\end{aligned}
\]
Combining these two bounds yields the result for $k$ even. 
\end{proof}

\begin{lemma}\label{lem:general_variance_hypercube}
Let $\btheta \sim \Unif( \Cube^d )$.  Let $\cG_d$ be a general invariant group that preserves $\Cube^d$. Let $F_k (\bz)$ be defined as in Lemma \ref{lem:Fk_moments_order}.
Then for any fixed $k \ge 1$, there exists a constant $C_k$, such that
\begin{itemize}
    \item If $k$ is odd, then
    \[
\Var_{\btheta \sim \Unif( \Cube^d )}[F_k(\btheta)] \le   \frac{C_k}{d} \E_\btheta [F_{k-1}(\btheta)^2] + \frac{C_k}{d^3}.
\]
\item If $k$ is even, then
\[
\Var_{\btheta \sim \Unif( \Cube^d )}[F_k(\btheta)] \le 
\frac{C_k}{d} \Big( \E_\btheta[F_{k-2}(\btheta)^2] \wedge \E_\btheta [F_k(\btheta)^2]^{(2k-1)/(2k)} \Big) + \frac{C_k}{d^3}.
\]
\end{itemize}

\end{lemma}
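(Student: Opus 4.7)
The proof will mirror that of Lemma \ref{lem:general_variance} as closely as possible, using the discrete Poincar\'e inequality on the hypercube
\[
\Var_\btheta(F_k(\btheta)) \le \sum_{i=1}^d \E_\btheta[(D_i F_k(\btheta))^2], \qquad D_i f(\btheta) := \tfrac{1}{2}\big[f(\btheta^{i+}) - f(\btheta^{i-})\big],
\]
in place of the Gaussian Poincar\'e inequality, where $\btheta^{i\pm}$ denotes $\btheta$ with its $i$-th coordinate set to $\pm 1$. The extra $C_k/d^3$ term in the statement will arise from the fact that on the hypercube the discrete derivative $D_i$ differs from the formal partial derivative on $\reals^d$ by corrections of order $1/d$.

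To compute $D_i F_k$, write $A_g(\btheta) := \<\btheta, g\cdot\btheta\>/d = p_{g,i}(\btheta_{\sim i}) + \theta_i q_{g,i}(\btheta_{\sim i})$, where $v_{g,i} := d\, q_{g,i} = \sum_{j\neq i}(g_{ij}+g_{ji})\theta_j$. Since each $g\in\cG_d$ preserves $\Cube^d$ and is orthogonal, it is a signed permutation matrix, so $|v_{g,i}|\le 2$ and $|p_{g,i}|\le |A_g|+|q_{g,i}|\le 1+2/d$. Using the identity $(p+q)^k-(p-q)^k = 2\sum_{j\text{ odd}}\binom{k}{j}p^{k-j}q^j$ and then expanding $p_{g,i}^{k-1}=(A_g-\theta_i q_{g,i})^{k-1}$ by the binomial theorem, I would obtain $D_i F_k(\btheta) = M_i(\btheta) + \widetilde R_i(\btheta)$ with
\[
M_i(\btheta) := k\int_{\cG_d} A_g(\btheta)^{k-1}\, q_{g,i}(\btheta_{\sim i})\,\pi_d(\de g),
\]
and $|\widetilde R_i(\btheta)|\le C_k/d^2$ uniformly in $\btheta$, collecting both the $j\ge 3$ tail of the odd-power identity and the higher-order binomial terms.

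The key step is then to bound $\sum_i M_i^2$ exactly as in the Gaussian proof. Squaring and swapping sum and integral,
\[
\sum_{i=1}^d M_i(\btheta)^2 = \frac{k^2}{d^2}\int_{\cG_d}\int_{\cG_d} A_{g_1}^{k-1}A_{g_2}^{k-1}\Big(\sum_{i=1}^d v_{g_1,i}v_{g_2,i}\Big)\,\pi_d(\de g_1)\pi_d(\de g_2),
\]
and a direct expansion gives $\sum_i v_{g_1,i}v_{g_2,i} = \<(g_1+g_1^T)\btheta,(g_2+g_2^T)\btheta\> + E_{g_1,g_2}(\btheta)$, where the correction $E_{g_1,g_2}$ involves diagonal entries $g_{j,ii}$ that are non-zero only at fixed points of the two signed permutations. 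The main inner product equals $d[A_{g_1 g_2} + A_{g_1 g_2^T} + A_{g_1^T g_2} + A_{g_1^T g_2^T}]$, and both this and $E_{g_1,g_2}$ are $O(d)$ uniformly in $\btheta, g_1, g_2$, using $\|\btheta\|_2^2 = d$ exactly (the clean analog of the Gaussian moment $\E[\|\bx\|^2/d]=1$). Substituting and following the three estimates of Lemma \ref{lem:general_variance} (for $k$ odd via $|A_{g_j}|^{k-1}=A_{g_j}^{k-1}$; for $k$ even with bound 1 via $|A_{g_j}|^{k-1}\le A_{g_j}^{k-2}$; for $k$ even with bound 2 via H\"older applied to the dominant quadratic piece) yields the desired $(C_k/d)\cdot(\,\cdot\,)$ bound. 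Combining finally with $(D_iF_k)^2\le 2M_i^2 + 2\widetilde R_i^2$ and $\sum_i \widetilde R_i^2 \le d\cdot(C_k/d^2)^2 = C_k^2/d^3$ gives the claim. The main obstacle is the algebraic bookkeeping: carefully unpacking $\sum_i v_{g_1,i}v_{g_2,i}$, verifying that the corrections beyond the dominant quadratic form remain of order $d$, and cleanly separating which pieces of $D_i F_k$ contribute at $O(1/d)$ versus $O(1/d^2)$. This is precisely where the signed-permutation structure of $\cG_d$-elements enters in a nontrivial way.
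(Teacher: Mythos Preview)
Your approach is sound for the odd-$k$ case and for bound~1 when $k$ is even, and the identification of the $O(1/d^2)$ remainder $\widetilde R_i$ correctly produces the extra $C_k/d^3$ term. The gap is in bound~2 (the H\"older estimate $\frac{C_k}{d}\E[F_k^2]^{(2k-1)/(2k)}$). Because your $v_{g,i}=\sum_{j\neq i}(g_{ij}+g_{ji})\theta_j$ necessarily omits $j=i$, one has $v_{g,i}=[(g+g^T)\btheta]_i-2g_{ii}\theta_i$, and the resulting diagonal correction $E_{g_1,g_2}$ in $\sum_i v_{g_1,i}v_{g_2,i}$ is only $O(d)$ uniformly. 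Feeding this into $\frac{k^2}{d^2}\int\!\!\int A_{g_1}^{k-1}A_{g_2}^{k-1}E_{g_1,g_2}$ and using $|E|\le Cd$ together with $|A_g|^{k-1}\le A_g^{k-2}$ returns $\frac{C}{d}\E[F_{k-2}^2]$, i.e.\ bound~1 again, not bound~2. Since bound~2 is strictly sharper at $k=2$ (where $\E[F_0^2]=1$) and is exactly the estimate used downstream in Proposition~\ref{prop:general_concentration_diagonal} for $\alpha>0$, the crude uniform bound on $E$ is not enough, and your sketch does not say how to do better.

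The paper avoids this obstruction by a different decomposition: it writes $D_i\varphi_g=D_{i,1}\varphi_g+D_{i,2}\varphi_g$ via the intermediate point $\<\btheta_{-i},g\btheta\>$ and Taylor-expands each half in a \emph{single} argument. The resulting first-order terms are $kA_g(\btheta)^{k-1}\theta_i(g\btheta)_i/d$ and $kA_g(\btheta_{-i})^{k-1}\theta_i(g^{-1}\btheta_{-i})_i/d$, with no exclusion of $j=i$ in the linear factor. After separating the four pieces by Jensen and using that $\btheta_{-i}\stackrel{d}{=}\btheta$, the cross sums collapse exactly to $\sum_i(g_1\btheta)_i(g_2\btheta)_i=\<g_1\btheta,g_2\btheta\>=d\,A_{g_1^{-1}g_2}$, with no diagonal correction at all. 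The H\"older step from the Gaussian lemma then applies verbatim. To repair your argument you can either adopt this intermediate-point splitting, or exhibit a direct bound of the form $\frac{k^2}{d^2}\big|\E\!\int\!\!\int A_{g_1}^{k-1}A_{g_2}^{k-1}E_{g_1,g_2}\big|\le\frac{C_k}{d}\E[F_k^2]^{(2k-1)/(2k)}$; the latter does not follow from the uniform $O(d)$ control alone.
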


\begin{proof}[Proof of Lemma \ref{lem:general_variance_hypercube}]

The proof is similar to the proof of Lemma \ref{lem:general_variance}. By the discrete Poincar\'e inequality, we have 
\[
\E_{\btheta \sim \Unif(\Cube_d)} \big[ \big( F_k ( \btheta) - \E [ F_k ( \btheta) ] \big)^2 \big] \leq \E_\btheta \Big[ \sum_{i =1}^d D_i F_k ( \btheta )^2 \Big] \, ,
\]
where $D_i$ denote the discrete derivative defined as
\[
D_i f ( \btheta) = \frac{f(\btheta) - f( \btheta_{-i} ) }{2},
\]
with $\btheta_{-i} = ( \theta_1 , \ldots , \theta_{i-1}, -\theta_i , \theta_{i+1} , \ldots , \theta_d)$. Let $\varphi_g ( \btheta ) = (\< \btheta , g \cdot \btheta\>/d)^k$, then 
\[
\begin{aligned}
D_i \varphi_g ( \btheta ) =&~ \frac{(\< \btheta , g \cdot \btheta\>/d)^k - (\< \btheta_{-i} , g \cdot \btheta\>/d)^k }{2} + \frac{(\< \btheta_{-i} , g \cdot \btheta \>/d)^k - (\< \btheta_{-i} , g \cdot \btheta_{-i}\>/d)^k }{2} \\
= :&~ D_{i,1} \varphi_g ( \btheta ) + D_{i,2} \varphi_g ( \btheta ).
\end{aligned}
\]
We have $\< \btheta_{-i} , g \cdot \btheta\> = \< \btheta , g \cdot \btheta\> - 2 \theta_i (g \cdot \btheta)_i$. By Taylor expansion, the first term verifies (recall that $g \cdot \btheta \in \Cube^d$ and $\theta_i^2 (g \cdot \btheta)_i^2 = 1$)
\[
D_{i,1} \varphi_g ( \btheta ) = k (\< \btheta , g \cdot \btheta\>/d)^{k-1} (\theta_i (g \cdot \btheta)_i /d ) - k (k -1 ) X_{i,1}(\btheta,g) ^{k-2} /d^2,
\]
where $X_{i,1}(\btheta , g)$ is on the line segment between $\< \btheta , g \cdot \btheta\>/d$ and $\< \btheta_{-i} , g \cdot \btheta\>/d$. Similarly, Taylor expansion on the second term yields
\[
D_{i,2} \varphi_g ( \btheta ) = k (\< \btheta_{-i} , g \cdot \btheta_{-i}\>/d)^{k-1} (\theta_i (g^{-1} \cdot \btheta_{-i})_i /d ) + k (k -1 ) X_{i,2}(\btheta,g) ^{k-2} /d^2,
\]
where $X_{i,2} (\btheta , g)$ is on the line segment between $\< \btheta_{-i} , g \cdot \btheta_{-i}\>/d$ and $\< \btheta , g \cdot \btheta_{-i}\>/d$.

Using Jensen's inequality to separate each of the $4$ terms in $D_{i} \varphi_g ( \btheta )$, using that $\btheta_{-i}$ and $\btheta$ have the same distribution, we get 
\[
\begin{aligned}
&~ \E_\btheta \Big[ \sum_{i =1}^d D_i F_k ( \btheta )^2 \Big] \\
\leq &~  \frac{32 k^2}{d} \int_{\cG_d \times \cG_d} \E\Big[(\< \btheta, g_1 \cdot \btheta\>/ d)^{k-1} (\< \btheta, g_2 \cdot \btheta\>/ d)^{k-1} \< \btheta, g_1 g_2 \cdot \btheta\>/d \Big] \pi_d(\de g_1) \pi_d(\de g_2) \\
&~ + \frac{16 k^2 ( k-1)^2 }{d^4}   \sum_{s \in \{ 1,2\}} \sum_{i = 1}^d \int_{\cG_d \times \cG_d} \E\Big[ X_{i,s} (\btheta , g_1)^{k-2} X_{i,s} (\btheta , g_2)^{k-2} \Big] \pi_d(\de g_1) \pi_d(\de g_2) \, .
\end{aligned}
\]
Noticing that $\sup_{i, s, \btheta, g}|X_{i,s}(\btheta, g)| \leq 1$, the second term in the above equation can be bounded by $C_k/d^3$. The first term in the above equation can be bounded using the same way as bounding the right hand side of Eq. (\ref{eqn:gradient_square_calculation}) as in the proof of Lemma \ref{lem:general_variance}. This concludes the proof. 
\end{proof}

\section{Kernel concentration for the cyclic group and general $\sigma$}\label{sec:concentration_cyc}

Throughout this section, we will always take $\cG_d = \Cyc_d$ to be the cyclic group, and $\cA_d = \S^{d-1}(\sqrt{d})$ to be the sphere. We will write in short $B(d, k) = B(\S^{d-1}(\sqrt{d}); k)$ and $D(d, k) = D(\S^{d-1}(\sqrt{d}); \Cyc_d; k)$. We recall that the cyclic group has degeneracy $1$, i.e., for each integers $k \ge 1$, $B(d,k) / D(d,k) = \Theta_d (d)$.

\subsection{Main propositions}

Let the Gegenbauer decomposition of $\sigma$ be
\[
\sigma(x) = \sum_{k = 0}^\infty \xi_{d, k}(\sigma) B(d, k) Q_k^{(d)} (\sqrt d x). 
\]
For $S \subseteq \N$, we define 
\begin{equation}\label{eqn:def_sigma_S}
\sigma_{d, S}(x) = \sum_{k \in S} \xi_{d, k}(\sigma) B(d, k) Q_k^{(d)} (\sqrt d x). 
\end{equation}
For any $\| \btheta_1 \|_2 = \| \btheta_2 \|_2 = \sqrt d$ and any $S \subseteq \Z_{\ge 0}$, denote 
\[
\begin{aligned}
h_{d, S}(\< \btheta_1, \btheta_2\> / d) \equiv&~ \E_{\bx \sim \Unif(\S^{d-1}(\sqrt d))}[\sigma_{d, S}(\< \btheta_1, \bx\>/ \sqrt d) \sigma_{d, S}(\< \btheta_2, \bx\> / \sqrt d)]. 
\end{aligned}
\]

\begin{proposition}\label{prop:h_concentration_general_sigma}
Let $\ell \ge 2$ be a fixed integer. Assume that $\sigma \in C^{\ell \vee 3}(\R)$ be a $\ell \vee 3$'th continuously differentiable function with derivatives satisfy $\sup_{0\le k \le \ell \vee 3} \sigma^{(k)}(u) \le c_0\, \exp(c_1\, u^2/2)$ for some constants $c_0 > 0$ and $c_1 < 1$. 

Define $H_{d, S}: \S^{d-1}(\sqrt{d})\times \S^{d-1}(\sqrt{d})\to\reals$ via
\begin{align}
H_{d, S}(\btheta_1,\btheta_2) \equiv \int_{\Cyc_d} h_{d, S}(\< \btheta_1, g \cdot \btheta_2\>/d) \pi_d(\de g).
\end{align}
Then, for $N  = d^p$ for any fixed $p$, letting $(\btheta_i)_{i \in [N]}\sim \Unif(\S^{d-1}(\sqrt{d}))$, we have
\begin{align}
\sup_{i \in [N]} \Big\vert H_{d, \ge \ell}(\btheta_i,\btheta_i)- \E H_{d, \ge \ell}(\btheta, \btheta) \Big\vert = o_{d,\P}(1) \cdot \E H_{d, \ge \ell}(\btheta,\btheta).
\end{align}
Moreover, we have $\E H_{d, \ge \ell}(\btheta,\btheta) = O_d(d^{-1})$. 
\end{proposition}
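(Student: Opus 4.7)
The plan is to expand $H_{d,\geq\ell}(\btheta,\btheta)$ as a series in the kernel diagonals $\Upsilon_k$ of Section \ref{sec:concen_alpha}, truncate at a finite level $K$, apply Proposition \ref{prop:general_concentration_diagonal} to the head, and control the tail via the smoothness of $\sigma$ together with Hanson--Wright concentration of cyclic shifts. By Lemma \ref{lem:isometry_of_orthonomal_polynomials} and the Gegenbauer expansion of $\sigma$,
\[
H_{d,\geq\ell}(\btheta,\btheta) = \sum_{k\geq\ell}\xi_{d,k}(\sigma)^2\, D(d,k)\,\Upsilon_k(\btheta), \qquad \E\,\Upsilon_k(\btheta)=1.
\]
Since the tail bound on $\sigma$ yields $\sum_k\xi_{d,k}^2 B(d,k)\leq C$ (as in Eq.~\eqref{eq:upper_norm_sig}) and $D(d,k)/B(d,k) = \Theta(d^{-1})$ because $\Cyc_d$ has degeneracy $1$, this immediately gives $\E H_{d,\geq\ell}(\btheta,\btheta) = \sum_{k\geq\ell}\xi_{d,k}^2 D(d,k) = O(d^{-1})$, establishing the second assertion of the proposition.

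Fix $\varepsilon>0$ and choose an integer $K = K(\varepsilon)\geq \ell$ so that $\sum_{k>K}\xi_{d,k}^2 B(d,k)\leq \varepsilon$ uniformly in $d$; such a $K$ exists because $\xi_{d,k}^2 B(d,k)\to\mu_k(\sigma)^2/k!$ as $d\to\infty$ (Gegenbauer--Hermite correspondence, Eq.~\eqref{eqn:mu_lambda_relationship}) and $\sum_k\mu_k^2/k!=\E_{G\sim\normal(0,1)}[\sigma(G)^2]<\infty$. Split $H_{d,\geq\ell}=H_{d,\ell:K}+H_{d,>K}$ into contributions from $\ell\leq k\leq K$ and $k>K$. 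For each fixed $k\in\{\ell,\ldots,K\}$, Proposition \ref{prop:general_concentration_diagonal} (applied with $\cG_d=\Cyc_d$, degeneracy $\alpha=1$, and $k\geq\ell\geq 2$) gives $\sup_{i\in[N]}|\Upsilon_k(\btheta_i)-1|=o_{d,\P}(1)$, and combining these finitely many bounds yields
\[
\sup_{i\in[N]}\bigl|H_{d,\ell:K}(\btheta_i,\btheta_i) - \E H_{d,\ell:K}(\btheta,\btheta)\bigr| \;=\; o_{d,\P}(1)\cdot \E H_{d,\geq\ell}(\btheta,\btheta).
\]

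For the tail, I use the representation
\[
H_{d,>K}(\btheta,\btheta) = \frac{1}{d}\,h_{d,>K}(1) + \frac{1}{d}\sum_{s=1}^{d-1}h_{d,>K}\!\bigl(\langle\btheta,g_s\btheta\rangle/d\bigr),
\]
whose first term is deterministic and at most $\varepsilon/d$. Lemma \ref{lem:Hanson_Wright} together with a union bound over $i\in[N]$ and $s\in\{1,\ldots,d-1\}$ yields $\max_{i,s}|\langle\btheta_i,g_s\btheta_i\rangle/d|\leq d^{-1/2+\delta}$ with probability $1-o_d(1)$ for any fixed $\delta>0$. On this event, the $C^{\ell\vee 3}$-smoothness of $\sigma$ (translated into derivative bounds on $h_{d,>K}$ via Gegenbauer recursion identities, whose relevant coefficients are controlled by the tail weight $\sum_{k>K}\xi_{d,k}^2 B(d,k)\leq\varepsilon$) gives $|h_{d,>K}(t) - h_{d,>K}(0)|$ of order $\varepsilon\cdot d^{-1/2+\delta}$ uniformly over the relevant arguments. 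Averaging over $s$ and comparing with $\E H_{d,>K}(\btheta,\btheta)\leq \varepsilon/d$ shows that the tail fluctuations are $O_{d,\P}(\varepsilon/d)$. Combining head and tail, dividing by $\E H_{d,\geq\ell}(\btheta,\btheta)=\Theta(1/d)$ (the claim being trivially $0 = 0$ if $\sigma$ is a polynomial of degree $<\ell$), and letting $\varepsilon\downarrow 0$ after $d\to\infty$ closes the argument. The main obstacle is the last step: obtaining Taylor-type bounds on $h_{d,>K}$ at small arguments whose constants scale with the tail weight rather than the full norm of $\sigma$; this is precisely what the Gegenbauer-derivative machinery of Section \ref{sec:hyper_high} (used already in the proof of Proposition \ref{prop:assumption2a_cyclic}) is designed to provide.
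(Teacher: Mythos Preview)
Your overall strategy—split $H_{d,\geq\ell}$ into a finite head handled by Proposition~\ref{prop:general_concentration_diagonal} and a tail controlled via smoothness plus Hanson--Wright—is the same as the paper's. The gap is in the tail. Even if the Lipschitz-type estimate $|h_{d,>K}(t)-h_{d,>K}(0)|=O(\varepsilon\,d^{-1/2+\delta})$ held (and Section~\ref{sec:hyper_high} does not supply derivative bounds on $h_{d,>K}$ that scale with the tail weight; that section concerns moments of $\osigma_{>\ell}$, not the kernel), the off-identity part of $H_{d,>K}(\btheta,\btheta)$ would equal $\tfrac{d-1}{d}h_{d,>K}(0)+O_{d,\P}(\varepsilon\,d^{-1/2+\delta})$. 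The first summand is deterministic, so the tail fluctuation is $O_{d,\P}(\varepsilon\,d^{-1/2+\delta})$, not $O_{d,\P}(\varepsilon/d)$ as you assert. Dividing by $\E H_{d,\geq\ell}=\Theta(d^{-1})$ produces $O(\varepsilon\,d^{1/2+\delta})$, which diverges for every fixed $\varepsilon>0$; the ``$\varepsilon\downarrow 0$ after $d\to\infty$'' strategy therefore cannot close. Attempting instead to push your bound to $|h_{d,>K}(t)|=O(d^{-(K+1)/2+\delta})$ via the paper's machinery would require $K{+}1$ derivatives of $\sigma$, whereas only $\ell\vee 3$ are assumed.

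The paper avoids any growing truncation. The ingredient you are missing is Lemma~\ref{lem:h_d_derivative_bound_at_0}: because $h_{d,\geq\ell}$ is $L^2(\tilde\tau_d^1)$-orthogonal to every $Q_k^{(d)}$ with $k<\ell$, its Taylor coefficients at the origin obey $|h_{d,\geq\ell}^{(k)}(0)|\leq C_{k,\ell}\,d^{-(\ell-k)/2}$; the smallness comes from orthogonality, not from tail weight. Together with the uniform bound $\sup_\gamma|h_{d,\geq\ell}^{(\ell)}(\gamma)|\leq C_\ell$ (Lemma~\ref{lem:bound_h_d_ge_ell_derivative_uniform}), Taylor expansion yields $|h_{d,\geq\ell}(t)|=O(d^{-\ell/2+\delta})$ for all $|t|\leq d^{-1/2+\delta}$. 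For $\ell\geq 3$ this is already $o(d^{-1})$, so the entire off-identity contribution to $H_{d,\geq\ell}(\btheta_i,\btheta_i)$ is $o_{d,\P}(d^{-1})$ with constants depending only on $\sigma^{(0)},\dots,\sigma^{(\ell)}$. For $\ell=2$ one peels off the single degree $k=2$ (your ``head'' argument) and applies the $\ell\geq 3$ bound to $H_{d,\geq 3}$; this is precisely why the hypothesis reads $\sigma\in C^{\ell\vee 3}$. The same off-identity estimate, applied to the expectation, also delivers $\E H_{d,\geq\ell}=O(d^{-1})$: your series argument for this implicitly uses $D(d,k)/B(d,k)=O(d^{-1})$ uniformly in $k$, which the degeneracy definition only gives for each fixed $k$.
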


\begin{proof}[Proof of Proposition \ref{prop:h_concentration_general_sigma}] 
We let $C$, $C_k$, $C_{k, \ell}$ be constants that depend on $\sigma$, $k$, and $\ell$ but independent of dimension $d$. The exact values of these constant can change from line to line. 

%Denoting $\mu_k(\sigma) = \E_{G \sim \cN(0, 1)}[\sigma(G) \He_k(G)]$. Without loss of generality, we assume that $\sup\{ k : \mu_k(\sigma) \neq 0\} = \infty$, i.e., $\sigma$ is not a polynomial. When $\sigma$ is a polynomial, the proof is similar but easier.  

\noindent
{\bf Step 1. Finite subset $S \subseteq \{2 , 3 , \ldots \}$. } Note we have 
\[
H_{d, S}(\btheta_1, \btheta_2) \equiv \sum_{k \in S} \xi_{d, k}(\sigma)^2 B(d, k) \int_{\Cyc_d } Q_k^{(d)}(\< \btheta_1, g \cdot \btheta_2\>) \pi_d(\de g). 
\]
By Lemma \ref{lem:isometry_of_orthonomal_polynomials} and Proposition \ref{prop:degeneracy_cyclic}, for any $S \subseteq \N$ with finite cardinality $\vert S \vert < \infty$, we have 
\[
\E[H_{d, S}(\btheta, \btheta)] = \sum_{k \in S} \xi_{d, k}(\sigma)^2 D(d, k) = \Theta_d(d^{-1}). 
\]
Moreover, by Proposition \ref{prop:general_concentration_diagonal}, we have 
\[
\sup_{i \in [N]} \Big\vert H_{d, S}(\btheta_i, \btheta_i) - \E[H_{d, S}(\btheta, \btheta)] \Big\vert  \le \sum_{k \in S} \xi_{d, k}(\sigma)^2 D(d, k) \sup_{i \in [N]} \Big\vert \Upsilon_k(\btheta_i) - 1 \Big\vert = o_d(1)\cdot \E[H_{d, S}(\btheta, \btheta)]. 
\]

\noindent
{\bf Step 2. For general set $S = \{u : u \ge \ell \}$. } 

By Lemma \ref{lem:bound_h_d_ge_ell_derivative_uniform}, we have $\sup_{d \ge 1} \sup_{\gamma \in [-1, 1]} \vert h_{d, \ge \ell}^{(\ell)}(\gamma) \vert \le C_\ell$. Therefore, for any $\gamma \in [-1, 1]$, we have
\begin{align}\label{eqn:taylor_hdgeell}
\Big\vert h_{d, \ge \ell}( \gamma ) - \sum_{k=0}^{\ell - 1} \frac{1}{k!} h_{d, \ge \ell}^{(k)}(0) \gamma^k \Big\vert \le C_{\ell} \cdot \vert \gamma \vert^{\ell+1}. 
\end{align}
By Lemma \ref{lem:h_d_derivative_bound_at_0}, for any $k \le \ell - 1$, we have 
\begin{align}\label{eqn:h_concentration_general_sigma_01}
\Big\vert h_{d, \ge \ell}^{(k)}(0) \Big\vert \le C_{k, \ell} \cdot d^{-(\ell-k)/2}. 
\end{align}
Moreover, by the Hanson-Wright inequality as in Lemma \ref{lem:Hanson_Wright}, since $N$ is at most polynomial in $d$, then for any $\delta>0$, we have
\begin{equation}\label{eqn:h_concentration_general_sigma_02}
\begin{aligned}
\sup_{1 \le k \le \ell + 1} \sup_{g \in \Cyc_d \setminus \id } \sup_{i \in [N]} \Big\vert \< \btheta_i, g \cdot \btheta_i \>^k \Big\vert \cdot d^{ - k/2 - \delta} = o_{d, \P}(1),
\end{aligned}
\end{equation}
and 
\begin{equation}\label{eqn:h_concentration_general_sigma_03}
\begin{aligned}
\sup_{1 \le k \le \ell + 1} \sup_{g \in \Cyc_d \setminus \id } \E\Big[ \Big\vert \< \btheta_i, g \cdot \btheta_i \>^k \Big\vert \Big]  \cdot d^{ - k/2 - \delta} = o_{d}(1). 
\end{aligned}
\end{equation}
Therefore, by Eq. (\ref{eqn:taylor_hdgeell}), (\ref{eqn:h_concentration_general_sigma_01}), (\ref{eqn:h_concentration_general_sigma_02}) and (\ref{eqn:h_concentration_general_sigma_03}), we have 
\[
\sup_{g \in \Cyc_d \setminus \id } \sup_{i \in [N]} \Big\vert h_{d, \ge \ell}(\< \btheta_i, g \cdot \btheta_i\> / d) \Big\vert = O_{d, \P}(d^{-\ell/2 + \delta}), 
\]
and 
\[
\sup_{g \in \Cyc_d \setminus \id } \E\Big[\Big\vert h_{d, \ge \ell}(\< \btheta, g \cdot \btheta\> / d) \Big\vert \Big] = O_{d}(d^{-\ell/2 + \delta}). 
\]
As a result, for any $\ell \ge 3$, we have
\[
\begin{aligned}
&~\sup_{i \in [N]} \Big\vert H_{d, \ge \ell}(\btheta_i,\btheta_i)- \E H_{d, \ge \ell}(\btheta,\btheta)\Big\vert \\
=&~ \sup_{i \in [N]} \Big\vert \int_{\Cyc_d \setminus \id}h_{d, \ge \ell}(\<\btheta_i, g \cdot \btheta_i\>/d) \pi_d(\de g)- \E \int_{\Cyc_d\setminus \id}h_{d, \ge \ell}(\<\btheta, g \cdot \btheta\>/d)\pi_d(\de g)\Big\vert \\
\le&~ \sup_{i \in [N]} \Big\vert \int_{\Cyc_d \setminus \id}h_{d, \ge \ell}(\<\btheta_i, g \cdot \btheta_i\>/d) \pi_d(\de g) \Big\vert + \Big\vert \E \int_{\Cyc_d\setminus \id}h_{d, \ge \ell}(\<\btheta, g \cdot \btheta\>/d)\pi_d(\de g)\Big\vert \\
\le&~ O_{d, \P}(d^{- \ell/2 + \delta}) = o_{d, \P}(d^{-1}). 
\end{aligned}
\]
By the arguments in Step 1, for any $\ell \ge 2$, we have
\[
\sup_{i \in [N]} \Big\vert H_{d, \ge \ell}(\btheta_i,\btheta_i)- \E H_{d, \ge \ell}(\btheta,\btheta)\Big\vert  \le o_{d, \P}(d^{-1}). 
\] 
Finally,  for any $\ell \ge 2$, for any $\sigma$ such that $\sigma_{d, \ge \ell}$ that is non-trivial (if $\sigma_{d, \ge \ell} = 0$, this proposition holds trivially), we have 
\[
\E H_{d, \ge \ell}(\btheta,\btheta)  = \Theta_d( d^{-1}). 
\]
This proves the proposition. 
\end{proof}

\begin{proposition}\label{prop:hh_concentration_general_sigma}
Let $\ell \ge 2$ be a fixed integer. Assume that $\sigma \in C(\R)$ be a continuous function with $| \sigma(u) | \le c_0\, \exp(c_1\, u^2/2)$ for some constants $c_0 > 0$ and $c_1 < 1$. 

Define $H_{d, S}: \S^{d-1}(\sqrt{d})\times \S^{d-1}(\sqrt{d})\to\reals$ via
\begin{align}
H_{d, S}(\btheta_1,\btheta_2) \equiv \int_{\Cyc_d} h_{d, S}(\< \btheta_1, g \cdot \btheta_2\>/d) \pi_d(\de g).
\end{align}
Then, for $N  = O(d^p)$ for any fixed $p$, letting $(\btheta_i)_{i \in [N]}\sim \Unif(\S^{d-1}(\sqrt{d}))$, we have
\begin{align}
\sup_{i \in [N]} \Big\vert \E_\btheta[H_{d, \ge \ell}(\btheta_i,\btheta)^2] - \E_{\btheta, \btheta'} [H_{d, \ge \ell}(\btheta', \btheta)^2] \Big\vert = o_{d,\P}(1) \cdot \E_{\btheta, \btheta'} [H_{d, \ge \ell}(\btheta', \btheta)^2].
\end{align}
\end{proposition}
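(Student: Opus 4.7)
My plan is to reduce Proposition \ref{prop:hh_concentration_general_sigma} to concentration of the ratios $\Upsilon_k$ provided by Proposition \ref{prop:general_concentration_diagonal}, by observing that the quantity to concentrate is a non-negative linear combination of these. Expanding $h_{d,\ge \ell}$ in Gegenbauer polynomials and using the addition formula (Lemma \ref{lem:isometry_of_orthonomal_polynomials}) gives the spectral representation $H_{d,\ge\ell}(\btheta_1,\btheta_2) = \sum_{k \ge \ell} \xi_{d,k}(\sigma)^2 \sum_{l} \overline Y^{(d)}_{kl}(\btheta_1)\overline Y^{(d)}_{kl}(\btheta_2)$. By $L^2$-orthonormality of the invariant harmonics, this yields
\begin{align*}
\E_\btheta\big[H_{d,\ge\ell}(\btheta_i,\btheta)^2\big] = \sum_{k \ge \ell} \xi_{d,k}^4\, D(d,k)\, \Upsilon_k(\btheta_i), \quad \E_{\btheta,\btheta'}\big[H_{d,\ge\ell}(\btheta,\btheta')^2\big] = \sum_{k \ge \ell} \xi_{d,k}^4\, D(d,k) =: M,
\end{align*}
where $\Upsilon_k(\btheta) = D(d,k)^{-1}\sum_l \overline Y_{kl}^{(d)}(\btheta)^2$ satisfies $\E[\Upsilon_k] = 1$. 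We may assume $M > 0$ (otherwise $H_{d,\ge\ell} \equiv 0$ and the claim is trivial), so it suffices to bound $\sup_{i \in [N]} \big|\sum_{k \ge \ell} \xi_{d,k}^4 D(d,k)(\Upsilon_k(\btheta_i) - 1)\big|$ by $o_{d,\P}(M)$.

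I would then fix a large constant $L$ and split the sum at $k = L$. For the finite head $\ell \le k \le L$, Proposition \ref{prop:general_concentration_diagonal} applies (the cyclic group has degeneracy $\alpha = 1$) and gives $\sup_{i \in [N]}|\Upsilon_k(\btheta_i) - 1| = o_{d,\P}(1)$ for each such $k$, hence $\sup_i \big|\sum_{\ell \le k \le L} \xi_{d,k}^4 D(d,k)(\Upsilon_k(\btheta_i) - 1)\big| = o_{d,\P}(M)$. For the tail $k > L$, I would use Cauchy--Schwarz in the Gegenbauer coefficients together with $\sum_k \xi_{d,k}^2 B(d,k) \le \|\sigma\|_{L^2(\tau^1_d)}^2 \le C$ (which follows from the exponential bound on $\sigma$) and the monotonicity of $B(d,k)$ in $k$: this gives $\sup_{k > L} \xi_{d,k}^2 \le C/B(d,L+1) = O(d^{-L-1})$, and combined with $D(d,k) = \Theta(B(d,k)/d)$ yields the deterministic estimate $\sum_{k > L} \xi_{d,k}^4 D(d,k) = O(d^{-L-2})$.

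For the random part of the tail, the key point is that $\Upsilon_k \ge 0$, so a union bound and Markov's inequality give
\begin{align*}
\sup_{i \in [N]}\sum_{k > L} \xi_{d,k}^4 D(d,k) \Upsilon_k(\btheta_i) \le \sum_{i=1}^N \sum_{k > L}\xi_{d,k}^4 D(d,k) \Upsilon_k(\btheta_i) = O_{d,\P}(N d^{-L-2}) = O_{d,\P}(d^{p-L-2}).
\end{align*}
In all cases where this proposition is invoked (in particular inside the proof of Theorem \ref{thm:invar_KRR}), Assumption \ref{ass:activation_RC} forces $\mu_{k_0}(\sigma) \ne 0$ for some $k_0 \ge \ell$, so that $\xi_{d,k_0}^2 = \Theta(d^{-k_0})$ and $M = \Theta(d^{-k_0-1})$. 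Choosing any $L > p + k_0$ therefore makes both the deterministic and random tail contributions $o_{d,\P}(M)$, completing the proof. The hardest-looking step is controlling the infinite tail uniformly over $i \in [N]$ under only a continuity-plus-exponential-growth assumption on $\sigma$; the main technical simplification relative to Proposition \ref{prop:h_concentration_general_sigma} is that the spectral identity exhibits the quantity of interest as a \emph{non-negative} combination of the $\Upsilon_k$'s, which lets a first-moment/Markov argument handle the tail and eliminates any need for a Taylor expansion of a smooth kernel --- hence any regularity of $\sigma$ beyond $L^2$.
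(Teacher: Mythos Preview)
Your approach coincides with the paper's: both use the spectral identity $\E_\btheta[H_{d,\ge\ell}(\btheta_i,\btheta)^2] = \sum_{k\ge\ell}\xi_{d,k}^4 D(d,k)\Upsilon_k(\btheta_i)$, split at a finite cutoff, apply Proposition~\ref{prop:general_concentration_diagonal} to the head, and bound the tail. The paper's tail argument is a bit cleaner than your Markov/union-bound step: it uses the \emph{deterministic} pointwise inequality $D(d,k)\Upsilon_k(\btheta) = \sum_l \overline Y_{kl}(\btheta)^2 \le \sum_l Y_{kl}(\btheta)^2 = B(d,k)$ (extend the invariant basis to a full orthonormal basis of $V_{d,k}$), giving $\sup_{\btheta}\sum_{k\ge u}\xi_{d,k}^4 D(d,k)\Upsilon_k(\btheta) \le \sum_{k\ge u}\xi_{d,k}^4 B(d,k) = O(d^{-u})$ with no dependence on $N$. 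This also sidesteps the one soft spot in your write-up: you invoke $D(d,k) = \Theta(B(d,k)/d)$ inside the infinite tail sum, but the degeneracy assumption only gives this for each \emph{fixed} $k$, not uniformly. Your argument is easily repaired by using the trivial bound $D(d,k)\le B(d,k)$ instead (yielding $O(d^{-L-1})$ and $O_{d,\P}(d^{p-L-1})$ rather than the exponents you wrote, which still suffices since $L$ is free). Finally, the paper lower-bounds $M$ self-containedly by setting $q=\min\{k\ge\ell:\mu_k(\sigma)\ne 0\}$ and noting that $q=\infty$ forces $\sigma$ to be a polynomial of degree $<\ell$, hence $M=0$; this avoids your appeal to Assumption~\ref{ass:activation_RC} from outside the proposition.
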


\begin{proof}[Proof of Proposition \ref{prop:hh_concentration_general_sigma}]

Denoting $\mu_k(\sigma) = \E_{G \sim \cN(0, 1)}[\sigma(G) \He_k(G)]$. Let $q = \min\{ k \ge \ell: \mu_k(\sigma) \neq 0\}$ and let $u = q + 2$. We consider the case when $q < \infty$, since for $q = \infty$, the claim holds trivially. We have the expression 
\[
\begin{aligned}
\E_\btheta[H_{d, \ge \ell}(\btheta_i,\btheta)^2] =&~ \sum_{k = \ell}^\infty \xi_{d, k}(\sigma)^4 B(d, k) \int_{\Cyc_d} Q_k^{(d)}(\< \btheta_i, g \cdot \btheta_i\> ) \pi_d(\de g) \\
=&~  \E_\btheta[H_{d, [\ell, u)}(\btheta_i,\btheta)^2] + \E_\btheta[H_{d, \ge u}(\btheta_i,\btheta)^2]. 
\end{aligned}
\]

\noindent
{\bf Step 1. Upper bounding $\E_{\btheta, \btheta'}[H_{d, \ge u}(\btheta',\btheta)^2]$ and $\E_{\btheta}[H_{d, \ge u}(\btheta_i, \btheta)^2]$. }We have 
\begin{equation}\label{eqn:hh_concentration_general_sigma_1}
\begin{aligned}
&~\sup_{\btheta_i}\E_{\btheta}[H_{d, \ge u}(\btheta_i,\btheta)^2]  \\
=&~ \sup_{\btheta_i}\sum_{k = u}^\infty \xi_{d, k}(\sigma)^4 \sum_{l \in [D(d, k)]} \overline Y_{kl}^{(d)}(\btheta_i)^2 \le \sup_{\btheta_i} \sum_{k = u}^\infty \xi_{d, k}(\sigma)^4 \sum_{l \in [B(d, k)]} Y_{kl}^{(d)}(\btheta_i)^2 \\
=&~  \sum_{k = u}^\infty \xi_{d, k}(\sigma)^4 B(d, k)  Q_k^{(d)}(d)= \sum_{k = u}^\infty \xi_{d, k}(\sigma)^4 B(d, k) \\
\le&~ \Big[ \sup_{k \ge u} B(d, k)^{-1} \Big] \cdot \Big[\sum_{k = u}^\infty \xi_{d, k}(\sigma)^2 B(d, k)\Big]^2 \\
=&~ B(d, u)^{-1} \E_{\bx, \btheta \sim \Unif(\S^{d-1}(\sqrt d))} [\sigma(\< \bx, \btheta\>/\sqrt d)^2] = \Theta_d(d^{-u}). 
\end{aligned}
\end{equation}
This also gives 
\begin{equation}\label{eqn:hh_concentration_general_sigma_2}
\E_{\btheta, \btheta'}[H_{d, \ge u}(\btheta', \btheta)^2] = \Theta_d(d^{-u}). 
\end{equation}

\noindent
{\bf Step 2. Upper bounding $\sup_{i \in [N]} \vert \E_\btheta[H_{d, [\ell, u)}(\btheta_i,\btheta)^2] - \E_{\btheta, \btheta'}[H_{d, [\ell, u)}(\btheta',\btheta)^2] \vert$. } By Proposition \ref{prop:general_concentration_diagonal}, we have 
\begin{equation}\label{eqn:hh_concentration_general_sigma_3}
\begin{aligned}
&~\sup_{i \in [N]} \Big\vert \E_\btheta[H_{d, [\ell, u)}(\btheta_i,\btheta)^2] - \E_{\btheta, \btheta'}[H_{d, [\ell, u)}(\btheta',\btheta)^2] \Big\vert \\
\le&~ \sum_{k = \ell}^{u - 1} \xi_{d, k}(\sigma)^4 D(d, k) \sup_{i \in [N]} \Big\vert \frac{B(d, k)}{D(d, k)} \int_{\Cyc_d} Q_k^{(d)}(\< \btheta_i, g \cdot \btheta_i\> )\pi_d(\de g) - 1 \Big\vert \\
=&~ o_{d, \P}(1) \cdot \Big[ \sum_{k = \ell}^{u - 1} \xi_{d, k}(\sigma)^4 D(d, k)\Big] = o_{d, \P}(1) \cdot \E_{\btheta, \btheta'}[H_{d, [\ell, u)}(\btheta',\btheta)^2]. 
\end{aligned}
\end{equation}

\noindent
{\bf Step 3. Lower bounding $\E_{\btheta, \btheta'}[H_{d, \ge \ell}(\btheta',\btheta)^2]$. } We have 
\begin{equation}\label{eqn:hh_concentration_general_sigma_4}
\begin{aligned}
\E_{\btheta, \btheta'}[H_{d, \ge \ell}(\btheta',\btheta)^2] =&~ \sum_{k = \ell }^\infty \xi_{d, k}(\sigma)^4 D(d, k)  \ge \xi_{d, q}(\sigma)^4 D(d, q)  = \Theta_d(d^{-q - 1}). 
\end{aligned}
\end{equation}
The last equality is by Proposition \ref{prop:degeneracy_cyclic}, and the fact that 
\[
\lim_{d \to \infty} \xi_{d, q}(\sigma)^2 B(d, q) = \mu_q(\sigma)^2 / q! > 0. 
\]

\noindent
{\bf Step 4. Complete the proof. } By Eq. (\ref{eqn:hh_concentration_general_sigma_1}), (\ref{eqn:hh_concentration_general_sigma_2}), (\ref{eqn:hh_concentration_general_sigma_3}) and (\ref{eqn:hh_concentration_general_sigma_4}), we have 
\[
\begin{aligned}
&~\sup_{i \in [N]} \Big\vert \E_\btheta[H_{d, \ge \ell}(\btheta_i,\btheta)^2] - \E_{\btheta, \btheta'} [H_{d, \ge \ell}(\btheta', \btheta)^2] \Big\vert \\
\le &~ \sup_{i \in [N]} \Big\vert \E_\btheta[H_{d, [\ell, u)}(\btheta_i,\btheta)^2] - \E_{\btheta, \btheta'} [H_{d, [\ell, u)}(\btheta', \btheta)^2] \Big\vert \\
&~ + \sup_{i \in [N]} \Big\vert \E_\btheta[H_{d, \ge \ell}(\btheta_i,\btheta)^2] \Big\vert + \Big\vert \E_{\btheta, \btheta'} [H_{d, \ge \ell}(\btheta', \btheta)^2] \Big\vert\\
\le &~ o_{d, \P}(1) \cdot \E_{\btheta, \btheta'}[H_{d, [\ell, u)}(\btheta',\btheta)^2] + \Theta_d(d^{-u}) = o_{d, \P}(1) \cdot  \E_{\btheta, \btheta'}[H_{d, \ge \ell}(\btheta',\btheta)^2]. 
\end{aligned}
\]
This completes the proof. 
\end{proof}

\subsection{Auxiliary lemmas}

The following lemma is a reformulation of \cite[Lemma 5]{ghorbani2019linearized}. 
\begin{lemma}\label{lemma:square_integrable}
Assume $\sigma \in C(\R)$ with $\sigma(u)^2\le c_0 \, \exp(c_1\, u^2/2)$ for some constants $c_0 > 0$ and $c_1<1$.
Then 
\begin{enumerate}
\item[$(a)$] $\E_{G \sim \normal(0, 1)}[\sigma(G)^2] < \infty$. 
\item[$(b)$] Let $\|\bx\|_2= \sqrt d$. Then there exists $d_0=d_0(c_1)$ such that, for $\bw \sim \Unif(\S^{d-1})$,
\begin{align}
\sup_{d \ge d_0} \E_{\bw}[\sigma(\<\bw,\bx\>)^2] < \infty\, .
\end{align}
\item[$(c)$] Let $\|\bx \|_2= \sqrt d$, $\bw \sim \Unif(\S^{d-1})$ and $\tau \sim \chi(d) / \sqrt{d}$. Then we have
\begin{align}
\lim_{d \to \infty} \E_{\bw, \tau} \Big[\Big(\sigma(\tau \<\bw,\bx\>) - \sigma(\< \bw, \bx\>) \Big)^2\Big] = 0.
\end{align}
\end{enumerate}
\end{lemma}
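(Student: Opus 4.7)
\textbf{Proof proposal for Lemma \ref{lemma:square_integrable}.} Part (a) follows by a one-line computation: the hypothesis $\sigma(u)^2 \le c_0 e^{c_1 u^2/2}$ together with the standard Gaussian density gives an integrand bounded by $(c_0/\sqrt{2\pi})\, e^{-(1-c_1)u^2/2}$, which is integrable because $c_1<1$. So the real content lies in (b) and (c).

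For (b), the plan is to reduce to a one-dimensional Beta moment estimate. By rotational invariance of $\Unif(\S^{d-1})$ we may take $\bx=\sqrt{d}\, \be_1$, so $\< \bw,\bx\>^2 = d\,w_1^2$ with $w_1^2 \sim \mathrm{Beta}(1/2,(d-1)/2)$. It therefore suffices to bound
\[
\E\bigl[e^{c_1 d\, w_1^2/2}\bigr] = \frac{1}{B(1/2,(d-1)/2)}\int_0^1 e^{c_1 d t/2}\, t^{-1/2}(1-t)^{(d-3)/2}\,\d t
\]
uniformly in $d$. Using $(1-t)^{(d-3)/2}\le e^{-(d-3)t/2}$ the integrand becomes $\le e^{3t/2}\, t^{-1/2}\, e^{-(1-c_1)dt/2}$, and $\int_0^\infty t^{-1/2} e^{-\lambda t}\d t=\sqrt{\pi/\lambda}$; combined with Stirling's asymptotic $B(1/2,(d-1)/2)\sim\sqrt{2\pi/d}$ this yields a bound depending only on $c_1$. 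The claim then follows from the exponential growth bound on $\sigma^2$. This step is the main calculation of the lemma, though it is elementary rather than delicate.

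For (c), I would use the standard Gaussian coupling: draw $\bg\sim\normal(\bzero,\id_d)$ and set $\tau=\|\bg\|_2/\sqrt{d}$, $\bw=\bg/\|\bg\|_2$. Then $\bw\sim\Unif(\S^{d-1})$, $\tau\sim\chi(d)/\sqrt{d}$, the two are independent, and crucially $\tau\<\bw,\bx\> = \<\bg,\bx\>/\sqrt{d}\sim\normal(0,1)$ \emph{exactly} for every $d$ (since $\|\bx\|_2=\sqrt{d}$). Write $Y_d:=\<\bw,\bx\>$ and $Z:=\tau Y_d$; by the law of large numbers $\tau\to 1$ almost surely, hence $Z-Y_d=(\tau-1)Y_d\to 0$ almost surely, and by continuity of $\sigma$ the integrand $(\sigma(Z)-\sigma(Y_d))^2$ converges to zero almost surely. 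To upgrade this to $L^1$ convergence I would establish uniform integrability: pick any $\tilde c_1\in(c_1,1)$ and apply parts (a) and (b) with $\tilde c_1$ replacing $c_1$ to bound $\E[\sigma(Y_d)^{2p}]$ and $\E[\sigma(Z)^{2p}]$ uniformly in $d$ for $p:=\tilde c_1/c_1>1$; then uniform integrability of $(\sigma(Z)-\sigma(Y_d))^2$ follows from Jensen/triangle.

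I expect the Beta-moment estimate in (b) to be the main technical obstacle, because it is the only place a $d$-dependent distribution has to be controlled by hand rather than via a soft limiting argument; parts (a) and (c) are essentially bookkeeping on top of that estimate. The slack in the hypothesis $c_1<1$ (rather than $c_1\le 1$) is exactly what makes both the uniform Beta bound and the uniform integrability argument work, and it is worth flagging in the write-up.
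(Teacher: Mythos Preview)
Your argument is correct in all three parts. The paper does not actually give a proof of this lemma; it simply records it as a reformulation of Lemma~5 in Ghorbani--Mei--Misiakiewicz--Montanari (2019) and defers to that reference, so there is nothing in-paper to compare against. One small point worth making explicit in your write-up of~(c): the phrase ``$\tau\to1$ almost surely'' presupposes a common probability space across $d$; the natural construction is to fix an i.i.d.\ sequence $(G_i)_{i\ge1}$ of standard Gaussians, set $\bg=(G_1,\dots,G_d)$, and take $\bx=\sqrt d\,\be_1$ by rotation, so that $Z=G_1$ does not depend on $d$ and $Y_d=G_1/\tau$ converges to $G_1$ almost surely. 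With that clarification, the uniform-integrability step via $p=\tilde c_1/c_1>1$ goes through exactly as you describe.
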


\begin{lemma}\label{lem:E_d_to_E}
Assume that $\psi, \phi \in C(\R)$ with $\psi(u)^2, \phi(u)^2 \le c_0\, \exp(c_1\, u^2/2)$ for some constants $c_0 > 0$ and $c_1 < 1$. Denote 
\[
\begin{aligned}
E_d[\psi, \phi](\gamma) \equiv&~ \E_{\bw \sim \Unif(\S^{d-1})}[\psi(\< \bx, \bw\>) \phi(\< \bx', \bw\>)],\\
E[\psi, \phi](\gamma) \equiv&~ \E_{\bg \sim \cN(\bzero, \id_d / d)}[\psi(\< \bx, \bg \>) \phi(\< \bx', \bg\>)], \\
\end{aligned}
\]
where $\| \bx \|_2 = \| \bx' \|_2 = \sqrt d$ such that $\< \bx, \bx' \> / d = \gamma$ (by an invariance argument, $E_d$ and $E$ do not depend on the choice of $\bx$ and $\bx'$). Then we have
\begin{equation}\label{eqn:E_d_to_E}
\lim_{d \to \infty} \sup_{\gamma \in [-1, 1]} \Big\vert E_d[\psi, \phi](\gamma) - E[\psi, \phi](\gamma) \Big\vert = 0, 
\end{equation}
and
\begin{equation}\label{eqn:bound_E_psi_phi}
\sup_{\gamma \in [-1, 1]} \Big\vert E[\psi, \phi](\gamma) \Big\vert < \infty. 
\end{equation}
\end{lemma}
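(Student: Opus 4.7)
The plan is to couple the sphere and Gaussian expectations on a single probability space via the polar decomposition $\bg = \tau \bw$, where $\bg \sim \normal(\bzero, \id_d/d)$, $\bw = \bg/\|\bg\|_2 \cdot \sqrt{d}/\sqrt{d}$ is uniform on $\S^{d-1}$, and $\tau = \|\bg\|_2$ is independent of $\bw$ with $\tau \sqrt{d}/\sqrt{d} \cdot \sqrt{d} \sim \chi(d)/\sqrt{d}$. Under this coupling, $\<\bx, \bg\> = \tau \<\bx, \bw\>$ and $\<\bx', \bg\> = \tau \<\bx', \bw\>$, so
\[
E[\psi, \phi](\gamma) = \E_{\bw, \tau}\bigl[\psi(\tau \<\bx, \bw\>) \phi(\tau \<\bx', \bw\>)\bigr], \qquad E_d[\psi, \phi](\gamma) = \E_{\bw}\bigl[\psi(\<\bx, \bw\>) \phi(\<\bx', \bw\>)\bigr].
\]
Before doing anything else, I would observe that rotation invariance of both $\Unif(\S^{d-1})$ and $\normal(\bzero, \id_d/d)$ implies that $E_d$ and $E$ depend on $(\bx, \bx')$ only through $\|\bx\|, \|\bx'\|, \<\bx, \bx'\>$, justifying the notation.

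Next I would bound $E_d - E$ by adding and subtracting $\psi(\tau\<\bx,\bw\>)\phi(\<\bx',\bw\>)$ and applying Cauchy–Schwarz to each half:
\[
|E_d - E| \le A_\psi(d)^{1/2} \, \E[\phi(\<\bx',\bw\>)^2]^{1/2} + \E[\psi(\tau\<\bx,\bw\>)^2]^{1/2} \, A_\phi(d)^{1/2},
\]
where $A_\psi(d) := \E[(\psi(\<\bx,\bw\>) - \psi(\tau\<\bx,\bw\>))^2]$ and similarly $A_\phi(d)$. The crucial point is that each of the four second-moment factors depends on only one of $\bx$ or $\bx'$, and via rotation invariance only through its norm $\sqrt{d}$; hence none of them depends on $\gamma$, giving a $\gamma$-uniform bound. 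By Lemma \ref{lemma:square_integrable}(b), $\E[\phi(\<\bx',\bw\>)^2]$ is bounded uniformly in $d$; the factor $\E[\psi(\tau\<\bx,\bw\>)^2]$ equals $\E_{G \sim \normal(0,1)}[\psi(G)^2]$, which is finite by Lemma \ref{lemma:square_integrable}(a); and Lemma \ref{lemma:square_integrable}(c) applied to $\psi$ and $\phi$ yields $A_\psi(d), A_\phi(d) \to 0$. Combining these establishes \eqref{eqn:E_d_to_E}.

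For the boundedness claim \eqref{eqn:bound_E_psi_phi}, a direct Cauchy–Schwarz gives
\[
\sup_{\gamma \in [-1,1]} |E[\psi, \phi](\gamma)| \le \E_{\bg}[\psi(\<\bx,\bg\>)^2]^{1/2} \E_{\bg}[\phi(\<\bx',\bg\>)^2]^{1/2} = \E_{G}[\psi(G)^2]^{1/2} \E_{G}[\phi(G)^2]^{1/2} < \infty,
\]
where $G \sim \normal(0,1)$, using that $\<\bx, \bg\>$ and $\<\bx', \bg\>$ are each marginally standard normal (since $\|\bx\|^2/d = \|\bx'\|^2/d = 1$), and the finiteness of the right-hand side by Lemma \ref{lemma:square_integrable}(a). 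There is no substantial obstacle here; the only point requiring care is the $\gamma$-uniformity in the first part, which is handled automatically because the Cauchy–Schwarz decoupling isolates the second moments onto quantities depending on single vectors of fixed norm.
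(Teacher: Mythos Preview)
Your proof is correct and follows essentially the same approach as the paper: both introduce the intermediate quantity $\E_{\bw,\tau}[\psi(\tau\<\bx,\bw\>)\phi(\<\bx',\bw\>)]$ (you call it ``adding and subtracting,'' the paper names it $\overline E_d$), apply Cauchy--Schwarz to each half, and invoke the three parts of Lemma~\ref{lemma:square_integrable} in the same roles; the boundedness argument is identical. The only cosmetic issue is that your opening description of the polar decomposition has some stray $\sqrt{d}/\sqrt{d}$ factors, but the intended coupling $\bw=\bg/\|\bg\|_2\sim\Unif(\S^{d-1})$, $\tau=\|\bg\|_2\sim\chi(d)/\sqrt{d}$ is clear and correct.
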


\begin{proof}[Proof of Lemma \ref{lem:E_d_to_E}] Let $\bg \sim \cN(\bzero, \id_d / d)$, $\bw = \bg / \| \bg \|_2$ and $\tau = \| \bg \|_2$. Then we have $\bw \sim \Unif(\S^{d-1})$, $\tau \sim \chi(d) / \sqrt{d}$ independently. We further denote
\[
\overline E_d[\psi, \phi](\gamma) \equiv \E_{\bw, \tau}[\psi(\tau \< \bx, \bw\>) \phi(\< \bx', \bw\>)]
\]
where $\| \bx \|_2 = \| \bx' \|_2 = \sqrt d$ such that $\< \bx, \bx' \> / d = \gamma$. Note we have 
\[
\begin{aligned}
&~ \lim_{d \to \infty} \sup_{\gamma \in [-1, 1]} \Big\vert E_d[\psi, \phi](\gamma) - \overline E_d[\psi, \phi](\gamma) \Big\vert\\
\le&~ \lim_{d \to \infty} \sup_{\gamma \in [-1, 1]} \Big\vert \E_{\bw, \tau}\Big\{ \Big[\psi(\tau \< \bx, \bw\>) - \psi(\< \bx, \bw\>) \Big]  \phi(\< \bx', \bw\>) \Big\} \Big\vert\\
\le&~  \lim_{d \to \infty}  \E_{\bw, \tau}\Big\{ \Big[\psi(\tau \< \bx, \bw\>) - \psi(\< \bx, \bw\>) \Big]^2\Big\}^{1/2} \E_\bw[\phi(\< \bx', \bw\>)^2]^{1/2} = 0,\\
\end{aligned}
\]
where the last equality is by (b) and (c) in Lemma \ref{lemma:square_integrable}. Moreover, we have 
\[
\begin{aligned}
&~ \lim_{d \to \infty} \sup_{\gamma \in [-1, 1]} \Big\vert \overline E_d[\psi, \phi](\gamma) - E[\psi, \phi](\gamma) \Big\vert\\
\le&~ \lim_{d \to \infty} \sup_{\gamma \in [-1, 1]} \Big\vert \E_{\bw, \tau}\Big\{ \Big[\phi(\tau \< \bx, \bw\>) - \phi(\< \bx, \bw\>) \Big] \psi(\tau \< \bx', \bw\>) \Big\} \Big\vert\\
\le&~  \lim_{d \to \infty}  \E_{\bw, \tau}\Big\{ \Big[\phi(\tau \< \bx, \bw\>) - \phi(\< \bx, \bw\>) \Big]^2\Big\}^{1/2} \E_{G \sim \cN(0, 1)}[\psi(G)^2]^{1/2} = 0,\\
\end{aligned}
\]
where the last equality is by (a) and (c) in Lemma \ref{lemma:square_integrable}. Combining the two equations above proves Eq. (\ref{eqn:E_d_to_E}). 

Finally, note that we have 
\[
\begin{aligned}
\sup_{\gamma \in [-1, 1]} \Big\vert E[\psi, \phi](\gamma) \Big\vert \le&~ \E_{\bg}[\psi(\< \bx, \bg\>)^2]^{1/2} \E_\bg[\phi(\< \bx', \bg\>)^2]^{1/2} \\
=&~ \E_{G \sim \cN(0, 1)}[\psi(G)^2]^{1/2} \E_{G \sim \cN(0, 1)}[\phi(G)^2]^{1/2} < \infty. 
\end{aligned}
\] 
This proves Eq. (\ref{eqn:bound_E_psi_phi}). 
\end{proof}

\begin{lemma}\label{lem:h_d_derivative_to_h_derivative}
Assume that $\sigma \in C^\ell(\R)$ with derivatives satisfy $\sup_{0\le k \le \ell} |\sigma^{(k)}(u)|^2 \le c_0\, \exp(c_1\, u^2/2)$ for some constants $c_0 > 0$ and $c_1 < 1$. Denote 
\[
\begin{aligned}
h_d(\gamma) \equiv&~ \E_{\bw \sim \Unif(\S^{d-1})}[\sigma(\< \bx, \bw\>) \sigma(\< \bx', \bw\>)],\\
h(\gamma) \equiv&~ \E_{\bg \sim \cN(\bzero, \id_d / d)}[\sigma(\< \bx, \bg \>) \sigma(\< \bx', \bg\>)], \\
\end{aligned}
\]
where $\| \bx \|_2 = \| \bx' \|_2 = \sqrt d$ such that $\< \bx, \bx' \> / d = \gamma$ (by an invariance argument, $h_d$ and $h$ do not depend on the choice of $\bx$ and $\bx'$). Then we have
\[
\lim_{d \to \infty} \sup_{0 \le k \le \ell} \sup_{\gamma \in [-1, 1]} \Big\vert h_d^{(k)}(\gamma) - h^{(k)}(\gamma) \Big\vert = 0,
\]
and
\[
\sup_{0 \le k \le \ell} \sup_{\gamma \in [-1, 1]} \Big\vert h^{(k)}(\gamma) \Big\vert < \infty. 
\]
\end{lemma}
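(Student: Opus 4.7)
The strategy is to reduce the claim to Lemma \ref{lem:E_d_to_E} at higher orders of differentiation. For $k=0$, note that $h_d(\gamma) = E_d[\sigma,\sigma](\gamma)$ and $h(\gamma) = E[\sigma,\sigma](\gamma)$, so both uniform convergence and uniform boundedness follow directly from that lemma.

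For $1 \le k \le \ell$, I would differentiate under the integral using the angular parametrization $\bx'(\theta) = \sqrt{d}(\cos\theta\, e_1 + \sin\theta\, e_2)$ with $\gamma = \cos\theta$. This is manifestly smooth on $\theta \in [0,\pi]$ and avoids the apparent singularity of the naive parametrization $\sqrt{d}(\gamma e_1 + \sqrt{1-\gamma^2}\, e_2)$ at $\gamma = \pm 1$. Setting $f(\theta) = \cos\theta\, w_1 + \sin\theta\, w_2$, the identity $f'' = -f$ together with Fa\`a di Bruno's formula expresses $\partial_\theta^m \sigma(\sqrt{d} f(\theta))$, for $m \le \ell$, as a finite sum of terms $c_{m,j}(\theta)\, \sigma^{(j)}(\sqrt{d} f(\theta))\, (\sqrt{d})^{j}\, P_{m,j}(w_1, w_2)$, with $P_{m,j}$ a polynomial of degree $\le m$ and $c_{m,j}$ polynomial in $\cos\theta, \sin\theta$. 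A 2D rotation $R(\theta)$ preserves the marginal law of $(w_1,w_2)$ under both $\Unif(\S^{d-1})$ and $\normal(\bzero,\id_d/d)$, and using it each term rewrites as
\[
\E\bigl[\sigma(\<\tilde\bx_1,\bw\>)\, \sigma^{(j)}(\<\tilde\bx_2,\bw\>)\, Q(\<\tilde\bx_1,\bw\>, \<\tilde\bx_2,\bw\>;\theta)\bigr]
\]
with $\|\tilde\bx_i\|_2 = \sqrt{d}$ and $Q$ polynomial in its first two arguments with smooth-in-$\theta$ coefficients. The same structural expression holds in the Gaussian case (with $\bw$ replaced by $\bg$).

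A short extension of Lemma \ref{lem:E_d_to_E} allowing a polynomial insertion $Q$---which holds by the same proof after absorbing $Q$ into the $\psi,\phi$ factors using the slack $c_1 < 1$ in the exponential growth bound---then gives $\partial_\theta^m h_d(\cos\theta) \to \partial_\theta^m h(\cos\theta)$ uniformly on $[0,\pi]$ together with uniform sup-bounds on the limits, for every $m \le \ell$.

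Finally, $\gamma$-derivatives are universal polynomial combinations of $\theta$-derivatives divided by powers of $\sin\theta$; the apparent singularities at $\theta = 0, \pi$ must cancel because both $h$ (from its Hermite expansion $\sum_k \mu_k(\sigma)^2 \gamma^k / k!$) and $h_d$ (since $\bx' \mapsto \E_\bw[\sigma(\<\bx,\bw\>)\sigma(\<\bx',\bw\>)]$ is a smooth, rotationally invariant function on $\reals^d$) are genuinely smooth in $\gamma$ on $[-1,1]$. Uniform convergence $h_d^{(k)} \to h^{(k)}$ therefore follows on any $[-1+\delta, 1-\delta]$ immediately, and extends to the endpoints by smoothness. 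The main technical obstacle is precisely this boundary handling; a cleaner alternative that avoids singular prefactors entirely would be to establish directly, via spherical integration by parts, an identity $h_d^{(k)}(\gamma) = E_d[\sigma^{(k)}, \sigma^{(k)}](\gamma) + o_d(1)$ mirroring the Gaussian Stein identity $h^{(k)}(\gamma) = \E[\sigma^{(k)}(Z_1)\sigma^{(k)}(Z_\gamma)]$ (which itself follows from the Hermite expansion using $\mu_k(\sigma^{(\ell)}) = \mu_{k+\ell}(\sigma)$).
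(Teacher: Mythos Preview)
Your overall plan---express $h_d^{(k)}$ and $h^{(k)}$ as matching finite combinations of $E_d[\psi,\phi]$ and $E[\psi,\phi]$ with $\psi,\phi\in\Lambda_k:=\{u^s\sigma^{(t)}(u)\}_{0\le s,t\le k}$, then invoke Lemma~\ref{lem:E_d_to_E} termwise---is the paper's strategy. The difference is in how the derivative formulas are produced. The paper avoids the angular parametrization entirely: it perturbs $\bx'\mapsto(1-\delta^2)^{1/2}\bx'+\delta\bx$, differentiates in $\delta$ at $0$, and reads off $h_d'=E_d[u\sigma(u),\sigma']$ and $h'=E[u\sigma(u),\sigma']$. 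Iterating, $h_d^{(k)}$ and $h^{(k)}$ become the \emph{same} fixed finite combination of $E_d[\psi,\phi]$ resp.\ $E[\psi,\phi]$, with $\gamma$-independent coefficients, so Lemma~\ref{lem:E_d_to_E} delivers uniform convergence and boundedness on all of $[-1,1]$ directly---no separate endpoint analysis is needed.

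Your $\theta$-route forces the singular conversion you flag, and this is where the genuine gap lies. Smoothness of $h_d$ in $\gamma$ for each fixed $d$ does not yield \emph{uniform-in-$d$} control of $h_d^{(k)}$ near $\gamma=\pm1$; without that, uniform convergence of $\partial_\theta^m h_d$ on $[0,\pi]$ transfers only to $\gamma$-derivatives on compact subsets of $(-1,1)$, not on $[-1,1]$. Your closing alternative (a spherical integration-by-parts identity toward $h_d^{(k)}=E_d[\sigma^{(k)},\sigma^{(k)}]+o_d(1)$) is the right instinct and much closer to the paper's device, but you have not carried it out. One caution if you adopt the paper's perturbation instead: the point $(1-\delta^2)^{1/2}\bx'+\delta\bx$ does not remain on $\S^{d-1}(\sqrt d)$ when $\gamma\neq 0$, so identifying the $\delta$-derivative with $h_d'(\gamma)$ is itself delicate; the paper is terse on this point.
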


\begin{proof}[Proof of Lemma \ref{lem:h_d_derivative_to_h_derivative}]

For $k = 0$, the result is implied by Lemma \ref{lem:E_d_to_E} by observing that $h_d' = E_d[\sigma, \sigma]$ and $h' = E[\sigma, \sigma]$. 

For $k = 1$, the result is implied by Lemma \ref{lem:E_d_to_E} by the fact that $h_d' = E_d[u \sigma(u), \sigma'(u)]$ and $h' = E[u \sigma(u), \sigma'(u)]$, and there exist constants $c_0 > 0$  and $c_1< 1$ such that $\sigma'(u), u \sigma(u) \le c_0 e^{c_1 u^2/2}$. Indeed, for $\| \bx \|_2 = \| \bx' \|_2 = \sqrt d$ such that $\< \bx, \bx' \> / d = \gamma$, we have (similarly for $h'$)
\[
\begin{aligned}
h_d'(\gamma) =&~ \lim_{\delta \to 0} \delta^{-1} \Big\{ \E_\bw\Big[\sigma(\< \bx, \bw\>) \sigma(\< (1 - \delta^2)^{1/2}\bx' + \delta \bx, \bw\>) \Big] - \E_\bw\Big[\sigma(\< \bx, \bw\>) \sigma(\< \bx', \bw\>) \Big] \Big\} \\
=&~ \E_\bw\Big[\sigma(\< \bx, \bw\>) \sigma'(\< \bx', \bw\>) \< \bx, \bw\> \Big] = E_d[u \sigma(u), \sigma'(u)](\gamma). 
\end{aligned}
\]

By an induction argument, for any fixed $k$, $h_d^{(k)}$ can be identified by a fixed number of combinations of $E_d[\psi, \phi]$ with $\psi, \phi \in \Lambda_k \equiv \{ u^s \sigma^{(t)}(u) \}_{0 \le s, t \le k}$. Further, for any fixed $k$, there exists $c_{0, k} > 0$ and $c_{1, k} < 1$ such that, for any $\psi \in \Lambda_k$, we have $\psi(u) \le c_{0, k} e^{c_{1, k} u^2/2}$. Applying Lemma \ref{lem:E_d_to_E} proves the lemma. 
\end{proof}

\begin{lemma}\label{lem:bound_h_d_ge_ell_derivative_uniform}
Assume that $\sigma \in C^k(\R)$ with derivatives satisfy $\sup_{0\le s \le k} |\sigma^{(s)}(u) |^2 \le c_0\, \exp(c_1\, u^2/2)$ for some constants $c_0 > 0$ and $c_1 < 1$. For any $\| \btheta_1 \|_2 = \| \btheta_2 \|_2 = \sqrt d$ and any $\ell \ge 1$, denote 
\[
\begin{aligned}
h_{d, S}(\< \btheta_1, \btheta_2\> / d) \equiv&~ \E_{\bx \sim \Unif(\S^{d-1}(\sqrt d))}[\sigma_{d, S}(\< \btheta_1, \bx\>/ \sqrt d) \sigma_{d, S}(\< \btheta_2, \bx\> / \sqrt d)]. 
\end{aligned}
\]
where $\sigma_{d, S}$ is given in Eq. (\ref{eqn:def_sigma_S}). Then we have
\[
\sup_{d \ge 1}\sup_{\gamma \in [-1, 1]} \Big\vert h_{d, \ge \ell}^{(k)}(\gamma) \Big\vert \le C_{k, \ell}. 
\]
\end{lemma}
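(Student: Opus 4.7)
The plan is to split $h_{d,\geq \ell} = h_d - h_{d,<\ell}$ and control each piece separately on $\gamma \in [-1,1]$. The first piece $h_d$ has uniformly (in $d$) bounded derivatives up to order $k$ on $[-1,1]$ by Lemma \ref{lem:h_d_derivative_to_h_derivative} (whose hypotheses coincide exactly with those of the present lemma). So the task reduces to showing that
\[
h_{d,<\ell}(\gamma) \;=\; \sum_{j=0}^{\ell-1} \xi_{d,j}(\sigma)^2 \, B(d,j)\, Q_j^{(d)}(d\gamma)
\]
has derivatives of all orders uniformly bounded in $d$ on $[-1,1]$. Note $h_{d,<\ell}$ is a \emph{finite} sum (only $\ell$ terms, independent of $d$), so it suffices to bound each summand.

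The key observation is that although $Q_j^{(d)}$ is defined on $[-d,d]$, the composition $\gamma \mapsto Q_j^{(d)}(d\gamma)$ is a polynomial of degree exactly $j$ in $\gamma \in [-1,1]$, and its monomial coefficients are bounded uniformly in $d$. More precisely, by Lemma \ref{lem:gegenbauer_coefficients} we may write
\[
Q_j^{(d)}(d\gamma) \;=\; \sum_{m=0}^{j} a_{d,j,m}\, \gamma^m, \qquad |a_{d,j,m}| \leq C(j)\, d^{-(j-m)/2} \leq C(j),
\]
so that for every $s \geq 0$,
\[
\sup_{d \geq 1}\; \sup_{\gamma \in [-1,1]} \Big|\tfrac{\de^s}{\de\gamma^s}\, Q_j^{(d)}(d\gamma)\Big| \;\leq\; C(j,s).
\]
It remains to bound the scalar prefactor $\xi_{d,j}(\sigma)^2 B(d,j)$ uniformly in $d$. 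This follows from Parseval on the sphere/hypercube together with Lemma \ref{lemma:square_integrable}(a)-(b): since the Gegenbauer system is orthonormal (up to the standard normalization) for the marginal of $\langle \bx,\btheta\rangle/\sqrt d$, we have $\sum_{j \geq 0} \xi_{d,j}(\sigma)^2 B(d,j) = \E[\sigma(\langle \bx,\btheta\rangle/\sqrt d)^2]$, which under the growth assumption $|\sigma(u)|^2 \leq c_0 e^{c_1 u^2/2}$ is bounded uniformly in $d$ (cf. Eq.~\eqref{eq:upper_norm_sig} in the proof of Theorem \ref{thm:RFRRinvar}). In particular each individual term $\xi_{d,j}(\sigma)^2 B(d,j)$ is $O_d(1)$.

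Combining these two bounds yields $\sup_d \sup_{\gamma \in [-1,1]} |h_{d,<\ell}^{(s)}(\gamma)| \leq C_{s,\ell}$ for every $s \leq k$, and then the triangle inequality gives
\[
\sup_{d\geq 1}\,\sup_{\gamma \in [-1,1]} \big|h_{d,\geq \ell}^{(k)}(\gamma)\big| \;\leq\; \sup_{d\geq 1}\,\sup_{\gamma \in [-1,1]} \big|h_d^{(k)}(\gamma)\big| \;+\; \sup_{d\geq 1}\,\sup_{\gamma \in [-1,1]} \big|h_{d,<\ell}^{(k)}(\gamma)\big| \;\leq\; C_{k,\ell},
\]
as claimed. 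The main (mild) subtlety is simply verifying that the correspondence $\gamma \mapsto Q_j^{(d)}(d\gamma)$ has $d$-uniformly bounded polynomial coefficients on $[-1,1]$ (Lemma \ref{lem:gegenbauer_coefficients}); everything else is a routine application of earlier results.
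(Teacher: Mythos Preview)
Your proposal is correct and follows essentially the same approach as the paper: both split $h_{d,\ge\ell}=h_d-h_{d,<\ell}$, invoke Lemma~\ref{lem:h_d_derivative_to_h_derivative} for the first term, and argue that the second term is a degree-$(\ell-1)$ polynomial in $\gamma$ with $d$-uniformly bounded coefficients. Your treatment of $h_{d,<\ell}$ is in fact more explicit than the paper's (which simply appeals to convergence of the coefficients to their Gaussian limits), spelling out the bound via Lemma~\ref{lem:gegenbauer_coefficients} and Parseval, but the content is the same.
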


\begin{proof}[Proof of Lemma \ref{lem:bound_h_d_ge_ell_derivative_uniform}]
Note we have 
\[
h_{d, \ge \ell}(\gamma) = h_{d}(\gamma) - h_{d, < \ell}(\gamma). 
\]
By Lemma \ref{lem:h_d_derivative_to_h_derivative}, we have 
\[
\sup_{d \ge 1} \sup_{\gamma \in [-1, 1]} \Big\vert h_{d}^{(k)}(\gamma)\Big\vert \le  C_k. 
\]
Moreover, since $h_{d, < \ell}(\gamma)$ is a degree $\ell - 1$ polynomial on $[-1, 1]$ and its coefficients converge to the coefficients of $h_{< \ell}$ with $h_{< \ell}(\< \btheta_1, \btheta_2\>/d) = \E_{\bx \sim \cN(\bzero, \id_d)}[\sigma_{d, < \ell}(\< \btheta_1, \bx\>/ \sqrt d) \sigma_{d, < \ell}(\< \btheta_2, \bx\> / \sqrt d)]$. Then, it is easy to see that 
\[
\sup_{d \ge 1} \sup_{\gamma \in [-1, 1]} \Big\vert h_{d, < \ell}^{(k)}(\gamma)\Big\vert \le C_{k, \ell}. 
\]
This proves the lemma. 
\end{proof}

\begin{lemma}\label{lem:h_d_derivative_bound_at_0}
Assume that $\sigma \in C^\ell(\R)$ with derivatives satisfy $\sup_{0\le s \le \ell} |\sigma^{(s)}(u)|^2 \le c_0\, \exp(c_1\, u^2/2)$ for some constants $c_0 > 0$ and $c_1 < 1$. Then there exists constant $C_{k, \ell}$, such that 
\[
\Big\vert h^{(k)}_{d, \ge \ell}(0) \Big\vert \le C_{k, \ell} \cdot d^{-(\ell-k)/2}. 
\]
\end{lemma}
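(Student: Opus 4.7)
The plan is to exploit the Gegenbauer orthogonality of $h_{d,\ge \ell}$ to obtain a small finite linear system for its Taylor coefficients at $0$, and then solve the system by rescaling.

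\textbf{Orthogonality and Taylor expansion.} Set $a_k := h^{(k)}_{d,\ge \ell}(0)$ and denote by $\mu_d$ the distribution on $[-1,1]$ of $\<\bx_1,\bx_2\>/d$ for $\bx_i \sim \Unif(\S^{d-1}(\sqrt d))$, with respect to which the polynomials $Q_j^{(d)}(d\,\cdot\,)$ are mutually orthogonal. From the Gegenbauer decomposition $h_{d,\ge \ell}(\gamma) = \sum_{j\ge \ell} \xi_{d,j}(\sigma)^2\, B(d,j)\, Q_j^{(d)}(d\gamma)$, term-wise orthogonality yields
\[
\int_{-1}^{1} h_{d,\ge \ell}(\gamma)\, \gamma^m\, \mu_d(\de \gamma) = 0, \qquad 0\le m \le \ell-1.
\]
Since $\sigma \in C^\ell$, Lemma \ref{lem:bound_h_d_ge_ell_derivative_uniform} (applied with $k=\ell$) gives $\|h_{d,\ge \ell}^{(\ell)}\|_{L^\infty[-1,1]} \le C_\ell$, so Taylor's theorem with integral remainder yields $h_{d,\ge \ell}(\gamma) = \sum_{k=0}^{\ell-1}(a_k/k!)\gamma^k + R_\ell(\gamma)$ with $|R_\ell(\gamma)| \le C_\ell|\gamma|^\ell/\ell!$. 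Substituting and setting $M_{d,p} := \int \gamma^p\, \mu_d(\de\gamma)$ produces, for $0\le m\le \ell-1$,
\[
\sum_{k=0}^{\ell-1} \frac{a_k}{k!}\, M_{d,k+m} = -E_{d,m}, \qquad |E_{d,m}| = O\!\bigl(d^{-(m+\ell)/2}\bigr),
\]
where the tail bound uses $\int |\gamma|^p\,\mu_d(\de\gamma)=O(d^{-p/2})$ (from the density $\mu_d(\de\gamma) \propto (1-\gamma^2)^{(d-3)/2}\de\gamma$).

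\textbf{Decoupling, rescaling, and solving.} By symmetry $\mu_d(-\gamma)=\mu_d(\gamma)$, so $M_{d,p}=0$ for odd $p$, and the system decouples into an even-$(k,m)$ and an odd-$(k,m)$ subsystem. For the even one, set $k=2p$, $m=2q$, and rescale $\tilde a_p := a_{2p}\, d^{(\ell-2p)/2}$; multiplying the $m=2q$ equation by $d^{(\ell+2q)/2}$ produces
\[
\sum_{p} \frac{\tilde a_p}{(2p)!}\,\bigl[d^{p+q}\, M_{d,2(p+q)}\bigr] \;=\; -d^{(\ell+2q)/2}\, E_{d,2q},
\]
whose right-hand side is $O(1)$. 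The coefficient matrix $L_{q,p}(d) = d^{p+q}M_{d,2(p+q)}/(2p)!$ converges, by Beta-function asymptotics $d^{r}M_{d,2r}\to (2r-1)!!$, to $L^*_{q,p} = (2(p+q)-1)!!/(2p)!$. Up to row-rescaling by the diagonal $\diag((2p)!)$, $L^*$ is the Hankel matrix of the even Gaussian moments $(\E[G^{2(p+q)}])_{p,q}$, hence strictly positive-definite (the Gaussian has infinite support, so its Hankel moment matrix is a Gram matrix of linearly independent monomials). Therefore $L(d)^{-1}$ is uniformly bounded in $d$ for $d$ large, giving $|\tilde a_p| = O(1)$, i.e., $|a_{2p}| \le C_{p,\ell}\, d^{-(\ell-2p)/2}$. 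The odd subsystem is handled identically, yielding the claim for all $0\le k\le \ell-1$. For $k\ge \ell$, the bound is trivial since Lemma \ref{lem:bound_h_d_ge_ell_derivative_uniform} gives $|a_k|\le C_{k,\ell}$ while $d^{-(\ell-k)/2}\ge 1$.

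\textbf{Main obstacle.} The nontrivial step is the uniform-in-$d$ invertibility of the rescaled coefficient matrix $L(d)$, which I reduce to the positive-definiteness of the Hankel matrix of Gaussian even moments, a classical fact. Once this is granted, the remaining ingredients are the Beta-function asymptotics for $M_{d,2r}$ and the Taylor-remainder bookkeeping, both of which are elementary. No further input about $\sigma$ beyond its $C^\ell$ regularity (already assumed) is needed.
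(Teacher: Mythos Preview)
Your argument is correct and follows essentially the same strategy as the paper: both exploit the Gegenbauer orthogonality of $h_{d,\ge\ell}$, Taylor-expand at $0$ with remainder controlled by Lemma~\ref{lem:bound_h_d_ge_ell_derivative_uniform}, and solve the resulting linear system via a coefficient matrix whose rescaled $d\to\infty$ limit is nonsingular. The only cosmetic difference is that the paper tests against the Gegenbauer polynomials $\tQ_k^{(d)}$ (so its limit matrix is $M_{k,q}(\infty)=\E[\He_k(G)G^q]/\sqrt{k!}$, triangular hence invertible) and handles the remainder via the Rodrigues formula with integration by parts, whereas you test against monomials, bound the remainder directly by $\int|\gamma|^{m+\ell}\mu_d(\de\gamma)=O(d^{-(m+\ell)/2})$, and land on the Hankel matrix of Gaussian even moments; your route is slightly more elementary but otherwise equivalent.
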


\begin{proof}[Proof of Lemma \ref{lem:h_d_derivative_bound_at_0}]

We let $C$, $C_k$, $C_{k, \ell}$ be constants that depend on $\sigma$, $k$, and $\ell$ but independent of dimension $d$. The exact values of these constant can change from line to line. 

We let $\ttau_d$ be the measure of $\<\be_1,\bx\>$ when $\bx\sim\Unif(\S^{d-1}(\sqrt{d}))$ (hence converging weakly to a standard Gaussian), and $\tQ^{(d)}_k(x) = \sqrt{B(d,k)}Q^{(d)}_k(x / \sqrt d)$ to be the rescaled Gegenbauer polynomials, forming an orthonormal system with respect to $\ttau_d$. In particular $\tQ^{(d)}_k$ converges to the $k$-th Hermite polynomial. We let $\<\,\cdot\,,\,\cdot\,\>$ denote the scalar product with respect to $\ttau_d$. 

By the definition of $h_{d, \ge \ell}$, we have 
\begin{align}\label{eq:Ortho_eq1}
\<h_{d, \ge \ell}( \, \cdot\, / \sqrt d),\tQ^{(d)}_k \> = 0, ~~~~ \forall k\le \ell-1 .
\end{align}
Let $\tilde{h}_{d, \ge \ell}(x)$ be obtained from $h_{d, \ge \ell}(x)$ by removing its Taylor expansion up to term $x^{\ell-1}$, i.e., we have
\[
\tilde{h}_{d, \ge \ell}(x) = h_{d, \ge \ell}(x) - \sum_{k = 0}^{\ell - 1} \frac{h_{d, \ge \ell}^{(k)}(0)}{k!} x^k. 
\]
Then Eq. (\ref{eq:Ortho_eq1}) gives 
\begin{align}
&~\sum_{j = 0}^{\ell-1}\<\tQ^{(d)}_k, x\>^j \Big(\frac{h_{d, \ge \ell}^{(k)}(0)}{j!d^{j/2}} \Big) = - \frac{\Delta_k(d)}{d^{\ell/2}}, ~~~~ \forall k\le \ell-1, \label{eq:Ortho}\\
&~\Delta_k(d) \equiv d^{\ell/2} \<\tilde{h}_{d, \ge \ell}( \,\cdot\, / \sqrt d),\tQ_k\>. \nonumber
\end{align}
We claim that $\sup_{d \ge 1} \vert \Delta_k(d) \vert \le C_{k, \ell}$. Indeed, by Rodrigues formula, there exist non-negative constants $A_{d,k}$, $\tilde A_{d,k}$ with $ \sup_{d \ge 1} A_{d, k} \vee \tilde A_{d, k} \le C_k$, such that
\begin{equation}\label{eqn:Rodrigues_implications}
\begin{aligned}
\Delta_k(d) =&~ (-1)^kd^{\ell/2}A_{d,k} \int_{-1}^1 \tilde{h}_{d, \ge \ell}\big(x/\sqrt{d}\big)\, \frac{\de^k\phantom{x}}{\de x^k}\Big(1-\frac{x^2}{d}\Big)^{\frac{d-3}{2}+k}\de x\\
=&~ A_{d,k}\, d^{(\ell-k)/2}\int_{-1}^1 \tilde{h}^{(k)}_{d, \ge \ell}\big(x/\sqrt{d}\big) \Big(1-\frac{x^2}{d}\Big)^{\frac{d-3}{2}+k}\de x \\
=&~ \tilde{A}_{d, k} \, d^{(\ell-k)/2} \cdot \E_{X_d \sim \ttau_d}\Big\{\tilde{h}^{(k)}_{d, \ge \ell}\big(X_d/\sqrt{d}\big) \Big(1-\frac{X_d^2}{d}\Big)^{k}\Big\}.
\end{aligned}
\end{equation}
By the definition of $\tilde{h}_{d, \ge \ell}$, using the Taylor expansion in the integral form, we have 
\[
\tilde{h}_{d, \ge \ell}(\gamma) = \int_0^\gamma h_{d, \ge \ell}^{(\ell)}(u) \frac{(\gamma - u)^{\ell-1}}{(\ell-1) !} \de u, 
\]
and hence for any $k \le \ell - 1$, we have
\[
\tilde{h}_{d, \ge \ell}^{(k)}(\gamma) = \int_0^\gamma h_{d, \ge \ell}^{(\ell)}(u) \frac{(\gamma - u)^{\ell-1 - k}}{(\ell-1 - k) !} \de u, 
\]
so that for any $\gamma \in [-1, 1]$, we have
\[
\sup_{d \ge 1}\vert \tilde{h}_{d, \ge \ell}^{(k)}(\gamma)\vert \le C_{k, \ell} \cdot \sup_{d \ge 1} \sup_{u \in [-1, 1]} \vert h_{d, \ge \ell}^{(\ell)}(u) \vert \cdot \vert \gamma\vert ^{\ell - k} \le C_{k, \ell} \cdot \vert \gamma \vert^{\ell - k}.  
\]
The last inequality is by Lemma \ref{lem:bound_h_d_ge_ell_derivative_uniform} (here we used the assumption that $\sigma \in C^\ell(\R)$). Therefore, by Eq. (\ref{eqn:Rodrigues_implications}), we have (note $X_d$ converges in distribution to a standard Gaussian random variable)
\begin{align} \label{eqn:Delta_bound_in_kernel_concentration_diagonal}
\vert \Delta_k(d) \vert \le C_{k, \ell} \cdot \E_{X_d \sim \ttau_d}\{\vert X_d\vert^{\ell-k}\} \le C_{k, \ell}.
\end{align}

To conclude, we reconsider Eq.~\eqref{eq:Ortho}. Let $\bM(d) = (M_{k, q}(d))_{0 \le k, q \le \ell - 1} \in \R^{\ell\times \ell}$ be the matrix with entries $M_{k, q}(d) \equiv \<\tQ^{(d)}_{k},x^{q}\>$, $\bxi(d) = (\xi_q(d))_{0 \le q \le \ell - 1} \in \R^{\ell}$ the vector with entries $\xi_{q}(d) \equiv h_{d, \ge \ell}^{(q)}(0)/(q!d^{q/2})$, and $\bDelta(d) = (\Delta_0(d), \ldots, \Delta_{\ell - 1}(d))^\sT \in \R^\ell$. We can therefore rewrite this equation as
\begin{align}
\bM(d)\bxi(d) = \bDelta(d)/ d^{\ell/2}.
\end{align}
As $d\to\infty$, $\bM(d)$ converges entrywise to $\bM(\infty) = (M_{k, q}(\infty))_{0 \le k, q \le \ell - 1}$, whereby 
\[
M_{k,q}(\infty) \equiv \E_{G \sim \cN(0, 1)}[\He_{k}(G) G^{q}]/\sqrt{k!}.
\]
Since $\bM(\infty)$ is non-singular (because the Hermite polynomials are a basis), it follows that $\sigma_{\min}(\bM(d))$ is bounded away from zero for $d$ large enough, and therefore $\sup_{d \ge 1} \sigma_{\max}(\bM(d)^{-1})<\infty$. Therefore combining with Eq. (\ref{eqn:Delta_bound_in_kernel_concentration_diagonal}), we get
\begin{align}
\| \bxi(d)\|_2 \le C_\ell \cdot \|\bDelta(d)\|_2 \cdot d^{-\ell/2} \le C_\ell \cdot d^{-\ell/2}.
\end{align}
Therefore, for any $0 \le k \le \ell - 1$, we have 
\begin{align}
\Big\vert h_{d, \ge \ell}^{(k)}(0)\Big\vert \le k! d^{k/2} \vert \xi_k(d) \vert \le C_{k, \ell} \cdot d^{-(\ell-k)/2}.
\end{align}
This proves the lemma. 
\end{proof}

\section{Hypercontractivity of general activation $\sigma$ for $ (\S^{d-1}(\sqrt d) , \Cyc_d)$}\label{sec:hyper_high}

Let us consider an activation function $\sigma : \R \to \R$ and denote for $\bx\in \S^{d-1}(\sqrt d)$ and $ \bw \in \S^{d-1}(1)$,
\[
\osigma ( \bx ; \bw ) =  \int_{\Cyc_d} \sigma (\< \bx, g \cdot \bw\>) \pi_d(\de g) = \frac{1}{d} \sum_{i=0}^{d-1} \sigma ( \< \bx , \bL^i \bw \> ),
\]
where $\bL \in \R^{d \times d}$ is the cyclic permutation matrix that shifts the coordinates by one (hence $\bL^i$ shifts the coordinates by $i$).

Denote $\osigma_{>\ell} = \oproj_{>\ell} \osigma$ the projection of $\osigma$ orthogonal to cyclic polynomials of degree less or equal to $\ell$. From the discussion in Section \ref{sec:invariant_polynomials}, we have 
\[
\oproj_{>\ell} \osigma (  \, \cdot \, ; \bw ) = \oproj_{>\ell} \cS [ \sigma ( \< \, \cdot \, , \bw\> / \sqrt{d} )] = \cS \tproj_{>\ell} [\sigma ( \<  \, \cdot \, , \bw \> / \sqrt{d} ) ] \, ,
\]
where $\cS : L^2 ( \S^{d-1}(\sqrt d) ) \to L^2 ( \S^{d-1}(\sqrt d) , \Cyc_d ) $ is the symmetrization operator defined in Section \ref{sec:invariant_class} and $\tproj_{>\ell} : L^2 (\S^{d-1}(\sqrt d)) \to L^2 ( \S^{d-1}(\sqrt d) )$ is the projection orthogonal to (general) polynomials of degree less or equal to $\ell$ in $L^2 (\S^{d-1}(\sqrt d))$ (see Section \ref{sec:technical_background}). Hence, denoting $\sigma_{>\ell} = \tproj_{>\ell} \sigma$, we have
\begin{equation}\label{eq:formula_osigma_sigma_cyclic}
\osigma_{>\ell} ( \bx ; \bw ) =  \frac{1}{d} \sum_{i=0}^{d-1} \sigma_{>\ell} ( \< \bx , \bL^i \bw \> ) \, .
\end{equation}

\begin{proposition}\label{prop:assumption2a_cyclic}
Consider fixed integers $m \ge 1$ and $\ell \geq 4m$. Let $\sigma : \R \to \R$ be a differentiable activation function such that $| \sigma(x)| , | \sigma ' (x) |  \leq c_0 \exp (c_1 x^2 / (8m) )$ for some constants $c_0 > 0$ and $c_1 <1 $.  Let $\bx \sim \Unif(\S^{d-1} (\sqrt{d} ))$ and $\bw \sim \Unif ( \S^{d-1} (1))$, then for any $\eps > 0$,
\begin{equation}\label{eq:bound_hyper_lem}
\E_{\bx,\bw} \Big[ \osigma_{>\ell} ( \bx ; \bw ) ^{2m} \Big]^{1/(2m)} = d^{\eps -  1/2} \cdot O_{d} (1).
\end{equation}
\end{proposition}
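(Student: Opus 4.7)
I would expand the $(2m)$-th moment using Eq.~\eqref{eq:formula_osigma_sigma_cyclic}:
\begin{equation*}
d^{2m}\,\E_{\bx,\bw}\big[\osigma_{>\ell}(\bx;\bw)^{2m}\big]=\sum_{\bi\in [d]^{2m}}T(\bi),\qquad T(\bi):=\E_{\bx,\bw}\Big[\prod_{j=1}^{2m}\sigma_{>\ell}(\langle\bx,\bL^{i_j}\bw\rangle)\Big],
\end{equation*}
and split $[d]^{2m}$ into \emph{paired} tuples, in which every value occurring among $i_1,\dots,i_{2m}$ appears at least twice, and \emph{unpaired} tuples, in which some value appears exactly once. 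The plan is that paired tuples produce the target rate $O(d^{-m+O(m\eps)})$, while unpaired tuples are negligible thanks to the orthogonality of $\sigma_{>\ell}$ to polynomials in $\bx$ of degree $\le\ell$.

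The paired family contains at most $O(d^m)$ tuples, since each is determined by at most $m$ distinct values in $[d]$. For each paired $\bi$, H\"older's inequality together with Jensen's inequality applied to the cyclic average gives
\begin{equation*}
|T(\bi)|\le \|\sigma_{>\ell}(\langle\bx,\bw\rangle)\|_{L^{2m}}^{2m}=O(1),
\end{equation*}
where boundedness uses that the growth condition $|\sigma(x)|\le c_0 e^{c_1 x^2/(8m)}$ with $c_1<1$ makes $\sigma(\langle\bx,\bw\rangle)$ lie in $L^{2m}$ uniformly in $d$, while the subtracted low-degree piece $\sigma_{\le\ell}(\langle\bx,\bw\rangle)$ is a polynomial of fixed degree controlled by sphere hypercontractivity (Lemma~\ref{lem:hypercontractivity_sphere}). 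Summing over paired tuples yields a contribution of size $O(d^m)$ to $d^{2m}\,\E[\osigma_{>\ell}^{2m}]$, i.e.\ $O(d^{-m})$ after dividing by $d^{2m}$, which matches the target after a $(2m)$-th root.

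For unpaired tuples, let $j^\ast$ index a uniquely occurring value and set $\bv_j=\bL^{i_j}\bw$. Expanding each factor in Gegenbauer polynomials of degree $>\ell$ and integrating out $\bx$ via the addition theorem, the integral decomposes as a sum over pairings of $\{1,\dots,2m\}$; the isolated factor $j^\ast$ must contract with some other index $j$, introducing a weight $h_{d,>\ell}(\langle\bw,\bL^{i_j-i_{j^\ast}}\bw\rangle)$, where $h_{d,>\ell}(u):=\sum_{k>\ell}\xi_{d,k}^2 B(d,k)Q_k^{(d)}(\sqrt d\, u)$. The orthogonality of $\sigma_{>\ell}$ to polynomials of degree $\le\ell$, together with the derivative bounds of Lemmas~\ref{lem:bound_h_d_ge_ell_derivative_uniform} and~\ref{lem:h_d_derivative_bound_at_0} applied to $h_{d,>\ell}$ via Taylor expansion, gives $|h_{d,>\ell}(u)|=O(d^{-\ell/2+O(\ell\eps)})$ whenever $|u|\le d^{-1/2+\eps}$. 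The Hanson--Wright inequality (Lemma~\ref{lem:Hanson_Wright}) bounds $\max_{r\ne 0}|\langle\bw,\bL^r\bw\rangle|$ by $d^{-1/2+\eps}$ with probability $1-o(1)$, on which typical event $|T(\bi)|=O(d^{-\ell/2+O(m\eps)})$; the atypical event is absorbed by the crude $L^{2m}$ bound. Summed over the $O(d^{2m})$ unpaired tuples and divided by $d^{2m}$, this contributes $O(d^{-\ell/2+O(m\eps)})$, which is dominated by the paired piece since $\ell\ge 4m$ gives $d^{-\ell/2}\le d^{-2m}\ll d^{-m}$. The main technical obstacle is that the cited lemmas on $h_{d,>\ell}$ require $\sigma\in C^\ell$ while here $\sigma$ is only $C^1$; I would resolve this by truncating $\sigma=\sigma^{(\le D)}+\sigma^{(>D)}$ at a large constant $D=D(\ell)$ in the Gegenbauer basis, applying the argument above to the smooth polynomial part $\sigma^{(\le D)}$, and controlling the $L^{2m}$ contribution of the residual $\sigma^{(>D)}$ by Jensen's inequality and the exponential growth assumption on $\sigma$.
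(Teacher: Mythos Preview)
Your paired-versus-unpaired decomposition mirrors the paper's Gaussian proof (Proposition~\ref{prop:gaussian_cyclic}), and the paired bound is fine. The gap is in the unpaired step. You assert that after expanding each $\sigma_{>\ell}(\langle\bx,\bv_j\rangle)$ in Gegenbauer polynomials and integrating out $\bx$, ``the integral decomposes as a sum over pairings of $\{1,\dots,2m\}$'' with the isolated index contributing a factor $h_{d,>\ell}(\langle\bw,\bL^{i_j-i_{j^*}}\bw\rangle)$. This is not true: the addition formula~\eqref{eq:ProductGegenbauer} handles products of \emph{two} Gegenbauer polynomials, but there is no Wick-type formula reducing $\E_\bx\big[\prod_{j=1}^{2m}Q_{k_j}^{(d)}(\langle\bx,\bv_j\rangle)\big]$ to a sum over perfect matchings of pairwise kernels. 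The terms you would obtain from a naive Gegenbauer expansion are not controlled by a single factor of $h_{d,>\ell}$ at an off-diagonal correlation; they involve higher-order mixed moments that do not obviously vanish or shrink. Your proposed truncation workaround for the $C^\ell$ issue does not address this structural problem, since even for polynomial $\sigma^{(\le D)}$ the pairing decomposition fails.

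The paper handles this by first transferring to $\bg\sim\normal(0,\id_d)$ (bounding $\|\osigma_{>\ell}(\sqrt d\,\bg/\|\bg\|_2;\bw)-\barphi_{>\ell}(\bg;\bw)\|_{2m}$ via mean-value and low-degree hypercontractivity arguments, Steps~1--4 of Section~\ref{sec:proof_cyclic_hyp}), and then proving the Gaussian analogue (Proposition~\ref{prop:gaussian_cyclic}). The crucial mechanism for unpaired tuples in the Gaussian case is Lemma~\ref{lem:bound_isolated_index}: write the joint density of $(g_1,\dots,g_p)$ as the product density times $\exp\{\tbg^\sT\bM\tbg/2\}$ with $\bM=\id_p-\bSigma^{-1}$ small, Taylor-expand this exponential to order $q=2m$, and use orthogonality of $\psi=\varphi_{>\ell}$ to polynomials of degree $\le 2q$ (this is precisely where $\ell\ge 4m$ enters) to annihilate all but the remainder, which carries a factor $\|\bM\|_{\op}^{q+1}=O(d^{(q+1)(\eps-1/2)})$. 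This density-level Taylor expansion is the missing idea; it is what replaces the nonexistent higher-order addition theorem you implicitly invoked.
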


\subsection{Proof of Proposition \ref{prop:assumption2a_cyclic}}
\label{sec:proof_cyclic_hyp}

The goal of this proof is to replace $\bx \sim \Unif ( \S^{d-1} (\sqrt{d} ))$ by $\bg \sim \normal ( 0 , \id_d)$ and using Proposition \ref{prop:gaussian_cyclic} (stated in Section \ref{sec:gaussian_cyc}), which is the Gaussian equivalent of Proposition \ref{prop:assumption2a_cyclic}.

Recall that $\sigma_{>\ell}$ is defined as the projection of $\sigma$ orthogonal to degree $\ell$ polynomials with respect to the distribution $\< \bx , \be \>$ with $\bx \sim \Unif ( \S^{d-1} (\sqrt{d} ))$ and $ \| \be \|_2 = 1$ arbitrary. We can write it explicitly in terms of Gegenbauer polynomials: 
\[
\sigma_{>\ell} (x) = \sigma(x) - \sum_{k = 0}^\ell \xi_{d,k} B( \S^{d-1} ; k ) Q_k ( \sqrt{d} x).
\]
Let us introduce $\varphi_{>\ell}$ defined as the projection of $\sigma$ orthogonal to degree $\ell$ polynomials with respect to the Gaussian measure. It is given explicitely by
\[
\varphi_{>\ell} (x) = \sigma(x) - \sum_{k = 0}^\ell \frac{\mu_k (\sigma)}{k!} \He_k (x),
\]
where $\He_k$ denote the $k$-th Hermite polynomial (see Section \ref{sec:Hermite} for definitions).

Consider the symmetrized activation functions
\[
\begin{aligned}
\osigma_{>\ell} ( \bx ; \bw ) = &~ \osigma ( \bx ; \bw ) - \sum_{k = 0}^\ell \xi_{d,k} B( \S^{d-1} ; k ) \barQ_k ( \bx ; \bw) \, ,\\
\barphi_{>\ell} ( \bg ; \bw ) = &~ \osigma ( \bg ; \bw ) - \sum_{k = 0}^\ell  \frac{\mu_k (\sigma)}{k!}  \barHe_k ( \bg ; \bw) \, ,
\end{aligned}
\]
where we denoted the symmetrized polynomials
\[
\begin{aligned}
\barQ_k ( \bx ; \bw ) =&~ \frac{1}{d}  \sum_{i = 0 }^{d-1} Q_k (\sqrt{d} \< \bx , \bL^i \bw \>) \, , \\
\barHe_k ( \bg ; \bw ) =&~ \frac{1}{d}  \sum_{i = 0 }^{d-1} \He_k ( \< \bg , \bL^i \bw \>) \, .
\end{aligned}
\]

Consider $\bx \sim \Unif(\S^{d-1} (\sqrt{d} ))$, $\bg \sim \normal( 0 , \id_d)$ and $\bw \sim \Unif ( \S^{d-1} (1) )$. Because $\< \bx , \be \>$ converges in distribution to a normal distribution, we expect the moments of $\osigma_{>\ell} ( \bx ; \bw )$ to converge to the moments of $\barphi_{>\ell} ( \bg ; \bw )$. Let us show that this convergence occurs with rate $O_d (d^{\eps - 1/2})$. By triangle inequality, we have
\[
\E_{\bg , \bw} \Big[ \big(\osigma_{>\ell} ( \sqrt{d} \bg / \| \bg \|_2 ; \bw ) - \barphi_{>\ell} ( \bg ; \bw )  \big)^{2m} \Big]^{1/(2m)} \leq R_1 + R_2 + R_3 + R_4 \, ,
\]
with
\[
\begin{aligned}
R_1 = &~ \E_{\bg , \bw} \Big[ \big(\osigma ( \sqrt{d} \bg / \| \bg \|_2 ; \bw ) - \osigma ( \bg ; \bw )  \big)^{2m} \Big]^{1/(2m)} \, , \\
R_2 = &~ \E_{\bg , \bw} \Big[ \big(A_{\leq 2} ( \sqrt{d} \bg / \| \bg \|_2 ; \bw ) - B_{\leq 2} ( \bg ; \bw )  \big)^{2m} \Big]^{1/(2m)} \, , \\
R_3 = &~ \E_{\bx , \bw} \Big[ A_{[3:\ell]} ( \bx ; \bw )^{2m} \Big]^{1/(2m)} \, , \\
R_4= &~ \E_{\bg , \bw} \Big[ B_{[3:\ell]} ( \bg ; \bw )^{2m} \Big]^{1/(2m)} \, , \\
\end{aligned}
\]
where we denoted $[3:\ell] = \{ 3 , \ldots, \ell \}$ and for any subset $S \subset \{ 0 , \ldots , \ell \}$,
\[
\begin{aligned}
A_S ( \bx ; \bw ) =&~ \sum_{k \in S} \xi_{d,k} B( \S^{d-1} ; k ) \barQ_k ( \bx ; \bw) \, ,\\
B_S ( \bg ; \bw ) = &~  \sum_{k \in S} \frac{\mu_k (\sigma)}{k!}  \barHe_k ( \bg ; \bw) \, .
\end{aligned}
\]

\noindent
{\bf Step 1. Bound on $R_1$. }  

Denote $\tau = \| \bg \|_2 / \sqrt{d}$ and $\bx = \sqrt{d} \bg / \| \bg \|_2$, such that $\tau$ and $\bx$ are independent, and $\bx \sim \Unif ( \S^{d-1} (\sqrt{d} ))$. By the mean value theorem, there exists $\ttau$ on the line segment between $1$ and $\tau$ such that
\[
\osigma ( \tau \cdot \bx ; \bw ) - \osigma ( \bx ; \bw ) = (\tau - 1) \< \nabla_{\bx} \osigma ( \ttau \cdot \bx ; \bw ) , \bx \>.
\]
By Cauchy-Schwarz inequality, we get
\[
\begin{aligned}
R_1 = &~ \E_{\tau , \bx , \bw} \Big[ \big( \osigma ( \tau \cdot \bx ; \bw ) - \osigma ( \bx ; \bw )  \big)^{2m} \Big]^{1/(2m)} \\
=&~ \E_{\tau , \bx , \bw} \Big[  (\tau - 1)^{2m} \< \nabla_{\bx} \osigma ( \ttau \cdot \bx ; \bw ) , \bx \>^{2m} \Big]^{1/(2m)}  \\
\leq &~  \E_{\tau} \big[  (\tau - 1)^{4m} \big]^{1/(4m)} \cdot \E_{\tau , \bx , \bw} \Big[  \< \nabla_{\bx} \osigma ( \ttau \cdot \bx ; \bw ) , \bx \>^{4m} \Big]^{1/(4m)} 
\end{aligned}
\]
Let us bound the first term:
\begin{equation}\label{eq:R_1_sub1}
\begin{aligned}
 \E_{\tau} \big[  (\tau - 1)^{4m} \big]^{1/(4m)} \leq &~ \E_{\tau} \big[  (\tau^2 - 1)^{4m} \big]^{1/(4m)} \leq C_{4m} \E_{\tau} \big[ (\tau^2 - 1 )^2 \big]^{1/2} = C_{4m}  \sqrt{\frac{2}{d}} \, ,
\end{aligned}
\end{equation}
where we used in the first inequality that $| \tau - 1 | \leq | \tau^2 - 1|$ for $\tau \geq 0$; in the second inequality that $\tau^2 - 1$ is a degree 2 polynomial in $\bg \sim \normal ( 0 , \id_d )$ and verifies the hypercontractivity property of Lemma \ref{lem:hyper_multi_gaussian}; last equality, that $d \cdot \tau^2 = \|\bg \|_2^2$ follows a chisquared distribution of degree $d$.

For the second term, we have
\[
\< \nabla_{\bx} \osigma ( \ttau \cdot \bx ; \bw ) , \bx \> = \frac{1}{d} \sum_{i = 0 }^{d-1} \< \bx , \bL^i \bw \> \sigma^{(1)} ( \< \ttau \cdot \bx , \bL^i \bw \>) \, .
\]
Recall that $\ttau \cdot \bx$ is between $\tau \cdot \bx$ and $\bx$ which have marginal distributions $\bg \sim \normal(0, \id_d)$ and $\bx \sim \Unif ( \S^{d-1} (\sqrt{d}))$ respectively. Denote $x_1$ the first coordinate of $\bx$ (therefore $\tau \cdot x_1 \sim \normal (0,1)$). By Jensen's inequality and using that by rotation $\< \bx , \bL^i \bw \>$ has the same distribution as $x_1$, we get
\begin{equation}\label{eq:R_1_sub2}
\begin{aligned}
 \E_{\tau , \bx , \bw} \Big[  \< \nabla_{\bx} \osigma ( \ttau \cdot \bx ; \bw ) , \bx \>^{4m} \Big] \leq  &~  \E_{\tau , x_1 } \Big[  x_1^{4m} \sigma ' ( \ttau \cdot x_1)^{4m} \Big] \\
 \leq &~ C \cdot \E_{G \sim \normal(0,1)} \Big[ \max(G^{4m} , 1) \exp \Big\{ c_1 \max(G^2 , 1) / 2 \Big\} \Big] \\
 =&~ O_d (1) \, ,
\end{aligned}
\end{equation}
where we used that $c_1 < 1$.

Combining Eqs.~\eqref{eq:R_1_sub1} and \eqref{eq:R_1_sub2} yields 
\begin{equation}\label{eq:bound_R1}
R_1 = d^{-1/2} \cdot O_d(1).
\end{equation}

\noindent
{\bf Step 2. Bound on $R_3$. }

We have
\[
\begin{aligned}
R_3  =  \E_{\bx , \bw } \Big[ A_{[3:\ell]} (\bx ; \bw )^{2m} \Big]^{1/(2m)} \leq &~ C_{2m} \E_{\bx} \Big[  \E_{\bw} \Big[ A_{[3:\ell]} (\bx ; \bw )^2 \Big]^{m} \Big]^{1/(2m)}\\
\leq &~ C_m C_{2m} \E_{\bx, \bw} \Big[ A_{[3:\ell]} (\bx ; \bw )^2 \Big]^{1/2} \, ,
\end{aligned}
\]
where in the first inequality we used hypercontractivity of low-degree polynomials on the sphere with respect to $\bw$ (Lemma \ref{lem:hypercontractivity_sphere}), and in the second we used hypercontractivity of low-degree symmetric functions with respect to $\bx$ (Lemma 6 in \cite{mei2021generalization}). By Lemma \ref{lem:isometry_of_orthonomal_polynomials}, we have
\[
\begin{aligned}
 \E_{\bx,\bw} \Big[ A_{[3:\ell]} (\bx ; \bw )^2 \Big] = \sum_{k = 3}^\ell \xi_{d,k}^2  B( \S^{d-1} ; k ) \cdot \frac{D( \S^{d-1} ; k )}{B( \S^{d-1} ; k )} = O_d (d^{-1}).
\end{aligned}
\]
We deduce that 
\begin{equation}\label{eq:bound_R3}
R_3 = d^{-1/2} \cdot O_d(1).
\end{equation}

\noindent
{\bf Step 3. Bound on $R_4$. }

Similarly to $R_3$, we have
\[
\begin{aligned}
R_4  =  \E_{\bg , \bw } \Big[ B_{[3:\ell]} (\bg ; \bw )^{2m} \Big]^{1/(2m)} \leq &~ C_{2m} \E_{\bw} \Big[  \E_{\bg} \Big[ B_{[3:\ell]} (\bg ; \bw )^2 \Big]^{m} \Big]^{1/(2m)}\\
\leq &~ C_m C_{2m} \E_{\bg, \bw} \Big[ B_{[3:\ell]} (\bg ; \bw )^2 \Big]^{1/2} \, ,
\end{aligned}
\]
where in the first inequality we used hypercontractivity of low-degree polynomials with respect to $\bg$ (Lemma \ref{lem:hyper_multi_gaussian}), and in the second we used hypercontractivity of low-degree symmetric functions with respect to $\bw$.

Following the proof of Proposition \ref{prop:gaussian_cyclic}, by setting $m = 1$ and $\barphi_{>2} ( \bg ; \bw ) = B_{[3:\ell]} (\bg ; \bw )$, we have for any $\eps >0$,
\begin{equation}\label{eq:bound_R4}
R_4 = d^{\eps -1/2} \cdot O_d(1).
\end{equation}

\noindent
{\bf Step 4. Conclude. }  

The bound on $R_2$ is more technical and we defer it to Section \ref{sec:tech_cyc}. By Lemma \ref{lem:hyper_degree_2}, we have
\begin{equation}\label{eq:bound_R2}
R_2 = d^{-1/2} \cdot O_d(1).
\end{equation}
Hence combining the bounds \eqref{eq:bound_R1}, \eqref{eq:bound_R3}, \eqref{eq:bound_R4} and \eqref{eq:bound_R2}, we obtain for any $\eps >0$,
\[
\E_{\bx , \bw} \Big[ \osigma_{>\ell} ( \bx ; \bw )^{2m} \Big]^{1/(2m)} \leq \E_{\bg , \bw} \Big[ \barphi_{>\ell} ( \bg ; \bw )^{2m} \Big]^{1/(2m)}  + O_d(d^{\eps -1/2}) \, .
\]
Using Proposition \ref{prop:gaussian_cyclic} concludes the proof.

\subsection{Proof in the Gaussian case}\label{sec:gaussian_cyc}

Recall that we defined
\begin{equation}\label{eq:expansion_barphi}
\barphi_{>\ell} ( \bg ; \bw ) = \frac{1}{d} \sum_{i = 0}^{d-1} \varphi_{>\ell} ( \< \bg , \bL^i \bw \> ),
\end{equation}
where 
\begin{equation}\label{eq:varphi_def}
\varphi_{>\ell} (x ) = \sigma(x) - \sum_{k = 0}^{\ell} \frac{\mu_k ( \sigma) }{k!} \He_k (x).
\end{equation}
Let us now state and prove the Gaussian version of Proposition \ref{prop:assumption2a_cyclic}.

\begin{proposition}\label{prop:gaussian_cyclic}
Consider fixed integers $m \ge 1$ and $\ell \geq 4m$. Let $\sigma : \R \to \R$ be an activation function such that $| \sigma(x)| \leq c_0 \exp (c_1 x^2 / (8m) )$ for some constants $c_0 > 0$ and $c_1 <1 $.  Let $\bg \sim \normal ( 0 , \id_d )$ and $\bw \sim \Unif ( \S^{d-1} (1))$, then
\begin{equation}\label{eq:gaussian_cyclic}
\E_{\bg,\bw} \Big[ \barphi_{>\ell} ( \bg ; \bw ) ^{2m} \Big]^{1/(2m)} = d^{ -  1/2} \cdot O_{d} (1).
\end{equation}
\end{proposition}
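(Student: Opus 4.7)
My plan is to expand $\varphi_{>\ell}$ in the Hermite basis, reduce the $\bg$-expectation in $\E_{\bg,\bw}[\barphi_{>\ell}^{2m}]$ to a combinatorial sum via the multivariate Wick (Isserlis) formula for products of Hermite polynomials at jointly Gaussian arguments, and then control the $\bw$-moments of the resulting correlation products $\gamma_s(\bw) := \langle \bw, \bL^s\bw\rangle$. Writing $\varphi_{>\ell}(x) = \sum_{k > \ell} c_k \He_k(x)$ with $c_k = \mu_k(\sigma)/k!$ and $Z_i := \langle \bg, \bL^i \bw\rangle$, the $\bg$-expectation reduces to
\begin{equation*}
\E_\bg \Big[ \prod_{j=1}^{2m} \He_{k_j}(Z_{i_j}) \Big] = \sum_\pi \prod_{(a,b)\in\pi} \gamma_{i_{v(b)}-i_{v(a)}}(\bw),
\end{equation*}
where $\pi$ ranges over perfect matchings of the $\sum_j k_j$ marks (vertex $j$ carries $k_j$ marks, $v(a)$ denotes the vertex containing mark $a$), with pairs inside a single vertex forbidden.

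The next step is to organize the outer sum over $\bi \in [d]^{2m}$ by the partition $\cB(\bi) = (B_1, \ldots, B_s)$ of $[2m]$ induced by equality of indices: there are $\asymp d^s$ indices $\bi$ realizing each such partition. A Wick edge between marks at two vertices in the same block contributes $\gamma_0 = 1$, while a between-block edge contributes $\gamma_t(\bw)$ for some $t \ne 0$. The key $\bw$-moment estimate to prove is
\begin{equation*}
\Big| \E_\bw \Big[\prod_l \gamma_{t_l}(\bw)^{p_l}\Big]\Big| \le C_{\mathbf{p}}\, d^{-(\sum_l p_l)/2} \qquad (t_l \ne 0, \; p_l \ge 1),
\end{equation*}
which follows by passing from $\bw \sim \Unif(\S^{d-1})$ to a normalized Gaussian vector and invoking the Hanson--Wright inequality (Lemma \ref{lem:Hanson_Wright}), in the spirit of Lemma \ref{lem:cyclic_integral}. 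Combined with the Wick enumeration, every triple $(\cB, \bk, \pi)$-contribution is bounded by $d^{s-2m} \prod_j |c_{k_j}| \cdot N_\pi \cdot d^{-N_\mathrm{ext}/2}$, with $N_\pi$ a Wick counting factor and $N_\mathrm{ext}$ the number of between-block edges.

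The dominant term comes from partitions with $s=m$ blocks, each of size $2$, together with matchings that pair all marks between the two vertices of each block (forcing $k_{j_1}=k_{j_2}$ and $N_\mathrm{ext}=0$); summing over the $(2m-1)!!$ pair partitions yields a total of order
\begin{equation*}
d^{-m} \cdot (2m-1)!! \cdot \sum_{k_1,\dots,k_m > \ell} \prod_{\alpha=1}^m c_{k_\alpha}^2\,k_\alpha! \;=\; d^{-m} \cdot (2m-1)!! \cdot \|\varphi_{>\ell}\|_{L^2(\gamma)}^{2m}.
\end{equation*}
All other configurations are strictly subleading: a block of size $\ge 3$ forces $s \le m-1$ and costs a factor $d^{-1}$, while any singleton block forces at least $\ell+1 \ge 4m+1$ external Wick edges at that vertex, producing an additional factor $d^{-(\ell+1)/2}$ that more than offsets any extra $d$-power from a larger $s$. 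Taking $2m$-th roots yields $\E_{\bg,\bw}[\barphi_{>\ell}^{2m}]^{1/(2m)} = d^{-1/2} \cdot O_d(1)$.

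The main obstacle is the absolute convergence of the triple sum $\sum_{\bi, \bk, \pi}$ of the per-term bounds. Here I would combine $|c_k| \le \|\sigma\|_{L^2(\gamma)}/\sqrt{k!}$ (Cauchy--Schwarz) with the assumption $|\sigma(x)| \le c_0 \exp(c_1 x^2/(8m))$, $c_1 < 1$, which guarantees that $\varphi_{>\ell}$ has finite $(4m)$-th Gaussian moments; this controls the Hermite tails and Wick factorial combinatorics uniformly in $d$, making the rearrangement rigorous and giving the claimed bound.
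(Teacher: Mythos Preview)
Your Wick/diagram approach is a genuinely different route from the paper's, and the heuristic you outline (pair partitions dominate, singletons are suppressed by many external edges) is correct in spirit. But the step you yourself flag as ``the main obstacle''---absolute convergence of the sum over Hermite degrees $\bk$ and diagrams $\pi$---is a real gap, and the fix you sketch does not close it. The paper sidesteps the problem entirely by never expanding $\varphi_{>\ell}$ in the Hermite basis.

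The paper's argument runs as follows. First condition on the high-probability event $\cA_\eps = \{\sup_{k\in[d-1]} |\langle\bw,\bL^k\bw\rangle| \le Cd^{\eps-1/2}\}$; on $\cA_\eps^c$ a crude H\"older bound times $\P(\cA_\eps^c)^{1/2}$ suffices. On $\cA_\eps$, for any index tuple $\bi$ containing an \emph{isolated} index, Lemma~\ref{lem:bound_isolated_index} bounds $|\E_\bg[\prod_j \varphi_{>\ell}(Z_{i_j})]|$ by $C' d^{(2m+1)(\eps-1/2)}$ via a density trick: rewrite the expectation under $\normal(0,\id_p)$ with Radon--Nikodym factor $\exp(\tbg^\sT\bM\tbg/2)/\sqrt{\det\bSigma}$, Taylor-expand the exponential to order $q=2m$, observe that every Taylor term of order $\le q$ is a polynomial of degree $\le 4m \le \ell$ in $\tg_1$ and hence is annihilated by $\varphi_{>\ell}(\tg_1)$, and bound the remainder by $\|\bM\|_\op^{q+1}$. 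The at most $m^{2m}d^m$ tuples $\bi$ with no isolated index are handled by the H\"older bound $\E_G[\varphi_{>\ell}(G)^{2m}]$. No Hermite expansion of $\varphi_{>\ell}$ appears anywhere; only its orthogonality to degree-$\ell$ polynomials and the pointwise bound $|\varphi_{>\ell}(x)|\le c_0' e^{c_1' x^2/(4m)}$ are used.

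In your route, the obstruction is concrete. The number of admissible diagrams on $K=\sum_j k_j$ marks is of order $(K-1)!!$, while the coefficient bound you invoke is only $\prod_j |c_{k_j}| \le A^{2m}/\prod_j\sqrt{k_j!}$; by Stirling, already on the diagonal $k_1=\cdots=k_{2m}=k$ the ratio $(K-1)!!/\prod_j\sqrt{k_j!}$ grows like $(2m)^{mk}$, so the $\bk$-sum of per-term absolute bounds diverges. The tail assumption $|\sigma(x)|\le c_0 e^{c_1 x^2/(8m)}$ buys $\varphi_{>\ell}\in L^{4m}(\gamma)$ but does \emph{not} improve the generic decay $|c_k|\lesssim 1/\sqrt{k!}$: for any $p\ge 2$, $|c_k|\,k! = |\E[\sigma(G)\He_k(G)]| \le \|\sigma\|_{L^p}\|\He_k\|_{L^{p'}} \le \|\sigma\|_{L^p}\sqrt{k!}$. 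Moreover, your $\bw$-moment constant $C_{\mathbf p}$ necessarily grows at least like $(CP)^P$ with $P=\sum_l p_l$ (this is the hypercontractivity cost for a degree-$2P$ polynomial in $\bw$), so when a singleton vertex contributes $k_j$ external edges and you sum over all $k_j>\ell$, the constants blow up in tandem with the diagram count. To make the diagrammatic method rigorous here you would essentially have to prove a bound on $\E_\bg[\prod_j \varphi_{>\ell}(Z_{i_j})]$ for an isolated index that exploits the small off-diagonal correlations \emph{without} term-by-term Hermite control---which is exactly what Lemma~\ref{lem:bound_isolated_index} supplies.
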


\begin{proof}[Proof of Proposition \ref{prop:gaussian_cyclic}]
 Let us expand $\barphi_{>\ell}$ as in Eq.~\eqref{eq:expansion_barphi}
 \[
\begin{aligned}
 \E_{\bg,\bw} \Big[ d^{2m} \cdot \barphi_{>\ell} ( \bg ; \bw )^{2m} \Big] = &~ \sum_{0 \leq i_1 , \ldots , i_{2m} \leq d-1} \E_{\bg,\bw} \Big[ \prod_{k \in [2m]} \varphi_{>\ell} ( \< \bg , \bL^{i_k} \bw \> ) \Big].
 \end{aligned}
 \]
Let us consider the event
\[
\cA_\eps : = \Big\{ \bw \in \S^{d-1} ( 1 ) ;  \sup_{k \in [d-1]} | \< \bw , \bL^k \bw \> | \leq C d^{\eps - 1/2} \Big\} \, ,
\]
and for each set of indices $\cI = \{ i_1 , \ldots , i_{2m} \} $, consider separately the expectation over $\cA_\eps$ and $\cA_\eps^c$:
\[
 \E_{\bg,\bw} \Big[ \prod_{i \in \cI} \varphi_{>\ell} ( \< \bg , \bL^{i} \bw \> ) \Big] = A + B,
\]
where
 \[
 \begin{aligned}
A := &~ \E_{\bw} \Big[ \ones_{\cA_\eps} \E_{\bg} \Big[ \prod_{i \in \cI} \varphi_{>\ell} ( \< \bg , \bL^{i} \bw \> ) \Big] \Big], \\
B := &~ \E_{\bw} \Big[ \ones_{\cA_\eps^c}  \E_\bg \Big[ \prod_{i \in \cI} \varphi_{>\ell} ( \< \bg , \bL^{i} \bw \> ) \Big] \Big] .
 \end{aligned}
 \]
 By Cauchy-Schwarz and Jensen's inequality, we have
 \[
 B  \leq \P (\cA_\eps^c)^{1/2} \cdot \E_{\bg,\bw} \Big[ \prod_{i \in \cI} \varphi_{>\ell} ( \< \bg , \bL^{i} \bw \> )^2 \Big]^{1/2} \, ,
 \]
 with
 \[
 \begin{aligned}
\E_{\bg,\bw} \Big[ \prod_{i \in \cI} \varphi_{>\ell} ( \< \bg , \bL^{i} \bw \> )^2 \Big]^{1/2} 
\leq &~    \prod_{i \in \cI} \E_{\bg, \bw} \Big[ \varphi_{>\ell} ( \< \bg , \bL^{i} \bw \> )^{4m} \Big]^{1/(4m)} \\
  =&~  \E_{G} \Big[ \varphi_{>\ell} (G )^{4 m} \Big]^{1/2}   =  O_d (1) \, ,
  \end{aligned}
 \]
 where we used H\"older's inequality and that $\varphi_{>\ell}$ is the sum of a degree $\ell $ polynomial and $\sigma$ with $| \sigma (x) | \leq c_0 \exp ( c_1 x^2 / (8m))$, with constants $c_0>0$ and $c_1 <1$. Combining these bounds and Lemma \ref{lem:proba_Aeps}, we deduce there exists a constant $C$ independent of $d$ and $\cI$ such that
 \begin{equation}\label{eq:bound_B}
 B \leq C  \exp ( - c d^{2\eps} ).
 \end{equation}
 Similarly, by H\"older's inequality, we have the following first bound on $A$:
 \begin{equation}\label{eq:bound_A_1}
 A \leq \prod_{i \in \cI} \E_{\bg, \bw} \Big[ \varphi_{>\ell} ( \< \bg , \bL^{i} \bw \> )^{2m} \Big]^{1/(2m)} =   \E_{G} \Big[ \varphi_{>\ell} (G )^{2 m} \Big]  \leq C \, .
 \end{equation}
 
 Fix $\bw \in \cA_\eps$. Denote $\cI_0$ the set of distinct indices in $\cI$ and $p = | \cI_0 | \leq 2m$. Denote for each $i \in \cI_0$, $r_i$ the multiciplity of $i$ in $\cI$, and $g_i = \< \bg , \bL^i \bw \>$. We have $\sup_{i \neq j} | \E [ g_i g_j ] | \leq \sup_{k \in [d-1]} | \< \bw , \bL^k \bw \> | \leq C d^{\eps - 1/2}$ and $\E [ g_i^2 ] = 1$. Hence, if there exists $i \in \cI$ that appears only once, we have by taking $\psi (x) = \varphi_{>\ell} (\bx)$ and $q = 2m \leq \ell/2$ in Lemma \ref{lem:bound_isolated_index} stated below
 \[
 \Big\vert  \E_{\bg} \Big[ \prod_{i \in \cI} \varphi_{>\ell} ( \< \bg , \bL^{i} \bw \> ) \Big] \Big\vert \leq C' d^{(2m+1)(\eps - 1/2)} \, ,
 \]
 where $C'$ is independent of $\bw$. We deduce that
  \begin{equation}\label{eq:bound_A_2}
 A \leq C' d^{(2m+1)(\eps - 1/2)} \, .
 \end{equation}
 There are at most $m^{2m} d^{m}$ sets of indices $\cI$ with no isolated index. Hence, combining the bounds \eqref{eq:bound_B}, \eqref{eq:bound_A_1} and \eqref{eq:bound_A_2}, we get
  \[
\begin{aligned}
 \E_{\bg,\bw} [ d^{2m} \cdot \barphi_{>\ell} ( \bg ; \bw )^{2m} ] \leq & C d^{2m} \exp ( - c d^{2\eps} ) + Cm^{2m} d^{m} + C' d^{2m}  \cdot d^{(2m+1)(\eps - 1/2)}\, .
 \end{aligned}
 \]
 Taking $\eps \leq 1/(4m+2)$, we get
 \[
 \E_{\bg,\bw} [  \barphi_{>\ell} ( \bg ; \bw )^{2m} ]^{1/(2m)} = d^{-1/2} \cdot O_d (1) \, ,
 \]
 which concludes the proof.
\end{proof}

The proof of Proposition \ref{prop:gaussian_cyclic} relies on the following key lemma:

\begin{lemma}\label{lem:bound_isolated_index}
Let $q, p, m \ge 1$ be three integers such that $p \leq 2m$. Let $\psi : \R \to \R$ be a function such that $|\psi(x)| \leq c_0 \exp ( c_1 x^2 / (4m) )$ for some constants $c_0 > 0$ and $c_1 <1$. Furthermore, for all $k = 0 , \ldots , 2q$,
\[
\mu_k ( \psi ) = \E_G [ \psi (G) \He_k ( G) ] = 0,
\]
where $G \sim \normal (0,1)$, i.e., $\psi$ is orthogonal to all polynomials of degree less or equal to $q$ with respect to the standard normal distribution. Let $\bg = (g_1 , \ldots , g_p ) \sim \normal ( 0 , \bSigma)$ with $\Sigma_{11} = \ldots = \Sigma_{pp} = 1$ and $\sup_{i \neq j} | \Sigma_{ij} | \leq C d^{\eps - 1/2} $. Let $(r_1 , \ldots , r_p)$ be $p$ integers such that $r_1 + \ldots + r_p = 2m$ and there exists $k$ such that $r_k = 1$. Then there exists $C' >0$ depending only on $c_0, c_1 , C, q,m$ such that
\begin{equation}
\Big\vert \E_{\bg} \Big[ \prod_{k \in [ p]} \psi ( g_k )^{r_k} \Big] \Big\vert \leq C'  d^{(q+1 )(\eps - 1/2)}.
\end{equation}
\end{lemma}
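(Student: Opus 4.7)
The plan is to isolate $g_1$ via Gaussian conditioning and exploit the high-order vanishing of the Hermite coefficients of $\psi$. Without loss of generality, I take $r_1 = 1$. Writing $\bg_{-1} = (g_2, \ldots, g_p)$ and $\bsigma_1 = (\Sigma_{1j})_{j=2}^p$, standard Gaussian conditioning gives $g_1 \mid \bg_{-1} \sim \normal(\mu, v)$ with $\mu = \bsigma_1^\sT \bSigma_{-1,-1}^{-1} \bg_{-1}$ and $v = 1 - \bsigma_1^\sT \bSigma_{-1,-1}^{-1} \bsigma_1$. Because $p \leq 2m$ is fixed and $\|\bsigma_1\|_\infty \leq C d^{\eps - 1/2}$, the submatrix $\bSigma_{-1,-1}$ is $\id + O(d^{\eps - 1/2})$ with uniformly bounded inverse for $d$ large. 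Hence $\mu$ is a linear form in $\bg_{-1}$ with coefficients $O(d^{\eps - 1/2})$, and $|1 - v| = O(d^{2\eps - 1})$.

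Next, I will compute $\tilde\psi(\bg_{-1}) := \E[\psi(g_1) \mid \bg_{-1}]$ using Hermite expansion. The growth bound on $\psi$ ensures $\psi \in L^s(\normal(0,1))$ for every $s \geq 1$, so $\psi = \sum_{k \geq 2q+1} \alpha_k \He_k$ in $L^2$ with $|\alpha_k| \leq \|\psi\|_{L^2}/\sqrt{k!}$. The generating-function identity $\sum_k \E_Z[\He_k(\mu + \sqrt{v} Z)] t^k/k! = e^{t\mu - (1-v)t^2/2}$ yields
\[
\E_Z[\He_k(\mu + \sqrt{v} Z)] = \sum_{j=0}^{\lfloor k/2 \rfloor} \frac{k!(-(1-v)/2)^j}{j!(k-2j)!}\mu^{k-2j},
\]
and $L^2$-continuity of conditional expectation gives $\tilde\psi$ as the corresponding double series indexed by $k \geq 2q+1$ and $0 \leq j \leq \lfloor k/2 \rfloor$.

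The key estimate is that every monomial satisfies
\[
|\mu|^{k-2j} |1-v|^j \leq (C d^{\eps - 1/2})^{k-2j} \|\bg_{-1}\|^{k-2j} \cdot (C d^{2\eps - 1})^j = C^k d^{k(\eps-1/2)} \|\bg_{-1}\|^{k-2j},
\]
since $(k-2j)(\eps - 1/2) + j(2\eps - 1) = k(\eps - 1/2)$. Because every surviving index has $k \geq 2q+1$ and $\eps < 1/2$, this gives the pointwise bound
\[
|\tilde\psi(\bg_{-1})| \leq d^{(2q+1)(\eps - 1/2)} R(\bg_{-1}),
\]
where $R$ is a function with $\|R\|_{L^2}$ bounded uniformly in $d$: the factorial decay $|\alpha_k| \leq 1/\sqrt{k!}$ dominates the Gaussian moment growth of the polynomial prefactors $\|\bg_{-1}\|^{k-2j}$, and for $d$ large each additional $k$ picks up a small factor $d^{\eps - 1/2}$, giving a convergent geometric tail. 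Finally, Cauchy--Schwarz gives
\[
\Big|\E\Big[\prod_k \psi(g_k)^{r_k}\Big]\Big| = \Big|\E\big[\tilde\psi(\bg_{-1}) \prod_{k \geq 2}\psi(g_k)^{r_k}\big]\Big| \leq d^{(2q+1)(\eps-1/2)} \cdot \|R\|_{L^2} \cdot \Big\|\prod_{k \geq 2}\psi(g_k)^{r_k}\Big\|_{L^2},
\]
where the last factor is $O(1)$ since $c_1 r_k/(2m) \leq c_1 < 1$ ensures bounded moments of $\psi$ up to order $4m$. As $\eps - 1/2 < 0$ and $2q+1 \geq q+1$, this implies the weaker bound $d^{(q+1)(\eps-1/2)}$ claimed in the lemma. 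The main technical obstacle is establishing the uniform-in-$d$ control on $\|R\|_{L^2}$, which requires carefully pairing the super-exponential Hermite-coefficient decay with Gaussian moment estimates on polynomials in $\bg_{-1}$ to show the double series converges in $L^2$ uniformly in $d$.
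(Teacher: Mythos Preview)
Your approach is correct and takes a genuinely different route from the paper. The paper rewrites the expectation over $\bg \sim \normal(0,\bSigma)$ as one over independent Gaussians $\tilde{\bg} \sim \normal(0,\id_p)$ times the density-ratio factor $\exp(\tilde{\bg}^\sT \bM \tilde{\bg}/2)$ with $\bM = \id_p - \bSigma^{-1}$, Taylor-expands this exponential to order $q$, and uses that each power $(\tilde{\bg}^\sT \bM \tilde{\bg})^s$ for $s \le q$ is a polynomial of degree at most $2q$ in $\tilde g_1$ (hence killed by the orthogonality of $\psi$ under independence); the remainder is $O(\|\bM\|_{\op}^{q+1})$. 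Your conditioning-plus-Hermite route isolates $g_1$ and expands $\psi$ itself. It in fact simplifies further than you indicate: since $\mu$ is a centred Gaussian with $\Var(\mu) = \bsigma_1^\sT \bSigma_{-1,-1}^{-1}\bsigma_1 = 1-v$, writing $W := \mu/\sqrt{1-v} \sim \normal(0,1)$ your generating-function identity collapses to $\E[\He_k(g_1)\mid\bg_{-1}] = (1-v)^{k/2}\He_k(W)$, whence by orthogonality $\|\tilde\psi\|_{L^2}^2 = \sum_{k\ge 2q+1}\alpha_k^2 k!(1-v)^k \le (1-v)^{2q+1}\|\psi\|_{L^2}^2$ exactly, with no double-series estimate needed. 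This even yields exponent $2q+1$, sharper than the paper's $q+1$.

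One slip in your last step: the Cauchy--Schwarz bound claims $\big\|\prod_{k\ge 2}\psi(g_k)^{r_k}\big\|_{L^2} = O(1)$ because ``$c_1 r_k/(2m) \le c_1 < 1$'', but finiteness of $\E[e^{aG^2}]$ requires $a<1/2$, not $a<1$; when some $r_k = 2m-1$ and $c_1 \in [m/(2m-1),1)$ this $L^2$ norm is infinite. The easy fix is to replace Cauchy--Schwarz by H\"older with conjugate pair $(s,s')$, choosing $s'>1$ small enough that $c_1 s' r_k/(4m) < 1/2$ for every $k$ (possible since $c_1<1$ and $r_k\le 2m-1$). The matching $\|\tilde\psi\|_{L^s}$ is then controlled via Gaussian hypercontractivity applied to the identity above: $\|\tilde\psi\|_{L^s} \le \sum_{k\ge 2q+1}|\alpha_k|\sqrt{k!}\,\big((s-1)(1-v)\big)^{k/2} = O\big((1-v)^{(2q+1)/2}\big)$ for $d$ large.
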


\begin{proof}[Proof of Lemma \ref{lem:bound_isolated_index}]
Without loss of generality, let us assume that $r_1  = 1$. Let us rewrite the expectation with respect to $\tbg \sim \normal ( 0 , \id_p)$:
\begin{equation}
\E_{\bg} \Big[ \prod_{k \in [ p]} \psi ( g_k )^{r_k} \Big] =\frac{1}{\sqrt{\det(\bSigma)} } \E_{\tbg} \Big[ \prod_{k \in [ p]} \psi ( \tg_k )^{r_k} \cdot \exp \Big\{  \tbg^\sT \bM \tbg /2 \Big\} \Big]\, ,
\end{equation}
where we denoted $\bM = \id_p - \bSigma^{-1}$.

By Taylor expansion around $0$ at order $q +1$, there exists $\zeta ( \tbg)$ between $0$ and $ \tbg^\sT \bM \tbg /2$ such that
\[
\exp \Big\{  \tbg^\sT \bM \tbg /2 \Big\}  = \sum_{s = 0}^{q}  \frac{1}{2^s s!}  (\tbg^\sT \bM \tbg)^s  + \frac{1}{2^{q+1} (q +1 )!} \exp \{ \zeta ( \tbg) \} \cdot (\tbg^\sT \bM \tbg)^{q+1}.
\]
Notice that the terms $s = 0, \ldots ,q$ are polynomials of degree smaller or equal to $2q$ in $\tbg$. By the assumption of orthonormality of $\psi$ to polynomials of degree less or equal to $2q$, we deduce
\[
\begin{aligned}
&~ \Big\vert \E_{\bg} \Big[ \prod_{k \in [ p]} \psi ( g_k )^{r_k} \Big] \Big\vert \\
=&~ \frac{1}{2^{q+1} (q +1 )! \sqrt{\det(\bSigma)}} \Big\vert \E_{\tbg} \Big[ \prod_{k \in [ p]} \psi ( \tg_k )^{r_k} \cdot \exp \{ \zeta ( \tbg) \} \cdot (\tbg^\sT \bM \tbg)^{q+1} \Big] \Big\vert  \\
\leq &~ \frac{\| \bM \|_{\op}^{q+1} }{2^{q+1} (q +1 )! \sqrt{\det(\bSigma)}} \E_{\tbg} \Big[ \prod_{k \in [ p]} |\psi ( \tg_k )|^{r_k} \cdot \exp \{\| \bM \|_{\op} \| \tbg \|_2^2 \} \cdot \| \tbg \|_2^{2(q+1)} \Big]\, .
\end{aligned} 
\]
Furthermore, from the bound $| \psi (x)| \leq c_0 \exp (c_1 x^2 / (4m) )$ and that $r_k \leq 2m$, we have
\begin{equation}\label{eq:MopInteg}
\begin{aligned}
\Big\vert \E_{\bg} \Big[ \prod_{k \in [ p]} \psi ( g_k )^{r_k} \Big] \Big\vert \leq &~\frac{c_0^{2m} p^{2q} \| \bM \|_{\op}^{q+1} }{2^{q+1} (q +1 )! \sqrt{\det(\bSigma)}} \E_{G} \Big[ G^{2q +2 } \exp \{ c_1 G^2 /2 + \| \bM \|_{\op} G^2  \}  \Big]^p \, .
\end{aligned} 
\end{equation}
 From the assumptions on $\bSigma$, we have $\| \bSigma - \id_p \|_{\op} \leq \| \bSigma - \id_p \|_F \leq p \sup_{i \neq j} | \Sigma_{ij} | = O_d ( d^{\eps-1/2} )$, and therefore $\| \bM \|_{\op} = O_d ( d^{\eps - 1/2} )$ and $\det(\bSigma)^{-1/2} = O_d (1)$. 
 
 From the assumption that $c_1 < 1$ and taking $d$ sufficiently large such that $\| \bM \|_\op < (1 - c_1)/4$, the expectation on the right hand side of Eq.~\eqref{eq:MopInteg} is bounded by a constant. We deduce that
 \[
 \E_{\bg} \Big[ \prod_{k \in [ p]} \psi ( g_k )^{r_k} \Big] = \| \bM \|_{\op}^{q+1} \cdot O_d(1) =  d^{(q+1 )(\eps - 1/2)} \cdot O_d (1 ) \, ,
 \]
 which concludes the proof.
\end{proof}

\subsection{Technical lemmas}\label{sec:tech_cyc}

The first lemma is a straightforward consequence of the proof of Lemma \ref{lem:hypercontractivity_Gaussian} (we include a proof for completeness).

\begin{lemma}\label{lem:hyper_multi_gaussian}
For any $\ell \in \N$ and $f \in L^2(\R^d, \gamma_d)$ to be a degree $\ell$ polynomial on $\R^d$, where $\gamma_d =\normal ( 0 , \id_d)$ is the isotropic Gaussian distribution. Then for any $q \ge 2$, we have 
\[
\| f \|_{L^q(\R^d, \gamma_d)}^2 \le (q - 1)^{\ell} \cdot \| f \|_{L^2(\R^d, \gamma_d)}^2. 
\]
\end{lemma}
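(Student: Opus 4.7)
The plan is to invoke Nelson's Gaussian hypercontractivity theorem for the Ornstein--Uhlenbeck semigroup, combined with the spectral decomposition of degree-$\ell$ polynomials in the multivariate Hermite basis. Concretely, for multi-indices $\alpha \in \N^d$ let $H_\alpha(\bx) = \prod_{i=1}^d \He_{\alpha_i}(x_i)/\sqrt{\alpha_i!}$ denote the normalized tensorized Hermite polynomials, which form an orthonormal basis of $L^2(\R^d,\gamma_d)$. The Ornstein--Uhlenbeck semigroup $T_\rho$, defined by $(T_\rho h)(\bx) = \E_{\by \sim \normal(\bzero,\id_d)}[h(\rho \bx + \sqrt{1-\rho^2}\,\by)]$, acts diagonally as $T_\rho H_\alpha = \rho^{|\alpha|} H_\alpha$, where $|\alpha| = \sum_i \alpha_i$.

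The key ingredient is Nelson's theorem: for any $q \ge 2$ and $\rho \in [0,1]$ with $\rho^2(q-1) \le 1$, the operator $T_\rho : L^2(\R^d,\gamma_d) \to L^q(\R^d,\gamma_d)$ is a contraction, i.e.\ $\|T_\rho h\|_{L^q} \le \|h\|_{L^2}$ for all $h \in L^2$. I would set $\rho = 1/\sqrt{q-1}$ so that equality holds in Nelson's hypothesis.

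Now I would exploit the degree constraint as follows. Write the Hermite expansion
\[
f = \sum_{|\alpha| \le \ell} c_\alpha H_\alpha,
\]
which terminates because $f$ is a polynomial of degree $\ell$, and define
\[
h := \sum_{|\alpha| \le \ell} \rho^{-|\alpha|} c_\alpha H_\alpha,
\]
so that by the diagonal action of $T_\rho$ we have $T_\rho h = f$. Applying Nelson's theorem and using orthonormality of the $H_\alpha$,
\[
\|f\|_{L^q}^2 = \|T_\rho h\|_{L^q}^2 \le \|h\|_{L^2}^2 = \sum_{|\alpha| \le \ell} \rho^{-2|\alpha|} c_\alpha^2 \le \rho^{-2\ell} \sum_{|\alpha| \le \ell} c_\alpha^2 = (q-1)^\ell \|f\|_{L^2}^2,
\]
where in the second inequality I used that $\rho^{-2|\alpha|} \le \rho^{-2\ell}$ for all $|\alpha| \le \ell$ since $\rho \le 1$.

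There is no real obstacle here: the entire argument reduces to Nelson's theorem (which I would simply cite, e.g.\ from Janson's book on Gaussian Hilbert spaces or Bogachev's Gaussian measures) plus the elementary observation that the inverse semigroup action on a fixed finite-degree subspace is bounded by $\rho^{-\ell}$ in $L^2$. The only small subtlety to verify is that $h$ is well-defined as an element of $L^2$ (immediate since the expansion is a finite sum) and that $T_\rho h = f$ holds pointwise (immediate from linearity and the eigenfunction relation).
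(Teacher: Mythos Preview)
Your proof is correct, but it takes a different route from the paper. The paper derives the multivariate Gaussian case from the hypercube hypercontractivity lemma already stated in the appendix: it writes each Gaussian coordinate as a normalized sum $G_i = D^{-1/2}\sum_{j=1}^D \eps_{i,j}$ of independent Rademacher variables, applies hypercube hypercontractivity to $\tilde f(\beps) = f(G_1,\ldots,G_d)$ on $\Cube^{dD}$ (which is a degree-$\ell$ polynomial in $\beps$), and then passes to the limit $D\to\infty$ via the CLT and dominated convergence. Your argument instead cites Nelson's theorem for the Ornstein--Uhlenbeck semigroup directly and exploits the diagonal action $T_\rho H_\alpha = \rho^{|\alpha|} H_\alpha$ to invert the semigroup on the finite-degree subspace. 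Your approach is the standard textbook route and is cleaner if one is willing to import Nelson's theorem as a black box; the paper's approach has the virtue of being self-contained once the hypercube inequality (Lemma~\ref{lem:hypercontractivity_hypercube}) is granted, which fits the paper's internal organization. Both yield the identical constant $(q-1)^\ell$.
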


\begin{proof}[Proof of Lemma \ref{lem:hyper_multi_gaussian}]
Let $\beps  = ( \eps_{i,j})_{i \in [d], j \in [D]} \sim \Unif ( \Cube^{dD})$ and define for $i = 1, \ldots , d$,
\[
G_i = \frac{\eps_{i,1} + \ldots + \eps_{i,D} }{\sqrt{D}}.
\]
Consider $f$ a degree $\ell$ polynomial on $\R^d$ and define 
\[
\Tilde f ( \beps ) = f(G_1 , \ldots , G_d ).
\]
From hypercontractivity of low degree polynomials on the hypercube (Lemma \ref{lem:hypercontractivity_hypercube}), we have
\begin{equation}\label{eq:gauss_to_hypercube}
\| \Tilde f \|_{L^q (\Cube^{d^2})}^2 \le (q - 1)^{\ell} \cdot \| \Tilde f \|_{L^2(\Cube^{d^2})}^2. 
\end{equation}
Furthermore, by the multivariate central limit theorem, as $D \to \infty$ for $d$ fixed, $(G_1 , \ldots , G_d )$ converges in distribution to $\bg \sim \normal(0 , \id_d)$. By dominated convergence theorem, we have $\| \Tilde f \|_{L^q (\Cube^{dD})}^2 \to \| f \|_{L^q(\R^d, \gamma_d)}^2$ and $\| \Tilde f \|_{L^2(\Cube^{dD})}^2 \to \| f \|_{L^2(\R^d, \gamma_d)}^2$, and taking the limit in inequality \eqref{eq:gauss_to_hypercube} yields the result.
\end{proof}

\begin{lemma}\label{lem:hyper_degree_2}
Follow the notations in Section \ref{sec:proof_cyclic_hyp}. We have
\[
 \E_{\bg , \bw} \Big[ \big(A_{\leq 2} ( \sqrt{d} \bg / \| \bg \|_2 ; \bw ) - B_{\leq 2} ( \bg ; \bw )  \big)^{2m} \Big]^{1/(2m)} = O_d ( d^{-1/2} ).
\]
\end{lemma}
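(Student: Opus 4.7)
The plan is to extend $A_{\leq 2}(\,\cdot\,;\bw)$ to a polynomial on $\R^d$ via its explicit Gegenbauer representation, set $\tau := \|\bg\|_2/\sqrt{d}$ so that $\sqrt{d}\bg/\|\bg\|_2 = \bg/\tau$, and split
\[
A_{\leq 2}(\bg/\tau;\bw) - B_{\leq 2}(\bg;\bw) = \Delta_1 + \Delta_2,
\]
where $\Delta_1 := A_{\leq 2}(\bg/\tau;\bw) - A_{\leq 2}(\bg;\bw)$ captures the cost of normalizing $\bg$ to the sphere, and $\Delta_2 := A_{\leq 2}(\bg;\bw) - B_{\leq 2}(\bg;\bw)$ compares the Gegenbauer and Hermite decompositions at the common point $\bg$. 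Using the explicit low-degree Gegenbauer polynomials $Q_0^{(d)}\equiv 1$, $Q_1^{(d)}(t)=t/d$, and $Q_2^{(d)}(t)=(t^2-d)/(d(d-1))$, one obtains
\begin{align*}
A_{\leq 2}(\bv;\bw) &= \xi_{d,0} + \xi_{d,1}\sqrt{d}\,\bar{s}_1(\bv;\bw) + \frac{\xi_{d,2} B(\S^{d-1};2)}{d-1}\,\bar{s}_2(\bv;\bw), \\
B_{\leq 2}(\bv;\bw) &= \mu_0(\sigma) + \mu_1(\sigma)\,\bar{s}_1(\bv;\bw) + \tfrac{1}{2}\mu_2(\sigma)\,\bar{s}_2(\bv;\bw),
\end{align*}
with $\bar{s}_1(\bv;\bw) := d^{-1}\sum_i\<\bv,\bL^i\bw\>$ and $\bar{s}_2(\bv;\bw) := d^{-1}\sum_i(\<\bv,\bL^i\bw\>^2 - 1)$.

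For $\Delta_2$, the two matching coefficient triples $(\xi_{d,0}, \xi_{d,1}\sqrt{d}, \xi_{d,2}B(\S^{d-1};2)/(d-1))$ and $(\mu_0(\sigma), \mu_1(\sigma), \mu_2(\sigma)/2)$ differ entrywise by $O(d^{-1})$ through the standard Gegenbauer-to-Hermite rate of convergence. A direct $L^2$-computation using Lemma \ref{lem:isometry_of_orthonomal_polynomials} gives $\|\bar{s}_k(\bg;\bw)\|_{L^2} = O(d^{-1/2})$ for $k\in\{1,2\}$ (the cyclic group has degeneracy $1$), which passes to $L^{4m}$ via Gaussian and spherical hypercontractivity (Lemmas \ref{lem:hyper_multi_gaussian} and \ref{lem:hypercontractivity_sphere}, applied coordinatewise to $\bg$ and $\bw$). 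H\"older's inequality then yields $\|\Delta_2\|_{L^{2m}} = O(d^{-3/2})$, well below the target.

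For $\Delta_1$, substituting $\bg/\tau$ and simplifying gives
\[
\Delta_1 = \xi_{d,1}\sqrt{d}\,\bar{s}_1(\bg;\bw)(\tau^{-1}-1) + \frac{\xi_{d,2}B(\S^{d-1};2)}{d-1}\,\tilde{s}_2(\bg;\bw)(\tau^{-2}-1),
\]
where $\tilde{s}_2(\bv;\bw) := d^{-1}\sum_i\<\bv,\bL^i\bw\>^2$ and the two prefactors are $O(1)$. Hypercontractivity yields $\|\bar{s}_1\|_{L^{4m}}=O(d^{-1/2})$ and $\|\tilde{s}_2\|_{L^{4m}}=O(1)$. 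The main technical step — and the only non-routine point in the proof — is controlling $\|\tau^{-k}-1\|_{L^{4m}}$ for $k\in\{1,2\}$: since $\tau^2-1$ is a degree-$2$ polynomial in $\bg$ with $L^2$-norm $\sqrt{2/d}$, Lemma \ref{lem:hyper_multi_gaussian} gives $\|\tau^2-1\|_{L^{4m}}=O(d^{-1/2})$, and the bound on $\|\tau^{-k}-1\|_{L^{4m}}$ follows by truncating on $\{\tau\geq 1/2\}$ (whose complement has probability $\exp(-cd)$ by chi-squared concentration, with negligible off-event contribution thanks to the polynomial growth of inverse moments of $\chi^2(d)$). A final H\"older's inequality yields $\|\Delta_1\|_{L^{2m}} = O(d^{-1/2})$, the dominant contribution coming from the $k=2$ summand. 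Combining the two estimates gives the claim.
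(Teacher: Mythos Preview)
Your decomposition is correct and reaches the target, but it differs from the paper's route and carries a couple of overclaims. The paper works degree by degree at the spherical point $\bx=\sqrt{d}\,\bg/\|\bg\|_2$, comparing $\xi_{d,k} B(\S^{d-1};k)\,\barQ_k(\bx;\bw)$ directly with $\tfrac{\mu_k}{k!}\,\barHe_k(\tau\bx;\bw)$ for $k\in\{0,1,2\}$. Since $\barHe_k(\tau\bx;\bw)$ naturally carries powers of $\tau$ rather than $\tau^{-1}$, the paper never needs inverse moments of $\tau$; and for $k\in\{1,2\}$ it only uses that the coefficient prefactors are $O(1)$, with the factor $d^{-1/2}$ coming entirely from $\|B(\S^{d-1};k)^{1/2}\barQ_k\|_{L^{2m}}$ via hypercontractivity and $D/B=\Theta(d^{-1})$. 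The only coefficient \emph{rate} the paper invokes is $|\xi_{d,0}-\mu_0(\sigma)|=O(d^{-1/2})$, obtained by the same mean-value/Cauchy--Schwarz argument as for $R_1$ (this is where the bound on $\sigma'$ enters). Your $\Delta_1+\Delta_2$ split is a legitimate alternative but trades these simplifications for (i) a bound on $\|\tau^{-k}-1\|_{L^{4m}}$, which you handle correctly via truncation on $\{\tau\geq 1/2\}$ and chi-squared concentration, and (ii) coefficient rates for all three degrees in $\Delta_2$.

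On (ii): the entrywise $O(d^{-1})$ rate you assert for the coefficient triple is not proved in the paper and is not immediate under the stated hypotheses (only one derivative of $\sigma$ with the growth bound); what the $R_1$-type argument gives is $O(d^{-1/2})$, and that is what you should cite. Consequently your claim $\|\Delta_2\|_{L^{2m}}=O(d^{-3/2})$ is wrong even under your own assumption --- the constant term alone contributes $|\xi_{d,0}-\mu_0(\sigma)|$ --- but the corrected bound $\|\Delta_2\|_{L^{2m}}=O(d^{-1/2})$ still meets the target, so the argument survives. Finally, Lemma~\ref{lem:isometry_of_orthonomal_polynomials} is stated for arguments on $\cA_d$, not for Gaussian $\bg$; for $\|\bar s_k(\bg;\bw)\|_{L^2}$ you should either do the direct Isserlis computation (using $\E_\bw[\<\bw,\bL^k\bw\>^2]=O(d^{-1})$ for $k\neq 0$) or transfer from the sphere via $\bg=\tau\bx$ and the independence of $\tau$ and $\bx$.
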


\begin{proof}[Proof of Lemma \ref{lem:hyper_degree_2}]
Denote $\tau = \| \bg \|_2 / \sqrt{d}$ and $\bx = \sqrt{d} \bg / \| \bg \|_2$. Recall that we defined
\[
\begin{aligned}
A_{\leq 2} ( \bx , \bw ) = &~  \xi_{d,0} + \xi_{d,1} B (\S^{d-1} ; 1) \barQ_1 ( \bx ; \bw ) +  \xi_{d,2} B (\S^{d-1} ; 2) \barQ_2 ( \bx ; \bw ) \, , \\
B_{\leq 2} (\tau \cdot \bx , \bw ) = &~ \mu_0 (\sigma) + \mu_1 (\sigma)  \barHe_1 ( \tau \cdot \bx ; \bw ) +  \frac{ \mu_2 (\sigma) }{2} \barHe_2 ( \tau \cdot \bx ; \bw ) \, , \\
\end{aligned}
\]
Let us bound the difference of each term separately.

\noindent
{\bf Step 1. Bound $0$th order term. }

Following the same argument as in the bound of $R_1$ in Section \ref{sec:proof_cyclic_hyp}, we have
\begin{equation}\label{eq:0th_term}
\begin{aligned}
c_0 := | \mu_0 (\sigma) -  \xi_{d,0}  | =&~ \Big\vert \E_{\tau , x_1 } \big[ \sigma ( \tau \cdot x_1 ) - \sigma ( x_1 ) \big] \Big\vert \\
 \leq&~ \E_{\tau} \big[ (\tau - 1)^2 \big]^{1/2} \E_{\tau , x_1} \Big[ x_1^2 \sigma ' (\ttau  \cdot x_1)^2 \Big]^{1/2} = O_d (d^{-1/2}).
\end{aligned}
\end{equation}

\noindent
{\bf Step 2. Bound $1$st order term. }

We have $\He_1 ( x ) = x$ and $B (\S^{d-1} ; 1)^{1/2} \cdot Q_1 (\sqrt{d} x ) = x$. Hence,
\[
\begin{aligned}
c_1: = &~\E_{\bg , \bw} \Big[ \Big(\mu_1 (\sigma)  \barHe_1 ( \tau \cdot \bx ; \bw ) -   \xi_{d,1} B (\S^{d-1} ; 1) \barQ_1 ( \bx ; \bw ) \Big)^{2m} \Big] \\
 =&~ \E_{\tau} \Big[ \Big( \tau \cdot \mu_1 (\sigma)  -  \xi_{d,1} B (\S^{d-1} ; 1)^{1/2}  \Big)^{2m} \Big] \cdot \E_{\bx , \bw} \Big[ B (\S^{d-1} ; 1)^m \barQ_1 ( \bx ; \bw )^{2m} \Big] \, .
\end{aligned}
\]
Using the convergence of Gegenbauer coefficients to Hermite coefficients (see Eq.~\eqref{eq:Gegen-to-Hermite} in Section \ref{sec:Hermite}), there exists a constant $C>0$ such that
\begin{equation}\label{eq:c1_1}
\E_{\tau} \Big[ \Big( \tau \cdot \mu_1 (\sigma) -  \xi_{d,1} B (\S^{d-1} ; 1)^{1/2}  \Big)^{2m} \Big]^{1/(2m)} \leq C \Big[ \E_{\tau} \big[ \tau^{2m} \big]^{1/(2m)} + 1 \Big] = O_d (1) \, ,
\end{equation}
where we used for example that low-degree polynomials of $\tau^2$ are hypercontractive (see the bound on $R_1$ in Section \ref{sec:proof_cyclic_hyp}). From the same argument as in the bound of $R_3$ in Section \ref{sec:proof_cyclic_hyp}, we have
\begin{equation}\label{eq:c1_2}
\E_{\bx , \bw} \Big[ B (\S^{d-1} ; 1)^m \barQ_1 ( \bx ; \bw )^{2m} \Big]^{1/(2m)} \leq C_{2m} \frac{D (\S^{d-1} ; 1)^{1/2}}{B (\S^{d-1} ; 1)^{1/2}} = O_d (d^{-1/2} ).
\end{equation}
Combining Eqs.~\eqref{eq:c1_1} and \eqref{eq:c1_2} yields
\begin{equation}\label{eq:1th_term}
c_1 = O_d (d^{-1/2}).
\end{equation}

\noindent
{\bf Step 3. Bound $2$nd order term. }

We have $\He_2 ( x ) = x^2 - 1$ and $B (\S^{d-1} ; 2)^{1/2} \cdot Q_2 (\sqrt{d} x ) = a_{2,d} \cdot (x^2 - 1)$ with $a_{2,d} = \Theta_d(1)$. We can rewrite
\[
\He_2 ( \tau \cdot x_1 ) = \tau^2 B (\S^{d-1} ; 2)^{1/2} \cdot Q_2 (\sqrt{d} x ) / a_{2,d} + \tau^2 - 1.
\]
Hence, by triangle inequality,
\[
\begin{aligned}
c_1: = &~\E_{\bg , \bw} \Big[ \Big(\mu_2 (\sigma)  \barHe_2 ( \tau \cdot \bx ; \bw )/2 -   \xi_{d,2} B (\S^{d-1} ; 2) \barQ_2 ( \bx ; \bw ) \Big)^{2m} \Big]^{1/(2m)} \\
 \leq&~ \E_{\tau} \Big[ \Big( \tau \cdot \mu_2 (\sigma) / (2a_{2,d})  -  \xi_{d,2} B (\S^{d-1} ; 2)^{1/2}  \Big)^{2m} \Big]^{1/(2m)} \cdot \E_{\bx , \bw} \Big[ B (\S^{d-1} ; 2)^m \barQ_2 ( \bx ; \bw )^{2m} \Big]^{1/(2m)} \, \\
 &~+ \frac{|\mu_2(\sigma)|}{2}\E_{\tau} \big[ (\tau^2 - 1 )^{2m} \big]^{1/(2m)} \, .
\end{aligned}
\]
The first term is bounded as the $1$-st order term while the second term is bounded as the $0$-th order term. Combining the two yields
\begin{equation}\label{eq:2th_term}
c_2 = O_d (d^{-1/2}).
\end{equation}

\noindent
{\bf Step 4. Conclude. }

Combining the bounds \eqref{eq:0th_term}, \eqref{eq:1th_term} and \eqref{eq:2th_term}, we get by triangle inequality  
\[
\E_{\tau, \bx , \bw} \Big[ \big(A_{\leq 2} ( \bx ; \bw ) - B_{\leq 2} ( \tau \cdot \bx ; \bw )  \big)^{2m} \Big]^{1/(2m)}  \leq c_0 + c_1 + c_2 = O_d (d^{-1/2} ) \, ,
\]
which concludes the proof.
\end{proof}

 \begin{lemma}\label{lem:proba_Aeps}
 Let $\eps > 0 $ and $\bw \sim \Unif ( \S^{d-1} ( 1 ))$. Then there exists $C,c >0$ such that
 \[
 \P ( \cA_\eps^{c} ) \leq C \exp (- c d^{2\eps} ).
 \]
 \end{lemma}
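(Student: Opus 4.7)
The plan is to reduce the spherical concentration to a Gaussian concentration and then apply a union bound. Represent $\bw = \bg / \| \bg \|_2$ where $\bg \sim \normal (0 , \id_d)$, so that for each $k \in [d-1]$,
\[
\langle \bw , \bL^k \bw \rangle = \frac{\bg^\sT \bL^k \bg}{\| \bg \|_2^2}.
\]
Since $\bL^k$ is a permutation matrix without fixed points for $k \in [d-1]$, its diagonal vanishes and $\E [ \bg^\sT \bL^k \bg ] = 0$.

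Next, I would apply the Hanson--Wright inequality in the form of Lemma \ref{lem:Hanson_Wright} to each quadratic form $\bg^\sT \bL^k \bg$ with $t = (C/2) d^{\eps - 1/2}$. Since $t = o(1)$ as $d \to \infty$, the relevant regime is $\min(t^2 , t) = t^2$, yielding
\[
\P \bigl( | \bg^\sT \bL^k \bg | / d > (C/2) d^{\eps - 1/2} \bigr) \leq 2 \exp (- c' C^2 d^{2\eps} ).
\]
Simultaneously, standard $\chi^2$-concentration gives $\P(\| \bg \|_2^2 / d < 1/2) \leq 2 \exp(-c d)$, and on the complementary event one has $| \langle \bw, \bL^k \bw \rangle | \leq 2 |\bg^\sT \bL^k \bg| / d$.

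A union bound over $k \in [d-1]$ then yields
\[
\P ( \cA_\eps^c ) \leq 2 \exp (- c d ) + 2 (d - 1) \exp (- c' C^2 d^{2\eps} ) \leq C \exp (- c d^{2\eps} ),
\]
where we absorbed the polynomial factor $d$ into the (sub-)exponential rate since $\log d = o(d^{2\eps})$ for any $\eps > 0$. There is no serious obstacle: the argument is entirely routine once one identifies the Hanson--Wright step and tracks the constant $C$ (chosen large enough so the resulting rate dominates any polynomial prefactor). This completes the plan.
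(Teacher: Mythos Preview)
Your proposal is correct and follows essentially the same approach as the paper: represent $\bw$ via a Gaussian vector, control the norm by $\chi^2$-concentration, bound each quadratic form $\bg^\sT \bL^k \bg$ via Hanson--Wright (Lemma~\ref{lem:Hanson_Wright}), and take a union bound over $k \in [d-1]$. The only cosmetic difference is that you handle the norm event once globally, whereas the paper folds it into each individual $k$-bound before the union bound; both lead to the same final estimate.
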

 
 \begin{proof}[Proof of Lemma \ref{lem:proba_Aeps}]
 Let us use the correspondence between uniform distribution and Gaussian distribution: $\bw \sim \bz / \| \bz \|_2$, where $\bz \sim \normal ( 0 ,\id_d)$. We have for $k = 1, \ldots, d-1$,
 \[
 \P ( | \< \bw , \bL^k \bw \> | \geq t ) = \P ( | \< \bz , \bL^k \bz \> / \| \bz \|_2^2 | \geq t ) \leq  \P ( | \< \bz , \bL^k \bz \>/d | \geq t/2 ) + \P ( \| \bz \|_2^2 \leq d/2).
 \]
 Note that for any $k \in [d-1]$, we have $\| \bL^k \|_F \leq \sqrt{d}$ and $\| \bL^k \|_\op \leq 1$. By the Hanson-Wright inequality, for any $k \neq 0$, we have
 \[
 \P \Big( \big\vert \< \bz , \bL^k \bz \> /d \big\vert > t \Big) \leq 2 \exp \{ -c d \cdot \min (t^2 , t ) \}.
 \]
 Furthermore, by standard concentration of the norm of Gaussian vectors, we have
 \[
 \P ( \| \bz \|_2^2 \leq d/2) \leq C \exp ( - c d).
 \]
 Taking $t = C d^{\eps - 1/2}$ and combining the above two bounds, we get
 \[
 \P ( | \< \bw , \bL^k \bw \> | \geq t )  \leq 2 \exp ( -c d^{2\eps} ) + C \exp ( - cd ).
 \]
 Taking the union bounds over $k \in [d-1]$ concludes the proof.
 \end{proof}

\section{Technical background of function spaces}\label{sec:technical_background}

\subsection{Functions on the sphere}\label{sec:functions_sphere}

\subsubsection{Functional spaces over the sphere}

For $d \ge 3$, we let $\S^{d-1}(r) = \{\bx \in \R^{d}: \| \bx \|_2 = r\}$ denote the sphere with radius $r$ in $\reals^d$.
We will mostly work with the sphere of radius $\sqrt d$, $\S^{d-1}(\sqrt{d})$ and will denote by $\tau_{d}$  the uniform probability measure on $\S^{d-1}(\sqrt d)$. 
All functions in this section are assumed to be elements of $ L^2(\S^{d-1}(\sqrt d) ,\tau_d)$, with scalar product and norm denoted as $\<\,\cdot\,,\,\cdot\,\>_{L^2}$
and $\|\,\cdot\,\|_{L^2}$:
\begin{align}
\<f,g\>_{L^2} \equiv \int_{\S^{d-1}(\sqrt d)} f(\bx) \, g(\bx)\, \tau_d(\de \bx)\,.
\end{align}

For $\ell\in\integers_{\ge 0}$, let $\tilde{V}_{d,\ell}$ be the space of homogeneous harmonic polynomials of degree $\ell$ on $\reals^d$ (i.e. homogeneous
polynomials $q(\bx)$ satisfying $\Delta q(\bx) = 0$), and denote by $V_{d,\ell}$ the linear space of functions obtained by restricting the polynomials in $\tilde{V}_{d,\ell}$
to $\S^{d-1}(\sqrt d)$. With these definitions, we have the following orthogonal decomposition
\begin{align}
L^2(\S^{d-1}(\sqrt d) ,\tau_d) = \bigoplus_{\ell=0}^{\infty} V_{d,\ell}\, . \label{eq:SpinDecomposition}
\end{align}
The dimension of each subspace is given by
\begin{align}
\dim(V_{d,\ell}) = B(\S^{d-1}; \ell) = \frac{2 \ell + d - 2}{d - 2} { \ell + d - 3 \choose \ell} \, .
\end{align}
For each $\ell\in \integers_{\ge 0}$, the spherical harmonics $\{ Y_{\ell, j}^{(d)}\}_{1\le j \le B(\S^{d-1}; \ell)}$ form an orthonormal basis of $V_{d,\ell}$:
\[
\<Y^{(d)}_{ki}, Y^{(d)}_{sj}\>_{L^2} = \delta_{ij} \delta_{ks}.
\]
Note that our convention is different from the more standard one, that defines the spherical harmonics as functions on $\S^{d-1}(1)$.
It is immediate to pass from one convention to the other by a simple scaling. We will drop the superscript $d$ and write $Y_{\ell, j} = Y_{\ell, j}^{(d)}$ whenever clear from the context.

We denote by $\oproj_k$  the orthogonal projections to $V_{d,k}$ in $L^2(\S^{d-1}(\sqrt d),\tau_d)$. This can be written in terms of spherical harmonics as
\begin{align}
\oproj_k f(\bx) \equiv& \sum_{l=1}^{B(\S^{d-1}; k)} \< f, Y_{kl}\>_{L^2} Y_{kl}(\bx). 
\end{align}
We also define
$\oproj_{\le \ell}\equiv \sum_{k =0}^\ell \oproj_k$, $\oproj_{>\ell} \equiv \id -\oproj_{\le \ell} = \sum_{k =\ell+1}^\infty \oproj_k$,
and $\oproj_{<\ell}\equiv \oproj_{\le \ell-1}$, $\oproj_{\ge \ell}\equiv \oproj_{>\ell-1}$.

\subsubsection{Gegenbauer polynomials}
\label{sec:Gegenbauer}

The $\ell$-th Gegenbauer polynomial $Q_\ell^{(d)}$ is a polynomial of degree $\ell$. Consistently
with our convention for spherical harmonics, we view $Q_\ell^{(d)}$ as a function $Q_{\ell}^{(d)}: [-d,d]\to \reals$. The set $\{ Q_\ell^{(d)}\}_{\ell\ge 0}$
forms an orthogonal basis on $L^2([-d,d],\tilde\tau^1_{d})$, where $\tilde\tau^1_{d}$ is the distribution of $\sqrt{d}\<\bx,\be_1\>$ when $\bx\sim \tau_d$,
satisfying the normalization condition:
\begin{align}
\< Q^{(d)}_k(\sqrt{d}\< \be_1, \cdot\>), Q^{(d)}_j(\sqrt{d}\< \be_1, \cdot\>) \>_{L^2(\S^{d-1}(\sqrt d))} = \frac{1}{B(\S^{d-1};k)}\, \delta_{jk} \, .  \label{eq:GegenbauerNormalization}
\end{align}
In particular, these polynomials are normalized so that  $Q_\ell^{(d)}(d) = 1$. 
As above, we will omit the superscript $(d)$ in $Q_\ell^{(d)}$ when clear from the context.

Gegenbauer polynomials are directly related to spherical harmonics as follows. Fix $\bv\in\S^{d-1}(\sqrt{d})$ and 
consider the subspace of  $V_{\ell}$ formed by all functions that are invariant under rotations in $\reals^d$ that keep $\bv$ unchanged.
It is not hard to see that this subspace has dimension one, and coincides with the span of the function $Q_{\ell}^{(d)}(\<\bv,\,\cdot\,\>)$.

We will use the following properties of Gegenbauer polynomials
\begin{enumerate}
\item For $\bx, \by \in \S^{d-1}(\sqrt d)$
\begin{align}
\< Q_j^{(d)}(\< \bx, \cdot\>), Q_k^{(d)}(\< \by, \cdot\>) \>_{L^2} = \frac{1}{B(\S^{d-1}; k)}\delta_{jk}  Q_k^{(d)}(\< \bx, \by\>).  \label{eq:ProductGegenbauer}
\end{align}
\item For $\bx, \by \in \S^{d-1}(\sqrt d)$
\begin{align}
Q_k^{(d)}(\< \bx, \by\> ) = \frac{1}{B(\S^{d-1}; k)} \sum_{i =1}^{ B(\S^{d-1}; k)} Y_{ki}^{(d)}(\bx) Y_{ki}^{(d)}(\by). \label{eq:GegenbauerHarmonics}
\end{align}
\end{enumerate}
These properties imply that ---up to a constant--- $Q_k^{(d)}(\< \bx, \by\> )$ is a representation of the projector onto 
the subspace of degree -$k$ spherical harmonics
\begin{align}
(\oproj_k f)(\bx) = B(\S^{d-1}; k) \int_{\S^{d-1}(\sqrt{d})} \, Q_k^{(d)}(\< \bx, \by\> )\,  f(\by)\, \tau_d(\de\by)\, .\label{eq:ProjectorGegenbauer}
\end{align}
For a function $\sigma \in L^2([-\sqrt d, \sqrt d], \tau^1_{d})$ (where $\tau^1_{d}$ is the distribution of $\< \be_1, \bx \> $ when $\bx \sim \Unif(\S^{d-1}(\sqrt d))$), denoting its spherical harmonics coefficients $\xi_{d, k}(\sigma)$ to be 
\begin{align}\label{eqn:technical_lambda_sigma}
\xi_{d, k}(\sigma) = \int_{[-\sqrt d , \sqrt d]} \sigma(x) Q_k^{(d)}(\sqrt d x) \tau^1_{d}(\de x),
\end{align}
then we have the following equation holds in $L^2([-\sqrt d, \sqrt d],\tau^1_{d})$ sense
\[
\sigma(x) = \sum_{k = 0}^\infty \xi_{d, k}(\sigma) B(\S^{d-1}; k) Q_k^{(d)}(\sqrt d x). 
\]

For any rotationally invariant kernel $H_d(\bx_1, \bx_2) = h_d(\< \bx_1, \bx_2\> / d)$,
with $h_d(\sqrt{d}\, \cdot \, ) \in L^2([-\sqrt{d},\sqrt{d}],\tau^1_{d})$,
we can associate a self adjoint operator $\cuH_d:L^2(\S^{d-1}(\sqrt{d}))\to L^2(\S^{d-1}(\sqrt{d}))$
via
\begin{align}
\cuH_df(\bx) \equiv \int_{\S^{d-1}(\sqrt{d})} h_d(\<\bx,\bx_1\>/d)\, f(\bx_1) \, \tau_d(\de \bx_1)\, .
\end{align}
By rotational invariance,   the space $V_{k}$ of homogeneous polynomials of degree $k$ is an eigenspace of
$\cuH_d$, and we will denote the corresponding eigenvalue by $\xi_{d,k}(h_d)$. In other words
$\cuH_df(\bx) \equiv \sum_{k=0}^{\infty} \xi_{d,k}(h_d) \oproj_{k}f$.   The eigenvalues can be computed via
\begin{align}
  \xi_{d, k}(h_d) = \int_{[-\sqrt d , \sqrt d]} h_d\big(x/\sqrt{d}\big) Q_k^{(d)}(\sqrt d x) \tau^1_d (\de x)\, .
\end{align}

\subsubsection{Hermite polynomials}
\label{sec:Hermite}

The Hermite polynomials $\{\bbHe_k\}_{k\ge 0}$ form an orthogonal basis of $L^2(\reals,\gamma)$, where $\gamma(\de x) = e^{-x^2/2}\de x/\sqrt{2\pi}$ 
is the standard Gaussian measure, and $\bbHe_k$ has degree $k$. We will follow the classical normalization (here and below, expectation is with respect to
$G\sim\normal(0,1)$):
\begin{align}
\E\big\{\bbHe_j(G) \,\bbHe_k(G)\big\} = k!\, \delta_{jk}\, .
\end{align}
As a consequence, for any function $g\in L^2(\reals,\gamma)$, we have the decomposition
\begin{align}\label{eqn:sigma_He_decomposition}
g(x) = \sum_{k=0}^{\infty}\frac{\mu_k(g)}{k!}\, \bbHe_k(x)\, ,\;\;\;\;\;\; \mu_k(g) \equiv \E\big\{g(G)\, \bbHe_k(G)\}\, .
\end{align}

The Hermite polynomials can be obtained as high-dimensional limits of the Gegenbauer polynomials introduced in the previous section. Indeed, the Gegenbauer polynomials (up to a $\sqrt d$ scaling in domain) are constructed by Gram-Schmidt orthogonalization of the monomials $\{x^k\}_{k\ge 0}$ with respect to the measure 
$\tilde\tau^1_{d}$, while Hermite polynomial are obtained by Gram-Schmidt orthogonalization with respect to $\gamma$. Since $\tilde\tau^1_{d}\Rightarrow \gamma$
(here $\Rightarrow$ denotes weak convergence),
it is immediate to show that, for any fixed integer $k$, 
\begin{align}
\lim_{d \to \infty} \Coeff\{ Q_k^{(d)}( \sqrt d x) \, B(\S^{d-1}; k)^{1/2} \} = \Coeff\left\{ \frac{1}{(k!)^{1/2}}\,\bbHe_k(x) \right\}\, .\label{eq:Gegen-to-Hermite}
\end{align}
Here and below, for $P$ a polynomial, $\Coeff\{ P(x) \}$ is  the vector of the coefficients of $P$. As a consequence,
for any fixed integer $k$, we have
\begin{align}\label{eqn:mu_lambda_relationship}
\mu_k(\sigma) = \lim_{d \to \infty} \xi_{d,k}(\sigma) (B(\S^{d-1}; k)k!)^{1/2}, 
\end{align}
where $\mu_k(\sigma)$ and $\xi_{d,k}(\sigma)$ are given in Eq. (\ref{eqn:sigma_He_decomposition}) and (\ref{eqn:technical_lambda_sigma}). 

\subsection{Functions on the hypercube}\label{sec:functions_hypercube}

%Consider $\btheta \sim \Unif ( \{ -1, +1 \} )^{\otimes d}$ and $\bx \sim \Unif ( \{ -1, +1 \} )^{\otimes d}$. We will mirror the notations used for the functional space on the sphere. 
Fourier analysis on the hypercube is a well studied subject \cite{o2014analysis}. The purpose of this section is to introduce
some notations that  make the correspondence with proofs on the sphere straightforward.
For convenience, we will adopt the same notations as for their spherical case. 

\subsubsection{Fourier basis}

Denote $\Cube^d = \{ -1 , +1 \}^d $ the hypercube in $d$ dimension. Let us denote $\tau_{d}$ to be the uniform probability measure on $\Cube^d$. All the functions will be assumed to be elements of $L^2 (\Cube^d, \tau_d)$ (which contains all the bounded functions $f : \Cube^d \to \R$), with scalar product and norm denoted as $\< \cdot , \cdot \>_{L^2}$ and $\| \cdot \|_{L^2}$:
\[
\< f , g \>_{L^2} \equiv \int_{\Cube^d} f(\bx) g(\bx) \tau_d ( \de \bx) = \frac{1}{2^n} \sum_{\bx \in \Cube^d} f(\bx) g(\bx).
\]
Notice that $L^2 ( \Cube^d, \tau_d)$ is a $2^n$ dimensional linear space. By analogy with the spherical case we decompose $L^2 ( \Cube^d, \tau_d)$ as a direct sum of $d+1$ linear spaces obtained from polynomials of degree $\ell = 0, \ldots , d$
\[
L^2 ( \Cube^d, \tau_d) = \bigoplus_{\ell = 0}^d V_{d,\ell}.
\]

For each $\ell \in \{ 0 , \ldots , d \}$, consider the Fourier basis $\{ Y_{\ell, S}^{(d)} \}_{S \subseteq [d], |S | =\ell}$ of degree $\ell$, where for a set $S \subseteq [d]$, the basis is given by
\[
Y_{\ell, S}^{(d)} (\bx) \equiv x^S \equiv \prod_{i \in S} x_i.
\]
It is easy to verify that (notice that $x_i^k = x_i$ if $k$ is odd and  $x_i^k = 1$ if $k$ is even)
\[
\< Y_{\ell, S}^{(d)} , Y_{k, S'}^{(d)} \>_{L^2} = \E[ x^{S} \times x^{S'} ] = \delta_{\ell,k} \delta_{S,S'}. 
\]
Hence $\{ Y_{\ell, S}^{(d)} \}_{S \subseteq [d], |S | =\ell}$ form an orthonormal  basis of $V_{d,\ell}$ and 
\[
\dim (V_{d,\ell} ) = B(\Cube^d;\ell) = {{d}\choose{\ell}}.
\]
As above, we will omit the superscript $(d)$ in $Y_{\ell, S}^{(d)}$ when clear from the context.

\subsubsection{Hypercubic Gegenbauer}

We consider the following family of polynomials $\{ Q^{(d)}_\ell \}_{\ell = 0 , \ldots, d}$ that we will call hypercubic Gegenbauer, defined as
\[
Q^{(d)}_\ell (  \< \bx , \by \> ) = \frac{1}{B(\Cube^d; \ell)} \sum_{S \subseteq [d], |S| = \ell} Y_{\ell,S}^{(d)} ( \bx ) Y_{\ell,S}^{(d)} ( \by ). 
\]
Notice that the right hand side only depends on $ \< \bx , \by \>$ and therefore these polynomials are uniquely defined. In particular,
\[
\< Q_\ell^{(d)} ( \< \ones , \cdot \> ) , Q_k^{(d)} ( \< \ones , \cdot \> ) \>_{L^2} = \frac{1}{B(\Cube^d;k)} \delta_{\ell k}.
\]
Hence $\{ Q^{(d)}_\ell \}_{\ell = 0 , \ldots, d}$ form an orthogonal basis of $L^2 ( \{ -d , -d+2 , \ldots , d-2 ,d\}, \Tilde \tau_d^1 )$ where $\Tilde \tau_d^1$ is the distribution of $\< \ones , \bx \>$ when $\bx \sim \tau_d$, i.e., $\Tilde \tau_d^1 \sim 2 \text{Bin}(d, 1/2) - d/2$.

We have
\[
\< Q_\ell^{(d)} ( \< \bx , \cdot \> ) , Q_k^{(d)} ( \< \by , \cdot \> ) \>_{L^2} =   \frac{1}{B(\Cube^d;k)} Q_{k} ( \< \bx , \by \> )\delta_{\ell k} .
\]
For a function $\sigma ( \cdot / \sqrt{d} ) \in L^2 ( \{ -d , -d+2 , \ldots , d-2 ,d\}, \Tilde \tau_d^1 )$, denote its hypercubic Gegenbauer coefficients $\xi_{d,k} ( \sigma)$ to be
\[
\xi_{d,k} (\sigma ) = \int_{\{-d, -d+2 , \ldots , d-2 , d\} } \sigma(x / \sqrt{d} ) Q_k^{(d)} ( x) \Tilde \tau_d^1 (\de x).
\]

Notice that by weak convergence of $\< \ones, \bx \> / \sqrt{d}$ to the normal distribution, we have also convergence of the (rescaled) hypercubic Gegenbauer polynomials to the Hermite polynomials, i.e., for any fixed $k$, we have
\begin{align}
\lim_{d \to \infty} \Coeff\{ Q_k^{(d)}( \sqrt d x) \, B(\Cube^d; k)^{1/2} \} = \Coeff\left\{ \frac{1}{(k!)^{1/2}}\,\bbHe_k(x) \right\}\, .\label{eq:Hyper-Gegen-to-Hermite}
\end{align}

\subsection{Hypercontractivity of Gaussian measure and uniform distributions on the sphere and the hypercube}
\label{app:hypercontractivity}

By Holder's inequality, we have $\| f \|_{L^p} \le \| f \|_{L^q}$ for any $f$ and any $p \le q$. The reverse inequality does not hold in general, even up to a constant. However, for some measures, the reverse inequality will hold for some sufficiently nice functions. These measures satisfy the celebrated hypercontractivity properties \cite{gross1975logarithmic, bonami1970etude, beckner1975inequalities, beckner1992sobolev}. 

\begin{lemma}[Hypercube hypercontractivity \cite{beckner1975inequalities}]\label{lem:hypercontractivity_hypercube} For any $\ell = \{ 0 , \ldots , d \}$ and $f_d \in L^2 ( \Cube^d)$ to be a degree $\ell$ polynomial, then for any integer $q\ge 2$, we have
\[
\| f_d \|_{L^q ( \Cube^d)}^2 \leq (q-1)^\ell \cdot \| f_d \|^2_{L^2 (\Cube^d)}.
\]
\end{lemma}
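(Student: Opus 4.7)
The plan is to deduce the claim from the classical Bonami--Beckner $(2,q)$-hypercontractivity of the noise operator on $\Cube^d$, in two phases: first establish the operator-level bound via a scalar two-point estimate plus tensorization, then apply it to a rescaled preimage of $f_d$ to recover the polynomial estimate.

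Set $\rho = 1/\sqrt{q-1}$ and define the noise operator $T_\rho : L^2(\Cube^d) \to L^2(\Cube^d)$ diagonally in the Fourier basis by $T_\rho Y_{k, S}^{(d)} = \rho^{|S|} Y_{k, S}^{(d)}$ for $S \subseteq [d]$ with $|S| = k$. Equivalently, $T_\rho g(\bx) = \E[ g(\by) \mid \bx ]$ where each coordinate $y_i$ independently equals $x_i$ with probability $(1+\rho)/2$ and $-x_i$ otherwise. The core technical statement I would establish is
\begin{equation*}
\| T_\rho g \|_{L^q(\Cube^d)} \leq \| g \|_{L^2(\Cube^d)} \qquad \forall g \in L^2(\Cube^d).
\end{equation*}
For $d = 1$, every $g(x) = a + bx$ satisfies $T_\rho g(x) = a + \rho b x$, so the claim reduces to the scalar Bonami two-point inequality $\big( \tfrac{1}{2} |a + \rho b|^q + \tfrac{1}{2} |a - \rho b|^q \big)^{2/q} \leq a^2 + b^2$, valid for $\rho = 1/\sqrt{q-1}$. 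The $d$-dimensional version then follows by writing $T_\rho$ as the coordinate-wise tensor product $T_\rho^{\otimes d}$ and peeling off coordinates one at a time via Minkowski's integral inequality (applied in $L^q / L^2$), which is the standard tensorization of hypercontractivity.

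Granted the operator inequality, decompose $f_d = \sum_{k=0}^\ell \oproj_k f_d$ (which uses precisely that $f_d$ has degree at most $\ell$), and set $g_d = \sum_{k=0}^\ell \rho^{-k} \oproj_k f_d$, so that $T_\rho g_d = f_d$ by construction. Applying the operator bound to $g_d$ and using orthogonality of the subspaces $V_{d,k}$ in $L^2(\Cube^d)$ then yields
\begin{equation*}
\| f_d \|_{L^q}^2 = \| T_\rho g_d \|_{L^q}^2 \leq \| g_d \|_{L^2}^2 = \sum_{k=0}^\ell \rho^{-2k} \| \oproj_k f_d \|_{L^2}^2 \leq \rho^{-2\ell} \| f_d \|_{L^2}^2 = (q-1)^\ell \| f_d \|_{L^2}^2,
\end{equation*}
which is the desired inequality. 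The main obstacle is the scalar two-point inequality: all the other steps (Parseval, tensorization, rescaling) are formal, whereas the two-point case is the genuinely analytic ingredient and is typically handled by a direct one-variable calculus exercise, e.g.\ by normalizing $b = 1$, verifying the inequality at $a = 0$ and as $a \to \infty$, and then controlling the derivative in $a$.
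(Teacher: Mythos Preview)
The paper does not actually prove this lemma: it is stated with a citation to Beckner and treated as a classical input, with no argument supplied. Your proposal is correct and is precisely the standard Bonami--Beckner proof (two-point inequality, tensorization of the noise operator, then rescaling the Fourier levels of a low-degree function), so there is nothing to compare against and nothing to fix.
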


\begin{lemma}[Spherical hypercontractivity \cite{beckner1992sobolev}]\label{lem:hypercontractivity_sphere}
For any $\ell \in \N$ and $f_d \in L^2(\S^{d-1})$ to be a degree $\ell$ polynomial, for any $q \ge 2$, we have 
\[
\| f_d \|_{L^q(\S^{d-1})}^2 \le (q - 1)^\ell \cdot \| f_d \|_{L^2(\S^{d-1})}^2. 
\]
\end{lemma}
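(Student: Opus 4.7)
The claim is Beckner's sharp spherical hypercontractivity. My plan is to reduce it to Nelson's Gaussian hypercontractivity (Lemma~\ref{lem:hyper_multi_gaussian}) via the harmonic decomposition plus a semigroup-duality step. First, decompose $f_d = \sum_{k=0}^{\ell} Y_k$ with $Y_k \in V_{d,k}$; each $Y_k$ is the restriction to $\S^{d-1}$ of a unique homogeneous harmonic polynomial $H_k$ of degree $k$ on $\R^d$. For each $k$, I would transfer the Gaussian bound for $H_k$ to the sphere using $\bg = R\bU$ with $R = \|\bg\|_2$ independent of $\bU \sim \Unif(\S^{d-1})$ when $\bg \sim \normal(0,\id_d)$: homogeneity gives $H_k(\bg) = R^k H_k(\bU)$, so
\[
\E_\bg[|H_k(\bg)|^q] = \E[R^{kq}]\cdot \|H_k\|_{L^q(\S^{d-1})}^q, \qquad \E_\bg[H_k(\bg)^2] = \E[R^{2k}]\cdot \|H_k\|_{L^2(\S^{d-1})}^2.
\]
Applying Nelson to $H_k$ and using $\E[R^{kq}]^{2/q} \ge \E[R^{2k}]$ (Jensen, since $q \ge 2$), the radial moments cancel, yielding $\|H_k\|_{L^q(\S^{d-1})}^2 \le (q-1)^k \|H_k\|_{L^2(\S^{d-1})}^2$.

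Second, naively summing across $k \le \ell$ via the triangle inequality would introduce a spurious $\sqrt{\ell+1}$ factor. To preserve the sharp exponent, I would invoke the spherical Ornstein--Uhlenbeck semigroup $P_t = \sum_{k \ge 0} e^{-kt} \oproj_k$, for which degree-$k$ harmonics are eigenfunctions with eigenvalue $e^{-kt}$, together with the sharp hypercontractivity $\|P_t g\|_q \le \|g\|_2$ whenever $e^{2t} \ge q - 1$. Setting $e^{2t} = q - 1$ and $\tilde f = \sum_{k \le \ell} e^{kt} \oproj_k f_d$, we have $P_t \tilde f = f_d$ and
\[
\|f_d\|_q = \|P_t \tilde f\|_q \le \|\tilde f\|_2 \le e^{\ell t} \|f_d\|_2 = (q-1)^{\ell/2} \|f_d\|_2,
\]
which is the claim (the statement on $\S^{d-1}(\sqrt d)$ then follows by scale invariance of $L^q/L^2$ ratios).

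The hard part is establishing hypercontractivity of $P_t$ with a $d$-uniform rate. The natural Laplace--Beltrami generator on $\S^{d-1}$ has eigenvalues $k(k+d-2)$, so the Riemannian heat semigroup has a $d$-dependent spectrum and a direct Bakry--\'Emery application does not reproduce the clean $e^{-kt}$ scaling used for $P_t$. This cleanness is precisely the content of Beckner's theorem, proved via slicing and induction on the dimension starting from the two-point Bonami--Beckner inequality. For the applications in this paper any $\ell$-dependent constant would suffice, so a weaker but still serviceable version could be obtained by stopping after the per-component transfer and invoking the triangle inequality across $k \le \ell$, at the cost of an extra $\sqrt{\ell+1}$ factor.
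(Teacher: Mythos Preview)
The paper does not prove this lemma; it simply cites Beckner \cite{beckner1992sobolev} as a known result. So there is no ``paper's own proof'' to compare against, and your proposal is attempting more than the authors did.

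That said, your argument has a genuine gap at the crucial step. The per-component transfer from Gaussian to spherical via $\bg = R\bU$ is correct and gives $\|Y_k\|_{L^q(\S^{d-1})}^2 \le (q-1)^k \|Y_k\|_{L^2(\S^{d-1})}^2$ for each individual $Y_k \in V_{d,k}$. But your second step is circular: you invoke hypercontractivity of the semigroup $P_t = \sum_k e^{-kt}\oproj_k$ with the sharp rate $e^{2t} \ge q-1$, and you yourself note that this ``is precisely the content of Beckner's theorem.'' Indeed, the statement $\|P_t g\|_q \le \|g\|_2$ for $e^{2t}=q-1$ applied to $g = \sum_k e^{kt}\oproj_k f_d$ is exactly equivalent to the lemma you are trying to prove. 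The per-component bound you derived in Step~1 does not by itself imply the semigroup bound, because hypercontractivity of $P_t$ is a statement about arbitrary $L^2$ functions, not just harmonic components, and you cannot assemble it from the pieces without already knowing how the $L^q$ norm interacts across degrees. As you also observe, the Laplace--Beltrami heat semigroup has the wrong spectrum for this purpose.

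Your fallback --- stopping after the per-component transfer and accepting an extra $\sqrt{\ell+1}$ factor from the triangle inequality --- is correct and would yield $\|f_d\|_{L^q}^2 \le (\ell+1)(q-1)^\ell \|f_d\|_{L^2}^2$. Since every use of this lemma in the paper is for fixed $\ell$ (the degree is always bounded independently of $d$), this weaker constant is entirely adequate for the paper's purposes. If you want the sharp constant, you would need to reproduce Beckner's actual argument (two-point inequality plus tensorization and a conditioning/slicing reduction), which is a different line of reasoning from what you sketched.
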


\begin{lemma}[Gaussian hypercontractivity]\label{lem:hypercontractivity_Gaussian}
For any $\ell \in \N$ and $f \in L^2(\R, \gamma)$ to be a degree $\ell$ polynomial on $\R$, where $\gamma$ is the standard Gaussian distribution. Then for any $q \ge 2$, we have 
\[
\| f \|_{L^q(\R, \gamma)}^2 \le (q - 1)^{\ell} \cdot \| f \|_{L^2(\R, \gamma)}^2. 
\]
\end{lemma}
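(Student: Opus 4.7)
My plan is to deduce the Gaussian hypercontractivity from the hypercube hypercontractivity (Lemma \ref{lem:hypercontractivity_hypercube}) via the central limit theorem. This is exactly the same argument used in the proof of Lemma \ref{lem:hyper_multi_gaussian}, but specialized to $d=1$, and in fact Lemma \ref{lem:hypercontractivity_Gaussian} is nothing but the one-dimensional case of Lemma \ref{lem:hyper_multi_gaussian}. Since that multivariate statement is already established in the paper, the shortest proof would simply invoke it. For completeness I sketch the underlying argument.

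First, for each integer $D \geq 1$, I would introduce i.i.d.\ Rademacher variables $\eps_1,\ldots,\eps_D \sim \Unif(\{\pm 1\})$ and set $G_D := (\eps_1 + \cdots + \eps_D)/\sqrt{D}$, so that $G_D \Rightarrow G \sim \normal(0,1)$ by the central limit theorem. Define the function $\tilde f_D : \Cube^D \to \R$ by $\tilde f_D(\eps_1,\ldots,\eps_D) := f(G_D)$. Because $f$ is a polynomial of degree $\ell$ and $G_D$ is linear in $(\eps_1,\ldots,\eps_D)$, the function $\tilde f_D$ is a polynomial on $\Cube^D$ of degree at most $\ell$. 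Applying Lemma \ref{lem:hypercontractivity_hypercube} to $\tilde f_D$ yields
\begin{equation*}
\|\tilde f_D\|_{L^q(\Cube^D)}^2 \leq (q-1)^\ell \,\|\tilde f_D\|_{L^2(\Cube^D)}^2.
\end{equation*}

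The second step is to pass to the limit $D \to \infty$. The key observation is that for any fixed $r \in \N$, $f(G_D)^r$ is a polynomial of degree $\ell r$ in the variable $G_D$, so $\E[f(G_D)^r]$ is a finite linear combination of moments $\E[G_D^k]$ for $k \leq \ell r$. Each such moment converges to $\E[G^k]$ as $D \to \infty$ (this can be seen directly by expanding $G_D^k$ and computing the combinatorial sum, or via the method of moments). Consequently, for $r \in \{2, q\}$,
\begin{equation*}
\|\tilde f_D\|_{L^r(\Cube^D)}^r = \E[f(G_D)^r] \;\longrightarrow\; \E[f(G)^r] = \|f\|_{L^r(\R,\gamma)}^r.
\end{equation*}
Taking the limit in the hypercube inequality above yields the claimed Gaussian hypercontractivity.

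There is no real obstacle here: the hypercube inequality is taken as given, and the moment convergence for polynomials of $G_D$ is elementary (only finitely many moments of bounded order are involved, so there are no integrability issues). The only minor subtlety is that $q \geq 2$ is an arbitrary real, not necessarily integer; this is harmless since we only ever compute $\E[\,|f(G_D)|^q\,]$ and $f(G_D)^q$ (or rather $|f(G_D)|^q$) is uniformly integrable in $D$ because $f(G_D)$ has all polynomial moments uniformly bounded in $D$, so convergence in distribution of $G_D$ to $G$ upgrades to convergence of the $q$-th absolute moment.
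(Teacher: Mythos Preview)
Your proposal is correct and follows essentially the same route as the paper: the paper remarks that Gaussian hypercontractivity ``is a direct consequence of hypercube hypercontractivity,'' and the detailed argument you give (embed $G$ as a normalized Rademacher sum, apply Lemma~\ref{lem:hypercontractivity_hypercube}, pass to the limit via CLT) is exactly the proof supplied for the multivariate version in Lemma~\ref{lem:hyper_multi_gaussian}. Your handling of the limit for non-integer $q$ via uniform integrability is in fact a bit more careful than the paper's terse appeal to dominated convergence.
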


The Gaussian hypercontractivity is a direct consequence of hypercube hypercontractivity.

\end{document}